\documentclass{article}

\usepackage[verbose=true,letterpaper]{geometry}
  \newgeometry{
    textheight=9in,
    textwidth=5.5in,
    top=1in,
    headheight=12pt,
    headsep=25pt,
    footskip=30pt
  }

\widowpenalty=10000
\clubpenalty=10000
\flushbottom
\sloppy

\usepackage{parskip}
\PassOptionsToPackage{numbers, compress}{natbib}
\usepackage{natbib}

\usepackage[utf8]{inputenc} 
\usepackage[T1]{fontenc}    
\usepackage{hyperref}       
\usepackage{url}            
\usepackage{booktabs}       
\usepackage{amsfonts}       
\usepackage{nicefrac}       
\usepackage{microtype}      
\usepackage{graphicx}
\usepackage{subfigure}
\usepackage{amsmath}
\usepackage{amsthm}
\usepackage{textcomp}
\usepackage{placeins}

\usepackage{graphbox}
\usepackage{booktabs}
\usepackage{soul}
\usepackage{tcolorbox}
\usepackage{paralist}
\usepackage{multirow}

\definecolor{lightblue}{HTML}{84C7F9}
\definecolor{lighterblue}{HTML}{D4ECFF}
\newtcolorbox{mybox}{colback=lighterblue,colframe=lightblue}

\newtheorem{theorem}{Theorem}
\newtheorem{proposition}[theorem]{Proposition}

\usepackage{color}

\usepackage{array}
\newcolumntype{P}[1]{>{\centering\arraybackslash}p{#1}}
\newcolumntype{M}[1]{>{\centering\arraybackslash}m{#1}}

\usepackage{xcolor}
\definecolor{dark-blue}{rgb}{0.15,0.15,0.4}
\definecolor{medium-blue}{rgb}{0,0,0.5}
\hypersetup{
   colorlinks, linkcolor={dark-blue},
   citecolor={dark-blue}, urlcolor={medium-blue}
}

\date{}

\title{Why Normalizing Flows Fail to Detect Out-of-Distribution Data}

\author{%
\normalsize \textbf{Polina Kirichenko\footnote{Equal contribution.} , Pavel Izmailov$^*$, Andrew Gordon Wilson} \\
\normalsize New York University
}

\begin{document}

\maketitle

\begin{abstract}

\normalsize

\noindent Detecting out-of-distribution (OOD) data is crucial for robust machine learning systems. 
Normalizing flows are flexible deep generative models that often surprisingly fail to distinguish between in- and out-of-distribution data: a flow trained on pictures of clothing assigns higher likelihood to handwritten digits. We investigate why normalizing flows perform poorly for OOD detection. We demonstrate that flows learn local pixel correlations and generic image-to-latent-space transformations which are not specific to the target image dataset. We show that by modifying the architecture of flow coupling layers we can bias the flow towards learning the semantic structure of the target data, improving OOD detection. Our investigation reveals that properties that enable flows to generate high-fidelity images can have a detrimental effect on OOD detection.
\end{abstract}

\section{Introduction}

Normalizing flows \citep{tabak2013family, dinh2014nice, dinh2016density} seem to be ideal candidates for out-of-distribution detection, since they are simple generative models that provide an exact likelihood. However, \citet{nalisnick2018deep} revealed the puzzling result that flows often assign higher likelihood to out-of-distribution data than the data used for maximum likelihood training. In Figure \ref{fig:intro}(a), we show the log-likelihood histogram for a RealNVP flow model \citep{dinh2016density} trained on the ImageNet dataset \cite{russakovsky2015imagenet} subsampled to $64\times 64$ resolution.
The flow assigns higher likelihood to both the CelebA dataset of celebrity photos, and the SVHN dataset of images of house numbers, compared to the target ImageNet dataset.

While there has been empirical progress in improving OOD detection with flows \citep{nalisnick2018deep, choi2018waic, nalisnick2019detecting, serra2019input, song2019unsupervised, zhang2020out}, the fundamental reasons for why flows fail at OOD detection in the first place are not fully understood. In this paper, we 
show how the \emph{inductive biases} \citep{mitchell1980need, wilson2020bayesian} of flow models --- implicit assumptions in the architectures and training procedures --- can hinder OOD detection.

In particular, our contributions are the following:

\begin{figure}[t]
    
	\def \panelheight {0.23\textwidth}
	\def \panelskip {.2cm}

    \centering
    \subfigure[Log-likelihoods]{
    	\includegraphics[height=\panelheight]{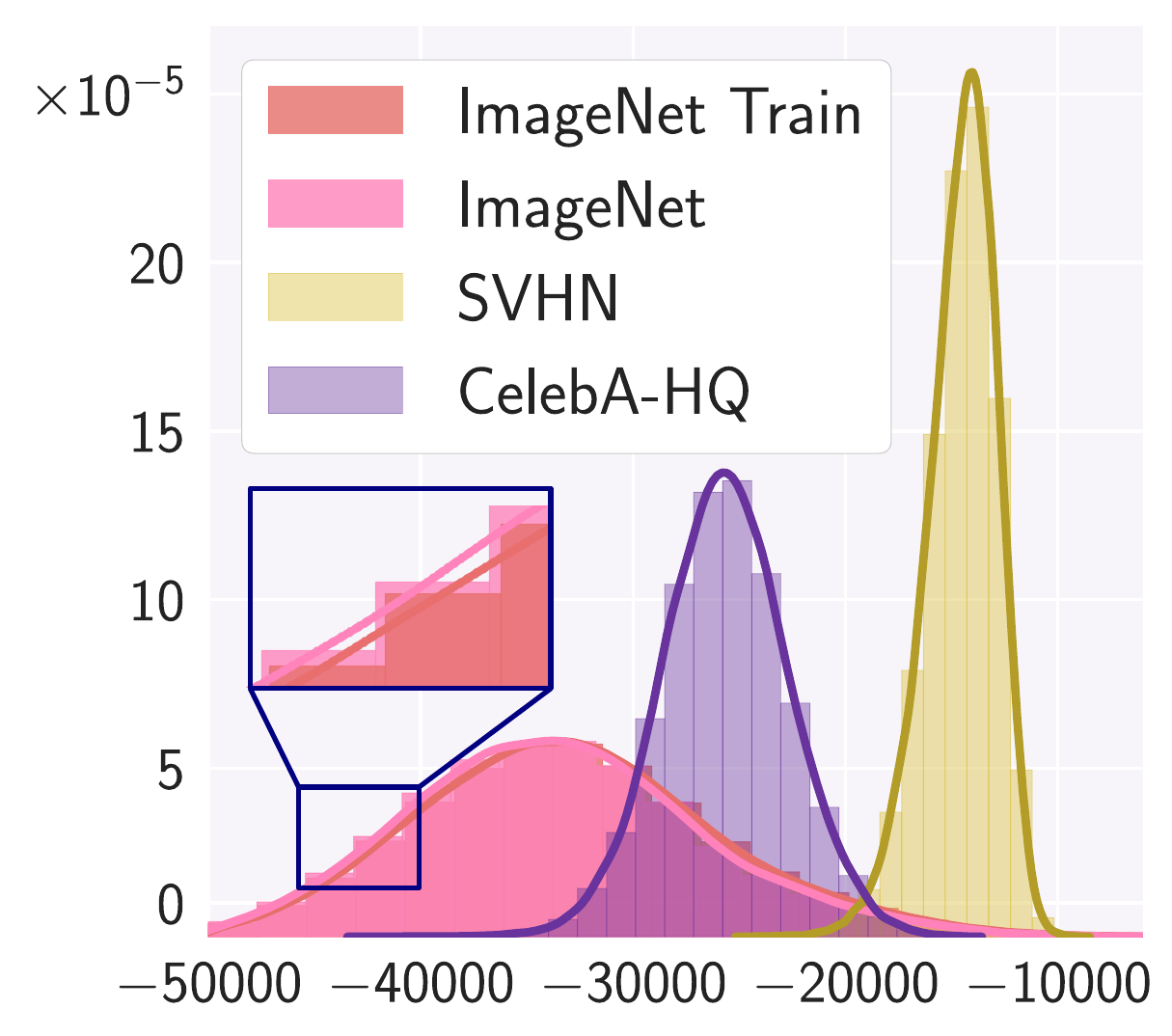} 
    }
    \hspace{\panelskip}
    \subfigure[ImageNet input, in-distribution]{
    	\includegraphics[height=\panelheight]{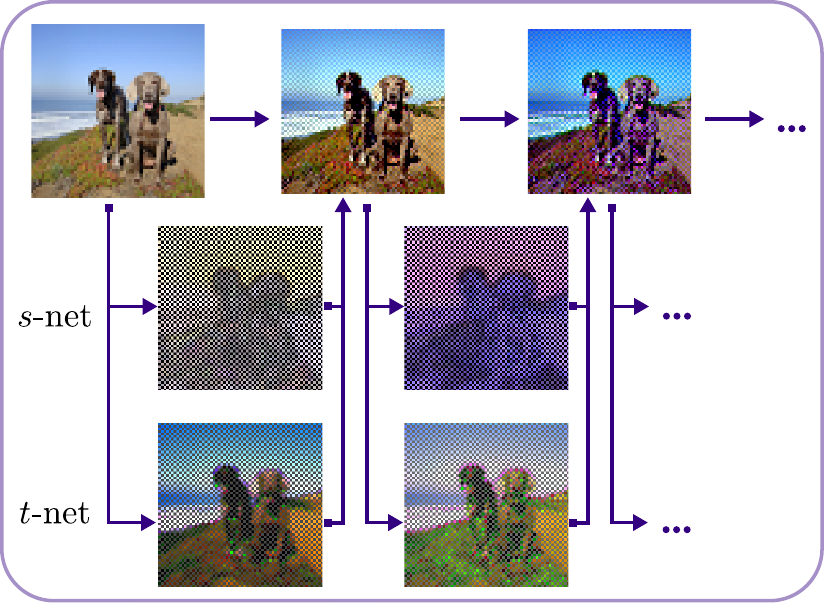} 
    }
    \hspace{\panelskip}
    \subfigure[CelebA input, OOD]{
    	\includegraphics[height=\panelheight]{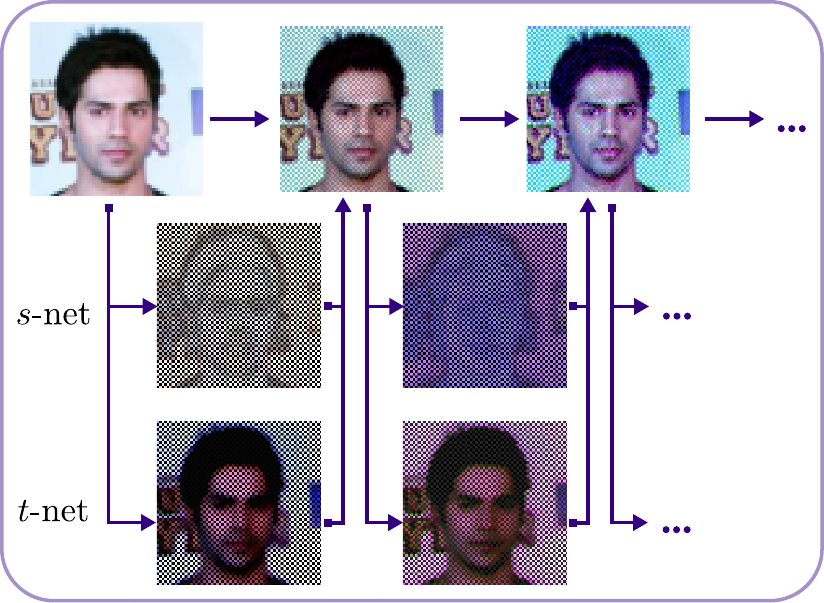} 
    }
	\caption{ 
	\textbf{RealNVP flow on in- and out-of-distribution images.}
	(\textbf{a}): A histogram of log-likelihoods that a RealNVP flow trained on ImageNet assigns to ImageNet, SVHN and CelebA. 
	The flow assigns higher likelihood to out-of-distribution data.
	(\textbf{b}, \textbf{c}): A visualization of the intermediate layers of a RealNVP model on an (b) in-distribution image and (c) OOD image.
	The first row shows the coupling layer activations, the second and third rows show the scale $s$ and shift $t$ parameters predicted by a neural network applied to the corresponding coupling layer input. Both on in-distribution and out-of-distribution images, $s$ and $t$ accurately approximate the structure of the input, even though the model has not observed inputs (images) similar to the OOD image during training. \emph{Flows learn generic image-to-latent-space transformations that leverage local pixel correlations and graphical details rather than the semantic content needed for OOD detection}.
	}
	\label{fig:intro}
\end{figure}

\begin{itemize}
    
    \item We show that flows learn latent representations for images largely based on local pixel correlations, rather than semantic content, 
    making it difficult to detect data with anomalous semantics.
    
    \item 
    We identify mechanisms through which normalizing flows can simultaneously increase likelihood for all structured images.
    For example, in Figure \ref{fig:intro}(b, c), we show that the coupling layers of RealNVP transform the in-distribution ImageNet in the 
    same way as the OOD CelebA.
    
    \item We show that by changing the architectural details of the coupling layers, we can encourage flows to learn transformations specific to the target data, improving OOD detection.
    
    \item We show that OOD detection is improved when flows are trained on high-level features which contain semantic information extracted from image datasets.
    
\end{itemize}

We also provide code at \url{https://github.com/PolinaKirichenko/flows_ood}.

\vspace{-2mm}
\section{Background}
\label{sec:background}
\vspace{-2mm}

We briefly introduce normalizing flows based on coupling layers.
For a more detailed introduction, see \citet{papamakarios2019normalizing} and 
\citet{kobyzev2019normalizing}.

\textbf{Normalizing flows}\quad
Normalizing flows \cite{tabak2013family} are a flexible class of deep generative models that model a target distribution $p^*(x)$ as an invertible transformation $f$ of a base distribution $p_Z(z)$ in the latent space.
Using the change of variables formula, the likelihoods for an input $x$ and a dataset $\mathcal D$ are
\begin{equation}\label{eq:change_of_variable}
    p_X(x) = p_Z(f^{-1}(x)) \left| \det \frac{\partial f^{-1}}{\partial x} \right|,
    \quad
    p(\mathcal D) = \prod_{x \in \mathcal D} p_X(x).
\end{equation}
The latent space distribution $p_Z(z)$ is commonly chosen to be a standard Gaussian.
Flows are typically trained by maximizing the log-likelihood \eqref{eq:change_of_variable}
of the training data with respect to the parameters of the invertible transformation 
$f$.

\textbf{Coupling layers}\quad
We focus on normalizing flows based on \textit{affine coupling layers}.
In these flows, the transformation performed by each layer is given by
\begin{equation}\label{eq:coupling}
f_{\text{aff}}^{-1} (x_\text{id}, x_\text{change}) = (y_\text{id}, y_\text{change}),
\quad 
\begin{cases}
   &y_{\text{id}} = x_{\text{id}} \\
   &y_{\text{change}} = (x_{\text{change}} + t(x_{\text{id}})) \odot \exp(s(x_{\text{id}}))
\end{cases}
\end{equation}
where $x_{\text{id}}$ and $x_{\text{change}}$ are disjoint parts of the input $x$,
$y_{\text{id}}$ and $y_{\text{change}}$ are disjoint parts of the output $y$, 
and the scale and shift parameters $s(\cdot)$ and $t(\cdot)$ are usually 
implemented by a neural network (which we will call the \textit{$st$-network}).
The split of the input into $x_{\text{id}}$ and $x_{\text{change}}$ is defined
by a \textit{mask}: a coupling layer transforms the masked part $x_\text{change} = \text{mask}(x)$ of the input based
on the remaining part $x_\text{id}$.
The transformation \eqref{eq:coupling} is invertible and allows for efficient
Jacobian computation in \eqref{eq:change_of_variable}:
\begin{equation}
\log \left| \det \frac{\partial f_{\text{aff}}^{-1}}{\partial x} \right| = \sum_{i=1}^{\text{dim}(x_{\text{change}})} s(x_{\text{id}})_i.
\end{equation}
\vspace{-3mm}

\textbf{Flows with coupling layers}\quad
Coupling layers can be stacked together into flexible normalizing flows:
$f = f^K \circ f^{K-1} \circ \ldots \circ f^1$.
Examples of flows with coupling layers include NICE \citep{dinh2014nice}, RealNVP \citep{dinh2016density}, Glow \citep{kingma2018glow}, and many others
\citep[e.g.,][]{bhattacharyya2020normalizing, chen2020vflow, durkan2019neural, ho2019flow++, hoogeboom2019emerging, kim2018flowavenet, ma2019macow,  prenger2019waveglow}.

\textbf{Out-of-distribution detection using likelihood}
\quad
Flows can be used for out-of-distribution detection based on the likelihood they
assign to the inputs.
One approach is to choose a likelihood threshold $\epsilon$ on a validation
dataset, e.g.\ to satisfy a desired false positive rate, 
and during test time identify inputs which have likelihood lower than $\epsilon$ as OOD.
Qualitatively, we can estimate the performance of the flows for OOD detection
by plotting a histogram of the log-likelihoods such as Figure \ref{fig:intro}(a):
the likelihoods for in-distribution data should generally be higher compared to OOD.
Alternatively, we can treat OOD detection as a binary classification problem using 
likelihood scores, and compute accuracy with a fixed likelihood threshold $\epsilon$,
or AUROC (area under the receiver operating characteristic curve).

\vspace{-3mm}
\section{Related Work}
\label{sec:related_work}
\vspace{-2mm}

Recent works have shown that normalizing flows, among other deep generative models, 
can assign higher likelihood to out-of-distribution data \citep{nalisnick2018deep, choi2018waic}.
The work on OOD detection with deep generative models falls into two distinct categories. 
In group anomaly detection (GAD), the task is to label a batch of $n > 1$ datapoints as in- or out-of-distribution.
Point anomaly detection (PAD) involves the more challenging task of labelling single points as
out-of-distribution.

\textbf{Group anomaly detection} \quad 
\citet{nalisnick2019detecting} introduce the typicality test which distinguishes between a high density set and a typical set of a distribution induced by a model. 
However, the typicality test cannot detect OOD data if the flow assigns it with a similar likelihood distribution to that of in-distribution data.
\citet{song2019unsupervised}~showed that out-of-distribution datasets have lower likelihoods when batch normalization statistics are computed from a current batch instead of accumulated over the train set, and proposed a test based on this observation.
\citet{zhang2020out} introduce a GAD algorithm based on measuring correlations of flow's latent representations corresponding to the input batch. 
The main limitation of GAD methods is that for most practical applications the assumption that the data comes in batches of inputs that are all in-distribution or all OOD is not realistic.

\textbf{Point anomaly detection} \quad \citet{choi2018waic} proposed to estimate the Watanabe-Akaike Information Criterion using an ensemble of generative models, showing accurate OOD detection on some of the challenging dataset pairs. 
\citet{ren2019likelihood} explain the poor OOD detection performance of deep generative models by the fact that the likelihood is dominated by background statistics.
They propose a test based on the ratio of the likelihoods for the image and background likelihood estimated using a separate \textit{background model}.
\citet{serra2019input} show that normalizing flows assign higher likelihoods to simpler datasets and propose to normalize the flow's likelihood by an image complexity score.

In this work we argue that it is the inductive biases of the model that determine its OOD performance.
While most work treats flows as black-box density estimators, we conduct a careful study of the latent representations and 
image-to-latent-space transformations learned by the flows.
Throughout the paper, we connect our findings with prior work and provide new insights.

\section{Why flows fail to detect OOD data}
\label{sec:reasoning}

Normalizing flows consistently fail at out-of-distribution detection when applied to common benchmark datasets (see Appendix \ref{appendix:baseline_likelihood_auroc}).
In this paper, we discuss the reasons behind this surprising phenomenon.
We summarize our thesis as follows:

\begin{mybox}
    The maximum likelihood objective has a limited influence on OOD detection, relative to the \textit{inductive biases} of the flow, captured by the modelling assumptions of the architecture.
\end{mybox}

\textbf{Why should flows be able to detect OOD inputs?} \quad
Flows are trained to maximize the likelihood of the training data.
Likelihood is a probability density function $p(\mathcal D)$ defined on the image space and hence has to be normalized.
Thus, likelihood cannot be simultaneously increased for all the inputs (images).
In fact, the optimal maximizer of \eqref{eq:change_of_variable} would only assign positive density to the datapoints in the training set, and, in particular, would not 
even generalize to the test set of the same dataset.
In practice, flows do not seem to overfit, assigning similar likelihood distributions to train and 
and test (see e.g. Figure \ref{fig:intro}(a)).
Thus, despite their flexibility, flows are not maximizing the likelihood \eqref{eq:change_of_variable} to values close to the global optimum. 

\textbf{What is OOD data?} \quad
There are infinitely many distributions that give rise to any value of the likelihood objective in \eqref{eq:change_of_variable} except the global optimum.
Indeed, any non-optimal solution assigns probability mass outside of the training
data distribution; we can arbitrarily re-assign this probability mass to get a new
solution with the same value of the objective (see Appendix \ref{sec:app_whatisood} for a detailed discussion).
Therefore the inductive biases of a model determines which specific solution is found 
through training. In particular, the inductive biases will affect what data is assigned
 high likelihood (in-distribution) and what data is not (OOD).

\textbf{What inductive biases are needed for OOD detection?} \quad
The datasets in computer vision are typically defined by the semantic content of the images.
For example, the CelebA dataset consists of images of faces, and SVHN contains 
images of house numbers.
In order to detect OOD data, the inductive biases
of the model have to be aligned with learning the semantic structure of the data,
i.e. what objects are represented in the data.

\textbf{What are the inductive biases of normalizing flows?} \quad
In the remainder of the paper, we explore the inductive biases of normalizing flows.
We argue that flows are biased towards learning \textit{graphical}
properties of the data such as local pixel correlations (e.g.\ nearby pixels usually
have similar colors) rather than semantic properties of the data 
(e.g.\ what objects are shown in the image).

\textbf{Flows have capacity to distinguish datasets} \quad
In Appendix \ref{sec:app_capacity}, we show that if we explicitly train flows to distinguish between a pair of datasets, they can assign large likelihood to one dataset 
and low likelihood to the other.
However, when trained with the standard maximum likelihood objective, flows do not learn to make this distinction. The inductive biases of the flows prefer solutions that assign high likelihood to most structured datasets simultaneously.

\section{Flow latent spaces}
\label{sec:latent_space}

Normalizing flows learn highly non-linear image-to-latent-space mappings often using hundreds of millions of parameters.
One could imagine that the learned latent representations have a complex structure, encoding high-level semantic information about the inputs.
In this section, we visualize the learned latent representations on both in-distribution and out-of-distribution data and demonstrate that they encode simple graphical structure rather than semantic information.

\begin{mybox}
    \textbf{Observation}:
    There exists a correspondence between the coordinates in
    an image and in its learned representation.
    We can recognize edges of the inputs in their latent representations.
    
    \textbf{Significance for OOD detection:}
    In order to detect OOD images, a model has to assign likelihood 
    based on the semantic content of the image (see Sec. \ref{sec:reasoning}).
    Flows do not represent images based on their semantic contents, 
    but rather directly encode their visual appearance.
\end{mybox}

In the first four columns of Figure \ref{fig:latent_reprs}, we show latent 
representations\footnote{
For the details of the visualization procedure and the training setup please see Appendices \ref{sec:app_visualizations} and \ref{sec:app_details}.
}
of a RealNVP model trained on \mbox{FashionMNIST}  for an in-distribution FashionMNIST image and an out-of-distribution MNIST digit.
The first column shows the original image $x$, and the second column shows the corresponding latent~$z$. 
The latent representations appear noisy both for in- and out-of-distribution samples, but the edges of the MNIST digit can be recognized in the latent.
In the third column of Figure \ref{fig:latent_reprs}, we show latent representations averaged over $K=40$ samples of dequantization noise\footnote{
When training flow models on images or other discrete data, we use dequantization to avoid pathological solutions~\citep{uria2013rnade, theis2015note}: 
we add uniform noise $ \epsilon \sim U[0; 1]$ to each pixel $x_i \in \{0, 1, \dots, 255\}$. 
Every time we pass an image through the flow $f(\cdot)$, the resulting latent representation $z$ will be different. 
} $\epsilon_k$:
$\frac{1}{K} \sum_{k=1}^K f^{-1}(x + \epsilon_k)$.
In the averaged representation, we can clearly see the edges from the original image.
Finally, in the fourth column of Figure \ref{fig:latent_reprs}, we visualize the latent representations (for a single sample of dequantization noise) from a flow when batch normalization layers are in train mode \citep{ioffe2015batch}. 
In train mode, batch normalization layers use the activation statistics of the current batch, and in evaluation mode they use the statistics accumulated over the train set.
While for in-distribution data there is no structure visible in the latent representation, the out-of-distribution latent clearly preserves the shape of the $7$-digit from the input image.
In the remaining panels of Figure \ref{fig:latent_reprs}, we show an analogous
visualization for a RealNVP trained on CelebA using an SVHN image as OOD.
In the third panel of this group, we visualize the blue channel of the latent representations.
Again, the OOD input can be recognized in the latent representation;
some of the edges from the in-distribution CelebA image can also be seen in 
the corresponding latent variable. Additional visualizations (e.g.\ for Glow) are in Appendix \ref{sec:app_latents}.

\textbf{Insights into prior work} \quad
The group anomaly detection algorithm proposed in \citet{zhang2020out} uses correlations of the latent representations as an OOD score. 
\citet{song2019unsupervised} showed that normalizing flows with batch normalization layers in train mode assign much lower likelihood to out-of-distribution images than they do in evaluation mode, while for in-distribution data the difference is not significant.
Our visualizations explain the presence of correlations in the latent space and shed light into the difference between the behaviour of the flows in train and test mode.

\vspace{-2mm}
\section{Transformations learned by coupling layers}
\label{sec:coupling_layers}

To better understand the inductive biases of coupling-layer based flows, we study
the transformations learned by individual coupling layers.

\textbf{What are coupling layers trained to do?} \quad
Each coupling layer updates
the masked part $x_{\text{change}}$ of the input $x$ to be $x_{\text{change}} \leftarrow (x_{\text{change}} + t(x_{\text{id}})) \cdot \exp(s(x_{\text{id}}))$,
where $x_{\text{id}}$ is the non-masked part of $x$, and $s$ and $t$ are the outputs of the $st$-network given $x_{\text{id}}$ (see Section \ref{sec:background}).
The flow is encouraged to predict high values for $s$ since for a given coupling layer the Jacobian term in the 
likelihood of Eq. \eqref{eq:change_of_variable} is given by $\sum_j s(x_{\text{id}})_j$ (see Section \ref{sec:reasoning}). 
Intuitively, to afford large values for scale $s$ without making the latent representations large in norm and hence decreasing 
the density term $p_{\mathcal{Z}}(z)$ in \eqref{eq:change_of_variable}, the shift $-t$ has to be an accurate 
approximation of the masked input $x_{\text{change}}$.
For example, in Figure \ref{fig:intro}(b, c) the $-t$ outputs of the first coupling
layers are a very close estimate of the input to the coupling layer.
The likelihood for a given image will be high whenever the coupling layers can accurately predict masked pixels.
To the best of our knowledge, this intuition has not been discussed in any previous work.

\begin{mybox}
    \textbf{Observation}:
    We describe two mechanisms through which coupling layers learn to predict the masked pixels: (1) leveraging local color correlations and (2) using information about the masked pixels encoded by the previous coupling layer (coupling layer co-adaptation).
    
    \textbf{Significance for OOD detection}:
    These mechanisms allow the flows to predict the masked pixels equally accurately on in- and out-of-distribution datasets. 
    As a result, flows assign high likelihood to OOD data.
\end{mybox}

\begin{figure*}[t]
	\def \panelwidth {0.09\textwidth}
	\def \panelskip {-0.7cm}

    \centering
    \subfigure{
    \begin{tabular}{c}
        \includegraphics[width=\panelwidth]{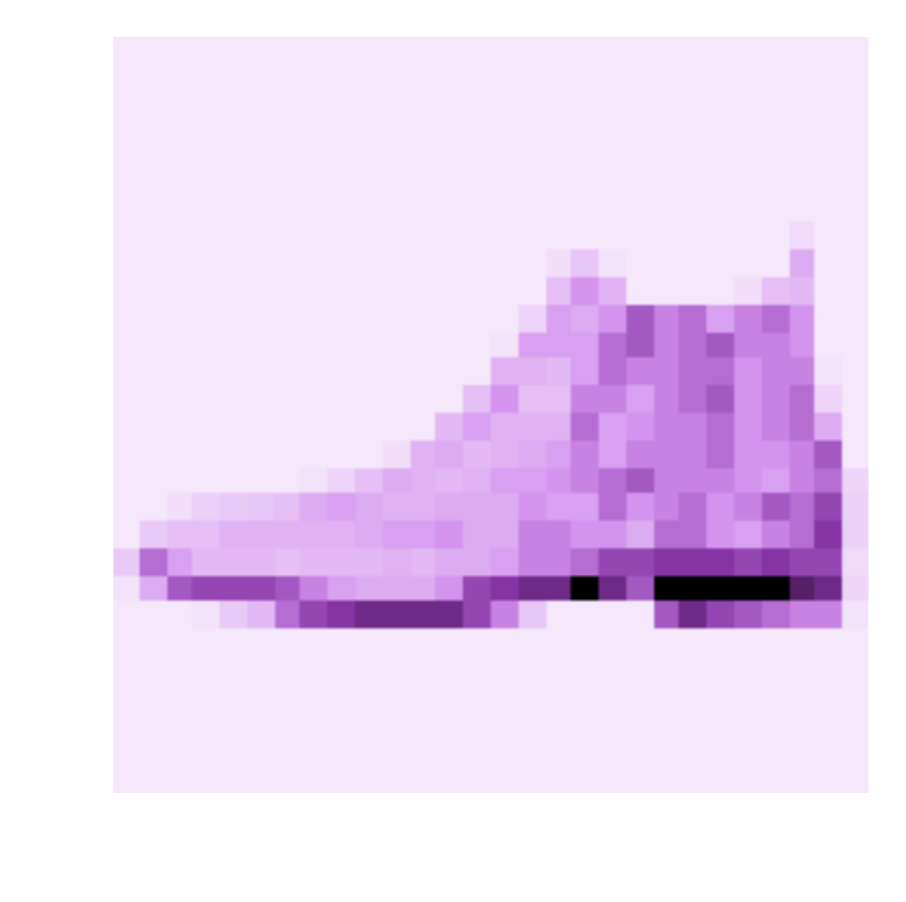}\\
        \includegraphics[width=\panelwidth]{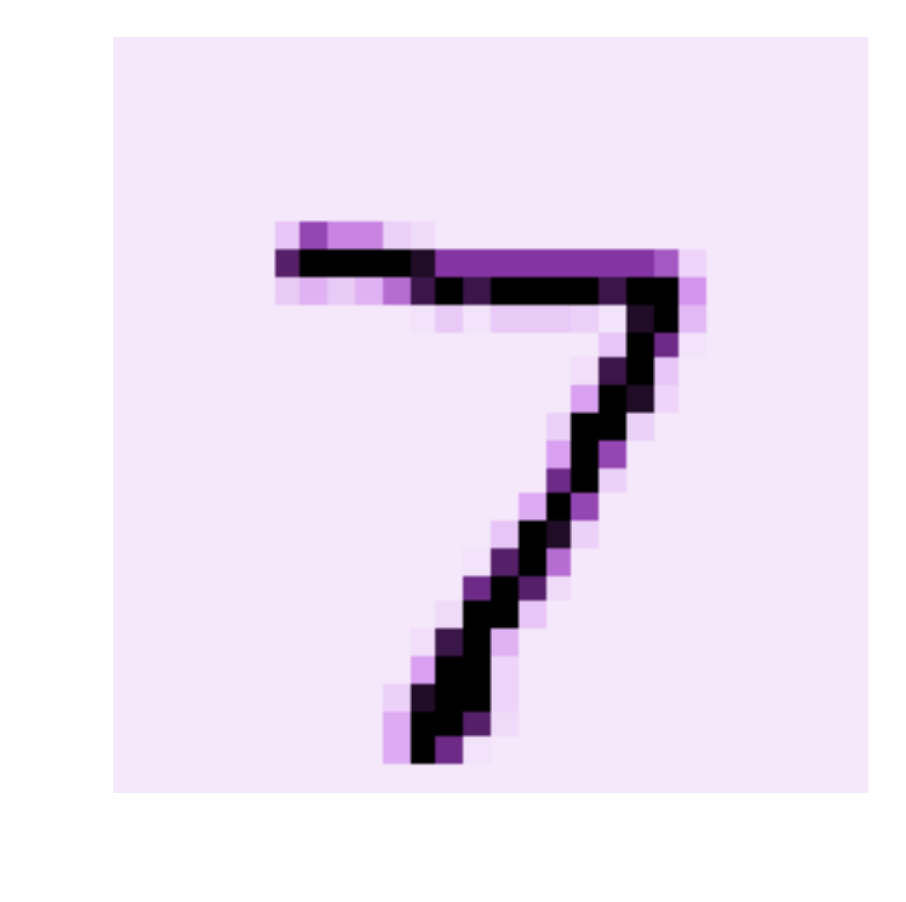}\\
        {\small Input}
    \end{tabular}
    }
    \hspace{\panelskip}
    \subfigure{
    \begin{tabular}{c}
        \includegraphics[width=\panelwidth]{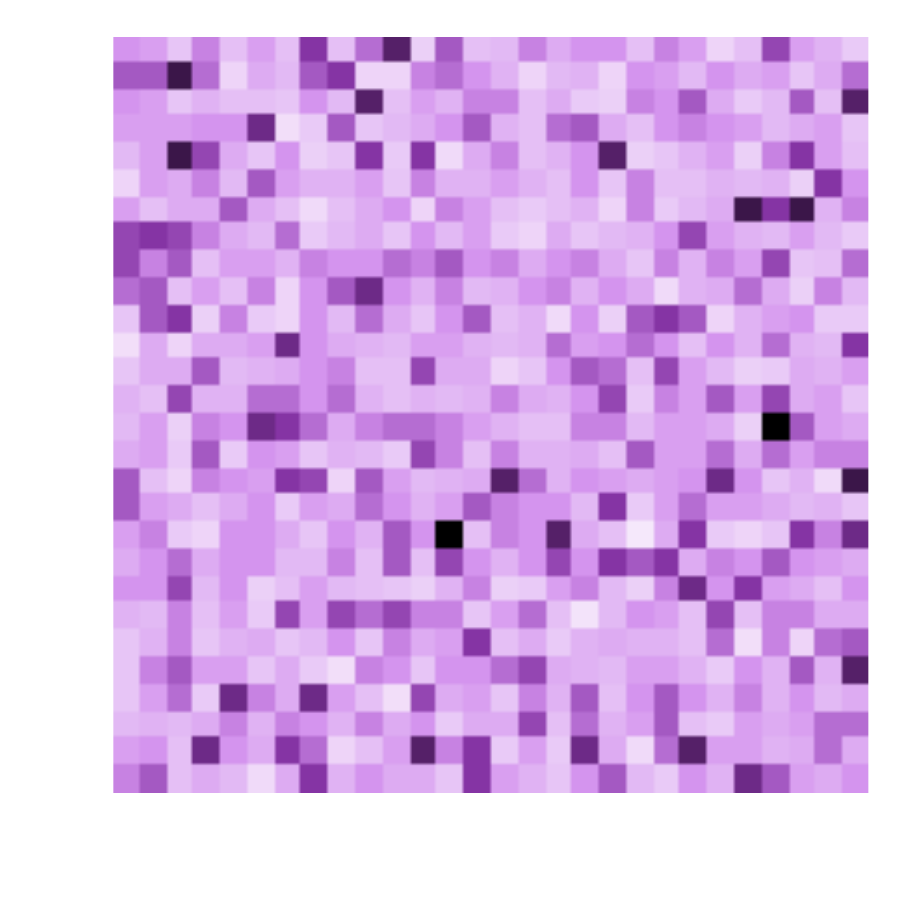}\\
        \includegraphics[width=\panelwidth]{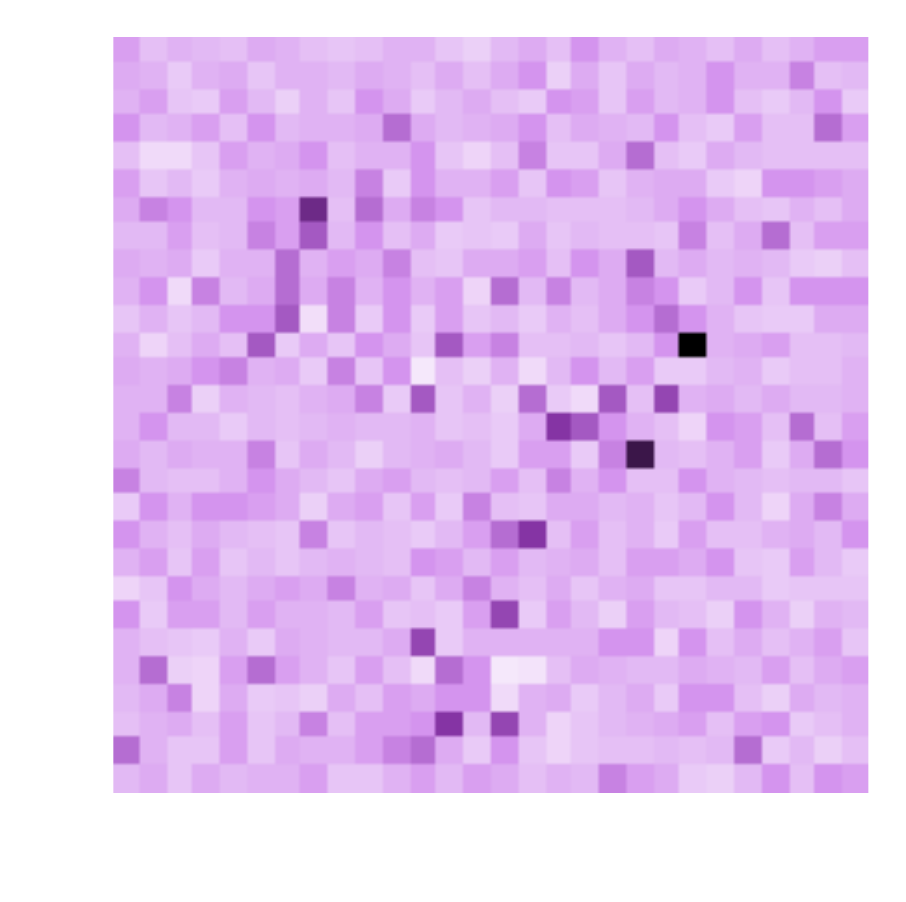}\\
        {\small Latent}
    \end{tabular}
    }
    \hspace{\panelskip}
    \subfigure{
    \begin{tabular}{c}
        \includegraphics[width=\panelwidth]{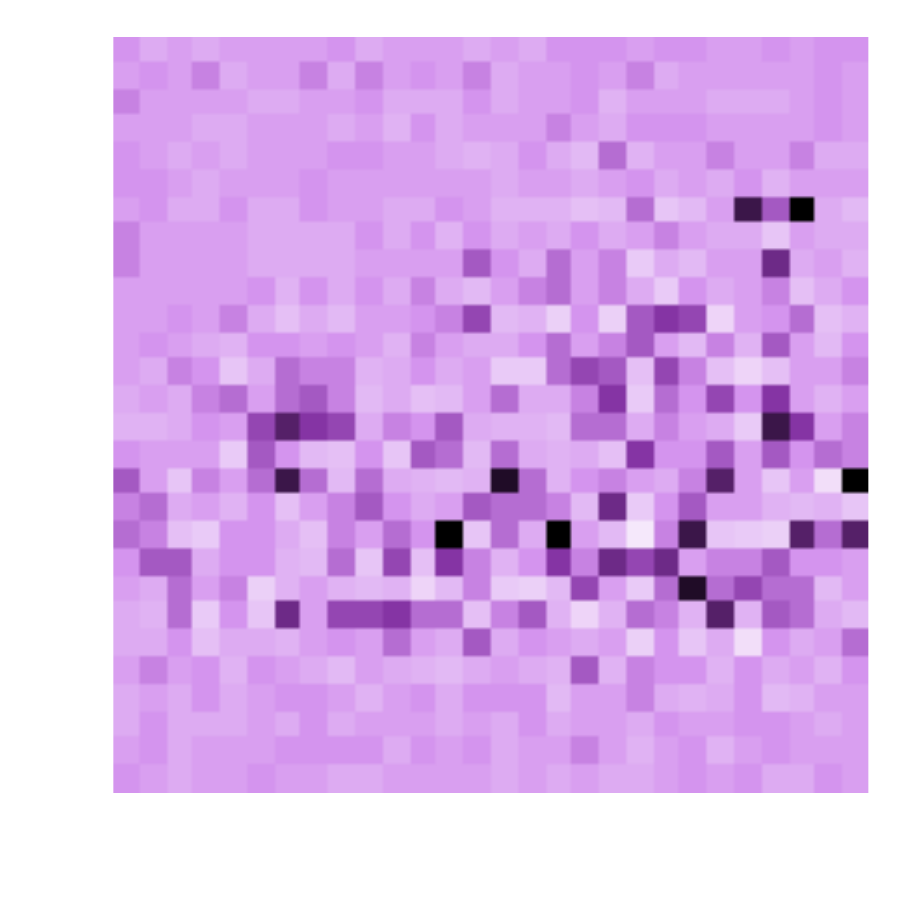}\\
        \includegraphics[width=\panelwidth]{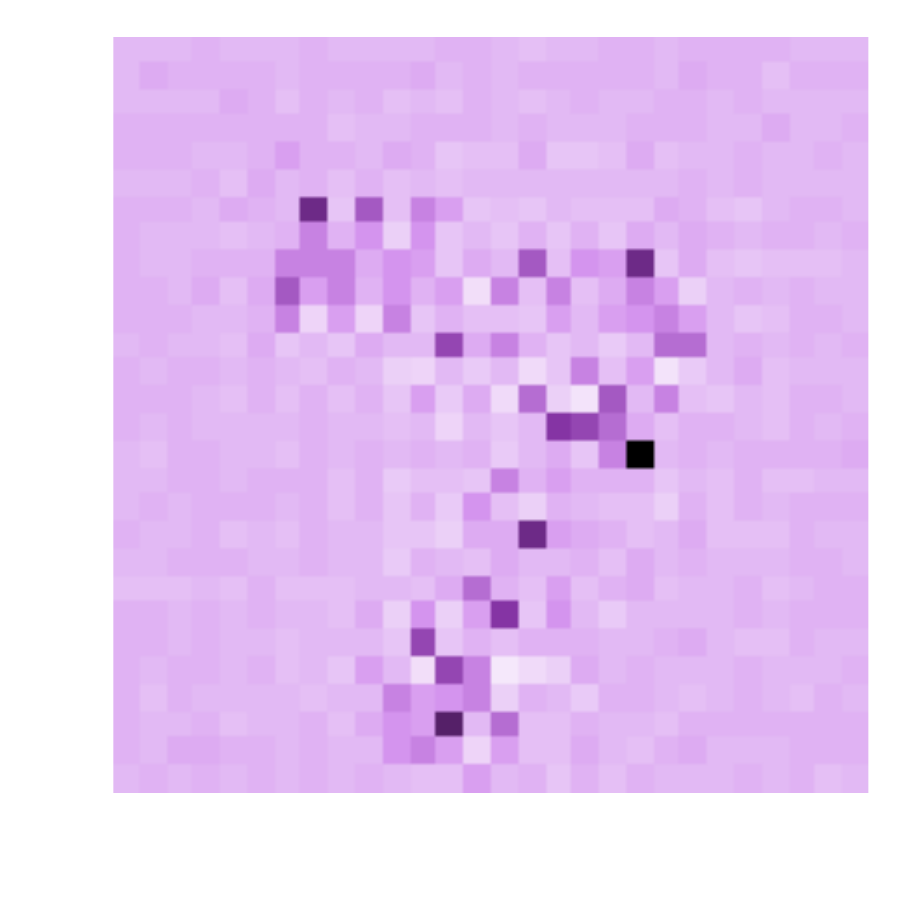}\\
        {\small Avg Latent}
    \end{tabular}
    }
    \hspace{\panelskip}
    \subfigure{
    \begin{tabular}{c}
        \includegraphics[width=\panelwidth]{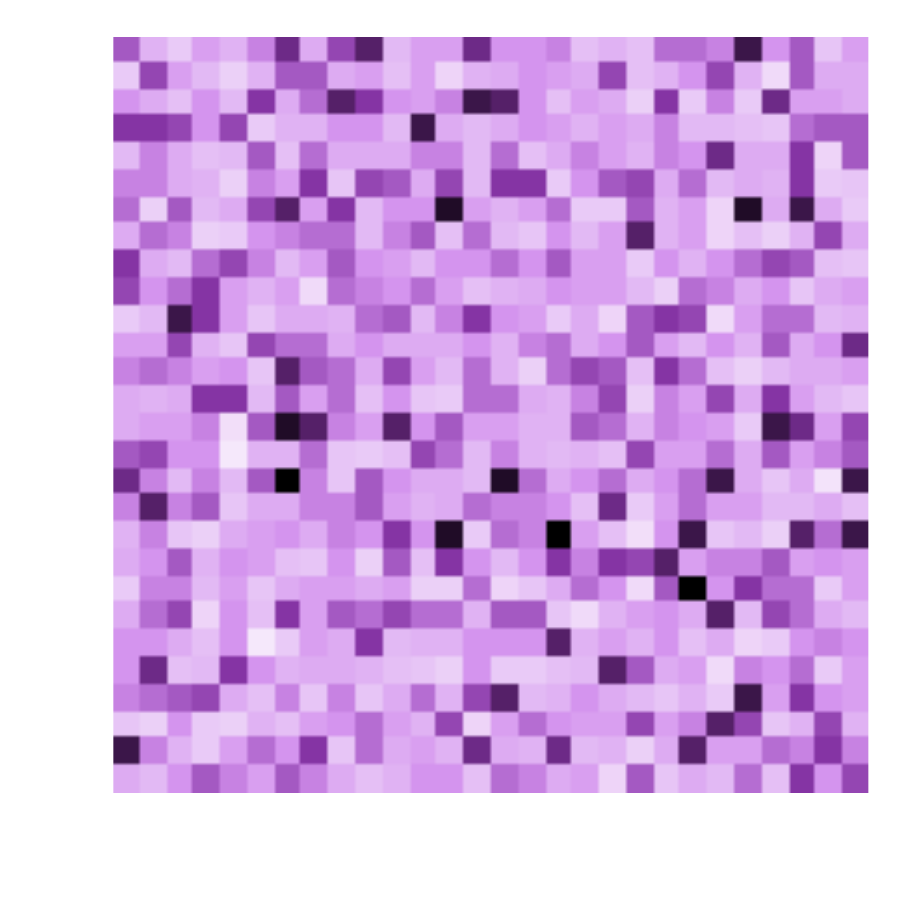}\\
        \includegraphics[width=\panelwidth]{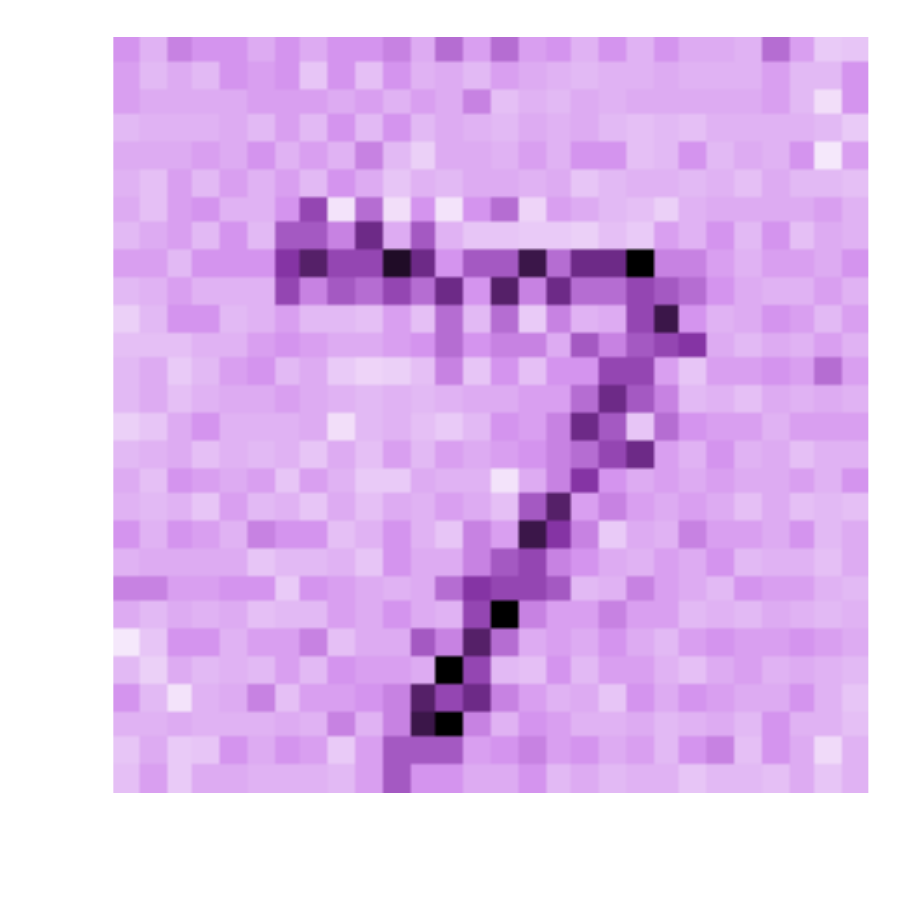}\\
        {\small BN Train}
    \end{tabular}
    }
    \hspace{0.2cm}
    \subfigure{
    \begin{tabular}{c}
        \includegraphics[width=\panelwidth]{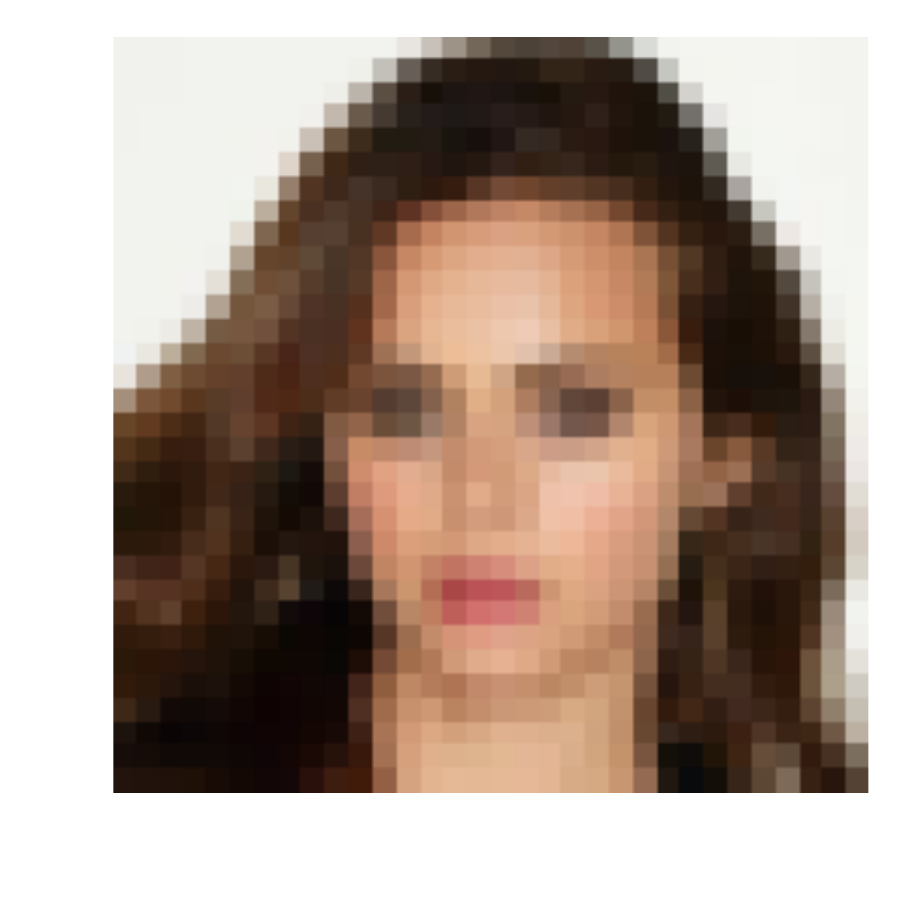}\\
        \includegraphics[width=\panelwidth]{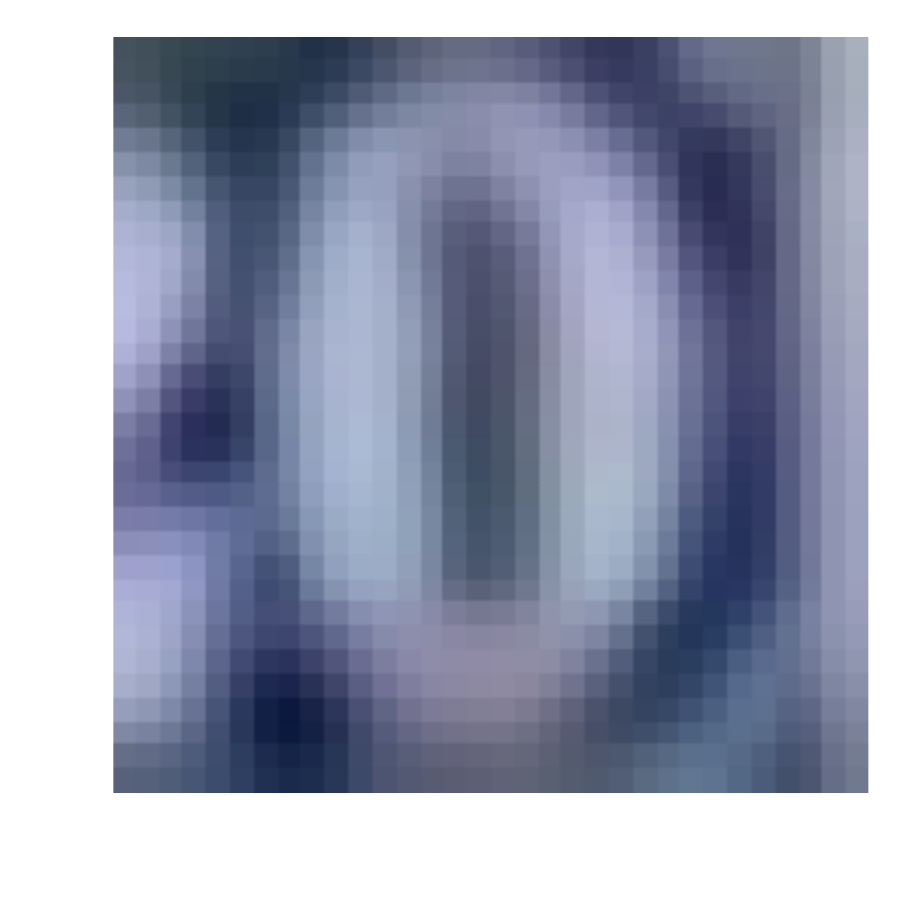}\\
        {\small Input}
    \end{tabular}
    }
    \hspace{\panelskip}
    \subfigure{
    \begin{tabular}{c}
        \includegraphics[width=\panelwidth]{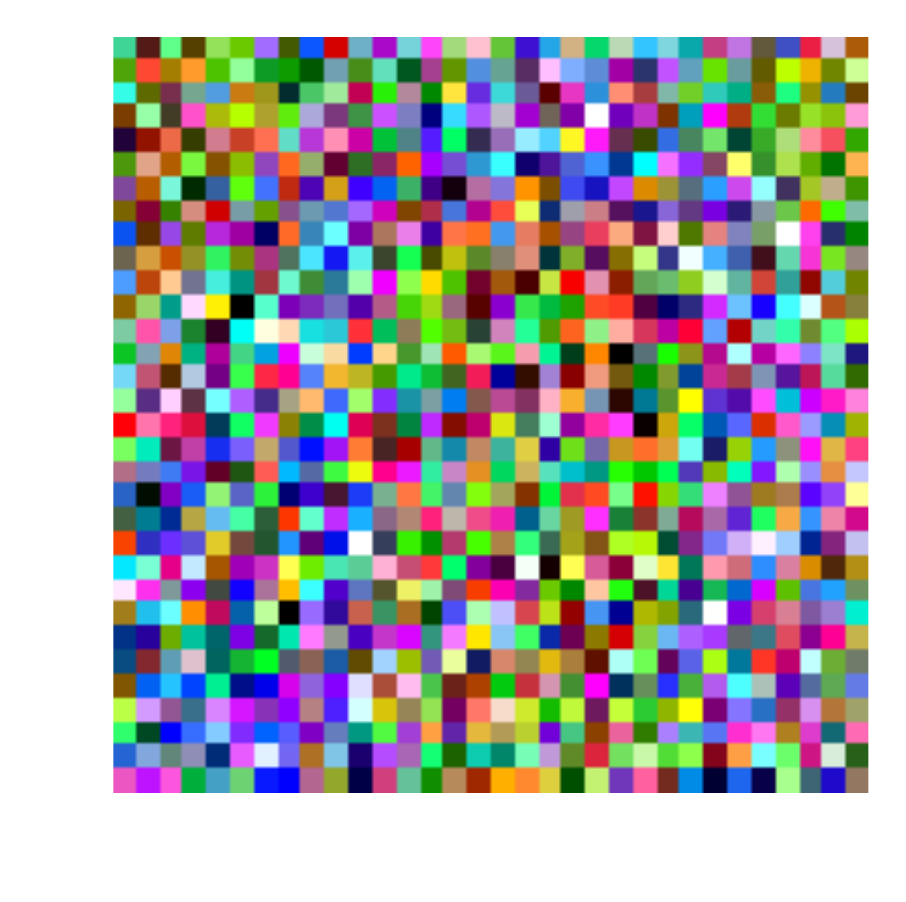}\\
        \includegraphics[width=\panelwidth]{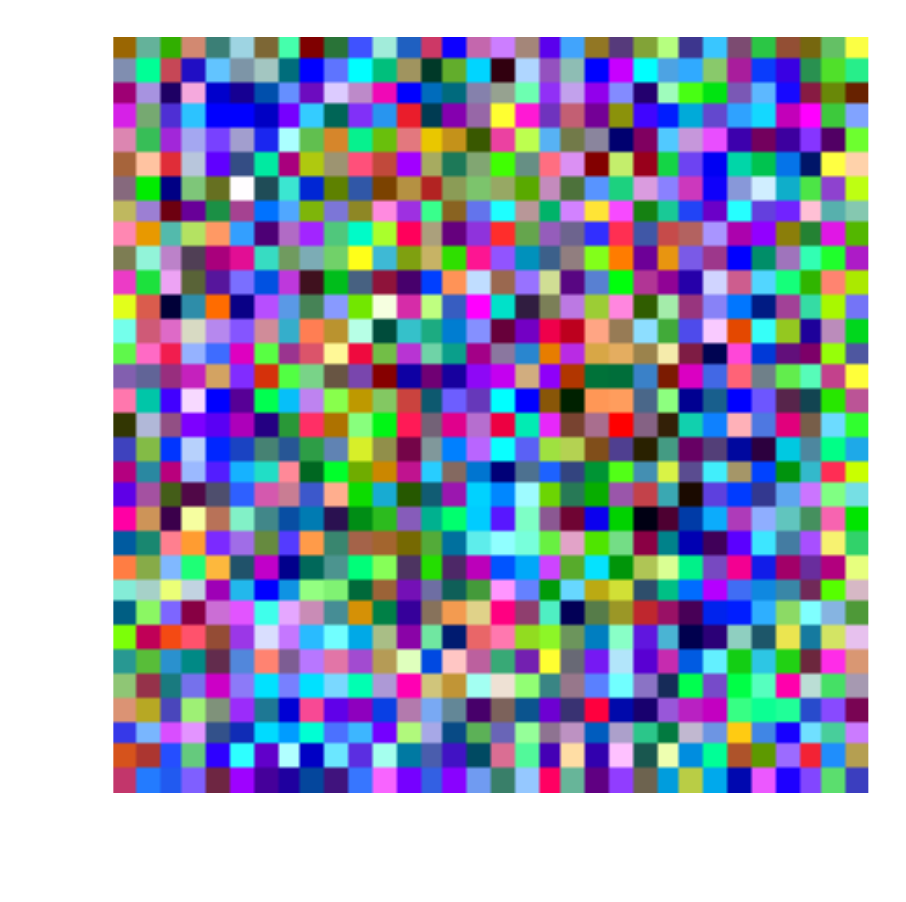}\\
        {\small Latent}
    \end{tabular}
    }
    \hspace{\panelskip}
    \subfigure{
    \begin{tabular}{c}
        \includegraphics[width=\panelwidth]{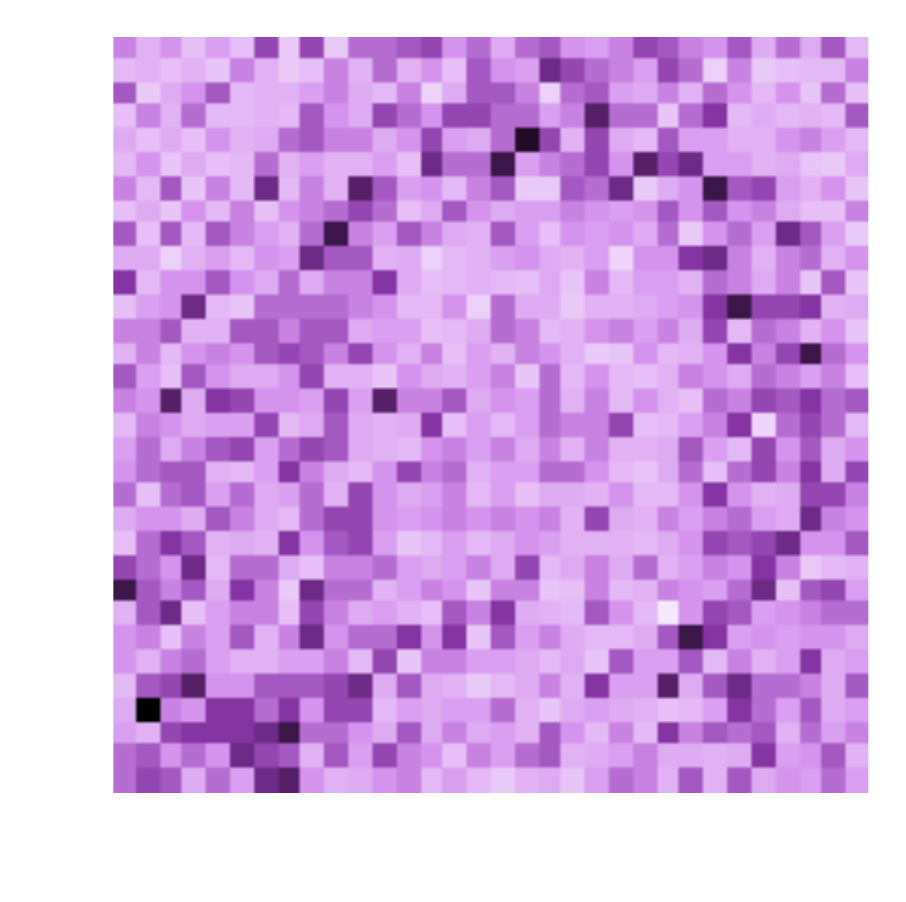}\\
        \includegraphics[width=\panelwidth]{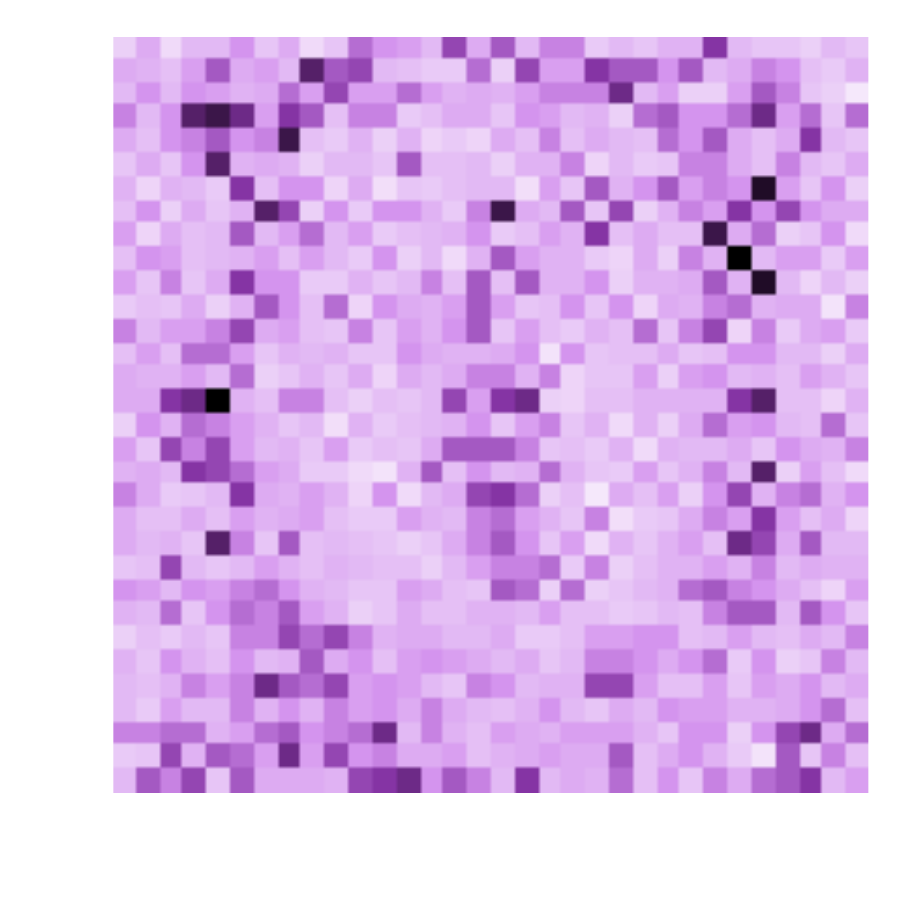}\\
        {\small Latent Blue}
    \end{tabular}
    }
    \hspace{\panelskip}
    \subfigure{
    \begin{tabular}{c}
        \includegraphics[width=\panelwidth]{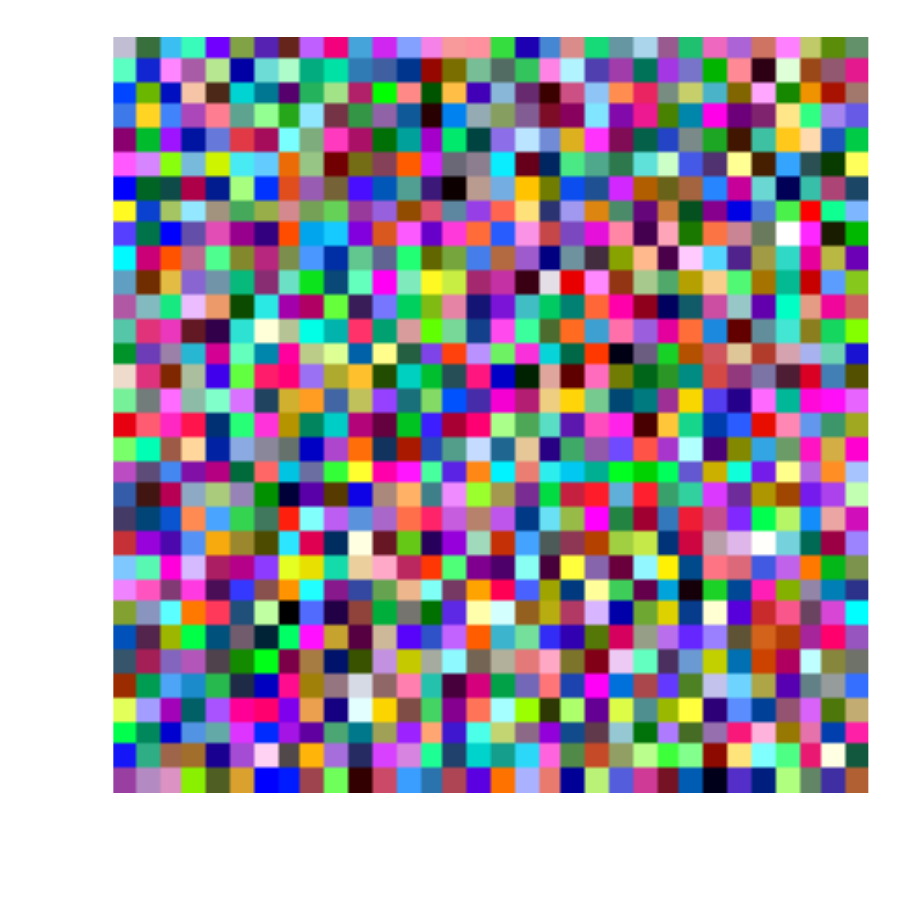}\\
        \includegraphics[width=\panelwidth]{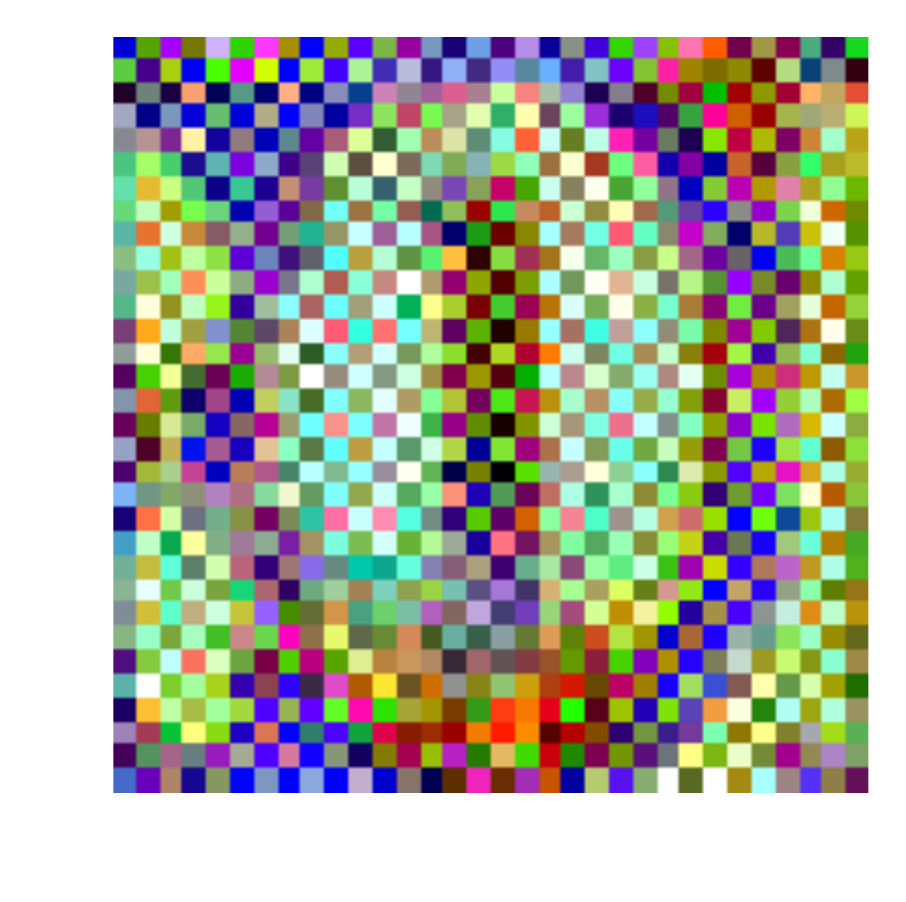}\\
        {\small BN Train}
    \end{tabular}
    }
    
	\caption{
	\textbf{Latent spaces.}
	Visualization of latent representations for the RealNVP model on
	in-distribution and out-of-distribution inputs. Panels 1-4: original images, latent representations, 
	latent representation averaged over $40$ samples of dequantization noise, and latent representations for batch normalization in train mode for a flow trained on FashionMNIST and using MNIST for OOD data.
	Panels 5-8: same as 1-4 but for a model trained on CelebA with SVHN for OOD, except in panel 7 we show the blue channel of the latent representation from panel 6 instead of an averaged latent representation.
	For both dataset pairs, we can recognize the shape of the input image in the latent representations.
	The flow represents images based on their graphical appearance rather than semantic content.
		\vspace{-2mm}
	}
	\label{fig:latent_reprs}
\end{figure*}

\subsection{Leveraging local pixel correlations}

\begin{figure}[t]
    
	\def \panelwidth {0.21\textwidth}
	\def \panelskip {-0.75cm}

    \centering
    \subfigure[Checkerboard]{
    	\includegraphics[width=\panelwidth]{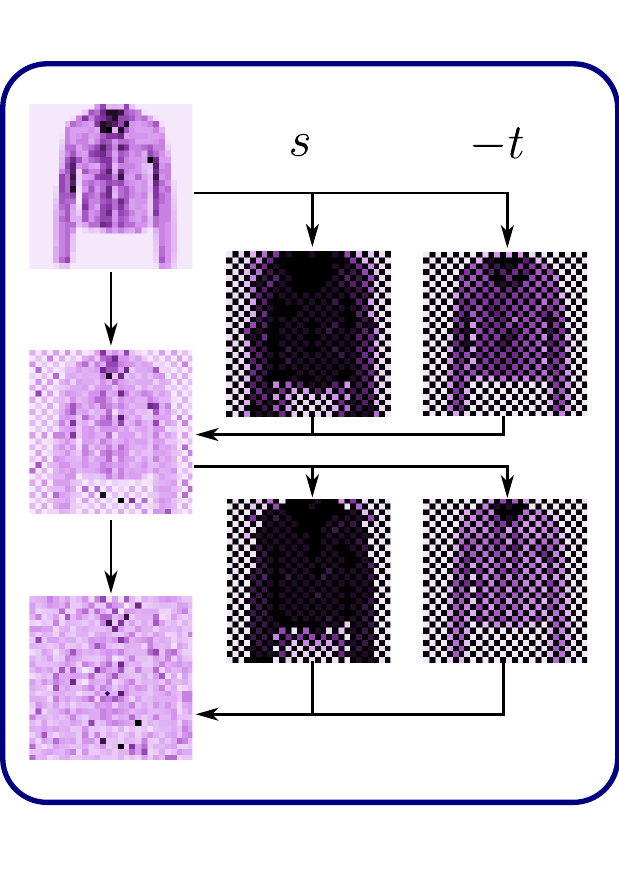} 
    }~~~
    \subfigure[Checkerboard, OOD]{
    	\includegraphics[width=\panelwidth]{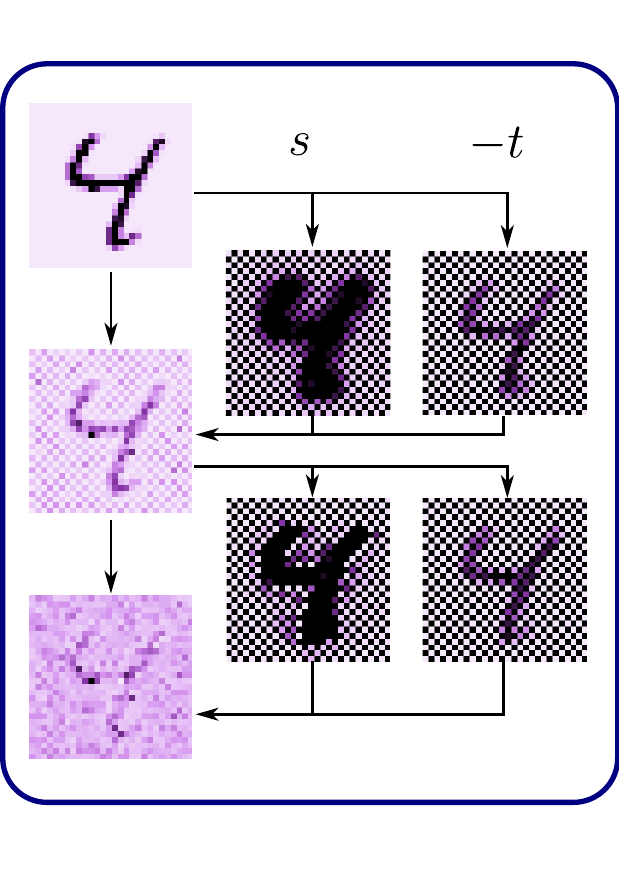} 
    }~~~
    \subfigure[Horizontal]{
    	\includegraphics[width=\panelwidth]{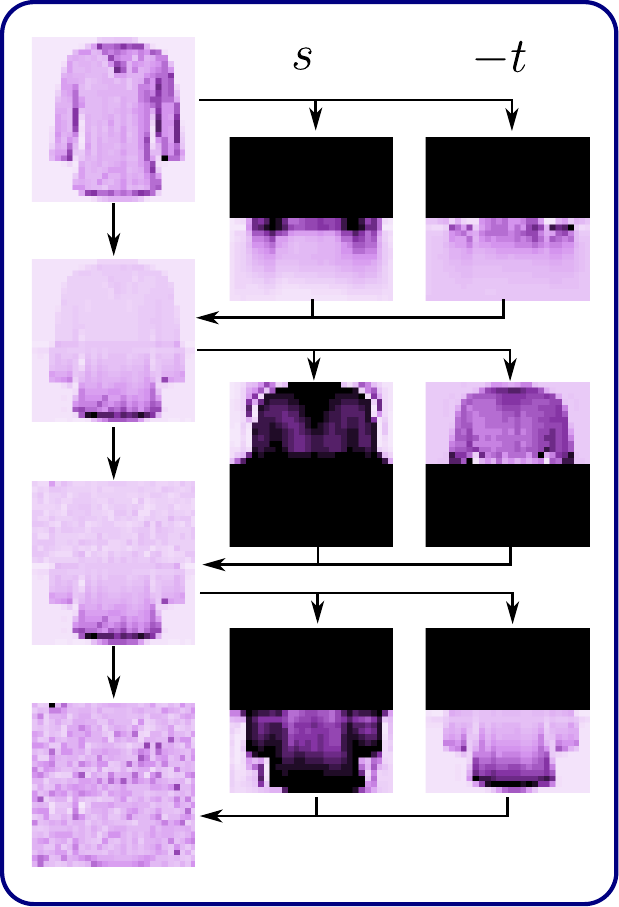} 
    }~~~
    \subfigure[Horizontal, OOD]{
    	\includegraphics[width=\panelwidth]{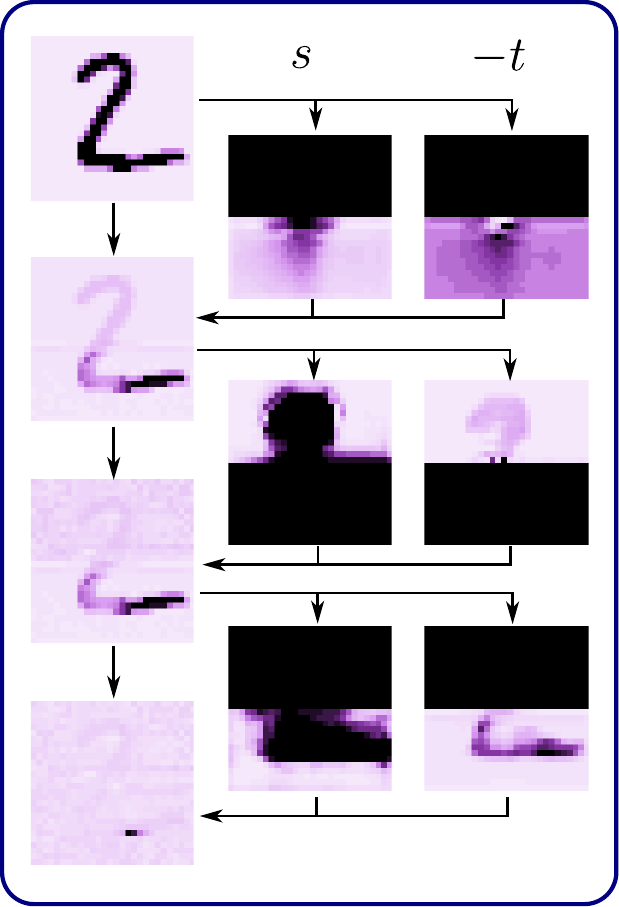} 
    }
	\caption{ 
	\textbf{Coupling layers.}
	Visualization of RealNVP's intermediate coupling layer activations, as well as scales $s$ and shifts $t$ 
	predicted by each coupling layer on in-distribution (panels a, c) and out-of-distribution inputs (panels b, d).
	RealNVP was trained on FashionMNIST.
	(a), (b): RealNVP with a standard checkerboard masks.
	The $st$-networks are able to predict the masked pixels well both on in-distribution and
	OOD inputs from neighbouring pixels.
	(c), (d): RealNVP with a horizontal mask. 
	Despite being trained on FashionMNIST, the $st$-networks are able to correctly
	predict the bottom half of MNIST digits in the second coupling layer due 
	to coupling layer co-adaptation.
	}
	\label{fig:coupling_layer_activation}
    \vspace{-5mm}
\end{figure}

In Figure \ref{fig:coupling_layer_activation}(a, b), we visualize intermediate coupling layer 
activations of a small RealNVP model with 2 coupling layers and checkerboard masks trained on FashionMNIST. 
For the masked inputs, the outputs of the $st$-network are shown in black.
Even though the flow was trained on FashionMNIST and has never seen an MNIST digit, 
the $st$-networks can easily predict masked from observed pixels on both FashionMNIST and MNIST.
Figure \ref{fig:intro} shows the same behaviour in the first coupling layers of RealNVP trained on ImageNet.

With the checkerboard mask, the $st$-networks predict the masked pixels from neighbouring pixels (see Appendix \ref{sec:app_masks} for a discussion of different masks).
Natural images have local structure and correlations: with a high probability, a particular pixel value will be similar to its neighbouring pixels.
The checkerboard mask creates an inductive bias for the flow to pick up on these local correlations.
In Figure \ref{fig:coupling_layer_activation}, we can see that the outputs of the $s$-network are especially 
large for the background pixels and large patches of the same color
(larger values are shown with lighter color), 
where the flow simply predicts for example that a pixel surrounded by black pixels would itself be black. 

In addition to the checkerboard mask, RealNVP and Glow also use channel-wise masks. 
These masks are applied after a squeeze layer, which puts different subsampled versions of the image in different channels.
As a result, 
the $st$-network is again trained to predict pixel values from neighbouring pixels. 
We provide additional 
visualizations for RealNVP and Glow in Appendix \ref{sec:app_coupling}.

\vspace{-2mm}
\subsection{Coupling layer co-adaptation}
\label{sec:coadaptation}
\vspace{-2mm}

To better understand the transformations learned by the coupling layers, we replaced 
the standard masks in RealNVP with a sequence of \textit{horizontal masks}
that cover one half of the image (either top or bottom). 
For example, the first coupling layer of the flow shown in panels (c, d) of Figure \ref{fig:coupling_layer_activation} 
transforms the bottom half of the image based on the top half, 
the second layer transforms the top half based on the bottom half, and so on.
In Figure \ref{fig:coupling_layer_activation}(c, d) we visualize the coupling
layers for a $3$-layer RealNVP with horizontal masks on in-distribution (FashionMNIST)
and OOD (MNIST) data.

In the first coupling layer, the shift output $-t$ of the $st$-network predicts the bottom half of the image poorly and the layer does not seem to transform the input significantly.
In the second and third layer, $-t$ presents an almost ideal reconstruction of the masked part of the image on both the in-distribution and, surprisingly, the OOD input.
It is not possible for the $st$-network that was only trained on FashionMNIST
to predict the top half of an MNIST digit based on the other half.
The resolution is that
the first layer encodes information about the top half into the bottom half of the
image; the second layer then decodes this information to accurately predict the top half.
Similarly, the third layer leverages information about the bottom half of the image encoded by the second layer.
We refer to this phenomenon as \textit{coupling layer co-adaptation}. 
Additional visualizations are in Appendix \ref{sec:app_coupling}.

Horizontal masks allow us to conveniently visualize the coupling layer co-adaptation, but we hypothesize that the same mechanism applies to standard checkerboard and channel-wise masks in combination with local color correlations.

\textbf{Insights into prior work} \quad
Prior work showed that the likelihood score is heavily affected by the input complexity \citep{serra2019input} and background statistics \citep{ren2019likelihood}; however, prior work does not explain \textit{why} flows exhibit such behavior.
Simpler images (e.g. SVHN compared to CIFAR-10) and background often contain large patches
of the same color, which makes it easy to predict masked pixels from their neighbours and to encode and decode the information via coupling layer co-adaptation.

\vspace{-2mm}
\section{Changing biases in flows for better OOD detection}
\label{sec:changing_biases}
\vspace{-2mm}

\begin{figure}[t]
	\def \panelwidth {0.18\linewidth}
    \centering
	\subfigure[Baseline]{
		\includegraphics[width=\panelwidth]{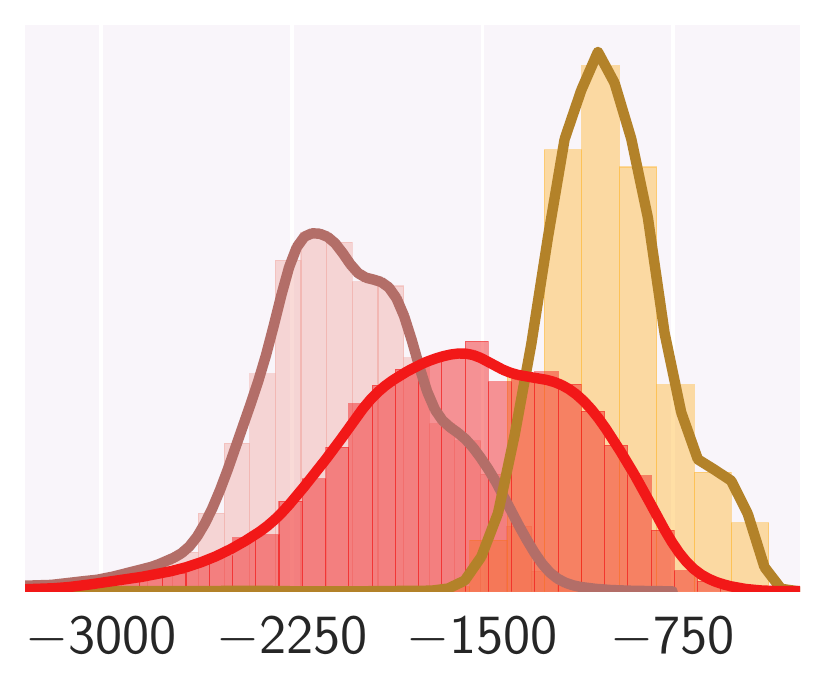}
    }	
	\subfigure[$l = 100$]{
		\includegraphics[width=\panelwidth]{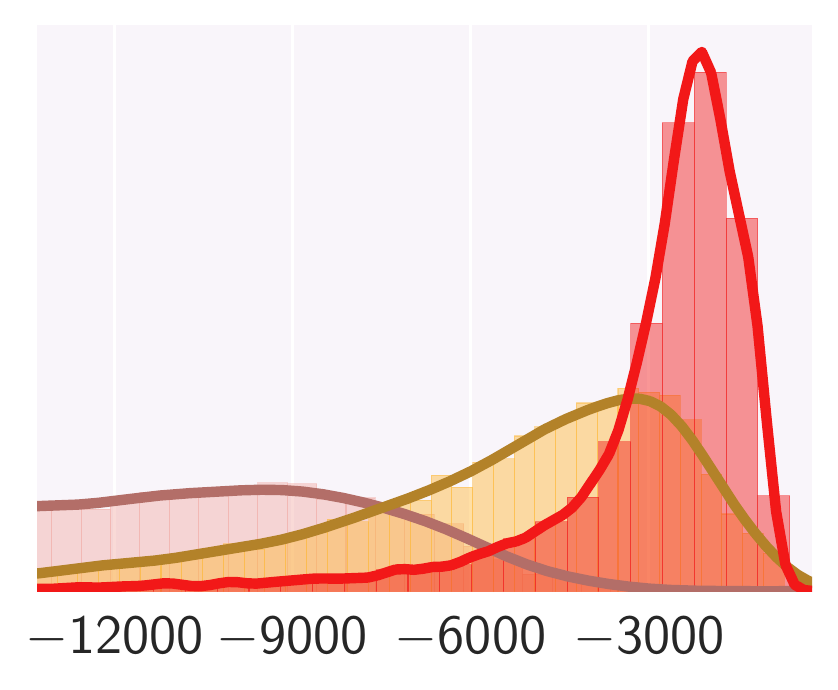}
	}
	\subfigure[$l = 50$]{
		\includegraphics[width=\panelwidth]{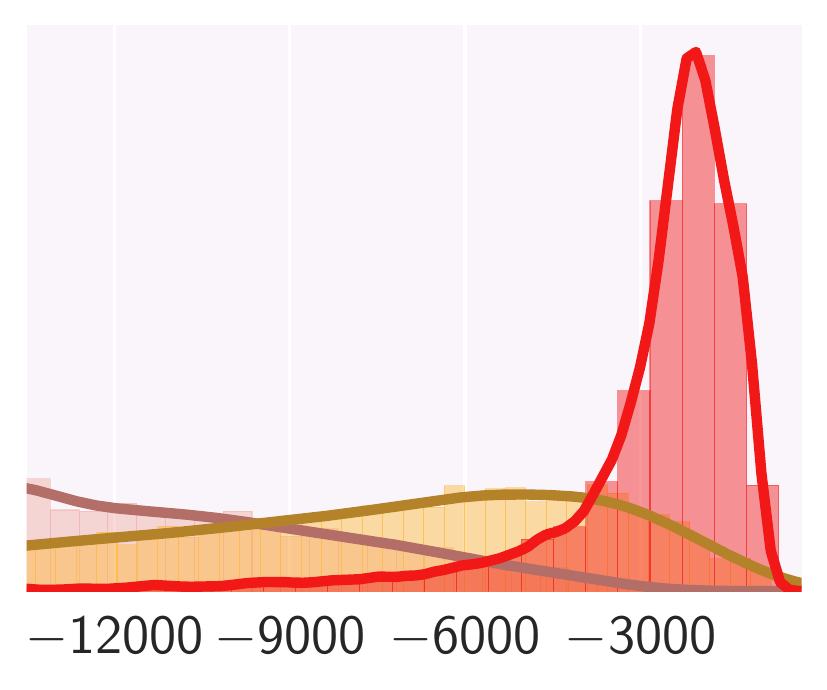}
	}
	\subfigure[$l = 10$]{
		\includegraphics[width=\panelwidth]{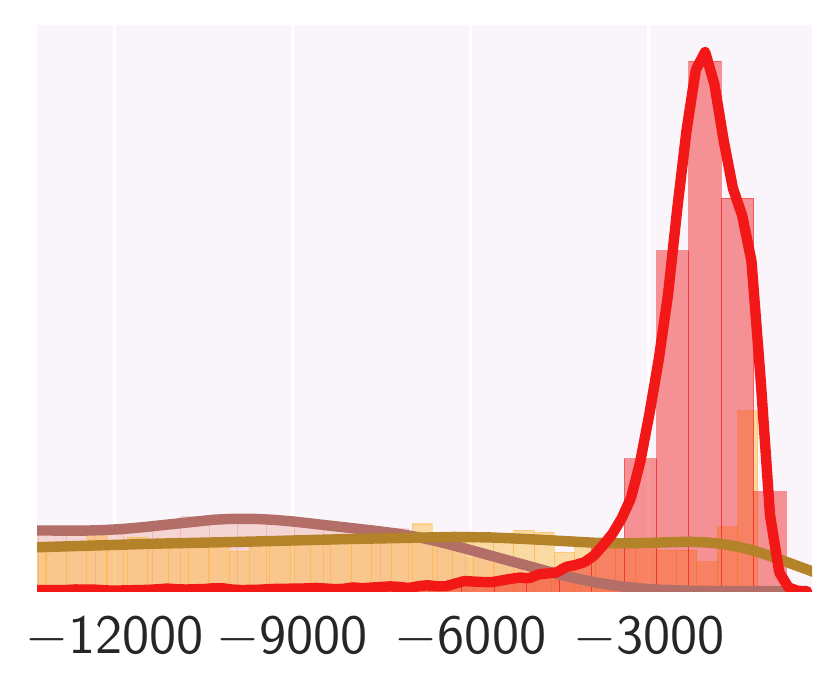}
	}
	\subfigure{
		\includegraphics[width=0.16\linewidth]{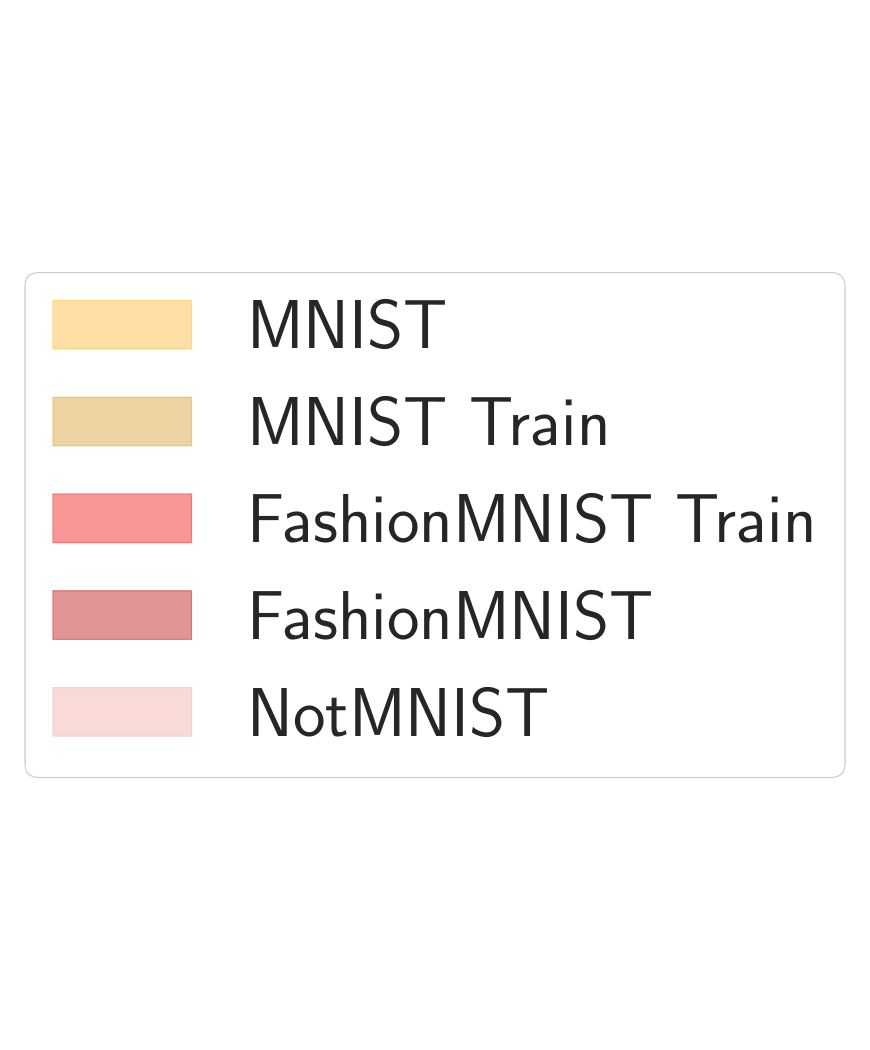}
	}
	\caption{
	\textbf{Effect of $st$-networks capacity.}
	Histograms of log-likelihoods of in- and out-of-distribution data for RealNVP trained on FashionMNIST, varying the dimension $l$ of the bottleneck in the $st$-networks.
    Flows with lower $l$ work better for OOD detection: the baseline assigns higher likelihood to the out-of-distribution MNIST images, while
    the flows with $l=50$ and $l=10$ assign much higher likelihood to in-distribution FashionMNIST data.
    With $l=100$ the flow assigns higher likelihood to in-distribution data, but the overlap
    of the likelihood distribution with OOD MNIST is higher than for $l = 50$ and $l=10$.
    }
	\label{fig:ll_hists_st_fashion}
    \vspace{-.3cm}
\end{figure}

Our observations in Sections \ref{sec:latent_space} and \ref{sec:coupling_layers} 
suggest that normalizing flows are biased towards learning transformations that increase likelihood simultaneously for all structured images. 
We discuss two simple ways of changing the inductive biases for better OOD detection.

\begin{mybox}
    By changing the masking strategy or the architecture of $st$-networks in flows we can
    improve OOD detection based on likelihood.
\end{mybox}

\textbf{Changing masking strategy}\quad
We consider two three types of masks. We introduced the horizontal mask in Section \ref{sec:coadaptation}: in each coupling layer the flow updates the bottom half of the image based on the top half or vice versa. 
With a horizontal mask, flows cannot simply use the information from neighbouring pixels when predicting a given pixel,
but they exhibit coupling layer co-adaptation (see Section \ref{sec:coadaptation}).
To combat coupling layer co-adaptation, we additionally introduce the \textit{cycle-mask},
a masking strategy where the information about a part of the image has to 
travel through three coupling layers before it can be used to update the same part of the image (details in Appendix \ref{sec:app_cyclemask}).
To compare the performance of the checkerboard mask, horizontal mask and cycle-mask, we construct flows of exactly the same size and architecture (RealNVP with 8 coupling layers and no squeeze layers) with each of these masks, trained on CelebA and FashionMNIST. 
We present the results in the Appendix \ref{sec:app_cyclemask}.
As expected, for the checkerboard mask, the flow assigns higher likelihood to the simpler OOD datasets (SVHN for CelebA and MNIST for FashionMNIST). 
With the horizontal mask, the OOD data still has higher likelihood on average, but the relative ranking of the in-distribution data is improved. 
Finally, for the cycle-mask, on FashionMNIST the likelihood is higher compared to MNIST on average. On CelebA the likelihood is similar but slightly lower compared to SVHN.

\textbf{$st$-networks with bottleneck}\quad
Another way to force the flow to learn global structure rather than local pixel correlations and to prevent coupling layer co-adaptation is to restrict the capacity of the $st$-networks.
To do so, we introduce a \textit{bottleneck} to the $st$-networks: a pair of fully-connected layers projecting to a space of dimension $l$ and back to the original input dimension. 
We insert these layers after the middle layer of the $st$-network. 
If the latent dimension $l$ is small, the $st$-network cannot simply reproduce its input as its output, and thus cannot exploit the local pixel correlations discussed in Section \ref{sec:coupling_layers}. Passing information through multiple layers with a low-dimensional bottleneck also reduces the effect of coupling layer co-adaptation.
We train a RealNVP flow 
varying the latent dimension $l$ on CelebA and on FashionMNIST. 
We present the results in Figure \ref{fig:ll_hists_st_fashion} and Appendix \ref{sec:app_biases}.
On FashionMNIST, introducing the bottleneck forces the flow to assign lower likelihood to OOD data (Figure \ref{fig:ll_hists_st_fashion}).
Furthermore, as we decrease $l$, the likelihood of the OOD data decreases but FashionMNIST likelihood stays the same.
On CelebA the relative ranking of likelihood for in-distribution data is similarly improved when we decrease the dimension $l$ of the bottleneck, but SVHN is still assigned slightly higher likelihood than CelebA.
See Appendix \ref{sec:app_biases} for detailed results.

While the proposed modifications do not completely resolve the issue of OOD 
data having higher likelihood, the experiments support our
observations in Section \ref{sec:coupling_layers}: 
preventing the flows from leveraging local color correlations and coupling layer co-adaptation, we improve the relative likelihood ranking for in-distribution data.

\section{Out-of-distribution detection using image embeddings}
\label{sec:embeddings}

\begin{figure}[t]
    \renewcommand{\arraystretch}{0.01}
    \begin{tabular}{cccccc} 
        \hspace{-0.4cm}
	    \includegraphics[height=0.15\linewidth]{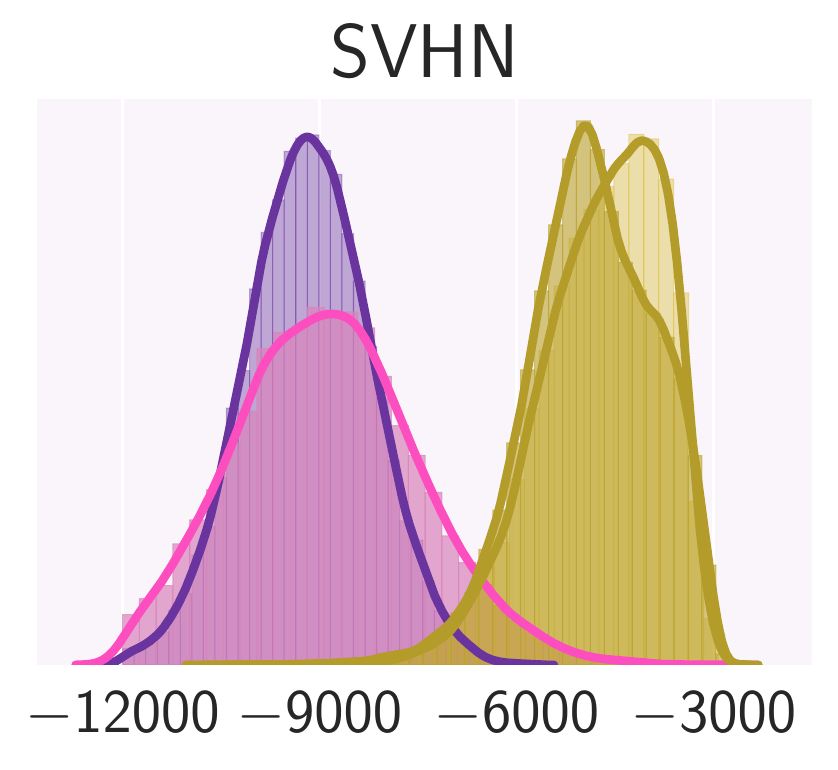}&
        \hspace{-0.4cm}
	    \includegraphics[height=0.15\linewidth]{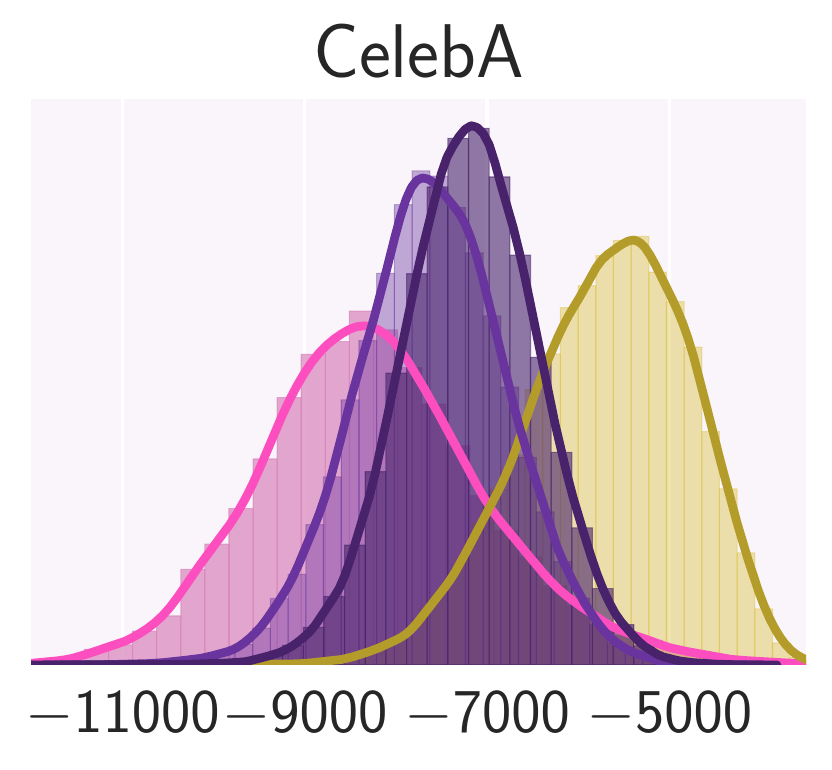}&
        \hspace{-0.4cm}
	    \includegraphics[height=0.15\linewidth]{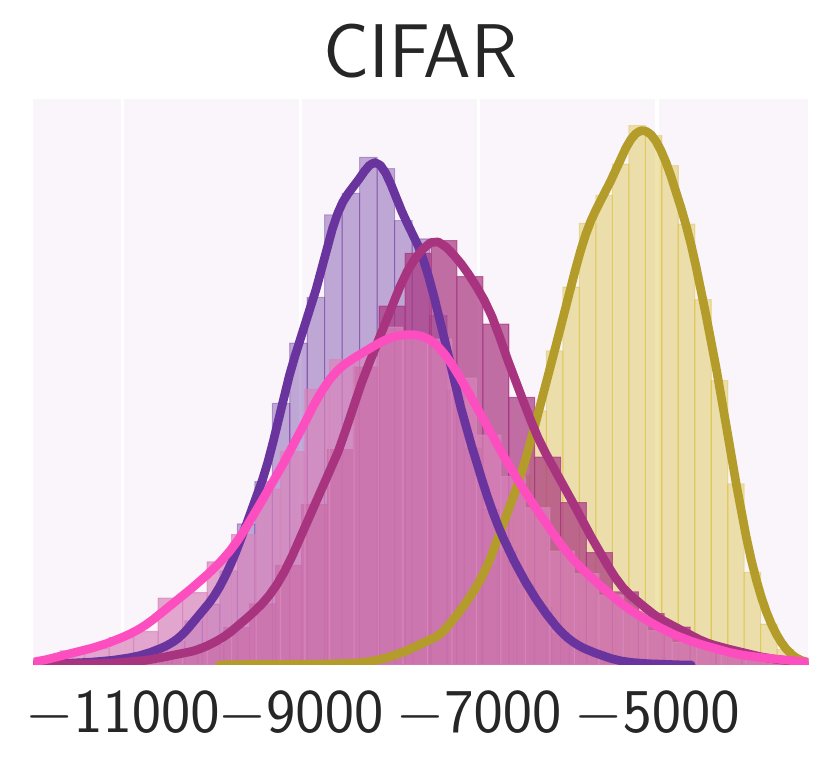} &
        \hspace{-0.4cm}
	    \includegraphics[height=0.15\linewidth]{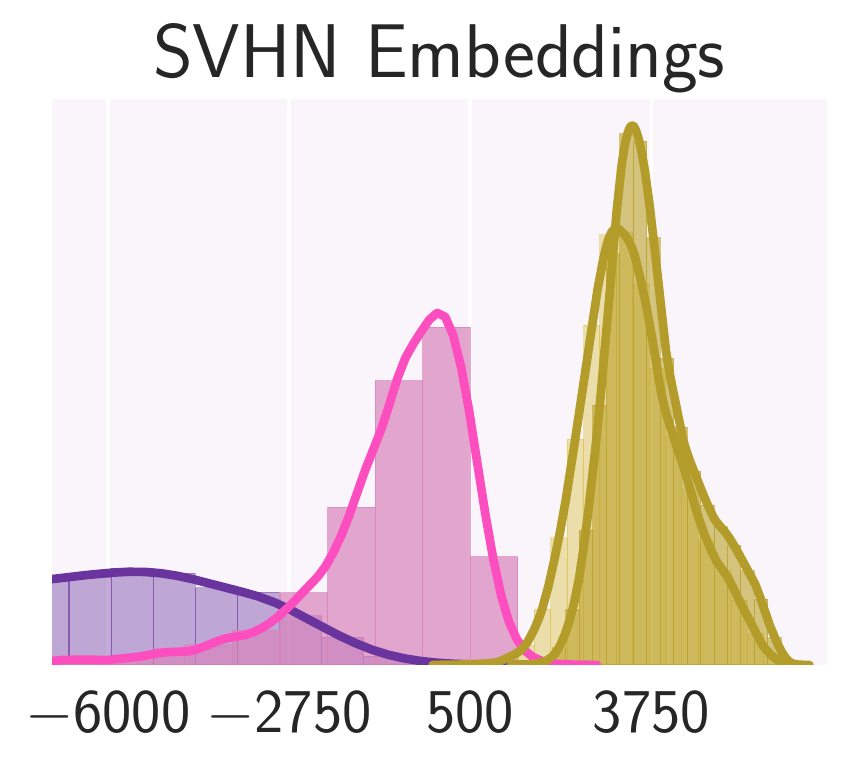}&
        \hspace{-0.4cm}
	    \includegraphics[height=0.15\linewidth]{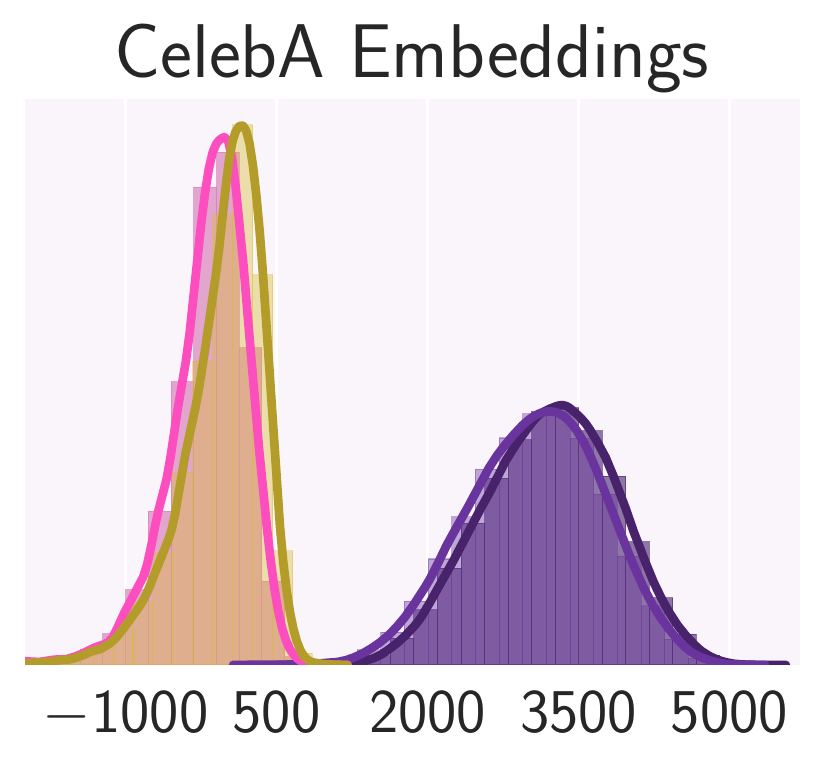}&
        \hspace{-0.4cm}
	    \includegraphics[height=0.15\linewidth]{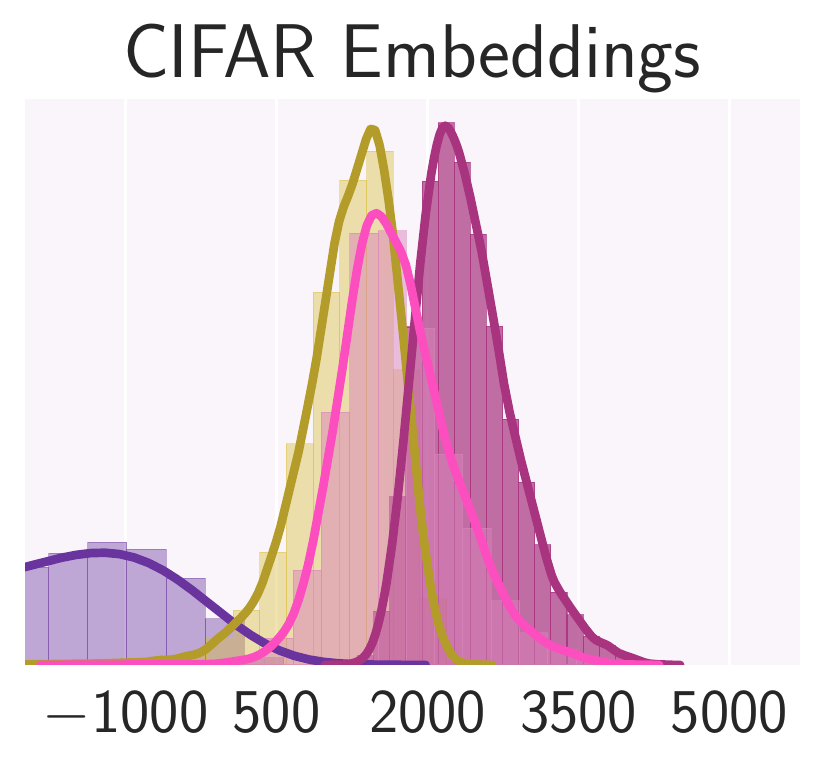} \\
        \multicolumn{6}{c}{
	    \includegraphics[height=0.032\linewidth, trim={0cm 0.9cm 0cm 0.7cm},clip]{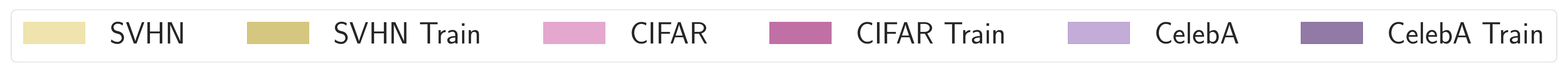}
        }
    \end{tabular}
	\caption{
	\textbf{Image embeddings.}
	Log-likelihood histograms for RealNVP trained on raw pixel data (first three panels) and embeddings extracted for the same 
    image datasets using EfficientNet trained on ImageNet.
    On raw pixels, the flow assigns the highest likelihood to SVHN regardless of its training dataset.
    On image embeddings, flows always assign higher likelihood to in-distribution data.
    When trained on features capturing the semantic content of the
    input, flows can detect OOD. 
	}
	\label{fig:ll_hists_tabular}
    \vspace{-.3cm}
\end{figure}

In Section \ref{sec:reasoning} we argued that in order to detect OOD data the model has to assign likelihood based on high-level semantic features of the data, which the flows fail to do when trained on images.
In this section, we test out-of-distribution detection using image representations from a deep neural network.

\begin{mybox}
    Normalizing flows can detect OOD images when trained on high-level semantic representations instead of raw pixels.
\end{mybox}

We extract embeddings for CIFAR-10, CelebA and SVHN using an EfficientNet \citep{tan2019efficientnet} 
pretrained on ImageNet \citep{russakovsky2015imagenet} which yields 1792-dimensional features\footnote{
The original images are $3072$-dimensional, so the dimension of the embeddings
is only two times smaller. 
Thus, the inability to detect OOD images \textit{cannot} be explained just by the high dimensionality of the data.
}. 
We train RealNVP on each of the representation datasets, considering the other two datasets as OOD. 
We present the likelihood histograms for all datasets in Figure \ref{fig:ll_hists_tabular}(b). 
Additionally, we report AUROC scores in Appendix Table \ref{tab:efficientnet_auroc}. 
For the models trained on SVHN and CelebA, both OOD datasets have lower likelihood and the AUROC scores are close to 100\%. 
For the model trained on CIFAR-10, CelebA has lower likelihood. Moreover, the likelihood distribution on SVHN, while significantly overlapping with CIFAR-10, still has a lower average: the AUROC score between CIFAR-10 and SVHN is 73\%.
Flows are much better at OOD detection on image embeddings than on the original image datasets. 
For example, a flow trained on CelebA images assigns higher 
likelihood to SVHN, while a flow trained on CelebA embeddings assigns low likelihood 
to SVHN embeddings (see Appendix \ref{appendix:baseline_likelihood_auroc} for likelihood 
distribution and AUROC scores on image data). 

\textbf{Non-image data}\quad
In Appendix \ref{appendix:tabular} we evaluate flows on tabular UCI datasets, where the features are relatively high-level compared to images.
On these datasets, normalizing flows assign higher likelihood to in-distribution data.

\section{Conclusion}

Many of the puzzling phenomena in deep learning can be boiled down to a matter of \emph{inductive biases}. Neural networks in many cases have the flexibility
to overfit datasets, but they do not because the biases of the architecture and training procedures can guide us towards reasonable solutions. In performing OOD
detection, the biases of normalizing flows can be more of a curse than a blessing. Indeed, we have shown that flows tend to learn representations that achieve
high likelihood through generic graphical features and local pixel correlations, rather than discovering semantic structure that would be specific to the training 
distribution.

To provide insights into prior results \citep[e.g.,][]{nalisnick2018deep, choi2018waic, nalisnick2019detecting, song2019unsupervised, zhang2020out, serra2019input}, part of 
our discussion has focused on an in-depth exploration of the popular class of normalizing flows based on affine coupling layers. We hypothesize 
that many of our conclusions about coupling layers extend at a high level to other types of normalizing flows \citep[e.g.,][]{behrmann2018invertible, chen2019residual, finzi2019invertible, karami2019icf, grathwohl2018ffjord, papamakarios2017masked, song2019mintnet, huang2018neural, de2019block}. A full study of these other types of flows is a promising
direction for future work.

\subsection*{Acknowledgements}
PK, PI, and AGW are supported by an Amazon Research  Award,  Amazon Machine Learning Research Award, Facebook  Research, NSF I-DISRE 193471, NIH R01 DA048764-01A1, NSF IIS-1910266, and NSF 1922658 NRT-HDR: FUTURE Foundations, Translation, and Responsibility for Data Science. We thank Marc Finzi, Greg Benton, Wesley Maddox, and Alex Wang for helpful discussions.

\bibliography{main}

\begin{thebibliography}{45}
\providecommand{\natexlab}[1]{#1}
\providecommand{\url}[1]{\texttt{#1}}
\expandafter\ifx\csname urlstyle\endcsname\relax
  \providecommand{\doi}[1]{doi: #1}\else
  \providecommand{\doi}{doi: \begingroup \urlstyle{rm}\Url}\fi

\bibitem[Atanov et~al.(2019)Atanov, Volokhova, Ashukha, Sosnovik, and
  Vetrov]{atanov2019semi}
Andrei Atanov, Alexandra Volokhova, Arsenii Ashukha, Ivan Sosnovik, and Dmitry
  Vetrov.
\newblock Semi-conditional normalizing flows for semi-supervised learning.
\newblock \emph{arXiv preprint arXiv:1905.00505}, 2019.

\bibitem[Baldi et~al.(2016)Baldi, Cranmer, Faucett, Sadowski, and
  Whiteson]{baldi2016parameterized}
Pierre Baldi, Kyle Cranmer, Taylor Faucett, Peter Sadowski, and Daniel
  Whiteson.
\newblock Parameterized machine learning for high-energy physics.
\newblock \emph{arXiv preprint arXiv:1601.07913}, 2016.

\bibitem[Behrmann et~al.(2018)Behrmann, Duvenaud, and
  Jacobsen]{behrmann2018invertible}
Jens Behrmann, David Duvenaud, and J{\"o}rn-Henrik Jacobsen.
\newblock Invertible residual networks.
\newblock \emph{arXiv preprint arXiv:1811.00995}, 2018.

\bibitem[Bhattacharyya et~al.(2020)Bhattacharyya, Mahajan, Fritz, Schiele, and
  Roth]{bhattacharyya2020normalizing}
Apratim Bhattacharyya, Shweta Mahajan, Mario Fritz, Bernt Schiele, and Stefan
  Roth.
\newblock Normalizing flows with multi-scale autoregressive priors.
\newblock In \emph{Proceedings of the Conference on Computer Vision and Pattern
  Recognition}, pages 8415--8424, 2020.

\bibitem[Chen et~al.(2020)Chen, Lu, Chenli, Zhu, and Tian]{chen2020vflow}
Jianfei Chen, Cheng Lu, Biqi Chenli, Jun Zhu, and Tian Tian.
\newblock Vflow: More expressive generative flows with variational data
  augmentation.
\newblock \emph{arXiv preprint arXiv:2002.09741}, 2020.

\bibitem[Chen et~al.(2019)Chen, Behrmann, Duvenaud, and
  Jacobsen]{chen2019residual}
Ricky~TQ Chen, Jens Behrmann, David Duvenaud, and J{\"o}rn-Henrik Jacobsen.
\newblock Residual flows for invertible generative modeling.
\newblock \emph{arXiv preprint arXiv:1906.02735}, 2019.

\bibitem[Choi et~al.(2018)Choi, Jang, and Alemi]{choi2018waic}
Hyunsun Choi, Eric Jang, and Alexander~A Alemi.
\newblock Waic, but why? generative ensembles for robust anomaly detection.
\newblock \emph{arXiv preprint arXiv:1810.01392}, 2018.

\bibitem[De~Cao et~al.(2019)De~Cao, Titov, and Aziz]{de2019block}
Nicola De~Cao, Ivan Titov, and Wilker Aziz.
\newblock Block neural autoregressive flow.
\newblock \emph{arXiv preprint arXiv:1904.04676}, 2019.

\bibitem[Dinh et~al.(2014)Dinh, Krueger, and Bengio]{dinh2014nice}
Laurent Dinh, David Krueger, and Yoshua Bengio.
\newblock {NICE}: Non-linear independent components estimation.
\newblock \emph{arXiv preprint arXiv:1410.8516}, 2014.

\bibitem[Dinh et~al.(2016)Dinh, Sohl-Dickstein, and Bengio]{dinh2016density}
Laurent Dinh, Jascha Sohl-Dickstein, and Samy Bengio.
\newblock Density estimation using {Real NVP}.
\newblock \emph{arXiv preprint arXiv:1605.08803}, 2016.

\bibitem[Durkan et~al.(2019)Durkan, Bekasov, Murray, and
  Papamakarios]{durkan2019neural}
Conor Durkan, Artur Bekasov, Iain Murray, and George Papamakarios.
\newblock Neural spline flows.
\newblock In \emph{Advances in Neural Information Processing Systems}, pages
  7509--7520, 2019.

\bibitem[Finzi et~al.(2019)Finzi, Izmailov, Maddox, Kirichenko, and
  Wilson]{finzi2019invertible}
Marc Finzi, Pavel Izmailov, Wesley Maddox, Polina Kirichenko, and Andrew~Gordon
  Wilson.
\newblock Invertible convolutional networks.
\newblock In \emph{Workshop on Invertible Neural Nets and Normalizing Flows,
  International Conference on Machine Learning}, 2019.

\bibitem[Grathwohl et~al.(2018)Grathwohl, Chen, Betterncourt, Sutskever, and
  Duvenaud]{grathwohl2018ffjord}
Will Grathwohl, Ricky~TQ Chen, Jesse Betterncourt, Ilya Sutskever, and David
  Duvenaud.
\newblock Ffjord: Free-form continuous dynamics for scalable reversible
  generative models.
\newblock \emph{arXiv preprint arXiv:1810.01367}, 2018.

\bibitem[Hendrycks et~al.(2018)Hendrycks, Mazeika, and
  Dietterich]{hendrycks2018deep}
Dan Hendrycks, Mantas Mazeika, and Thomas Dietterich.
\newblock Deep anomaly detection with outlier exposure.
\newblock \emph{arXiv preprint arXiv:1812.04606}, 2018.

\bibitem[Ho et~al.(2019)Ho, Chen, Srinivas, Duan, and Abbeel]{ho2019flow++}
Jonathan Ho, Xi~Chen, Aravind Srinivas, Yan Duan, and Pieter Abbeel.
\newblock Flow++: Improving flow-based generative models with variational
  dequantization and architecture design.
\newblock \emph{arXiv preprint arXiv:1902.00275}, 2019.

\bibitem[Hoogeboom et~al.(2019)Hoogeboom, Berg, and
  Welling]{hoogeboom2019emerging}
Emiel Hoogeboom, Rianne van~den Berg, and Max Welling.
\newblock Emerging convolutions for generative normalizing flows.
\newblock \emph{arXiv preprint arXiv:1901.11137}, 2019.

\bibitem[Huang et~al.(2018)Huang, Krueger, Lacoste, and
  Courville]{huang2018neural}
Chin-Wei Huang, David Krueger, Alexandre Lacoste, and Aaron Courville.
\newblock Neural autoregressive flows.
\newblock \emph{arXiv preprint arXiv:1804.00779}, 2018.

\bibitem[Ioffe and Szegedy(2015)]{ioffe2015batch}
Sergey Ioffe and Christian Szegedy.
\newblock Batch normalization: Accelerating deep network training by reducing
  internal covariate shift.
\newblock \emph{arXiv preprint arXiv:1502.03167}, 2015.

\bibitem[Izmailov et~al.(2020)Izmailov, Kirichenko, Finzi, and
  Wilson]{izmailov2019semi}
Pavel Izmailov, Polina Kirichenko, Marc Finzi, and Andrew~Gordon Wilson.
\newblock Semi-supervised learning with normalizing flows.
\newblock In \emph{International Conference on Machine Learning}, 2020.

\bibitem[Karami et~al.(2019)Karami, Schuurmans, Sohl-Dickstein, Dinh, and
  Duckworth]{karami2019icf}
Mahdi Karami, Dale Schuurmans, Jascha Sohl-Dickstein, Laurent Dinh, and Daniel
  Duckworth.
\newblock Invertible convolutional flow.
\newblock In H.~Wallach, H.~Larochelle, A.~Beygelzimer, F.~d\textquotesingle
  Alch\'{e}-Buc, E.~Fox, and R.~Garnett, editors, \emph{Advances in Neural
  Information Processing Systems 32}, pages 5635--5645. Curran Associates,
  Inc., 2019.
\newblock URL
  \url{http://papers.nips.cc/paper/8801-invertible-convolutional-flow.pdf}.

\bibitem[Kim et~al.(2018)Kim, Lee, Song, Kim, and Yoon]{kim2018flowavenet}
Sungwon Kim, Sang-gil Lee, Jongyoon Song, Jaehyeon Kim, and Sungroh Yoon.
\newblock Flowavenet: A generative flow for raw audio.
\newblock \emph{arXiv preprint arXiv:1811.02155}, 2018.

\bibitem[Kingma and Dhariwal(2018)]{kingma2018glow}
Durk~P Kingma and Prafulla Dhariwal.
\newblock Glow: Generative flow with invertible 1$\times$1 convolutions.
\newblock In \emph{Advances in Neural Information Processing Systems}, pages
  10215--10224, 2018.

\bibitem[Kobyzev et~al.(2019)Kobyzev, Prince, and
  Brubaker]{kobyzev2019normalizing}
Ivan Kobyzev, Simon Prince, and Marcus~A Brubaker.
\newblock Normalizing flows: Introduction and ideas.
\newblock \emph{arXiv preprint arXiv:1908.09257}, 2019.

\bibitem[Loshchilov and Hutter(2017)]{loshchilov2017decoupled}
Ilya Loshchilov and Frank Hutter.
\newblock Decoupled weight decay regularization.
\newblock \emph{arXiv preprint arXiv:1711.05101}, 2017.

\bibitem[Ma et~al.(2019)Ma, Kong, Zhang, and Hovy]{ma2019macow}
Xuezhe Ma, Xiang Kong, Shanghang Zhang, and Eduard Hovy.
\newblock Macow: Masked convolutional generative flow.
\newblock In \emph{Advances in Neural Information Processing Systems}, pages
  5891--5900, 2019.

\bibitem[Mitchell(1980)]{mitchell1980need}
Tom~M Mitchell.
\newblock \emph{The need for biases in learning generalizations}.
\newblock Department of Computer Science, Laboratory for Computer Science
  Research~…, 1980.

\bibitem[Nalisnick et~al.(2018)Nalisnick, Matsukawa, Teh, Gorur, and
  Lakshminarayanan]{nalisnick2018deep}
Eric Nalisnick, Akihiro Matsukawa, Yee~Whye Teh, Dilan Gorur, and Balaji
  Lakshminarayanan.
\newblock Do deep generative models know what they don't know?
\newblock \emph{arXiv preprint arXiv:1810.09136}, 2018.

\bibitem[Nalisnick et~al.(2019)Nalisnick, Matsukawa, Teh, and
  Lakshminarayanan]{nalisnick2019detecting}
Eric Nalisnick, Akihiro Matsukawa, Yee~Whye Teh, and Balaji Lakshminarayanan.
\newblock Detecting out-of-distribution inputs to deep generative models using
  a test for typicality.
\newblock \emph{arXiv preprint arXiv:1906.02994}, 2019.

\bibitem[Oord et~al.(2016)Oord, Kalchbrenner, and Kavukcuoglu]{oord2016pixel}
Aaron van~den Oord, Nal Kalchbrenner, and Koray Kavukcuoglu.
\newblock Pixel recurrent neural networks.
\newblock \emph{arXiv preprint arXiv:1601.06759}, 2016.

\bibitem[Papamakarios et~al.(2017)Papamakarios, Pavlakou, and
  Murray]{papamakarios2017masked}
George Papamakarios, Theo Pavlakou, and Iain Murray.
\newblock Masked autoregressive flow for density estimation.
\newblock In \emph{Advances in Neural Information Processing Systems}, pages
  2338--2347, 2017.

\bibitem[Papamakarios et~al.(2019)Papamakarios, Nalisnick, Rezende, Mohamed,
  and Lakshminarayanan]{papamakarios2019normalizing}
George Papamakarios, Eric Nalisnick, Danilo~Jimenez Rezende, Shakir Mohamed,
  and Balaji Lakshminarayanan.
\newblock Normalizing flows for probabilistic modeling and inference.
\newblock \emph{arXiv preprint arXiv:1912.02762}, 2019.

\bibitem[Prenger et~al.(2019)Prenger, Valle, and
  Catanzaro]{prenger2019waveglow}
Ryan Prenger, Rafael Valle, and Bryan Catanzaro.
\newblock Waveglow: A flow-based generative network for speech synthesis.
\newblock In \emph{ICASSP 2019-2019 IEEE International Conference on Acoustics,
  Speech and Signal Processing (ICASSP)}, pages 3617--3621. IEEE, 2019.

\bibitem[Ren et~al.(2019)Ren, Liu, Fertig, Snoek, Poplin, Depristo, Dillon, and
  Lakshminarayanan]{ren2019likelihood}
Jie Ren, Peter~J Liu, Emily Fertig, Jasper Snoek, Ryan Poplin, Mark Depristo,
  Joshua Dillon, and Balaji Lakshminarayanan.
\newblock Likelihood ratios for out-of-distribution detection.
\newblock In \emph{Advances in Neural Information Processing Systems}, pages
  14680--14691, 2019.

\bibitem[Roe et~al.(2005)Roe, Yang, Zhu, Liu, Stancu, and
  McGregor]{roe2005boosted}
Byron~P Roe, Hai-Jun Yang, Ji~Zhu, Yong Liu, Ion Stancu, and Gordon McGregor.
\newblock Boosted decision trees as an alternative to artificial neural
  networks for particle identification.
\newblock \emph{Nuclear Instruments and Methods in Physics Research Section A:
  Accelerators, Spectrometers, Detectors and Associated Equipment},
  543\penalty0 (2-3):\penalty0 577--584, 2005.

\bibitem[Russakovsky et~al.(2015)Russakovsky, Deng, Su, Krause, Satheesh, Ma,
  Huang, Karpathy, Khosla, Bernstein, et~al.]{russakovsky2015imagenet}
Olga Russakovsky, Jia Deng, Hao Su, Jonathan Krause, Sanjeev Satheesh, Sean Ma,
  Zhiheng Huang, Andrej Karpathy, Aditya Khosla, Michael Bernstein, et~al.
\newblock Imagenet large scale visual recognition challenge.
\newblock \emph{International journal of computer vision}, 115\penalty0
  (3):\penalty0 211--252, 2015.

\bibitem[Serr{\`a} et~al.(2019)Serr{\`a}, {\'A}lvarez, G{\'o}mez, Slizovskaia,
  N{\'u}{\~n}ez, and Luque]{serra2019input}
Joan Serr{\`a}, David {\'A}lvarez, Vicen{\c{c}} G{\'o}mez, Olga Slizovskaia,
  Jos{\'e}~F N{\'u}{\~n}ez, and Jordi Luque.
\newblock Input complexity and out-of-distribution detection with
  likelihood-based generative models.
\newblock \emph{arXiv preprint arXiv:1909.11480}, 2019.

\bibitem[Song et~al.(2019{\natexlab{a}})Song, Song, and
  Ermon]{song2019unsupervised}
Jiaming Song, Yang Song, and Stefano Ermon.
\newblock Unsupervised out-of-distribution detection with batch normalization.
\newblock \emph{arXiv preprint arXiv:1910.09115}, 2019{\natexlab{a}}.

\bibitem[Song et~al.(2019{\natexlab{b}})Song, Meng, and Ermon]{song2019mintnet}
Yang Song, Chenlin Meng, and Stefano Ermon.
\newblock Mintnet: Building invertible neural networks with masked
  convolutions.
\newblock In \emph{Advances in Neural Information Processing Systems}, pages
  11002--11012, 2019{\natexlab{b}}.

\bibitem[Tabak and Turner(2013)]{tabak2013family}
Esteban~G Tabak and Cristina~V Turner.
\newblock A family of nonparametric density estimation algorithms.
\newblock \emph{Communications on Pure and Applied Mathematics}, 66\penalty0
  (2):\penalty0 145--164, 2013.

\bibitem[Tan and Le(2019)]{tan2019efficientnet}
Mingxing Tan and Quoc~V Le.
\newblock Efficientnet: Rethinking model scaling for convolutional neural
  networks.
\newblock \emph{arXiv preprint arXiv:1905.11946}, 2019.

\bibitem[Theis et~al.(2015)Theis, Oord, and Bethge]{theis2015note}
Lucas Theis, A{\"a}ron van~den Oord, and Matthias Bethge.
\newblock A note on the evaluation of generative models.
\newblock \emph{arXiv preprint arXiv:1511.01844}, 2015.

\bibitem[Torralba et~al.(2008)Torralba, Fergus, and Freeman]{torralba200880}
Antonio Torralba, Rob Fergus, and William~T Freeman.
\newblock 80 million tiny images: A large data set for nonparametric object and
  scene recognition.
\newblock \emph{IEEE transactions on pattern analysis and machine
  intelligence}, 30\penalty0 (11):\penalty0 1958--1970, 2008.

\bibitem[Uria et~al.(2013)Uria, Murray, and Larochelle]{uria2013rnade}
Benigno Uria, Iain Murray, and Hugo Larochelle.
\newblock Rnade: The real-valued neural autoregressive density-estimator.
\newblock In \emph{Advances in Neural Information Processing Systems}, pages
  2175--2183, 2013.

\bibitem[Wilson and Izmailov(2020)]{wilson2020bayesian}
Andrew~Gordon Wilson and Pavel Izmailov.
\newblock Bayesian deep learning and a probabilistic perspective of
  generalization.
\newblock \emph{arXiv preprint arXiv:2002.08791}, 2020.

\bibitem[Zhang et~al.(2020)Zhang, Liu, Chen, Wang, Liu, Li, Wei, and
  Chen]{zhang2020out}
Yufeng Zhang, Wanwei Liu, Zhenbang Chen, Ji~Wang, Zhiming Liu, Kenli Li,
  Hongmei Wei, and Zuoning Chen.
\newblock Out-of-distribution detection with distance guarantee in deep
  generative models.
\newblock \emph{arXiv preprint arXiv:2002.03328}, 2020.

\end{thebibliography}
\bibliographystyle{plainnat}

\newpage\null

\appendix

\section*{Appendix outline}

This appendix is organized as follows.
\begin{itemize}
    \item In Section \ref{sec:app_whatisood}, we provide additional discussion and a formal statement of the argument presented in Section \ref{sec:reasoning}.
    \item In Section \ref{sec:app_capacity}, we show that normalizing flows can be trained to assign high likelihood to the target data and low likelihood to a given OOD dataset.
    \item In Section \ref{sec:app_details}, we provide the hyperparameters that we used for the experiments in this paper.
    \item In Section \ref{appendix:baseline_likelihood_auroc}, we report the log-likelihood histograms and OOD detection AUROC scores for the baseline RealNVP and Glow models on various datasets.
    \item In Section \ref{sec:app_visualizations}, we explain the visualization procedure that we use to visualize the latent representations and coupling layers of normalizing flows.
    \item In Section \ref{sec:app_latents}, we provide additional latent representation visualizations.
    \item In Section \ref{sec:app_masks}, we explain the different masking strategies for coupling layers of normalizing flows.
    \item In Section \ref{sec:app_coupling}, we provide additional coupling layer visualizations.
    \item In Section \ref{sec:app_biases}, we provide additional details on the experiments of Section \ref{sec:changing_biases}.
    \item In Section \ref{sec:app_samples}, we provide samples from baseline models on various datasets. 
    We also discuss an experiment on resampling parts of the latent variables corresponding to different images with normalizing flows.
    \item In Section \ref{appendix:tabular}, we provide additional details and results for the experiments on image embeddings and tabular data from Section \ref{sec:embeddings}.
\end{itemize}

\section{Maximum likelihood objective is agnostic to what data is OOD}
\label{sec:app_whatisood}

\begin{figure}[t]
    \centering
	\includegraphics[width=0.85\linewidth]{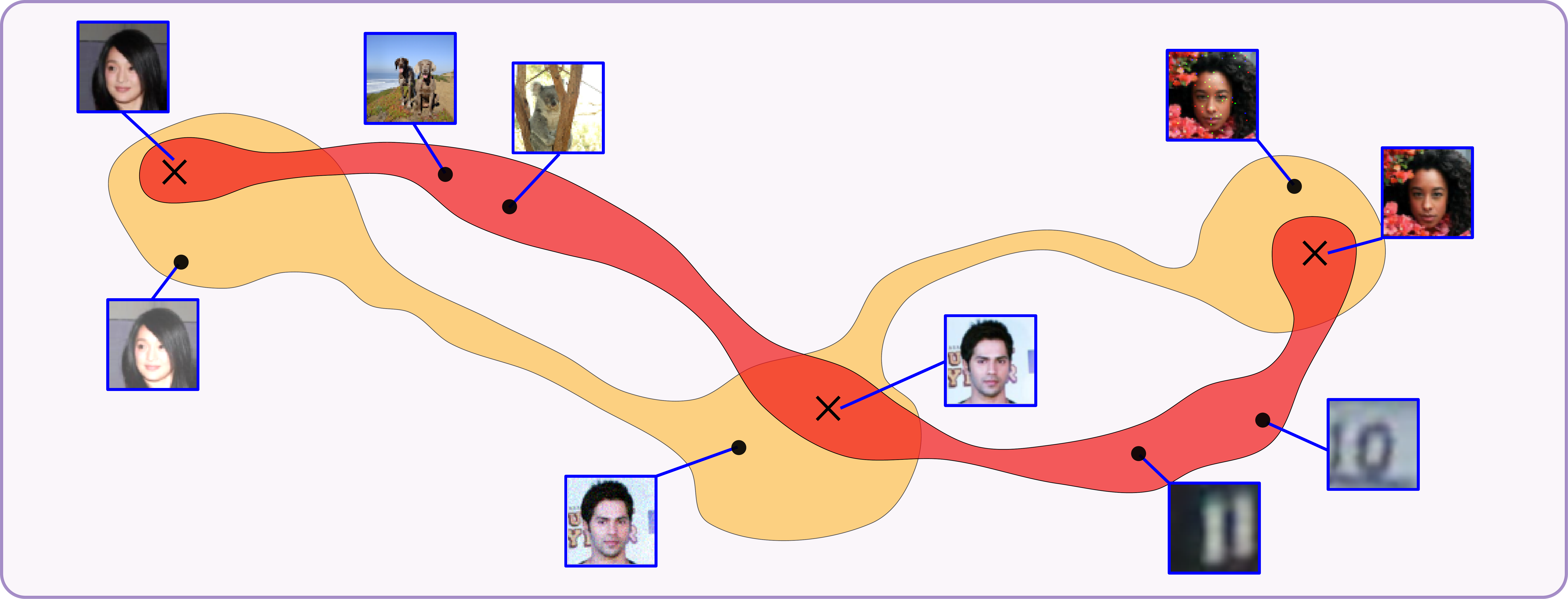}
	\caption{
	\textbf{Inductive biases define what data is OOD.}
    A conceptual visualization of two distributions in the image space (shown in yellow and red),
    training CelebA data is shown with crosses, and other images are shown with circles.
    The distribution shown in yellow could represent inductive biases of a human:
    it assigns high likelihood to all images of human faces,
    regardless of small levels of noise, and small brightness changes.
    The second distribution, shown in red, could represent a normalizing flow:
    it assigns high likelihood to all smooth structured images, including images from SVHN and ImageNet.
    Both distributions assign the same likelihood to the training set, but 
    their high-probability sets are different.
	}
	\label{fig:app_conceptual}
    \vspace{-.3cm}
\end{figure}

In Section \ref{sec:reasoning} we argued that the maximum likelihood objective by itself
does not define out-of-distribution detection prefromance of a normalizing flow.
Instead, it is the inductive biases of the flow that define what data will be 
assigned with high or low likelihood.
We illustrate this point in Figure \ref{fig:app_conceptual}.

The yellow and red shaded regions illustrate the high-probability regions of two
distributions defined on the image space $\mathcal X$.
The distribution in yellow assigns high likelihood to the train (CelebA)
images corrupted by a small level of noise, or brightness adjustments.
This distribution represents how a human could describe the target dataset.
The red distribution on the other hand assigns high likelihood to all structured
images including those from ImageNet and SVHN, but does not support noisy train images.
The red distribution represents a distribution learned by normalizing flow.

For simplicity, we could think that the distributions are uniform on the highlighted
sets, and the sets have the same volume.
Then, both distributions assign equally high likelihood to the training data, 
but the split of the data into in-distribution and OOD is different.
As both distributions provide the same density to the target data, the value of the
maximum likelihood objective in Equation \eqref{eq:change_of_variable} would be the
same for the corresponding models.

More generally, for any distribution that only assigns finite density to the train
set, we can construct another distribution that assigns the same density to the train data,
but also high density to a given set of (OOD) datapoints.
In particular, the new distribution will achieve the same value of the maximum likeihood
objective in Equation \eqref{eq:change_of_variable}.
We formalize our reasoning in the following simple proposition.

\begin{proposition}
    Let $p(\cdot)$ be a probability density on the space $\mathcal X$, and let
    $\mathcal D = \{x_i\}_{i=1}^N$ be the training dataset, where $x_i \in \mathcal X$ for $i=1, \ldots, N$.
    Assume for simplicity that $p$ is upper bounded: for any $x$  $p(x) \le u$.
    Let $\mathcal D_{\text{OOD}}$ be an arbitraty finite set of points.
    Then, for any $c \ge 0$ there exists a distribution with density $p'(\cdot)$ such that 
    $p'(x) =  p(x)$ for all $x \in \mathcal D$, and $p'(x') \ge c$ for all
    $x' \in \mathcal D_{\text {OOD}}$.
\end{proposition}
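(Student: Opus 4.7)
The plan is to construct $p'$ by a direct pointwise modification of $p$ on the finite set $\mathcal D_{\text{OOD}}$, exploiting the fact that this set has Lebesgue measure zero in $\mathcal X$. Concretely, nothing changes in the Lebesgue integral if we raise $p$ to the value $c$ at finitely many points, so the new function is still a valid density, yet it satisfies $p'(x') \ge c$ on each OOD point by construction.

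First I would note that for the hypotheses to be consistent we need $\mathcal D \cap \mathcal D_{\text{OOD}} = \emptyset$ (or equivalently $p(x) \ge c$ already on the intersection); under this mild condition, define
\[
p'(x) = \begin{cases} c, & x \in \mathcal D_{\text{OOD}}, \\ p(x), & x \notin \mathcal D_{\text{OOD}}. \end{cases}
\]
Since $\mathcal D_{\text{OOD}}$ is finite it has Lebesgue measure zero, so $\int_{\mathcal X} p'\, dx = \int_{\mathcal X} p\, dx = 1$. Non-negativity is immediate from $c \ge 0$ and $p \ge 0$. For $x \in \mathcal D$, disjointness gives $p'(x) = p(x)$; for $x' \in \mathcal D_{\text{OOD}}$ we have $p'(x') = c$, satisfying both required conditions.

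The proof is essentially immediate, so the main ``obstacle'' is interpretive rather than technical: since $p$ and $p'$ agree almost everywhere, they induce the same measure, and in the measure-theoretic sense they are the same distribution. To obtain a \emph{genuinely different} density one would instead replace point modifications by smooth bump functions $\phi_i$ with $\phi_i(x'_i) \ge c$, supported in pairwise disjoint open neighborhoods of the $x'_i$ chosen small enough to avoid $\mathcal D$, and then compensate by subtracting an equal-mass bump from a region where $p$ is bounded away from zero. The upper bound $u$ on $p$ (not strictly required for the pointwise version above) ensures that such a subtraction can be arranged while preserving positivity, by subtracting from a set of sufficiently large Lebesgue measure. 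Either construction drives home the conceptual point of Section \ref{sec:reasoning}: the maximum-likelihood objective constrains the density only at the training points, leaving the behavior on any prescribed finite set of OOD inputs essentially unconstrained.
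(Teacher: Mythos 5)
Your pointwise modification does satisfy the proposition as literally stated (and you are right that both your argument and the paper's implicitly need $\mathcal D \cap \mathcal D_{\text{OOD}} = \emptyset$), but it is a degenerate route: since $p'$ agrees with $p$ off a Lebesgue-null set, you have exhibited the \emph{same} distribution with a cosmetically altered density, which you yourself acknowledge. The paper's proof is precisely the ``genuinely different density'' construction you relegate to a sketch, and it is where the content lies: shrink balls $B(x_i, r)$ around the training points until, using the bound $p \le u$, their total mass $P(\mathcal S(r_0))$ is at most $1/2$; keep $p$ on $\mathcal S(r_0)$; set $p'$ to the constant $2c\bigl(1 - P(\mathcal S(r_0))\bigr) \ge c$ on a neighborhood of $\mathcal D_{\text{OOD}}$ of volume $1/2c$; and set $p' = 0$ elsewhere. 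This redistributes genuine probability mass onto the OOD points while preserving the likelihood of $\mathcal D$, which is the claim the proposition is meant to substantiate (a flow's density is a continuous function, so the null-set loophole is not available to the model class under discussion).

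Your sketch of the nontrivial version also misattributes the role of the hypothesis $p \le u$. It is not needed to ``subtract an equal-mass bump from a region where $p$ is bounded away from zero''; in the paper it is used earlier and for a different purpose, namely to guarantee $P(\mathcal S(r)) \le V(\mathcal S(r))\cdot u \to 0$ as $r \to 0$, so that at most half the mass is pinned near the training data and the rest is free to be moved. Moreover, subtracting a bump while preserving nonnegativity requires the bump to lie pointwise below $p$ on its support, which a bound from \emph{above} on $p$ does not provide; the paper sidesteps this entirely by zeroing out $p$ outside $\mathcal S(r_0) \cup \mathcal S_{\text{OOD}}$ rather than subtracting anything. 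If you intend the bump-function variant, you would need to argue separately that a suitable region of positive density exists.
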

\begin{proof}
Consider the set $\mathcal S(r) = \cup_{x_i \in \mathcal D} B(x_i, r)$, where $B(x, r)$ is a ball of radius $r$ centered at $x$.
The probability mass of this set $P(\mathcal S(r)) = \int_{x \in \mathcal S(r)} p(x) dx$.
As $r \rightarrow 0$, the volume $V(\mathcal S(r))$ of the set $\mathcal S(r)$ goes to zero.
We have
\begin{equation}
    P(\mathcal S(r)) = \int_{x \in \mathcal S(r)} p(x) dx \le V(\mathcal S(r)) \cdot u \xrightarrow{r \rightarrow 0} 0.
\end{equation}
Hence, there exists $r_0$ such that $P(\mathcal S(r_0)) \le \frac 1 2$.

Now, define the a neighborhood of the set $\mathcal D_\text{OOD}$ as
\begin{equation}
    \mathcal S_{\text{OOD}} = 
    \cup_{x' \in \mathcal D_{\text{OOD}}} B(x, \hat r),
\end{equation}
where $\hat r$ is selected so that the total volume of set $\mathcal S_{\text{OOD}}$ is $1 / 2c$.
Then, we can define a new density $p'$ by redistributing the mass in $p(\cdot)$ from
outside the set $\mathcal S(r_0)$ to the neighborhood $\mathcal S_{\text{OOD}}$ as follows:
\begin{equation}
    p'(x) =
    \begin{cases}
       p(x),\quad&\text{if}~x \in \mathcal S(r_0),\\
       2c \cdot\big(1 -  P(\mathcal S(r_0))\big),\quad&\text{if}~x \in \mathcal S_{\text{OOD}},\\
       0,\quad&\text{otherwise}.
    \end{cases}
\end{equation}
The density $p'(\cdot)$ integrates to one, coincides with $p$ on the training data,
and assigns density of at least $c$ to points in $\mathcal D_{\text{OOD}}$.
\end{proof}

\begin{figure}[t]
    \centering
    \subfigure[CIFAR $\uparrow$, SVHN $\downarrow$]{
	\includegraphics[width=0.23\linewidth]{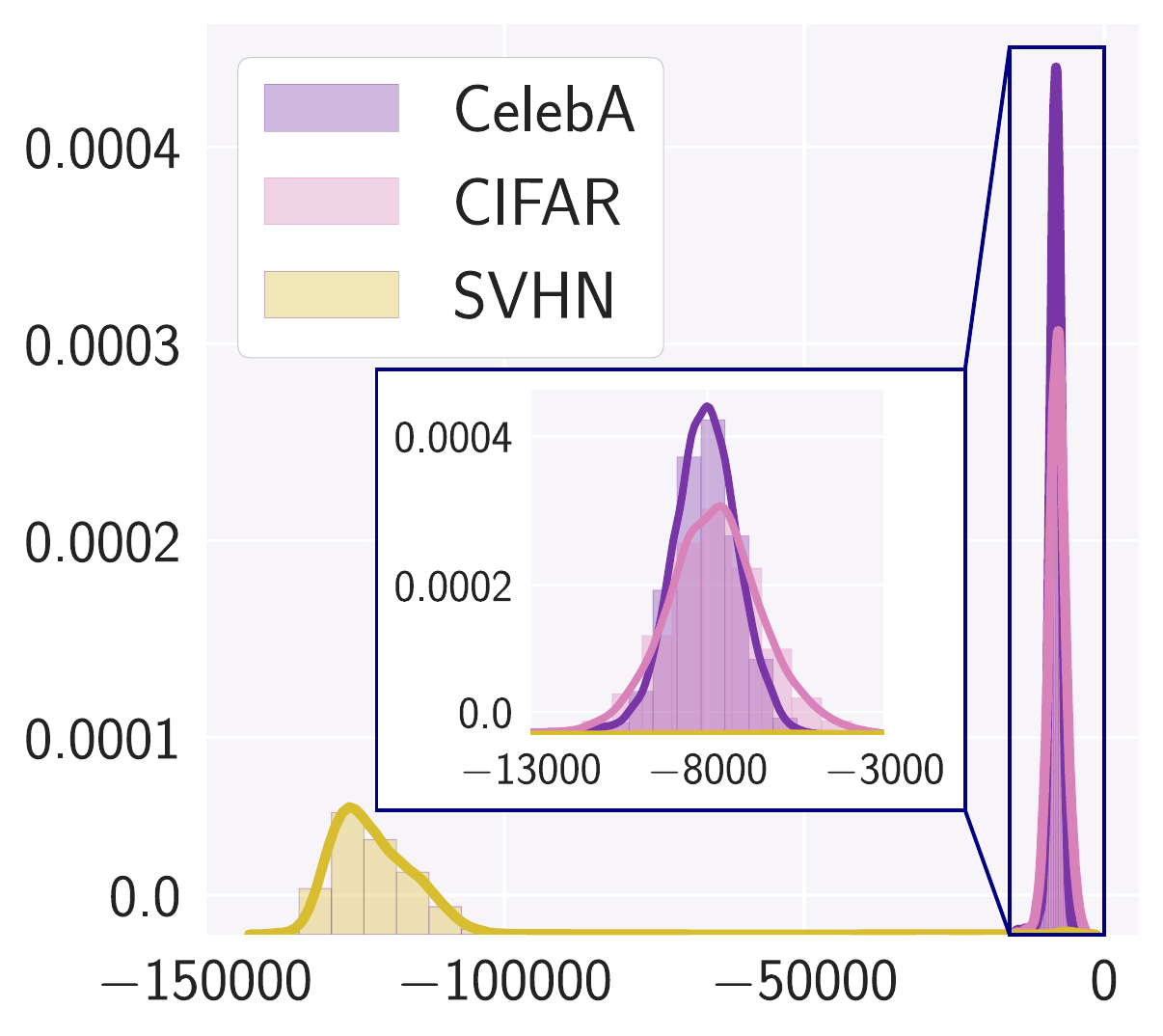}
	}
	\subfigure[SVHN $\uparrow$, CIFAR $\downarrow$]{
	\includegraphics[width=0.23\linewidth]{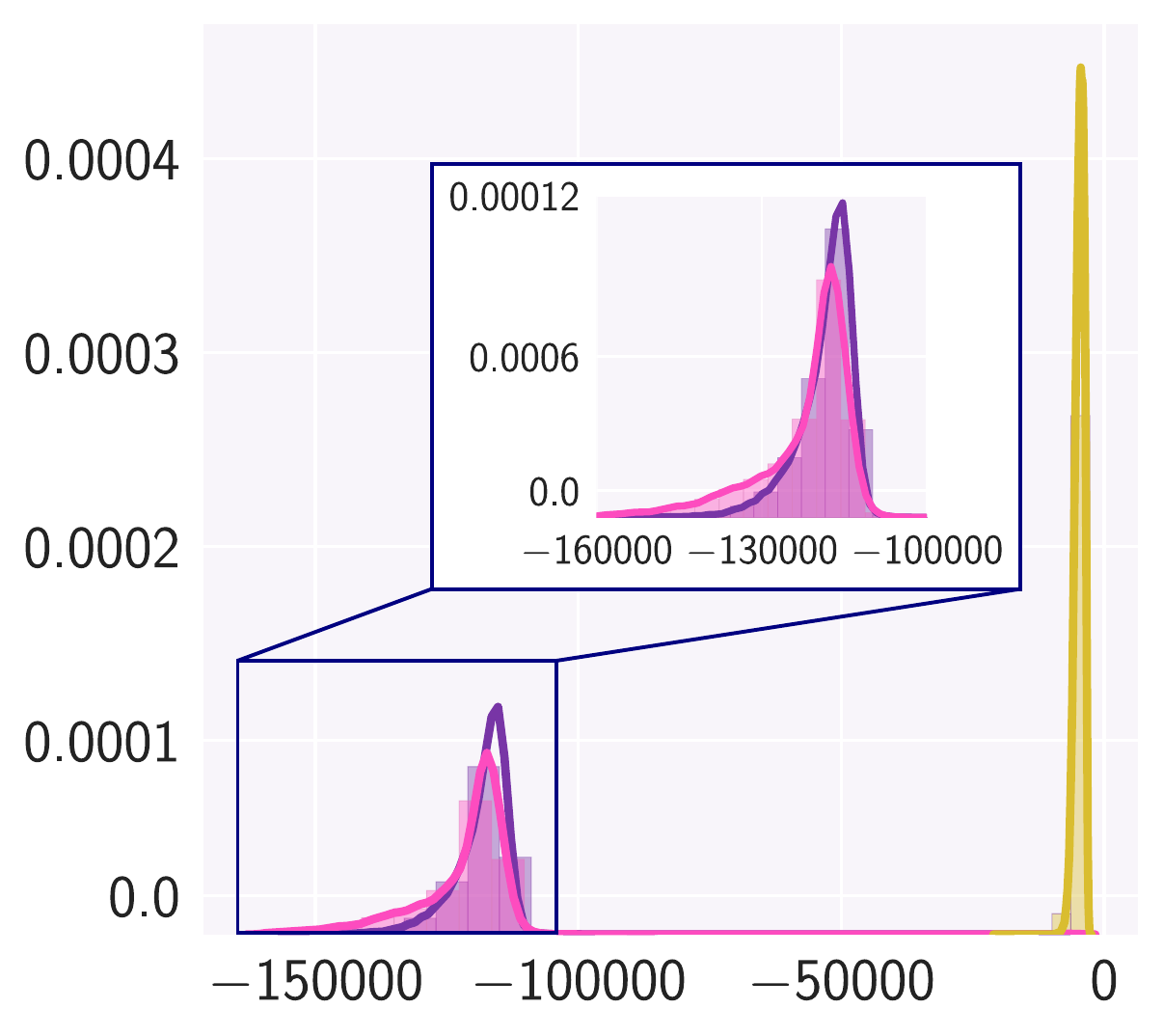}
	}
    \subfigure[CIFAR $\uparrow$, CelebA $\downarrow$]{
	\includegraphics[width=0.23\linewidth]{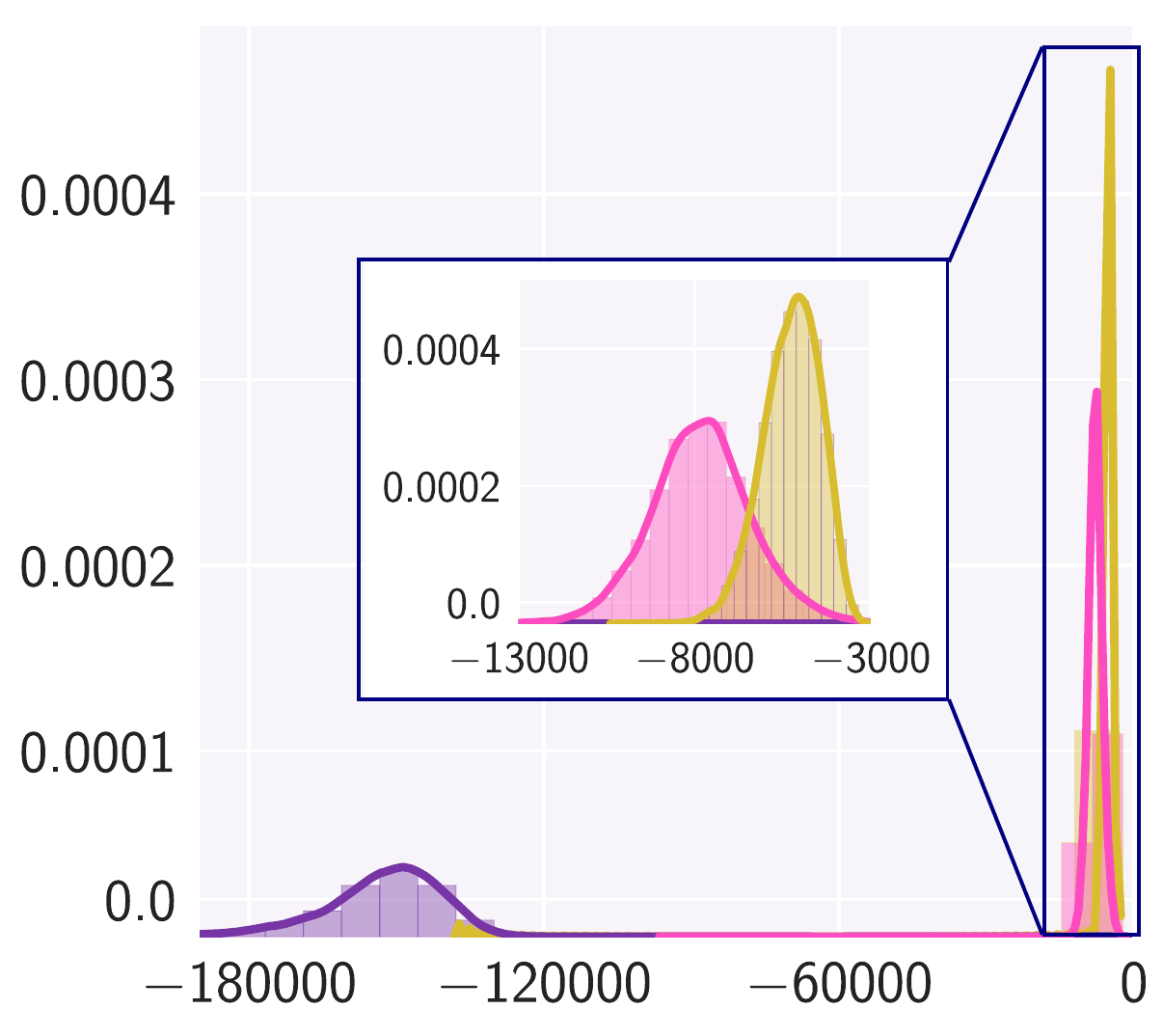}
	}
	\subfigure[CelebA $\uparrow$, CIFAR $\downarrow$]{
	\includegraphics[width=0.23\linewidth]{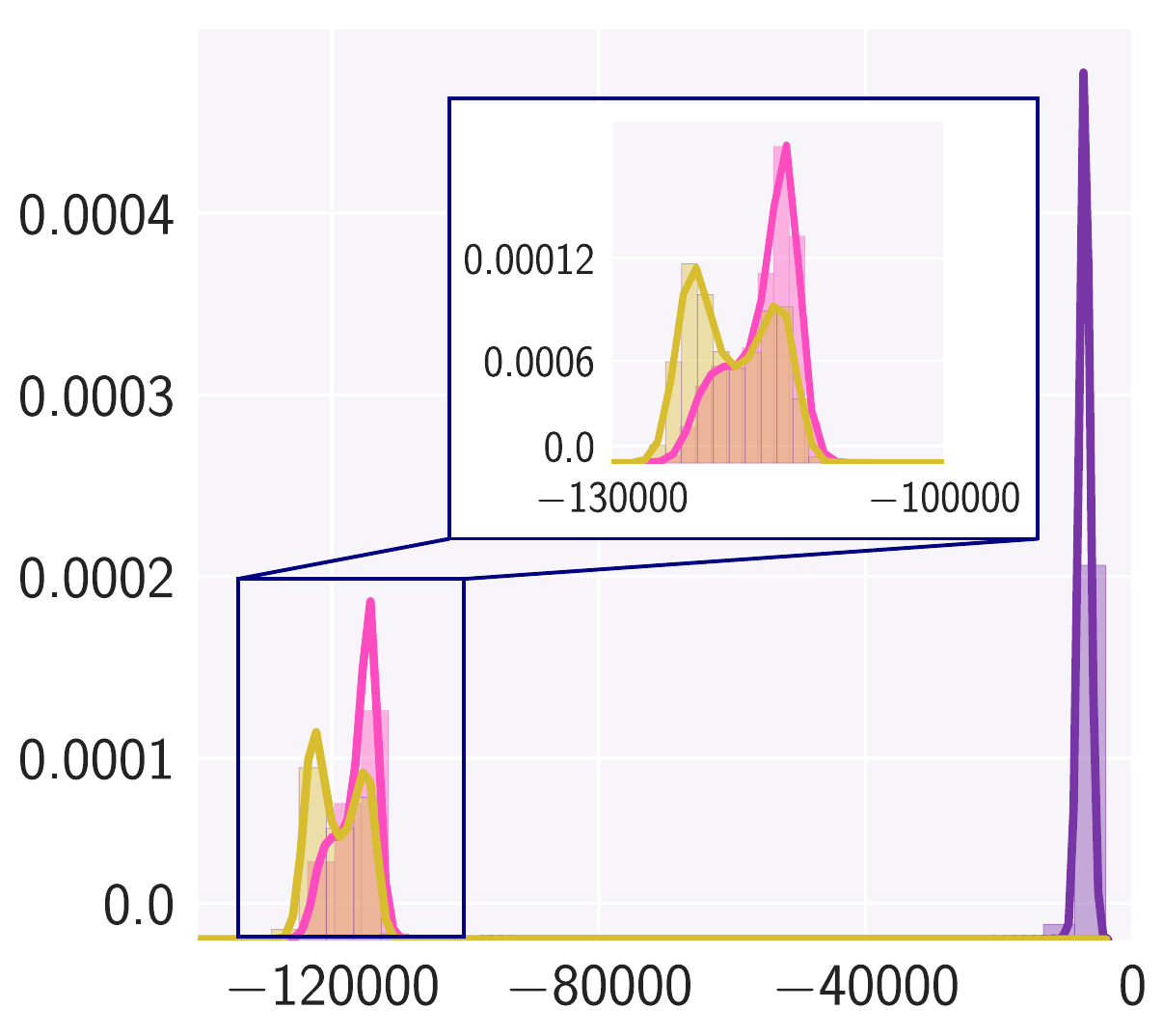}
	}
    \subfigure[Fashion $\uparrow$, MNIST $\downarrow$]{
	\includegraphics[width=0.23\linewidth]{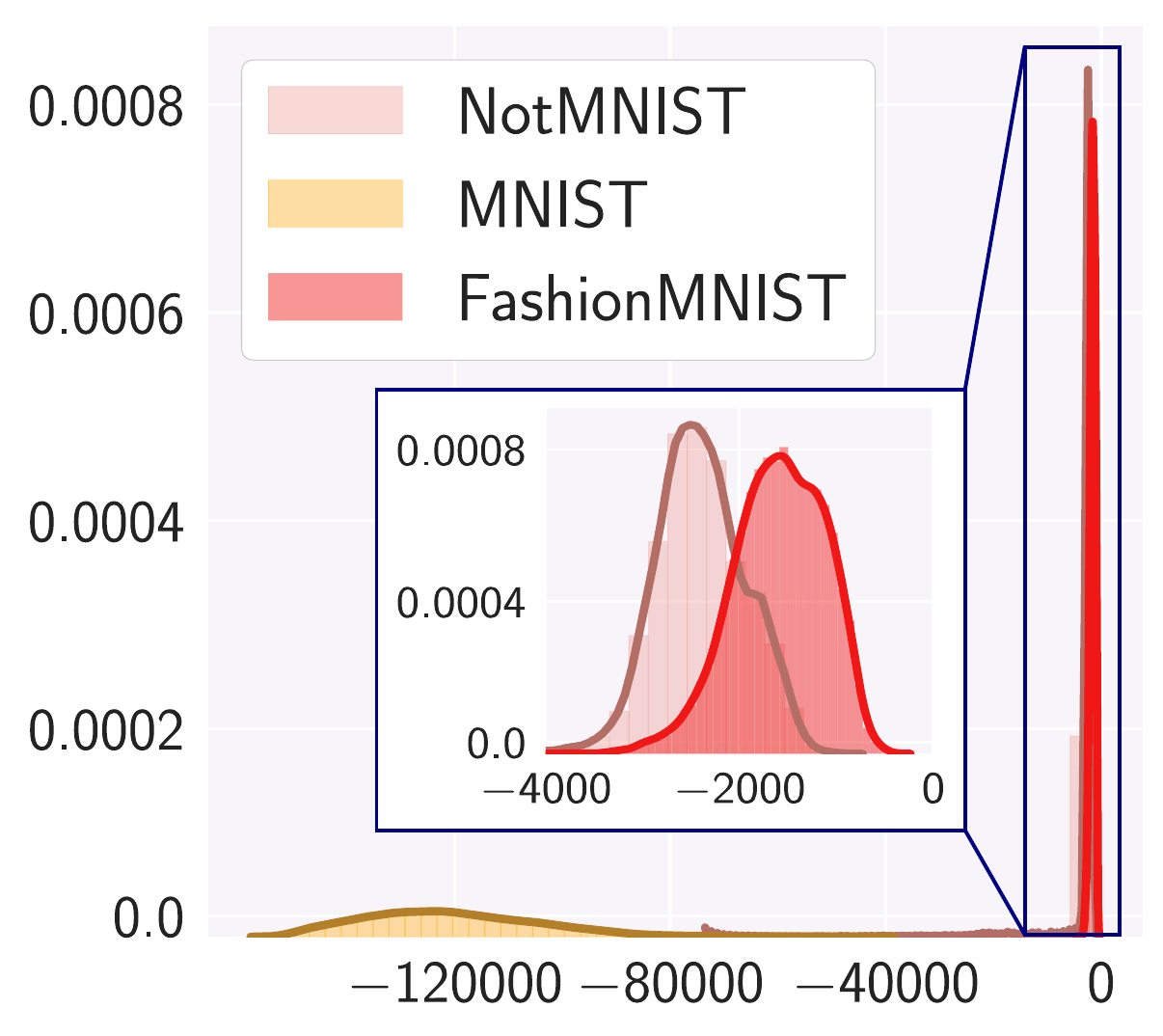}
	}
    \subfigure[MNIST $\uparrow$, Fashion $\downarrow$]{
	\includegraphics[width=0.23\linewidth]{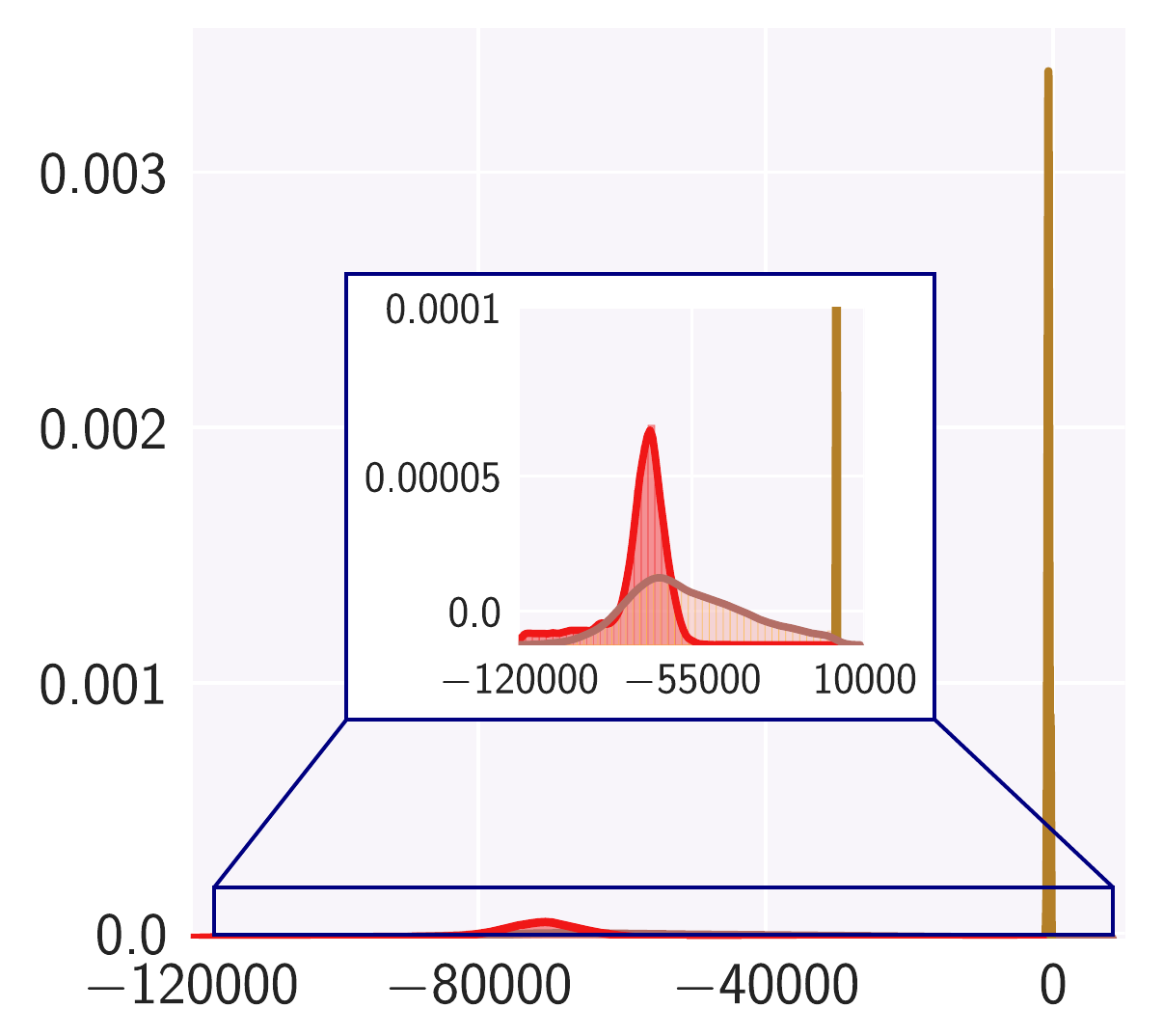}
	}
	\caption{
	\textbf{Negative training.}
	The histograms of log-likelihood for RealNVP when in training likelihood is maximized on one dataset and minimized on another dataset: 
	(a) maximized on CIFAR, minimized on SVHN; 
	(b) maximized on SVHN, minimized on CIFAR; 
	(c) maximized on CIFAR, minimized on CelebA; 
	(d) maximized on CelebA, minimized on CIFAR. 
	(e) maximized on FashionMNIST, minimized on MNIST; 
	(f) maximized on MNIST, minimized on FashionMNIST; 
	}
	\label{fig:ll_hists_maxmin}
    \vspace{-.3cm}
\end{figure}

\section{Flows have capacity to distinguish datasets}
\label{sec:app_capacity}

Normalizing flows are unable to detect OOD image data when trained to maximize likelihood on the train set.
It is natural to ask whether these models are at all capable of distinguishing different image datasets.
In this section we demonstrate the following:

\begin{mybox}
    \textbf{Observation}:
    Flows can assign high likelihood to the train data and low likelihood to 
    a given OOD dataset if they are explicitly trained to do so.
    
    \textbf{Relevance to OOD detection}:
    While flows have sufficient capacity to distinguish different data, they
    are biased towards learning solutions that assign high likelihood to all
    structured data and consequently fail to detect OOD inputs.
\end{mybox}

We introduce an objective that encouraged the flow to maximize likelihood on the
target dataset and to minimize likelihood on a specific OOD dataset. 
The objective we used is
\begin{equation}
    \label{eq:minmax}
    \frac 1 {N_{\mathcal D}} \sum_{x \in \mathcal{D}}\log p(x) - \frac 1 {N_{\text{OOD}}} \sum_{x \in \mathcal{D}_{\text{OOD}}}\log p(x) \cdot I[\log p(x) > c],
\end{equation}
where $I[\cdot]$ is an indicator function and the constant $c$ allows us to encourage 
the flow to only push the likelihood of OOD data to a threshold rather than decreasing it to $-\infty$; 
$N_{\mathcal D}$ is the number of train datapoints and $N_{\text{OOD}} = \sum_{x \in \mathcal{D}_{\text{OOD}}} I[\log p(x) > c]$
is the number of OOD datapoints that have likelihood above the threshold $c$. 

We trained a RealNVP flow with the objective \eqref{eq:minmax} using different pairs of target and OOD datasets: CIFAR-10 vs CelebA, CIFAR-10 vs SVHN and FashionMNIST vs MNIST.
We present the results in Figure~\ref{fig:ll_hists_maxmin}.
In each case, the flow is able to push the likelihood of the OOD dataset to very low values, and simultaneously maximize the likelihood on the target dataset creating a clear separation between the two.

\textbf{Hyper-parameters} \quad
For the flow architecture and training used the same hyper-parameters as we did
for the baselines, described in Appendix \ref{sec:app_details}. 
For CelebA, CIFAR and SVHN models we set $c = -100000$, and for 
MNIST, FashionMNIST and NotMNIST we set $c = -30000$.

\textbf{Connection with prior work} \quad
Flows can be used as classifiers separating different classes of the same dataset \citep{nalisnick2019detecting, izmailov2019semi, atanov2019semi},
which further highlights the fact that flows can distinguish images based on their
contents when trained to do so.
A similar experiment for the PixelCNN model \citep{oord2016pixel} was presented in \citet{hendrycks2018deep}.
The authors maximized the likelihood of CIFAR-10 and minimized the likelihood of the TinyImages dataset \citep{torralba200880}.
In their experiments, this procedure consistently led to CIFAR-10 having higher likelihood than any of the other benchmark datasets.
In Figures \ref{fig:ll_hists_maxmin}, for each experiment in addition to the two datasets that were used in training we show the log-likelihood distribution on another OOD dataset.
For example, when we train the flow to separate CIFAR-10 from CelebA (panels c, d), the flow successfully does so but assigns SVHN with likelihood similar to that of CIFAR.
When we train the flow to separate CIFAR-10 from SVHN (panels c, d), the flow successfully does so but assigns CelebA with likelihood similar to that of CIFAR.
Similar observations can be made for MNIST, FashionMNIST and notMNIST.
At least for normalizing flows, minimizing the likelihood on a single OOD dataset 
does not lead to all the other OOD datasets achieving low-likelihood.

\section{Details of the experiments}
\label{sec:app_details}

\begin{figure}[t]
	\def \panelwidth {0.18\linewidth}
	\def \panelskip {-0.3cm}
    \centering
	
	\hspace{0.11\linewidth}
    \subfigure[RNVP, ImageNet]{
	\includegraphics[height=\panelwidth]{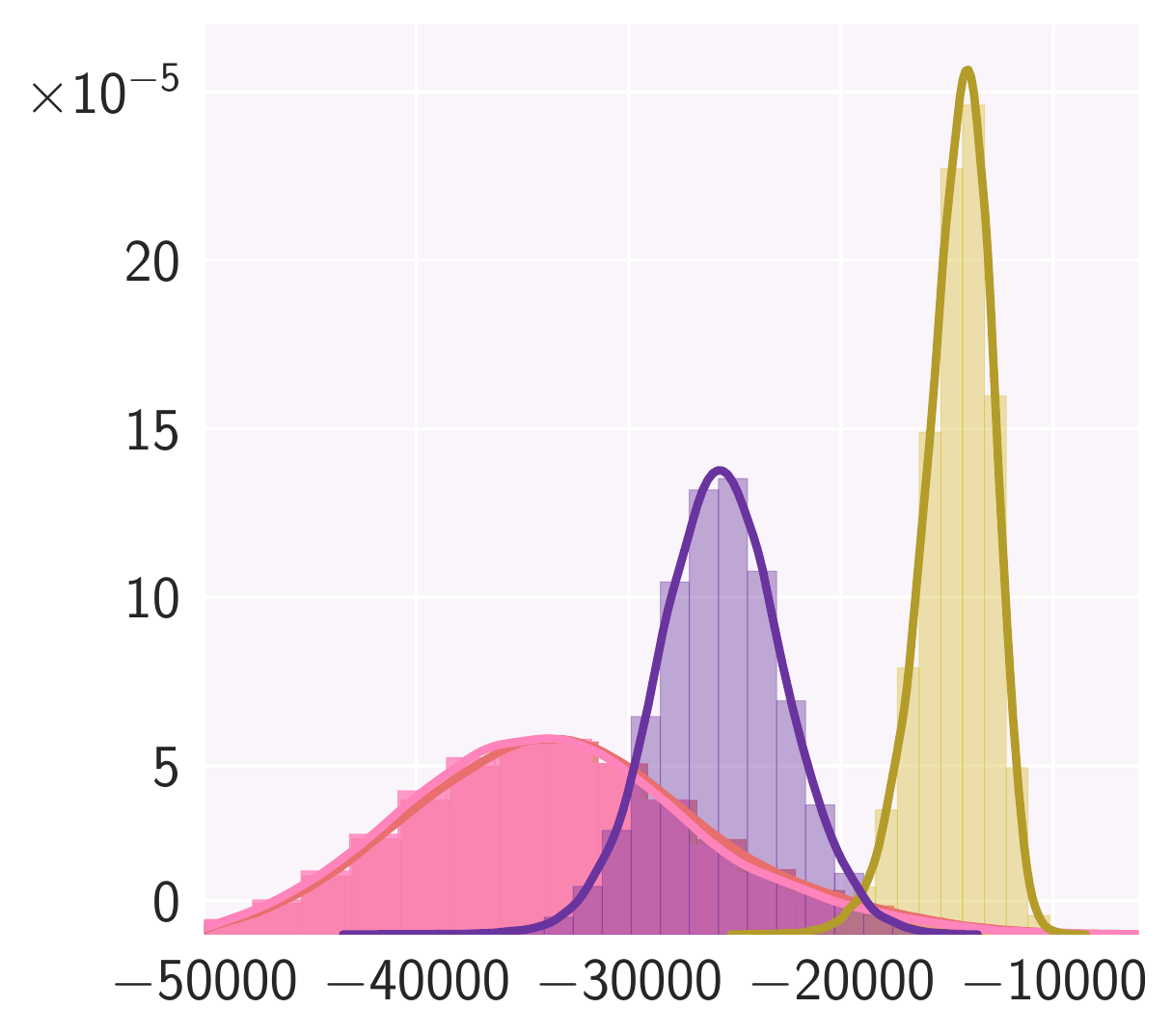}
	}\quad
    \subfigure[RNVP, CelebA-HQ]{
	\includegraphics[height=\panelwidth]{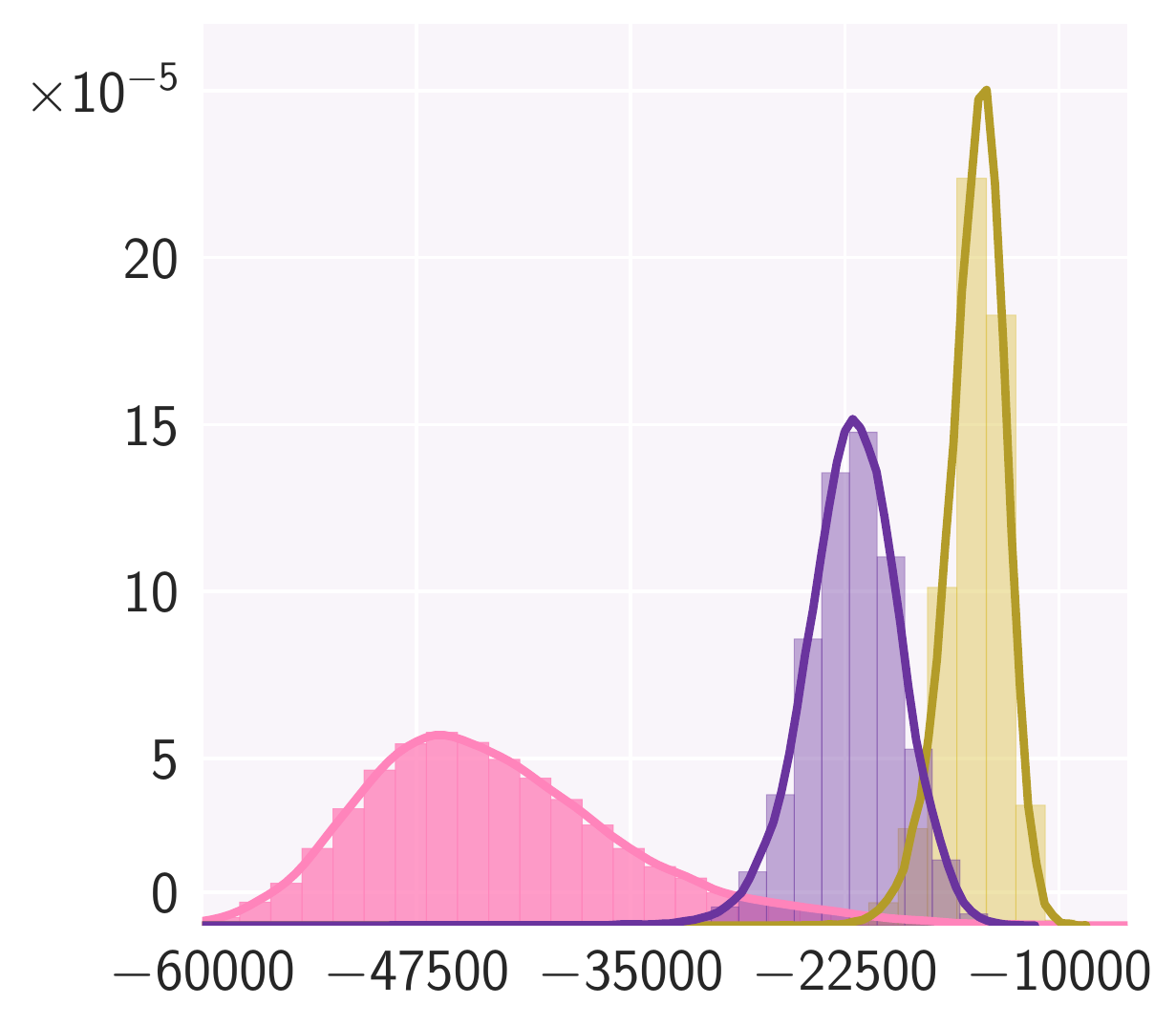}
	}
	\includegraphics[height=\panelwidth]{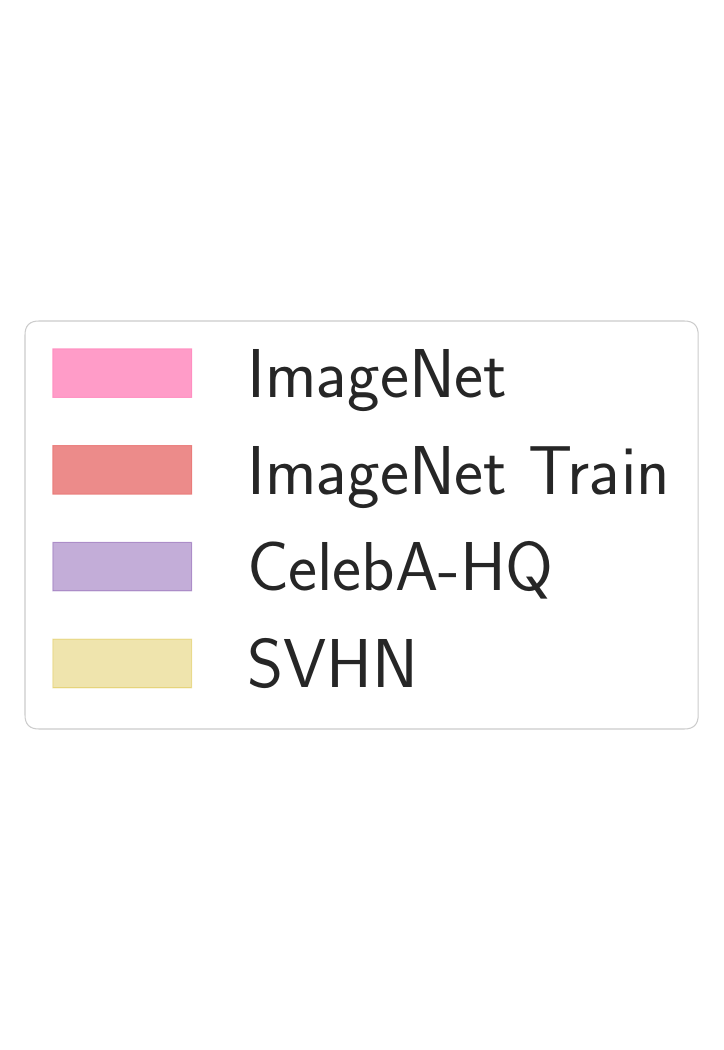}

    \subfigure[RNVP, MNIST]{
	    \includegraphics[height=\panelwidth]{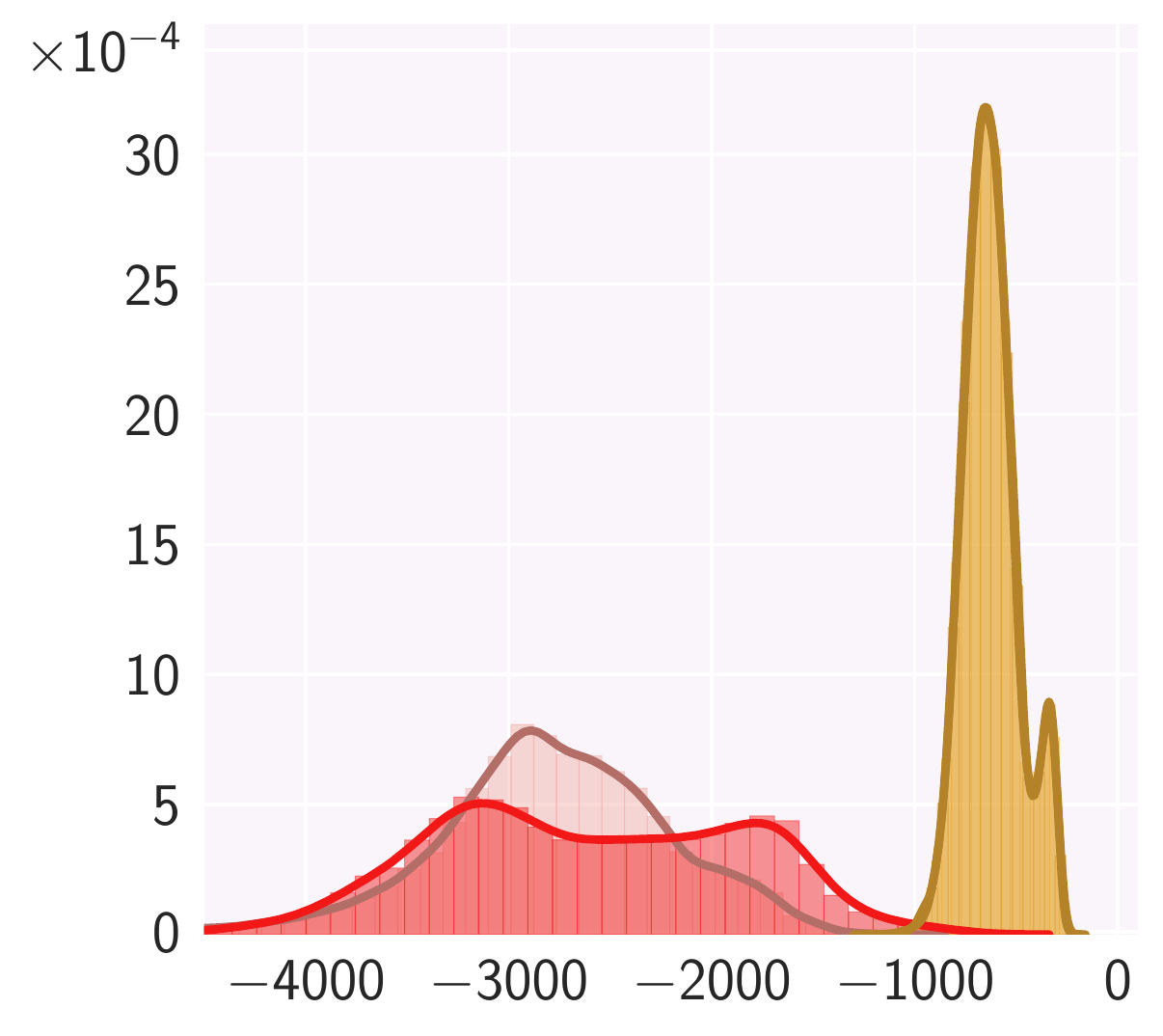}
	}
	\hspace{\panelskip}
    \subfigure[RNVP, Fashion]{
	    \includegraphics[height=\panelwidth]{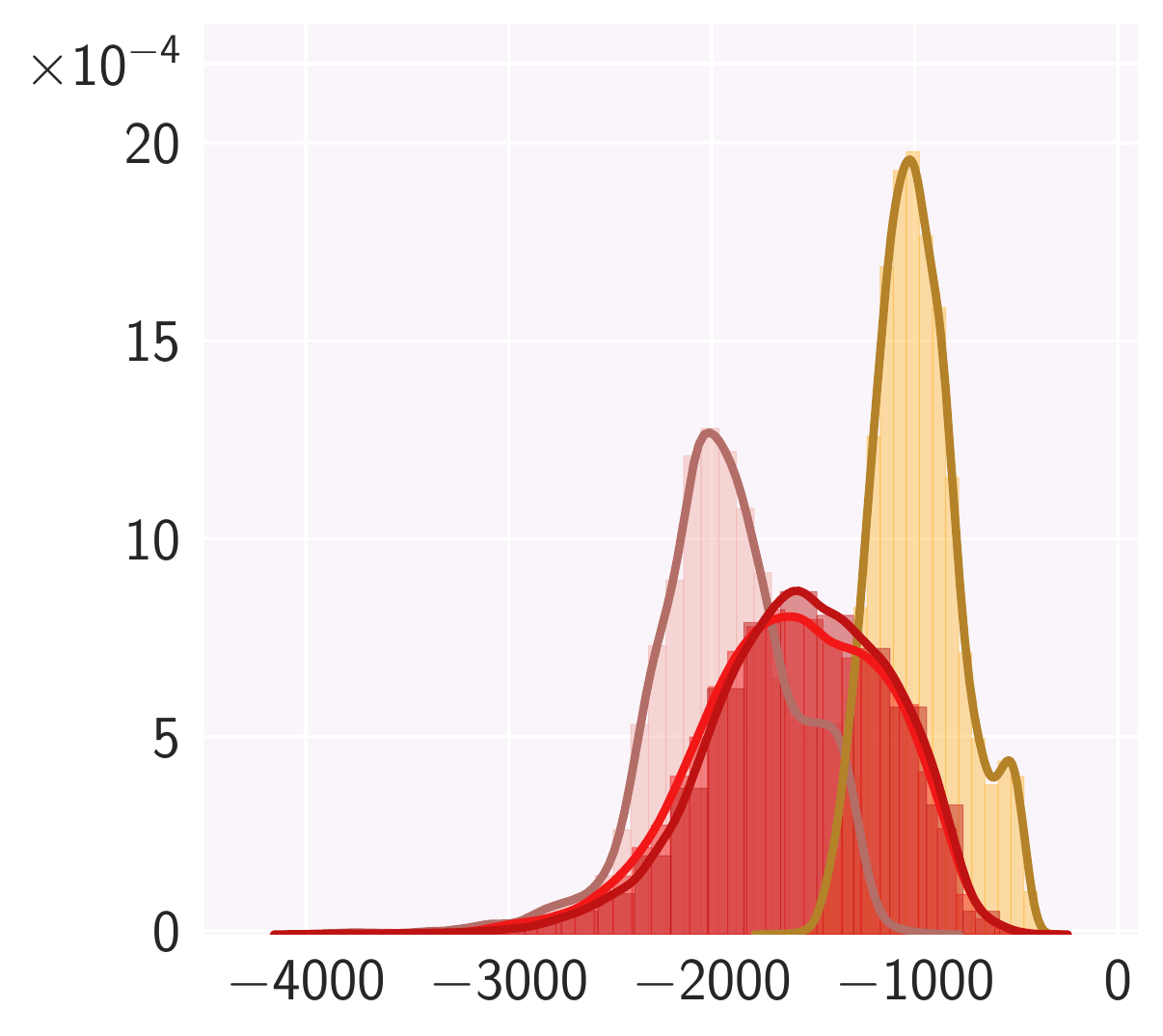}
	}
	\hspace{\panelskip}
	\subfigure[Glow, MNIST]{
	    \includegraphics[height=\panelwidth]{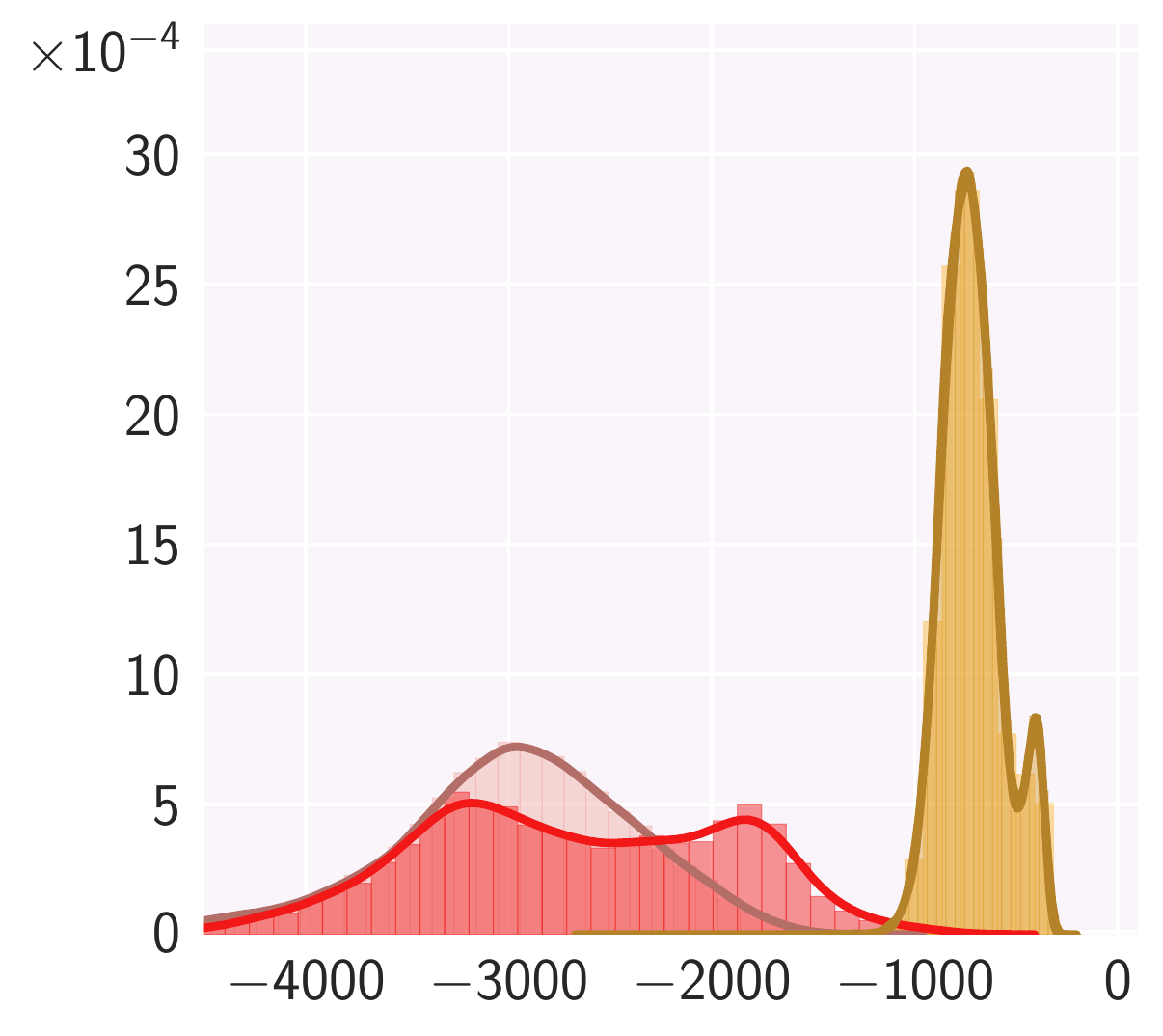}
	}
	\hspace{\panelskip}
    \subfigure[Glow, Fashion]{
	    \includegraphics[height=\panelwidth]{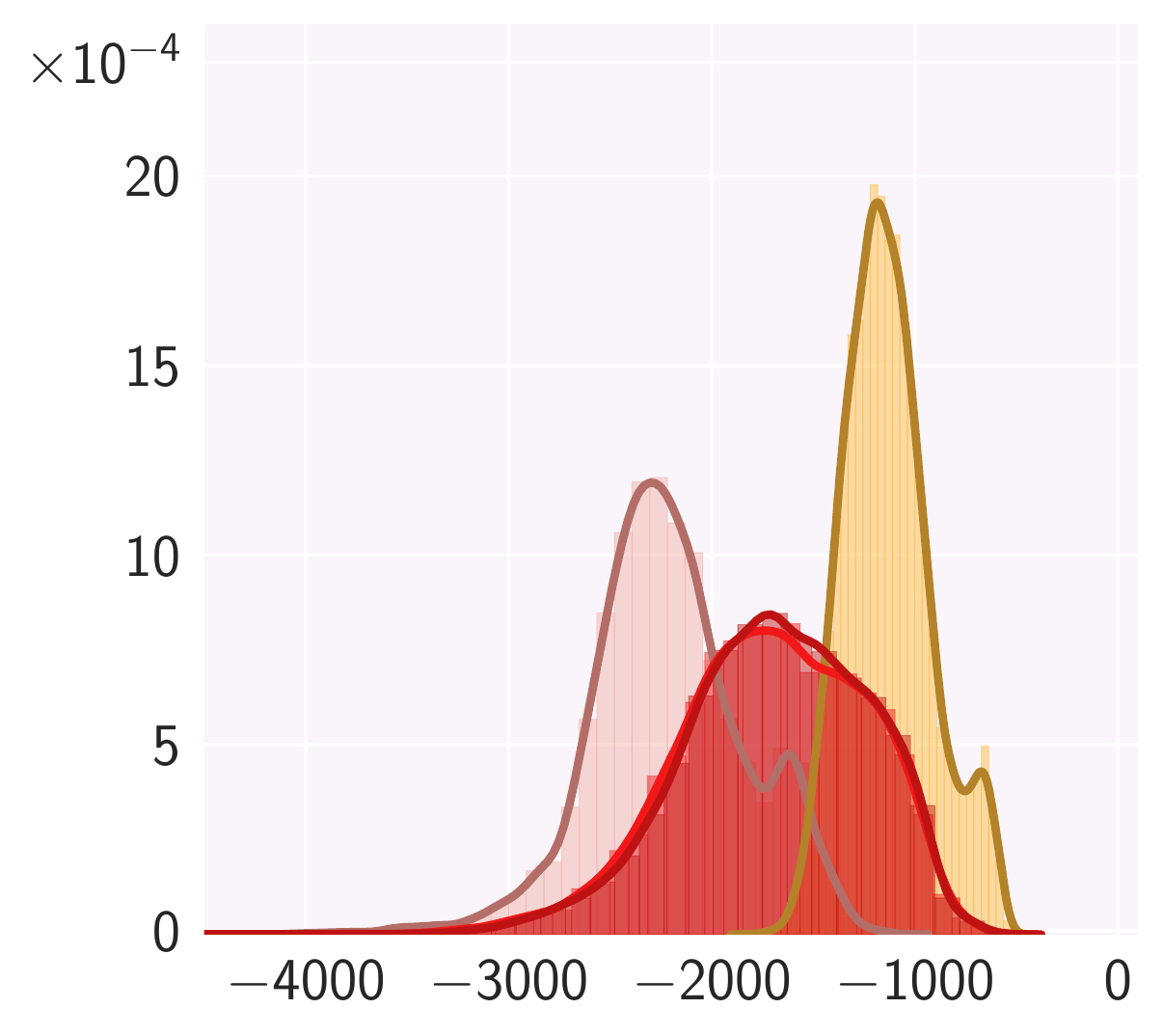}
	}
	\hspace{\panelskip}
	\includegraphics[height=\panelwidth]{figs/baseline/mnist_legend_1col.pdf}
	
 \subfigure[RNVP, CelebA]{
	    \includegraphics[height=\panelwidth]{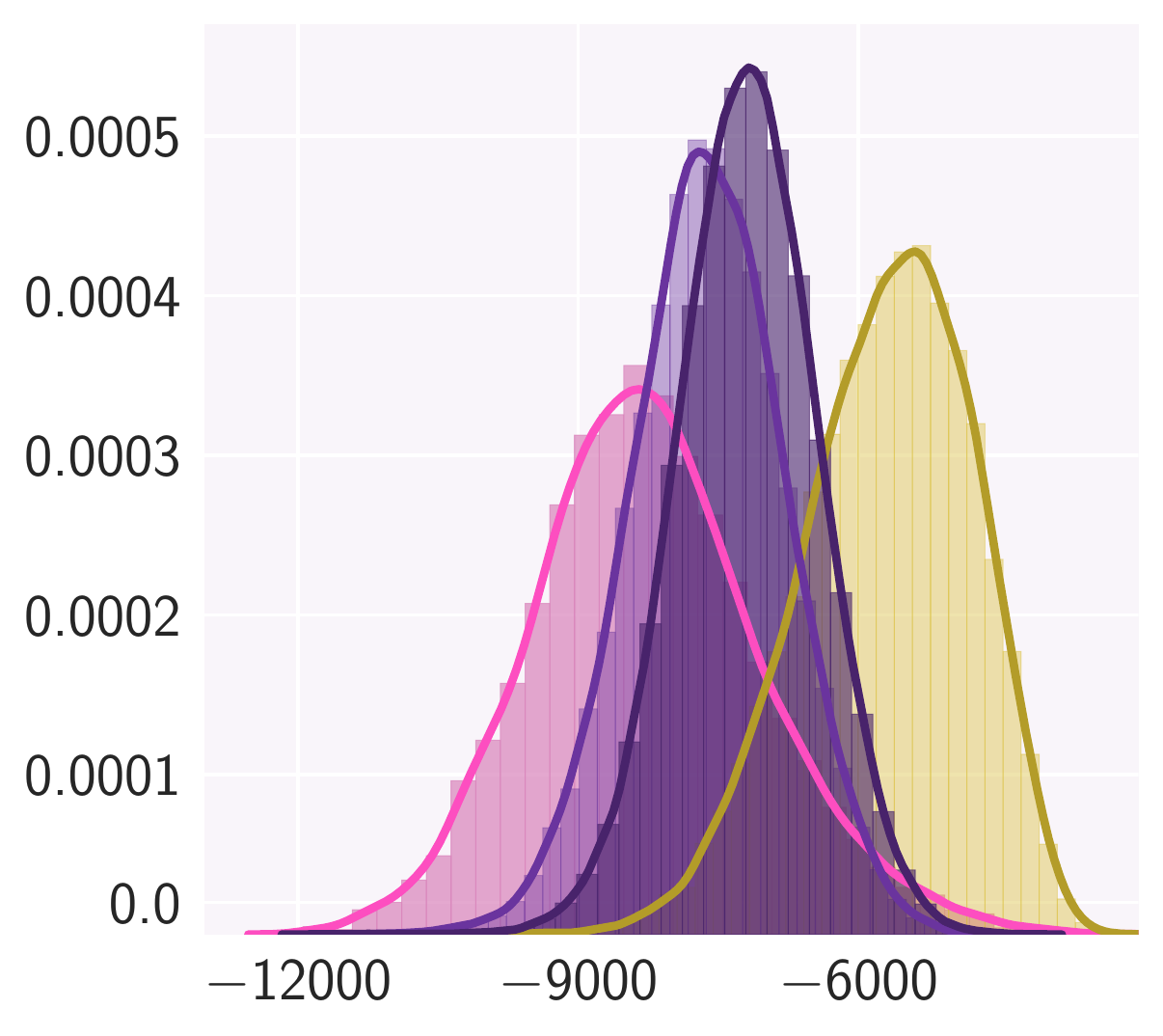}
	}
    \subfigure[RNVP, CIFAR-10]{
	    \includegraphics[height=\panelwidth]{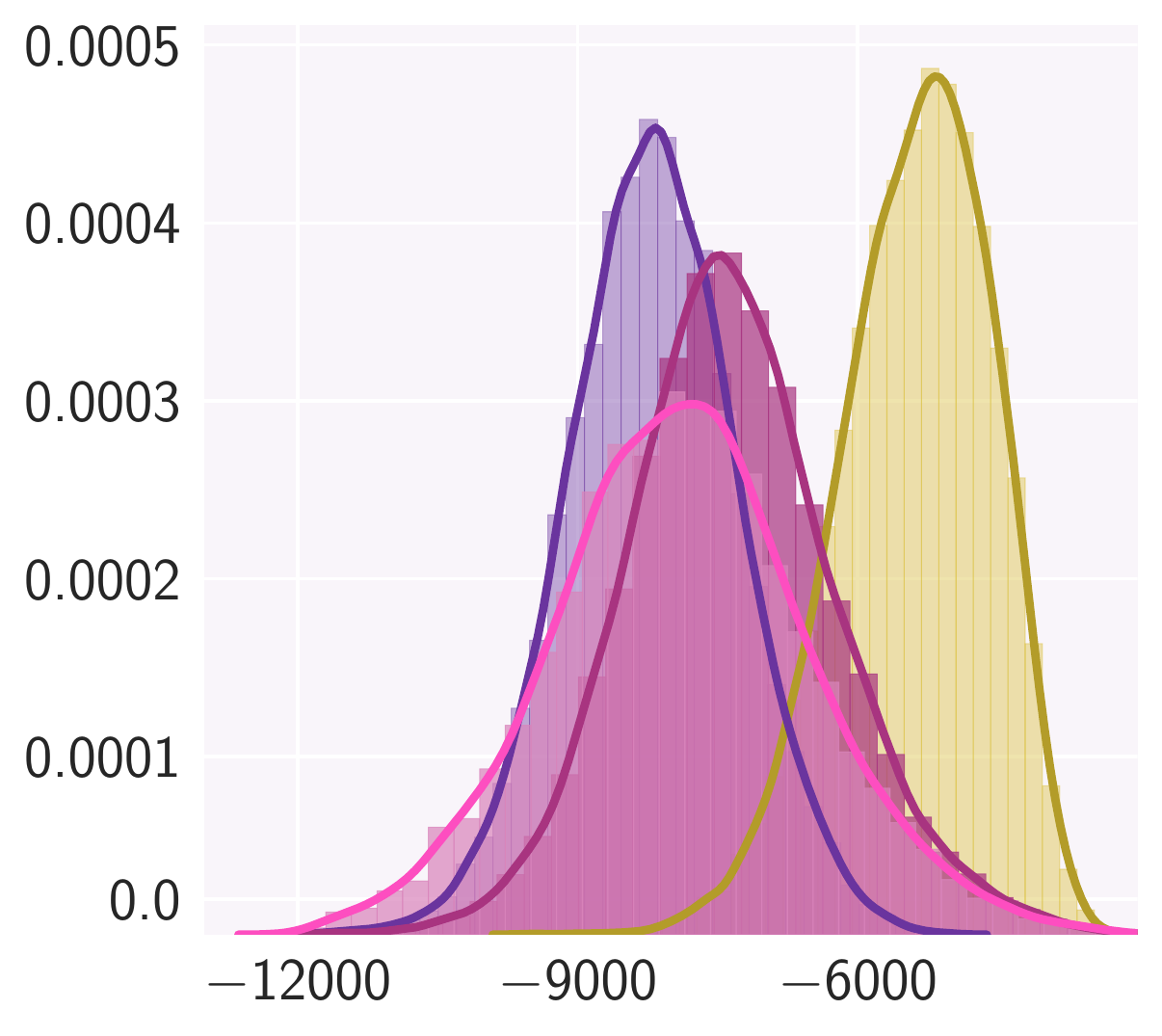}
    }
    \subfigure[RNVP, SVHN]{
	    \includegraphics[height=\panelwidth]{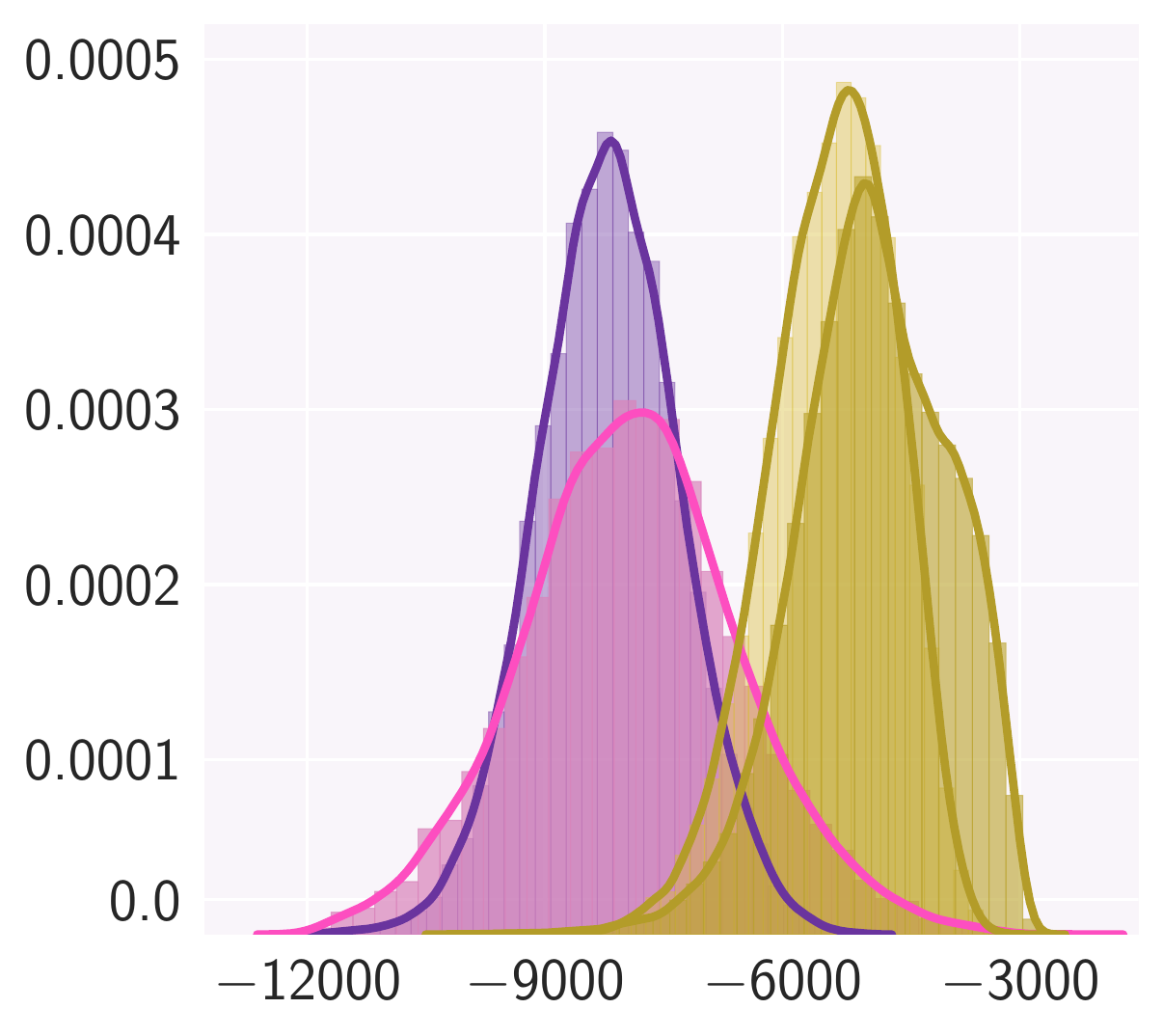}
	}
	
	\hspace{0.11\linewidth}
    \subfigure[Glow, CelebA]{
	    \includegraphics[height=\panelwidth]{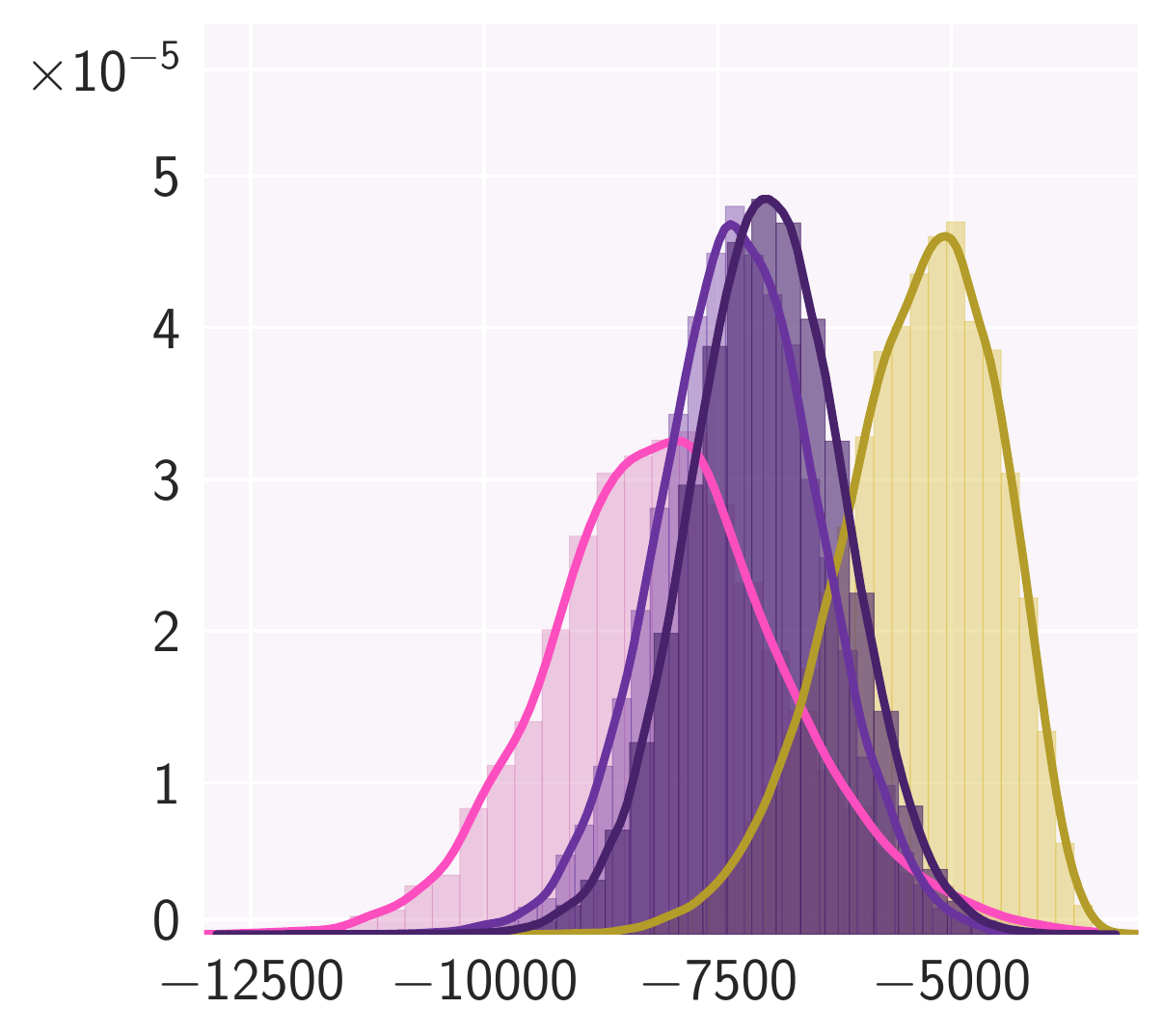}
	}
	\subfigure[Glow, CIFAR-10]{
	    \includegraphics[height=\panelwidth]{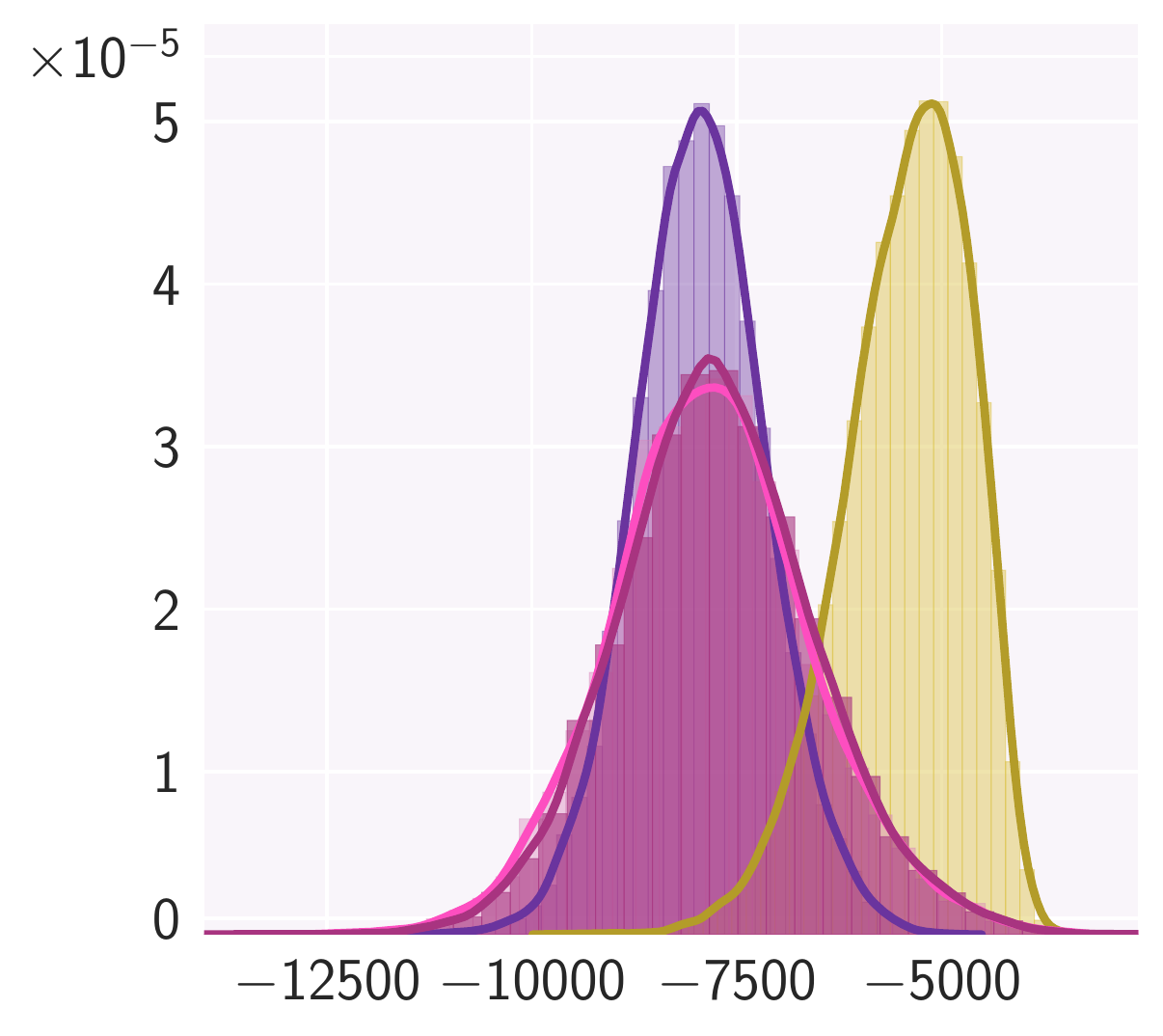}
	}
    \subfigure[Glow, SVHN]{
	    \includegraphics[height=\panelwidth]{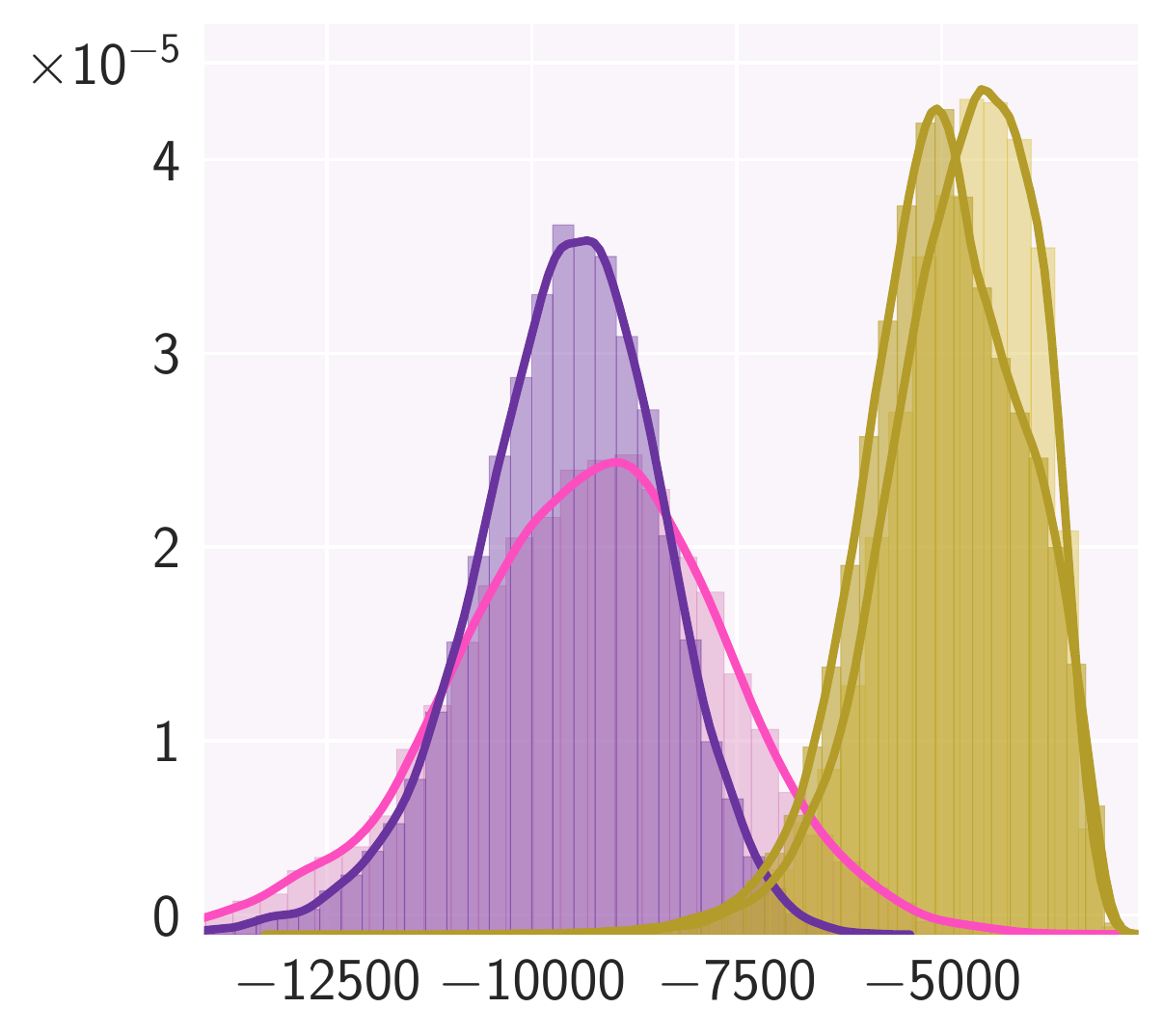}
	}
	\includegraphics[height=\panelwidth]{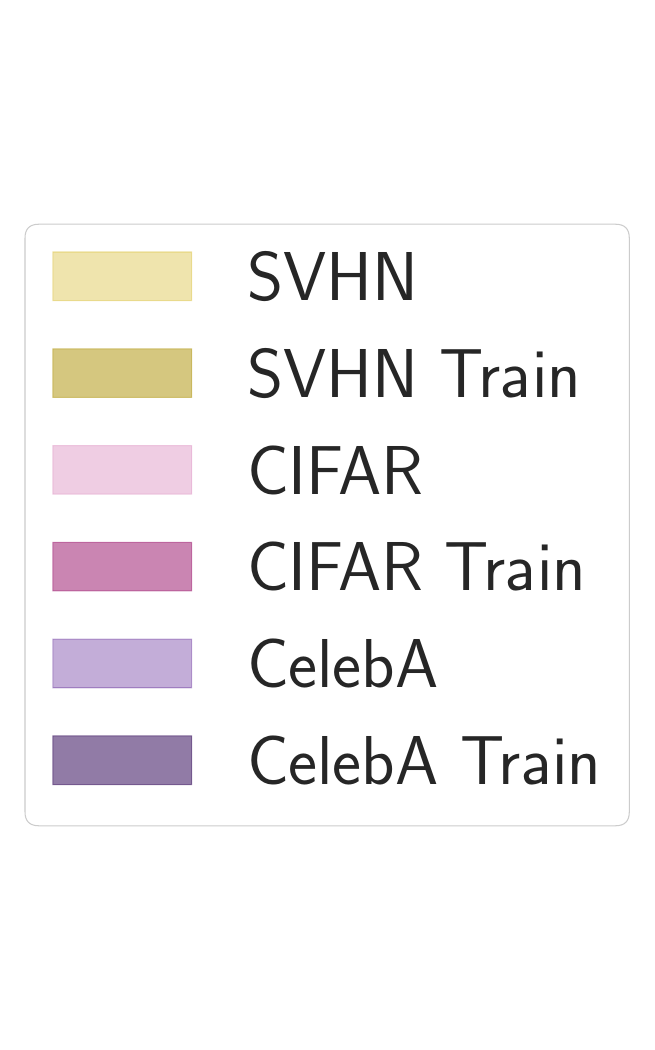}

	\caption{
    \textbf{Baseline log-likelihoods.}
	The histograms of log-likelihood for RealNVP and Glow models trained on various datasets. 
    Both flows consistently assign similar or higher likelihood to OOD data compared to the
    target dataset.
    The likelihood distribution for train and test sets of the target data is 
    typically very similar.
	}
	\label{fig:app_ll_hists_baseline}
\end{figure}

\textbf{RealNVP} \quad For all RealNVP models, we generally follow the architecture design of \citet{dinh2016density}.
We use multi-scale architecture where after a block of coupling layers half of the variables are factored out and copied forward directly to the latent representation.
Each scale consists of 3 coupling layers with checkerboard mask, followed by a squeeze operation and 3 coupling layers with channel-wise mask (see Figure \ref{fig:app_squeeze}).
For the $st$-network we use deep convolutional residual
networks with additional skip connections following \citet{dinh2016density}. In all experiments, we use Adam optimizer. 
On grayscale images (MNIST, FashionMNIST), we used 2 scales in RealNVP, 6~blocks in residual $st$-network, learning rate $5 \times 10^{-5}$, batch size 32 and trained model for 80 epochs.
On CIFAR-10, CelebA and SVHN, we used 3 scales, 8 blocks in $st$-network, learning rate $10^{-4}$, batch size 32, weight decay $5 \times 10^{-5}$ and trained the model for 100 epochs.
On ImageNet, we used 5 scales, 2 blocks in $st$-network, learning rate $10^{-3}$, batch size 64, weight decay $5 \times 10^{-5}$ and trained the model for 42 epochs. On CelebA $64 \times 64$, we used 4 scales, 4 blocks in $st$-network, learning rate $10^{-4}$, batch size 64, weight decay $5 \times 10^{-5}$ and trained the model for 100 epochs.

\textbf{Glow} \quad We follow the training details of \citet{nalisnick2018deep} for multi-scale Glow models. Each scale consists of a sequence of actnorm, invertible $1 \times 1$ convolution and coupling layers \citep{kingma2018glow}. The squeeze operation is applied before each scale, and half of the variables are factored out after each scale. In all experiments, we use RMSprop optimizer.
On grayscale images (MNIST, FashionMNIST), we used 2 scales with 16 coupling layers, a 3-layer Highway network with 200 hidden units for $st$-network, learning rate $5 \times 10^{-5}$, batch size 32 and trained model for 80 epochs.
On color images (CIFAR-10, CelebA, SVHN), we used 3 scales with 8 coupling layers, a 3-layer Highway network with 400 hidden units for $st$-network, learning rate $5 \times 10^{-5}$, batch size 32 and trained model for 80 epochs.

\section{Baseline models likelihood distributions and AUROC scores}
\label{appendix:baseline_likelihood_auroc}

In Figure \ref{fig:app_ll_hists_baseline}, we plot the histograms of the log likelihoods on in-distribution and out-of-distribution datasets RealNVP and Glow models.
In Table \ref{tab:baseline_auroc} we report AUROC scores for OOD detection with
these models.
As reported in prior work, Glow and RealNVP consistently fail at OOD detection.

\begin{table}[!h]
    \centering
    \scriptsize
    \begin{tabular}{c c ccc c ccc}
        \toprule
        && \multicolumn{3}{c}{OOD data} & \multicolumn{3}{c}{OOD data}\\
        \cmidrule(r){3-5} 
        \cmidrule(r){6-8} 
        Model &Train data & CelebA & CIFAR-10 & Data & SVHN & MNIST & Fashion &  NotMNIST\\
        \midrule

        \multirow{3}{*}{RealNVP}
        &CelebA   & -- & 67.7 & 6.3 & MNIST        & -- & 99.99 & 99.99\\
        &CIFAR-10 & 56.0 & -- & 6.0 & Fashion & 10.8 & -- & 72.1\\
        &SVHN     & 99.0 & 98.4 & -- \\
        \midrule
        \multirow{3}{*}{Glow}
        &CelebA   & -- & 69.1 & 6.4 & MNIST        & -- & 99.96 & 100.0\\
        &CIFAR-10 & 52.9 & -- & 5.5 & Fashion & 13.3 & -- & 80.2\\
        &SVHN     & 99.9 & 99.1 & -- \\
         \bottomrule
    \end{tabular}
    \vspace{0.5cm}
    \caption{
    \textbf{Baseline AUROC.}
    AUROC scores on OOD detection for RealNVP and Glow models trained on various image data.
    Flows consistently assign higher likelihoods to OOD dataset except when trained
    on MNIST and SVHN.
    The AUROC scores for RealNVP and Glow are close.
    }
    \label{tab:baseline_auroc}
\end{table}

\section{Visualization implementation}
\label{sec:app_visualizations}

\begin{figure}[t]
	\def \panelwidth {0.17\linewidth}
	\def \panelskip {-0.5cm}
    \centering
    
    \subfigure[Squeeze layer]{
 	    \includegraphics[height=\panelwidth]{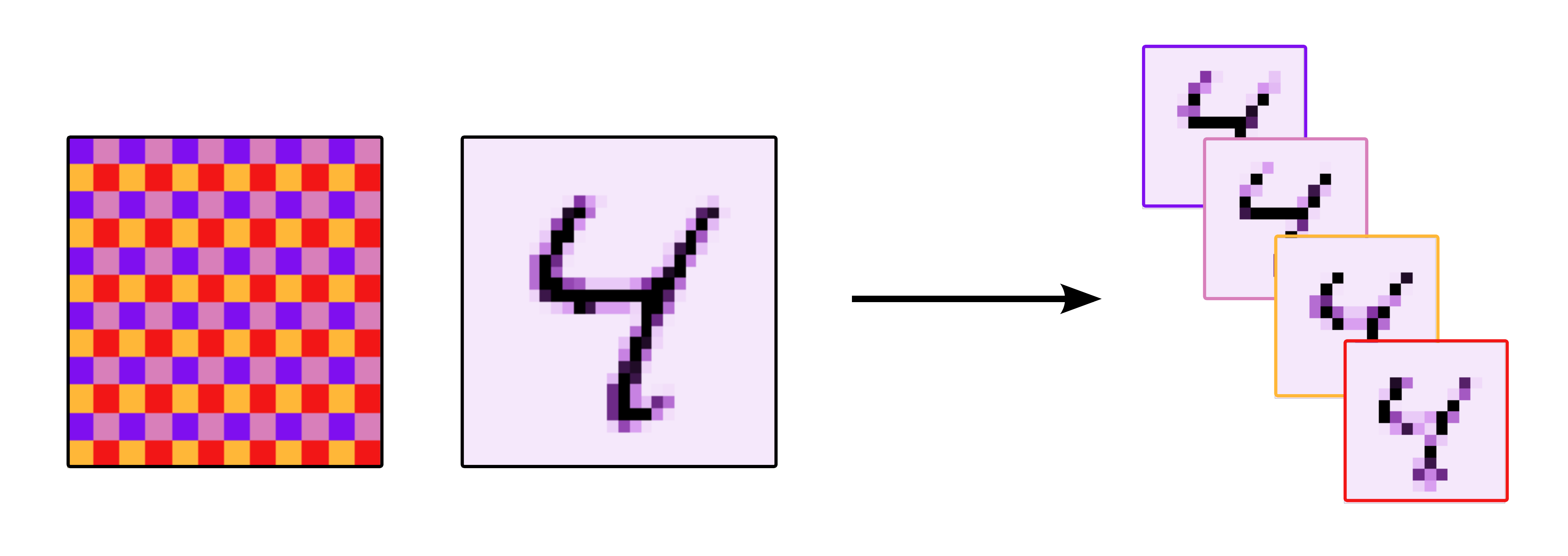}
 	}
 	\subfigure[CB mask]{
	    \includegraphics[height=\panelwidth]{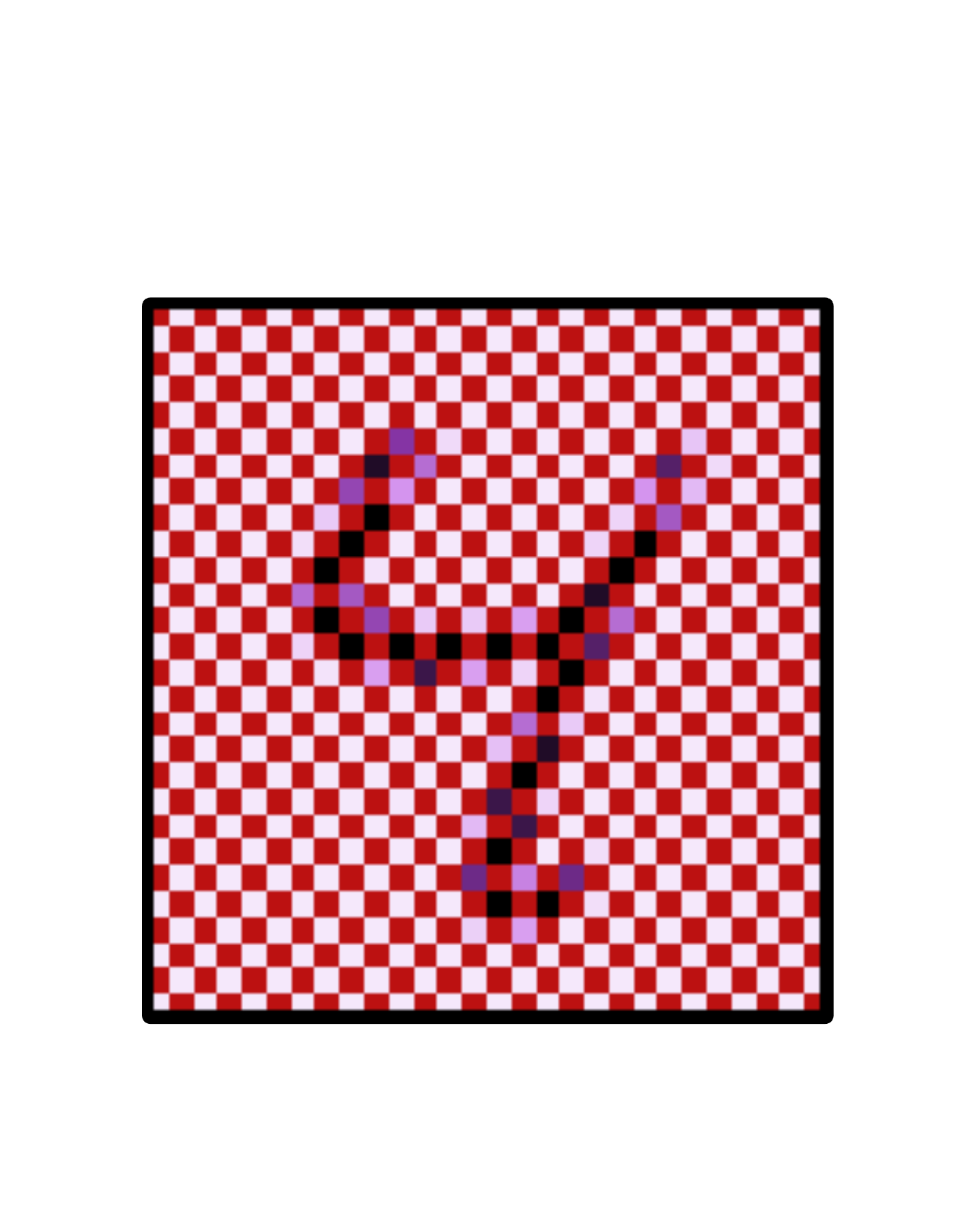}
	}
    \subfigure[CW mask]{
	    \includegraphics[height=\panelwidth]{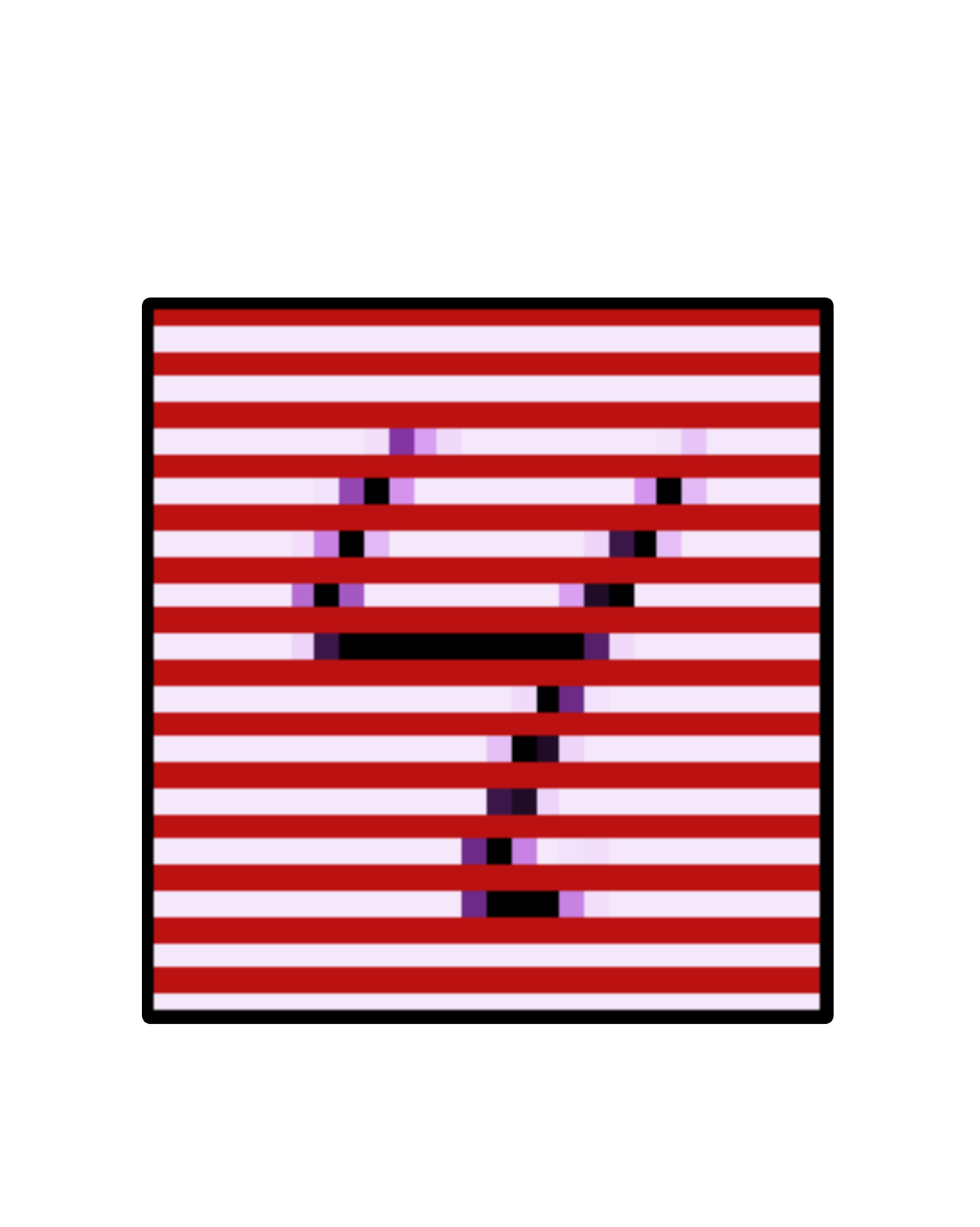}
	}
    \subfigure[Hor. mask]{
	    \includegraphics[height=\panelwidth]{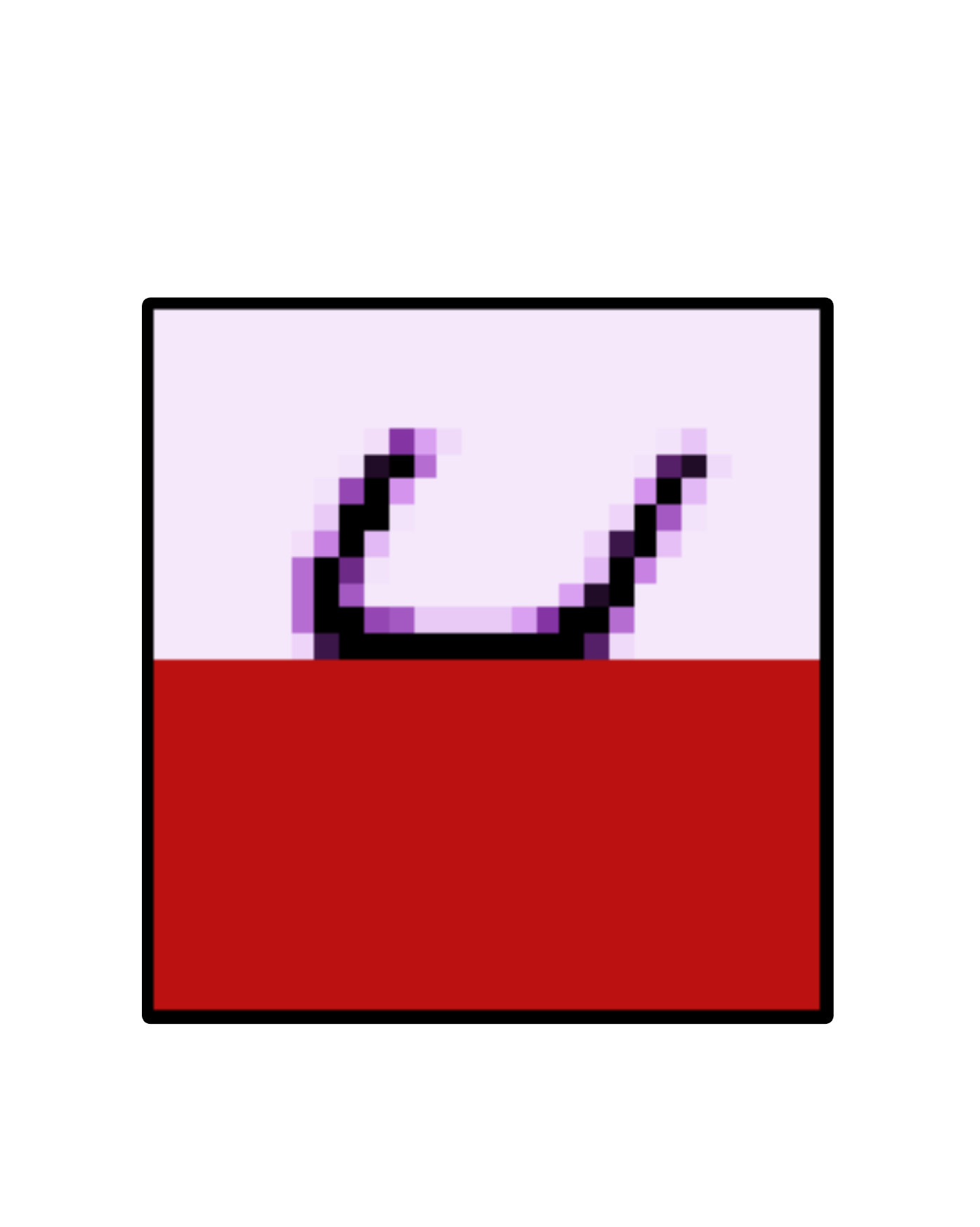}
	}
	\caption{
    \textbf{Squeeze layers and masks.}
    \textbf{(a)}:
    A squeeze layer squeezes an image of size $c \times h \times w$ into $4c \times h / 2 \times w / 2$. 
    The first panel shows the mask, where each color corresponds to a channel 
    added by the squeeze layer (for visual clarity we show the mask for a $12 \times 12$ image).
    The second panel shows a $1 \times 28 \times 28$ MNIST digit, and the last panel
    shows the $4$ channels produced by the squeeze layer. 
    The colors of the boundaries of the channel visualizations correspond to the
    colors of the pixels in the mask.
    Each channel produced by the squeeze layer is a subsampled version of the input image.
    \textbf{(b)-(d)}:
    Checkerboard, channel-wise and horizontal masks applied to the same input image.
    Masked regions are shown in red.
    Channel-wise mask is obtained by applying a squeeze layer 
    and masking two of the channels (e.g. the last two); here we show the masked
    pixels in the un-squeezed image.
    Masks are typically alternated: in the subsequent layers the masked and observed
    positions are swapped. 
	}
	\label{fig:app_squeeze}
\end{figure}

Normalizing flows such as RealNVP and Glow consist of a sequence of coupling layers which change the content of the input and squeeze layers (see Figure \ref{fig:app_squeeze}) which reshape it.
Due to the presence of squeeze layers, the latent representations of the flow have a different shape compared to the input. 
In order to visualize latent representations, we revert all squeezing operations of the flow and visualize \texttt{unsqueeze}$(z)$. Similarly, for visualization of coupling layer activations and scale and shift parameters predicted by $st$-network, we revert all squeezing operations and join all factored out tensors in the case of multi-scale architecture (i.e., we feed the corresponding tensor through inverse sub-flow without applying coupling layers or invertible convolutions).

\section{Additional latent representation visualizations}
\label{sec:app_latents}

\begin{figure*}[t]
	\def \panelwidth {0.085\textwidth}
	\def \panelskip {-0.6cm}

    \setlength{\tabcolsep}{1pt}
    \renewcommand{\arraystretch}{0.2}
    \centering

    \subfigure[RealNVP trained on FashionMNIST]{
    \begin{tabular}{cccccc}
		\rotatebox{90}{\quad~ $x$}&
        \hspace{-0.1cm} \includegraphics[width=\panelwidth]{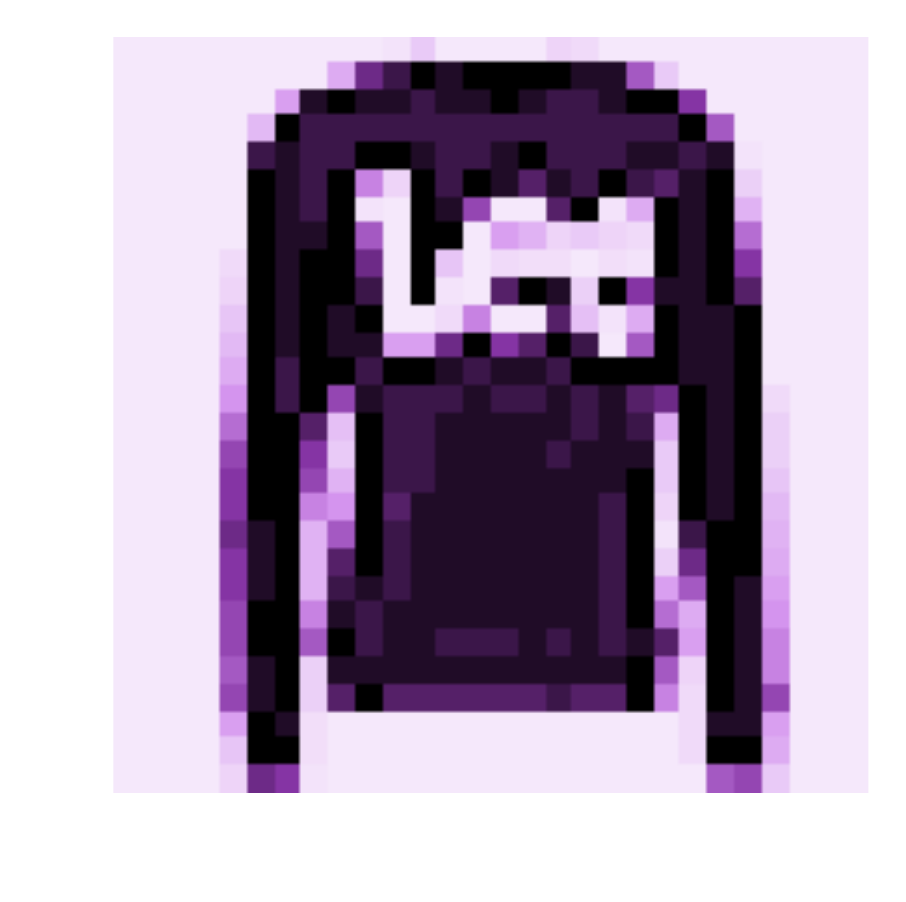}&
        \includegraphics[width=\panelwidth]{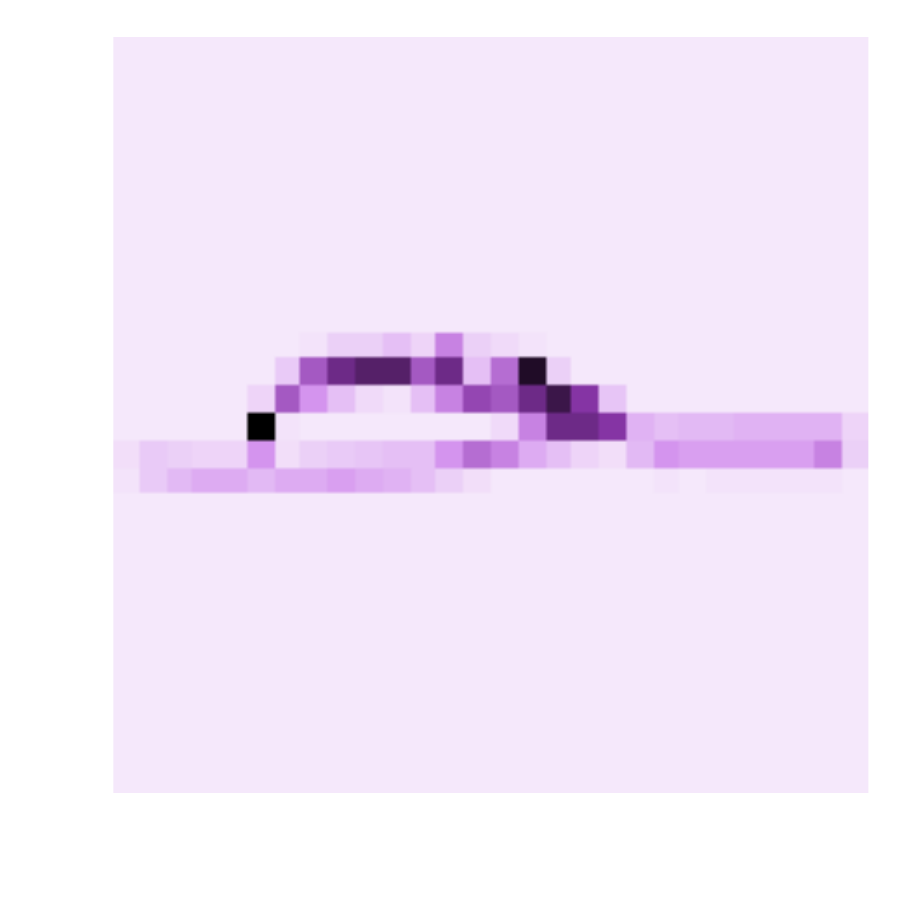}&
        \includegraphics[width=\panelwidth]{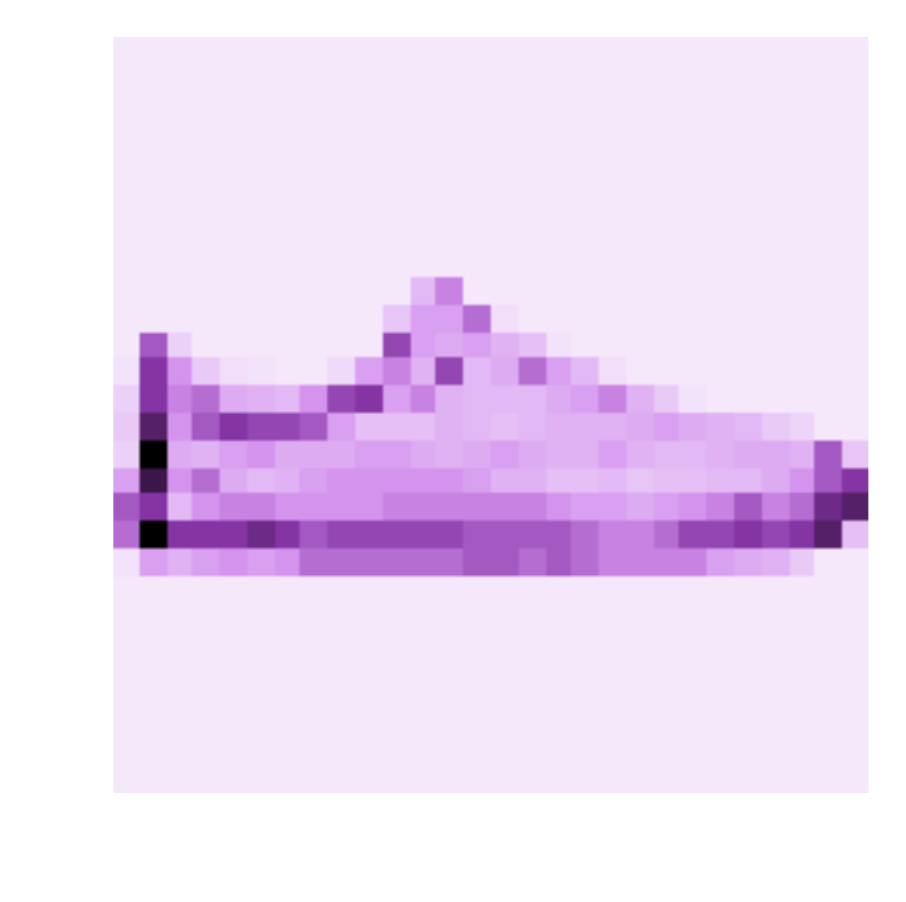}&
        \includegraphics[width=\panelwidth]{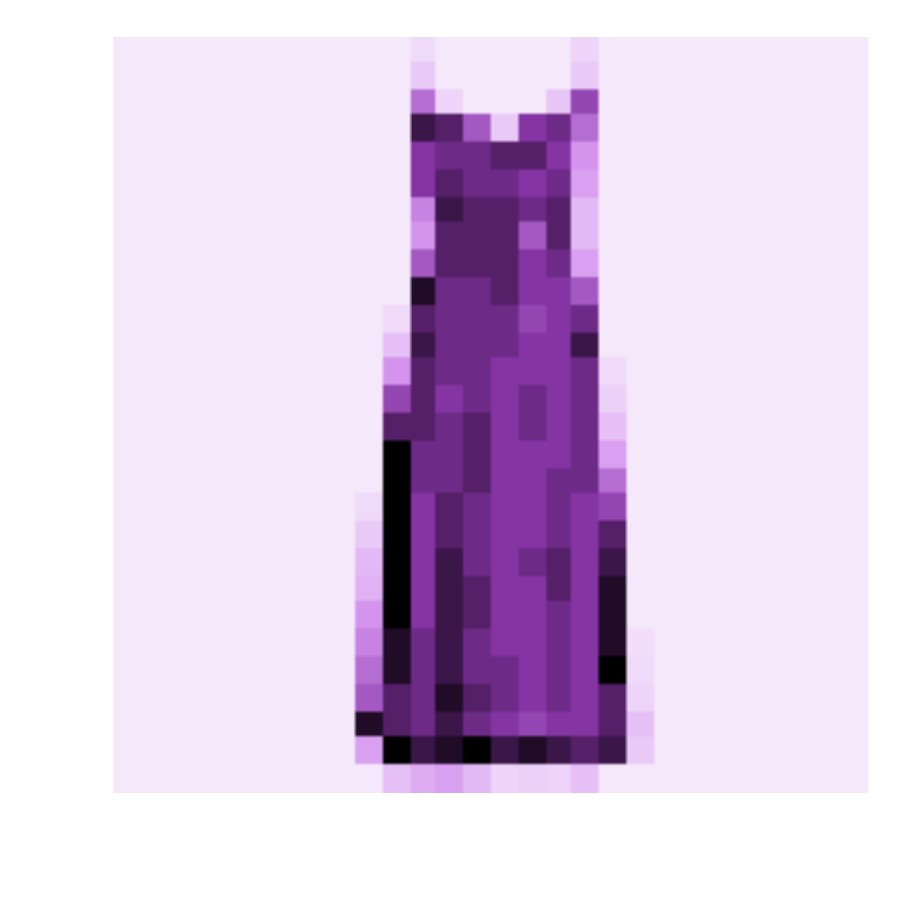}&
        \includegraphics[width=\panelwidth]{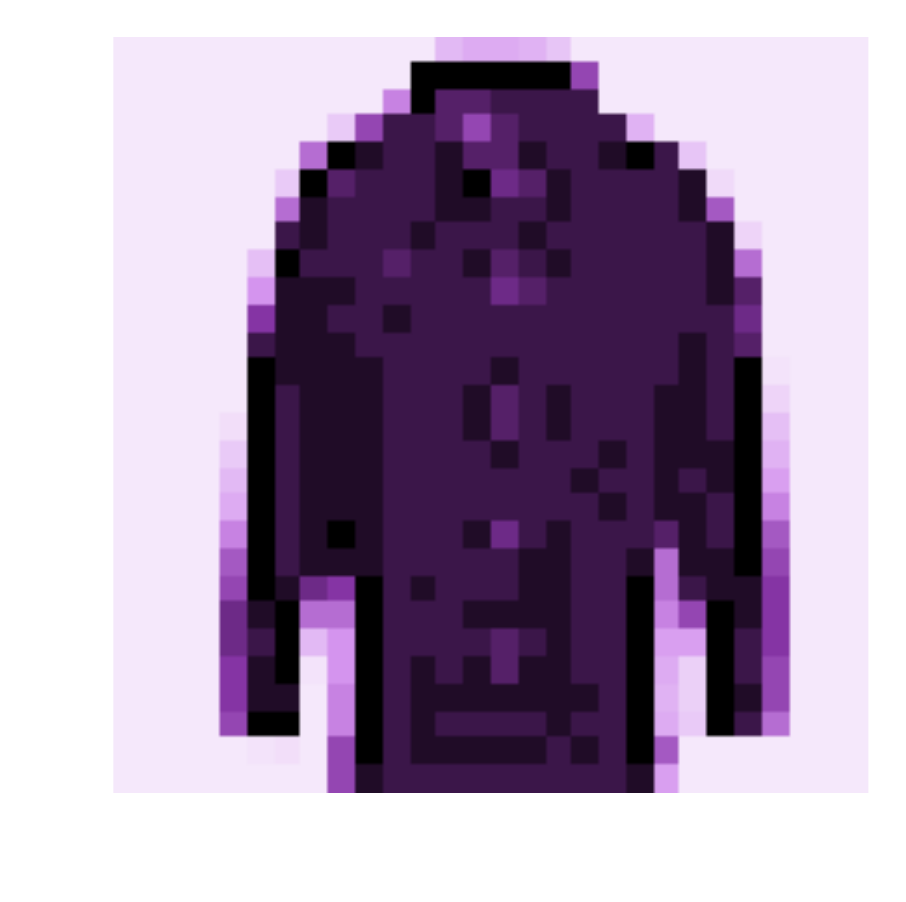}
        \\
		\rotatebox{90}{\quad~ $z$}&
        \hspace{-0.1cm} \includegraphics[width=\panelwidth]{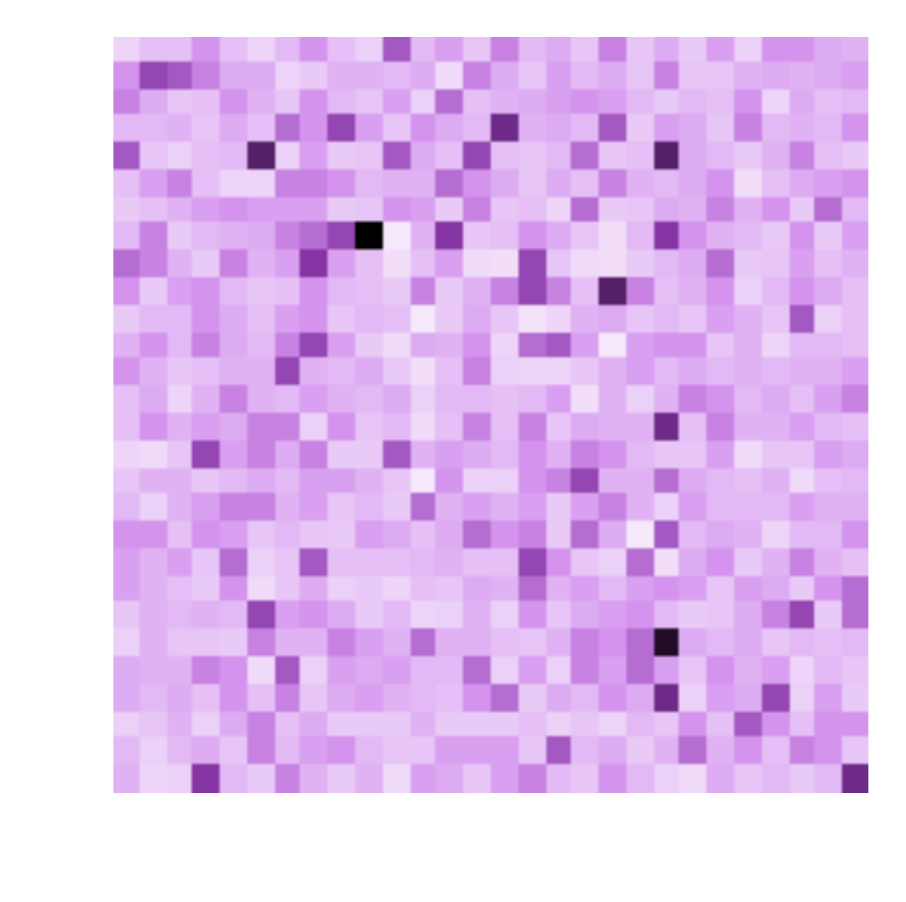}&
        \includegraphics[width=\panelwidth]{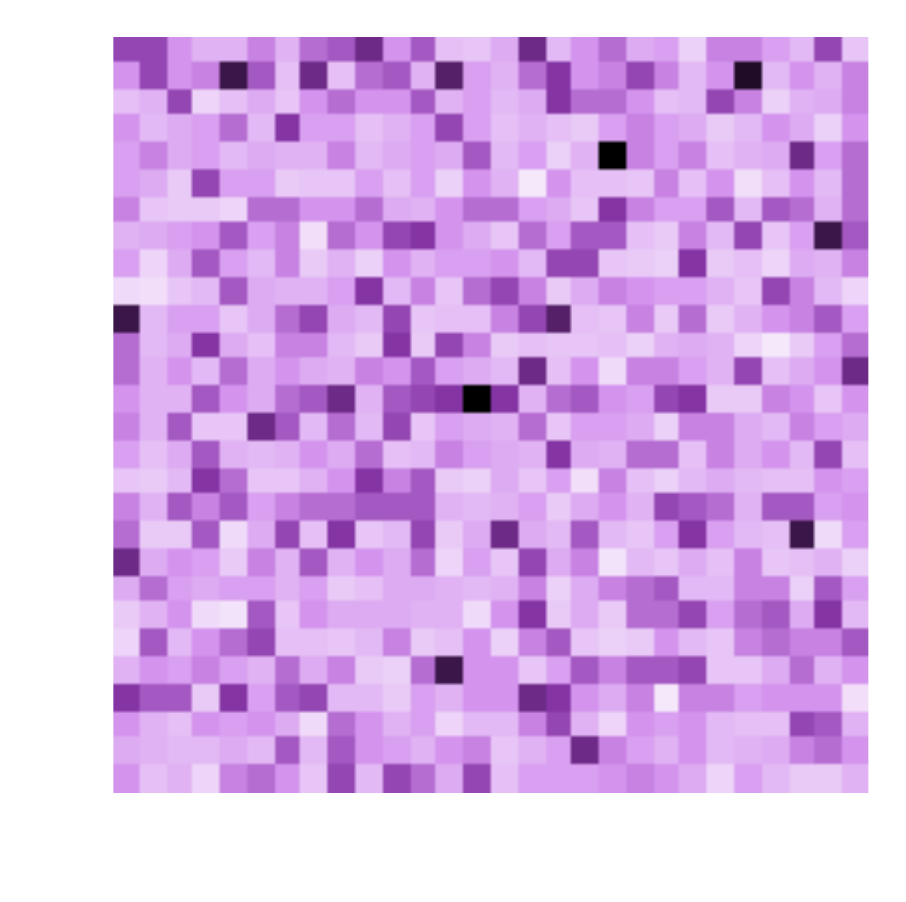}&
        \includegraphics[width=\panelwidth]{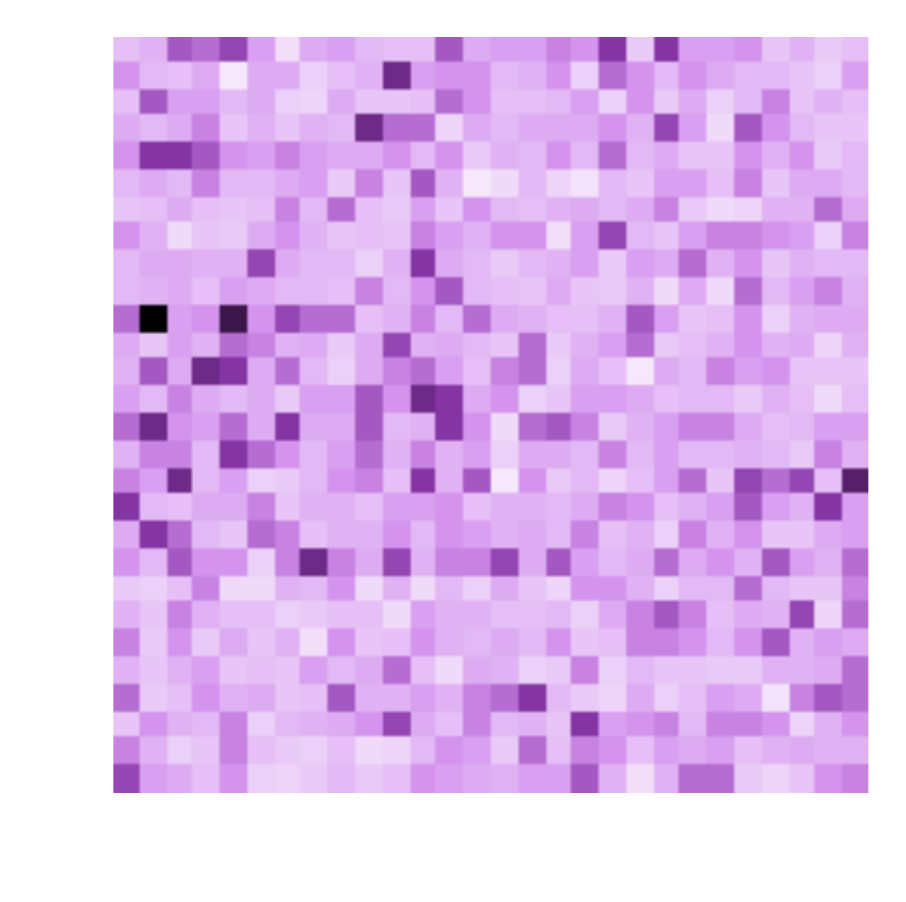}&
        \includegraphics[width=\panelwidth]{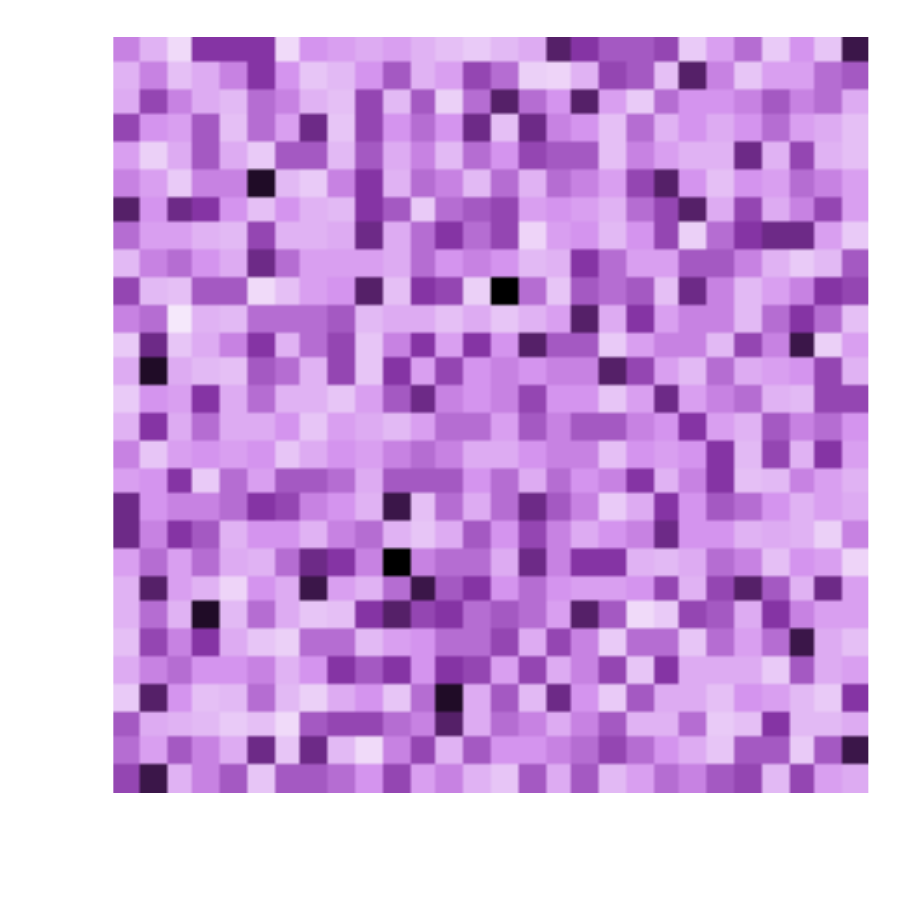}&
        \includegraphics[width=\panelwidth]{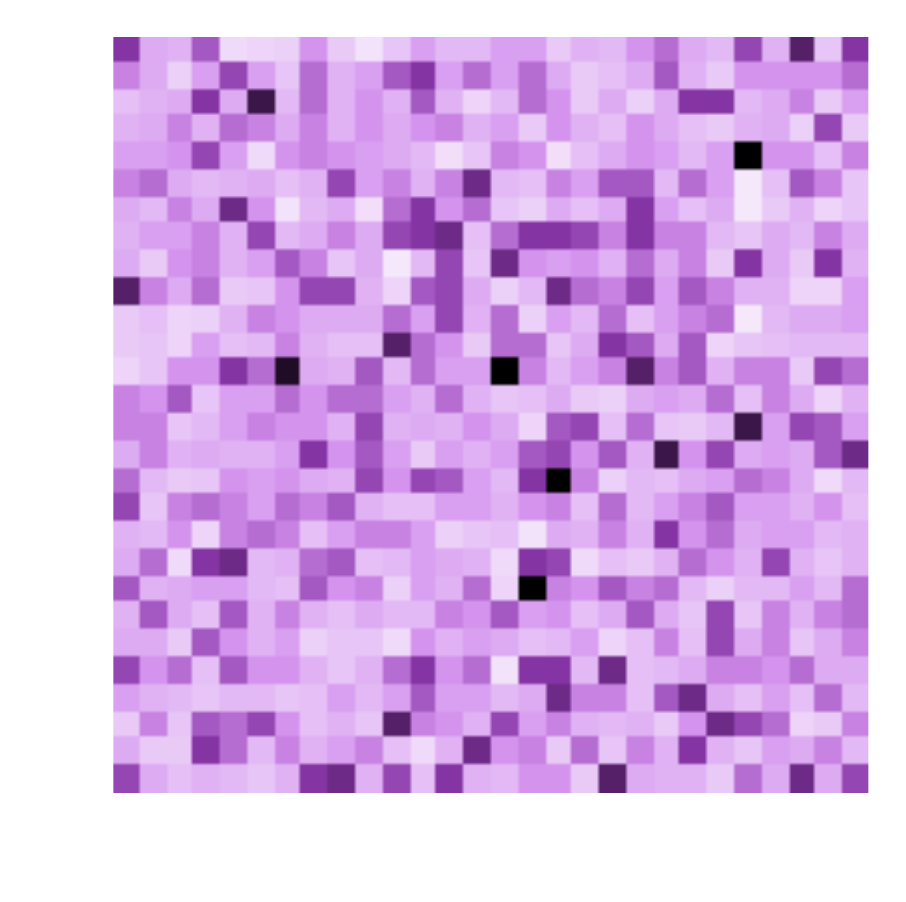}
        \\
		\rotatebox{90}{~~~Avg $z$}&
        \hspace{-0.1cm} \includegraphics[width=\panelwidth]{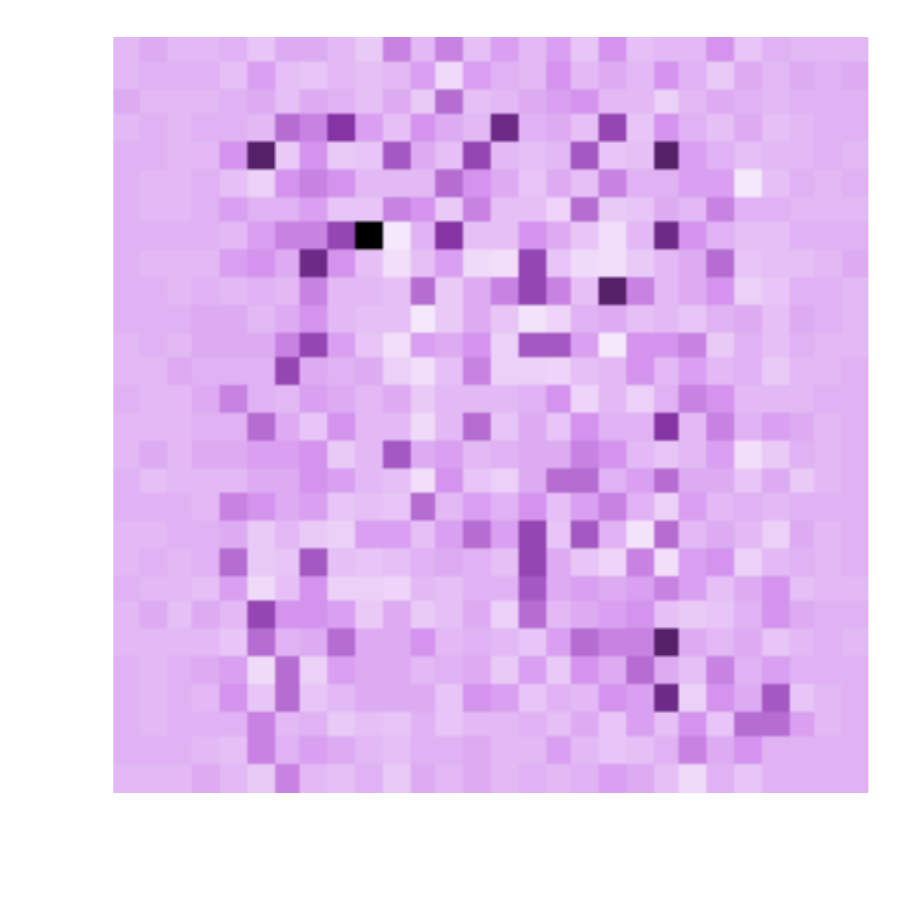}&
        \includegraphics[width=\panelwidth]{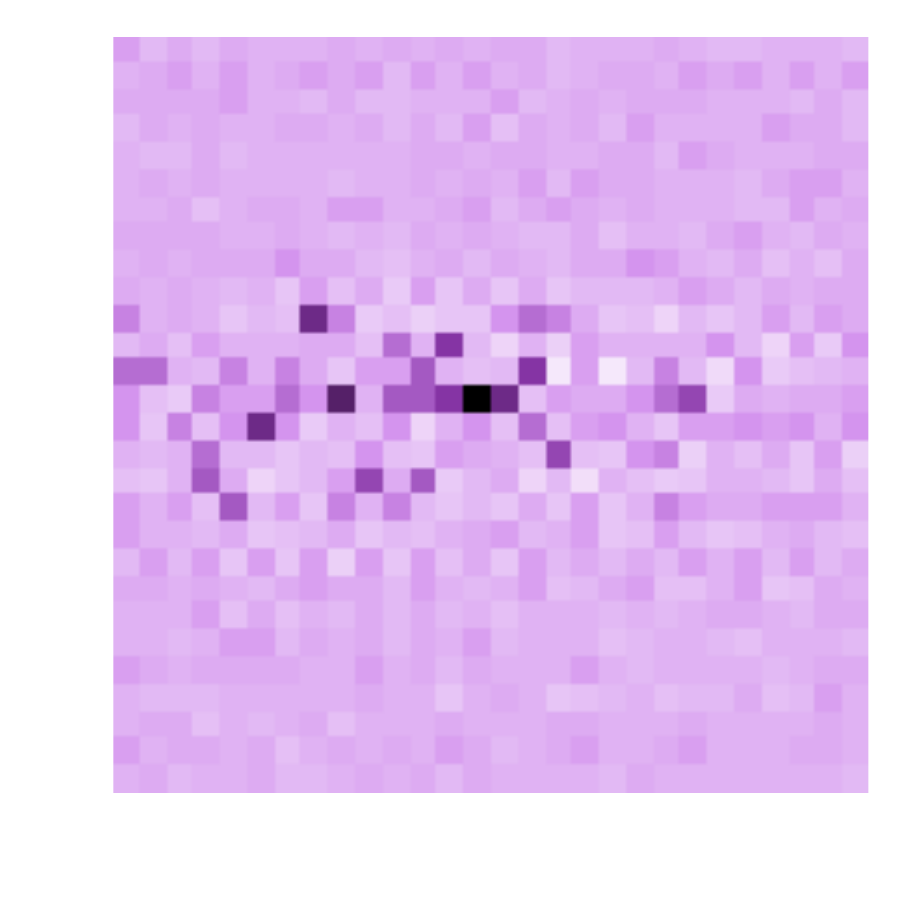}&
        \includegraphics[width=\panelwidth]{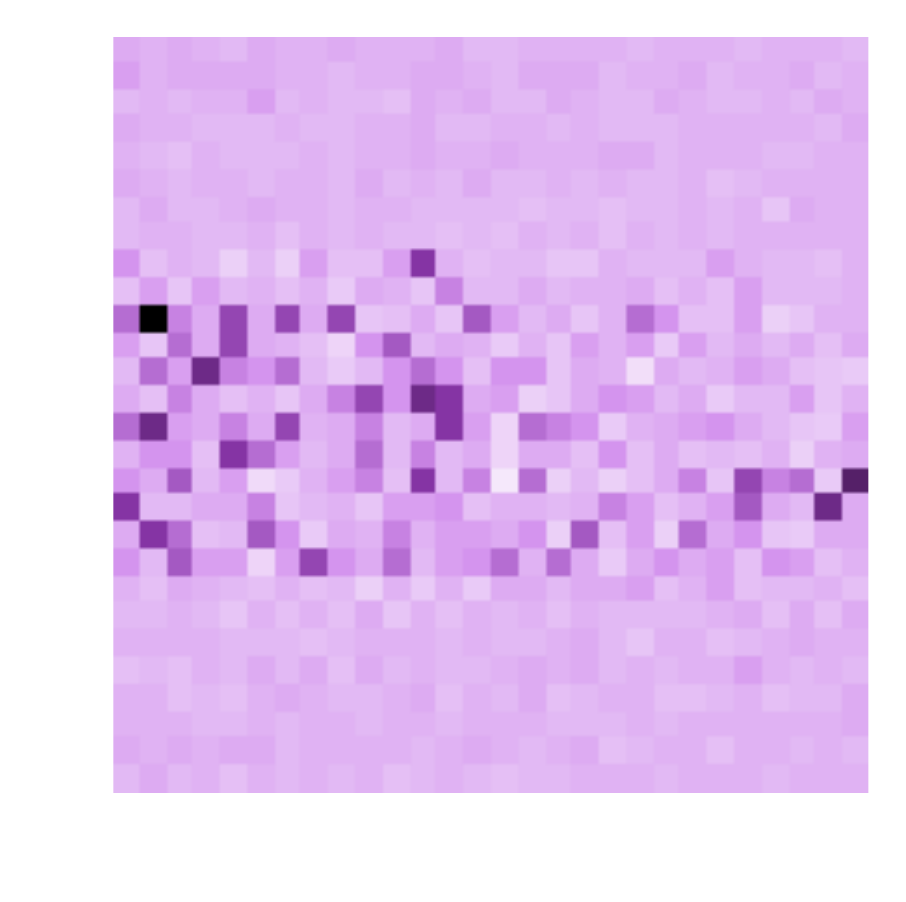}&
        \includegraphics[width=\panelwidth]{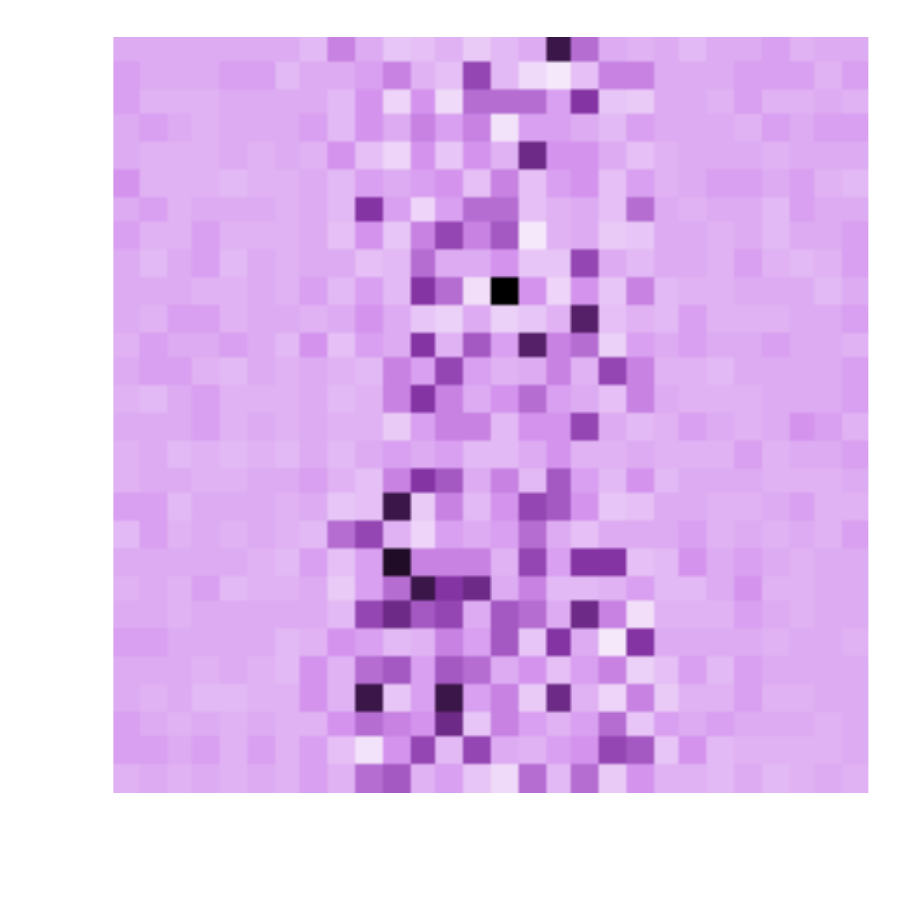}&
        \includegraphics[width=\panelwidth]{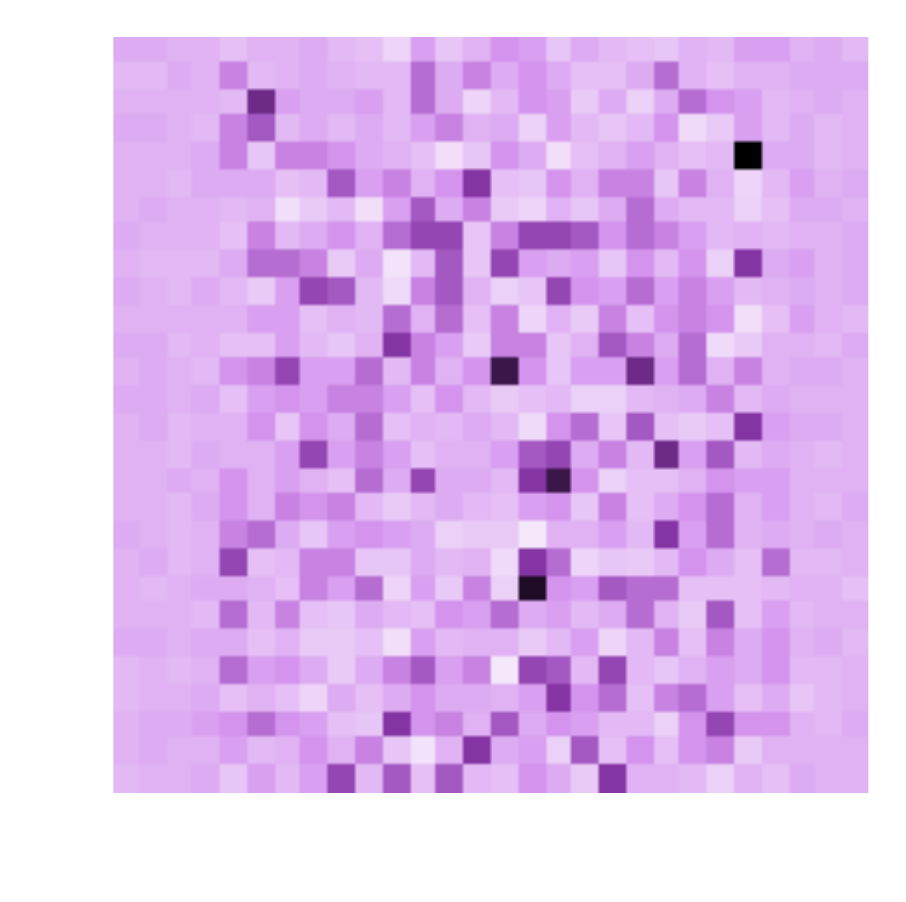}
        \\
		\rotatebox{90}{~~~BN $z$}&
        \hspace{-0.1cm} \includegraphics[width=\panelwidth]{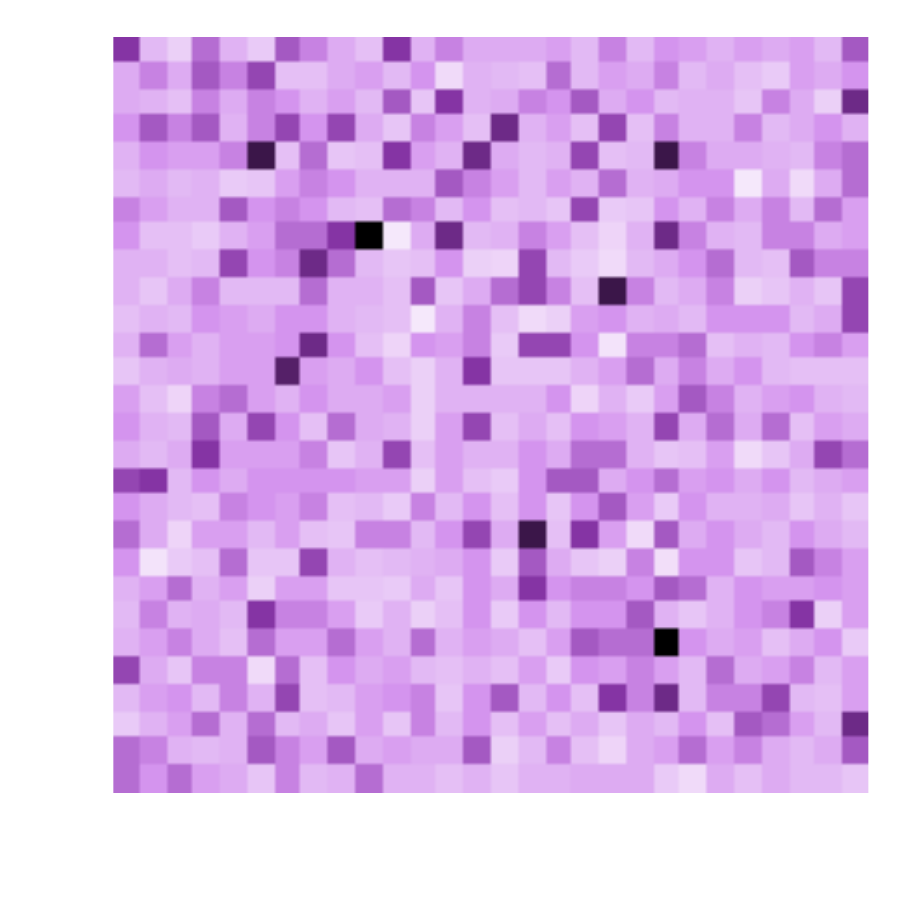}&
        \includegraphics[width=\panelwidth]{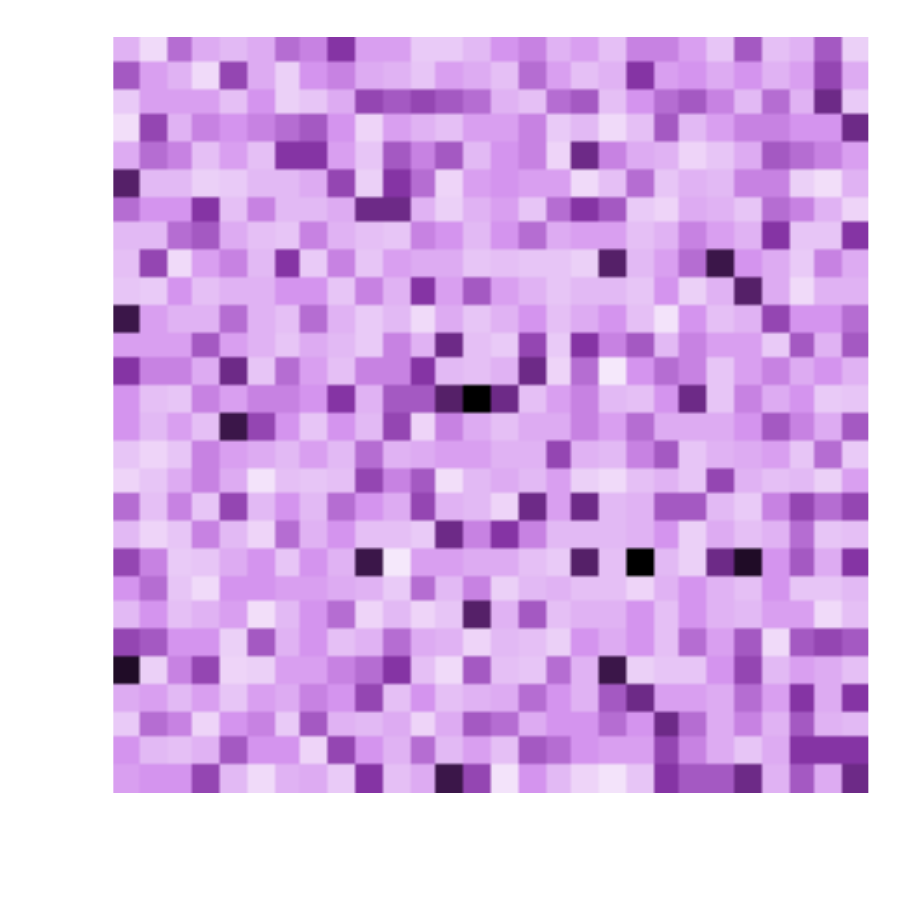}&
        \includegraphics[width=\panelwidth]{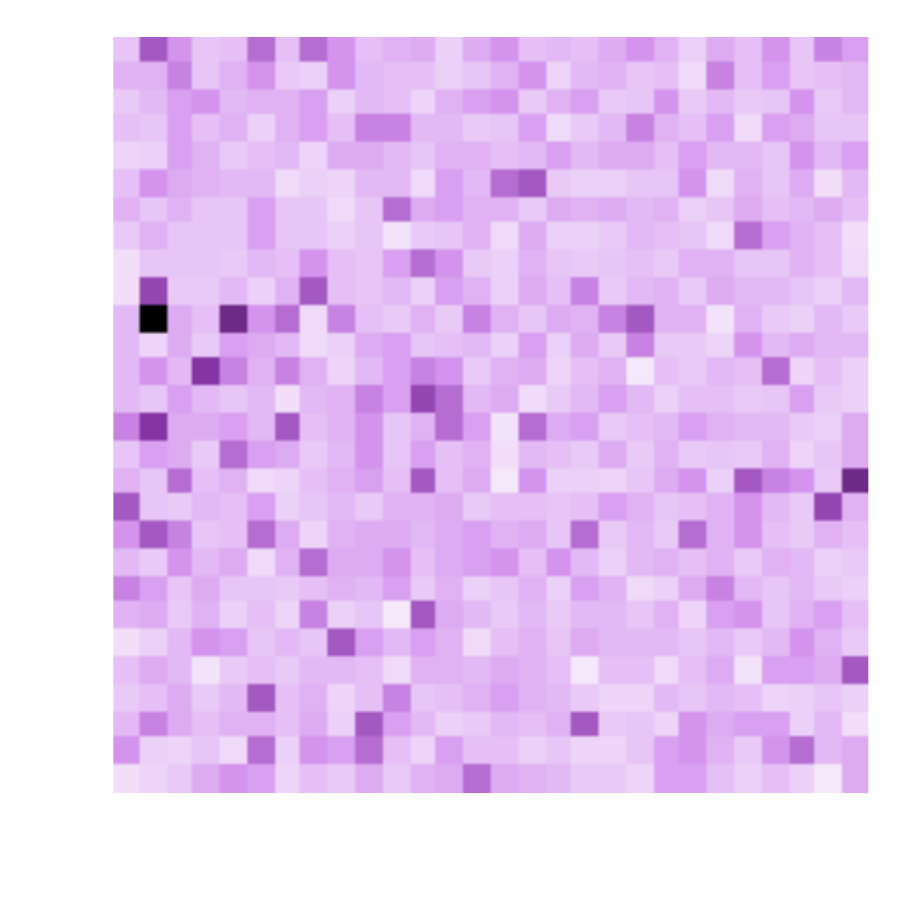}&
        \includegraphics[width=\panelwidth]{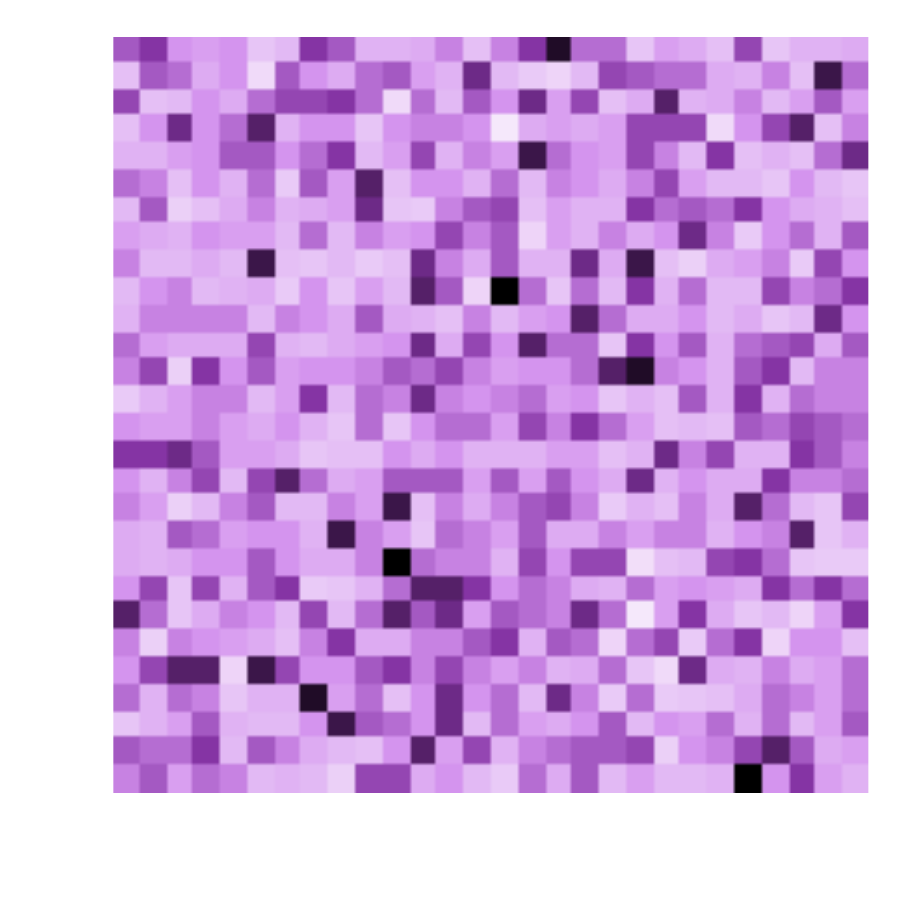}&
        \includegraphics[width=\panelwidth]{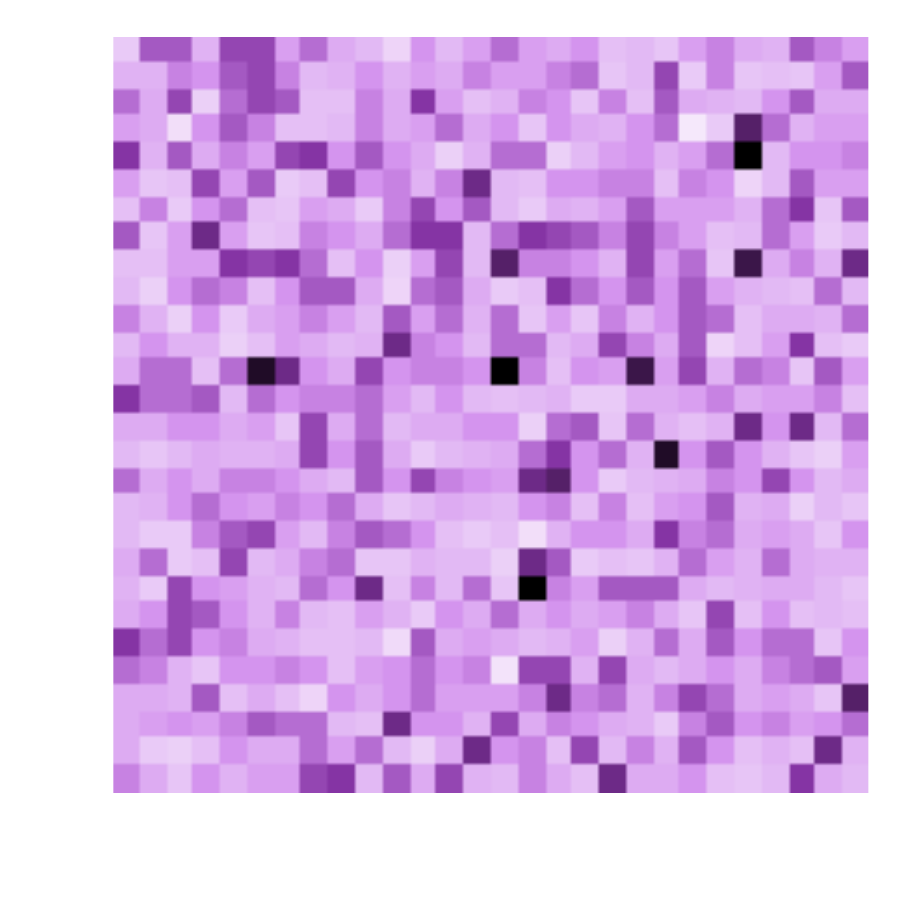}
    \end{tabular}
    \quad
    \begin{tabular}{ccccc}
        \includegraphics[width=\panelwidth]{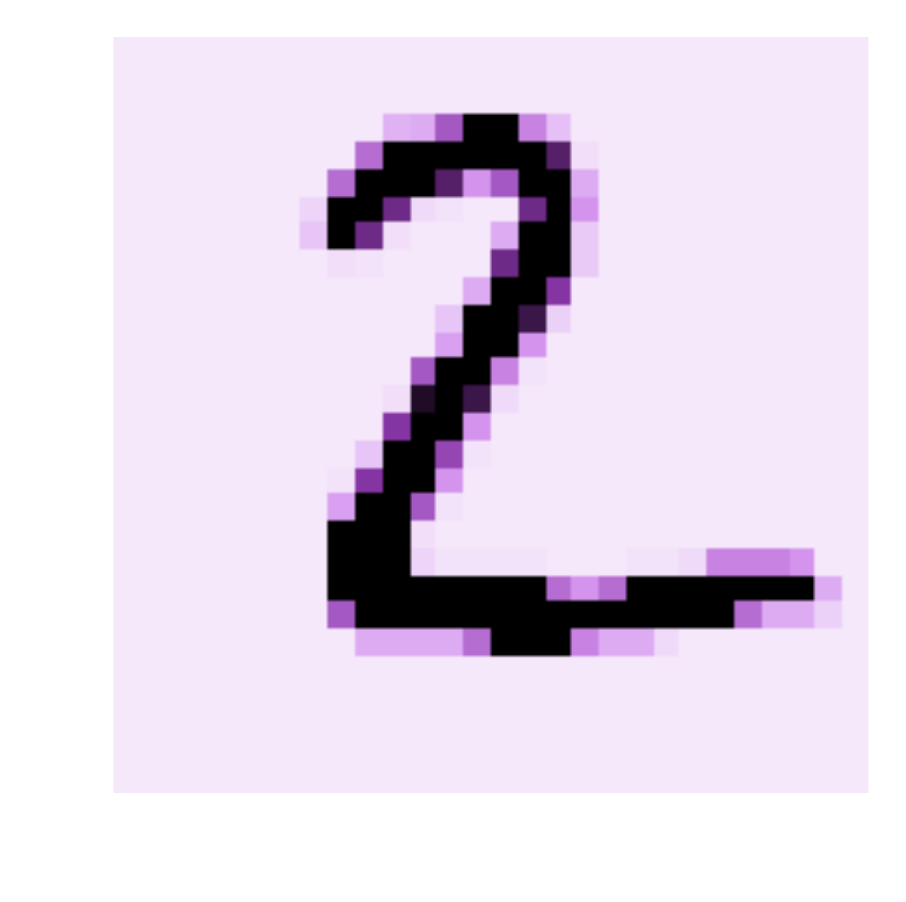}&
        \includegraphics[width=\panelwidth]{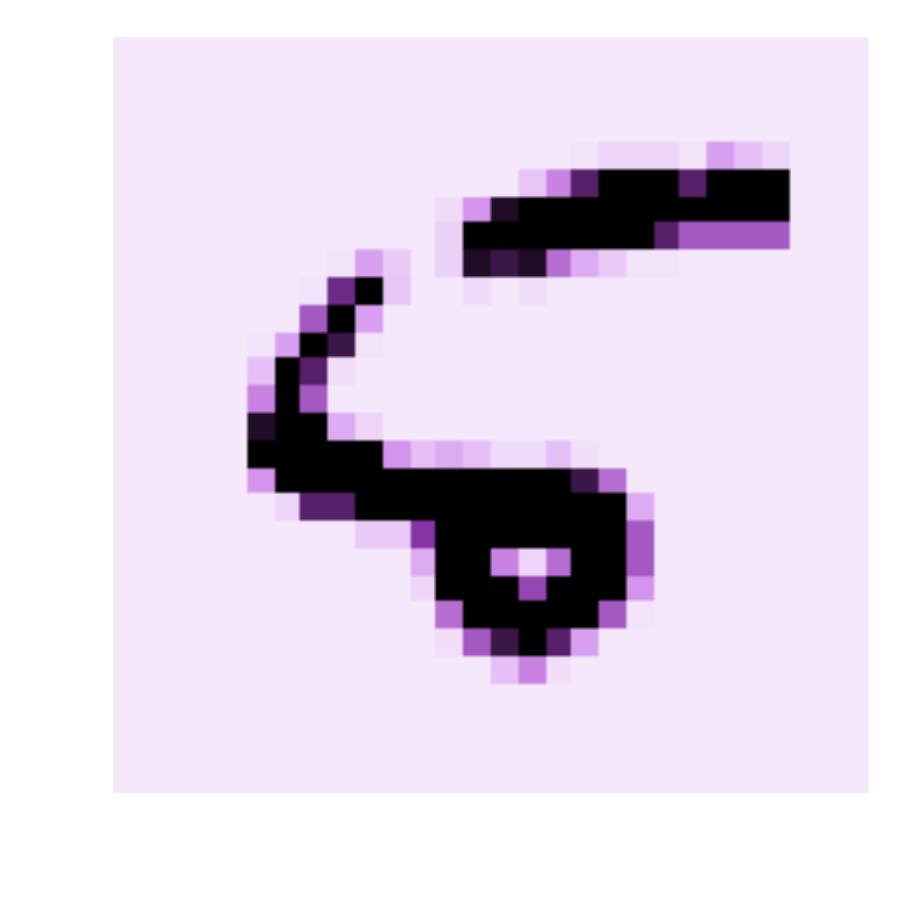}&
        \includegraphics[width=\panelwidth]{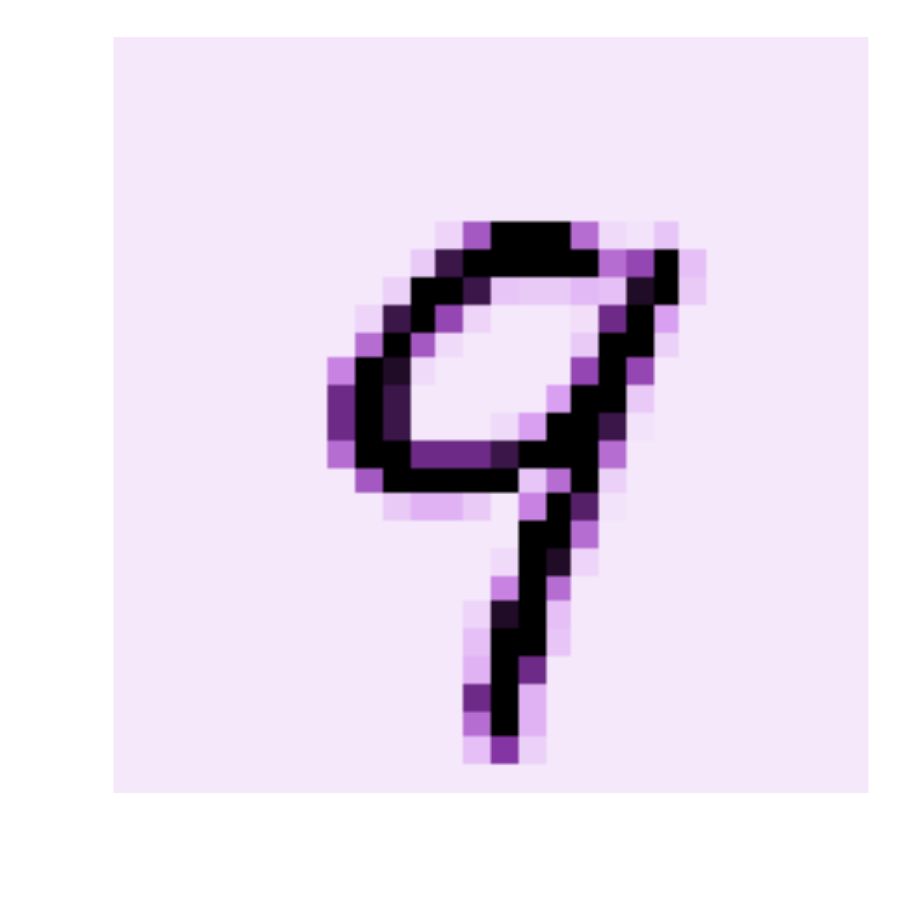}&
        \includegraphics[width=\panelwidth]{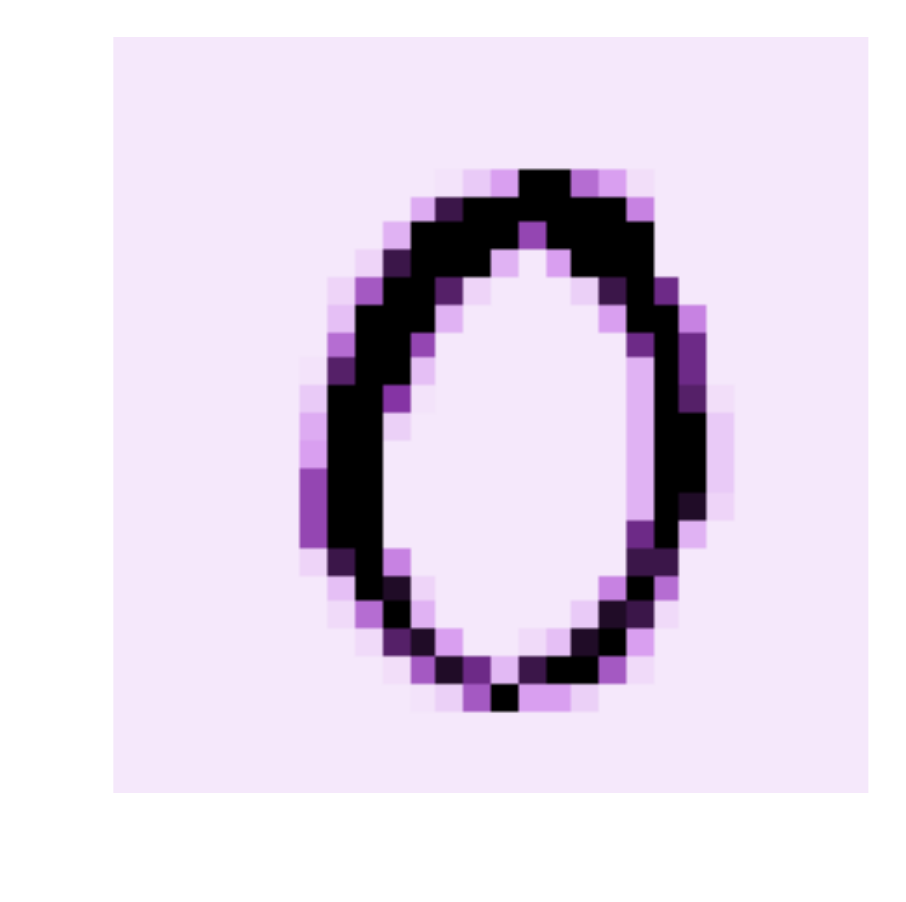}&
        \includegraphics[width=\panelwidth]{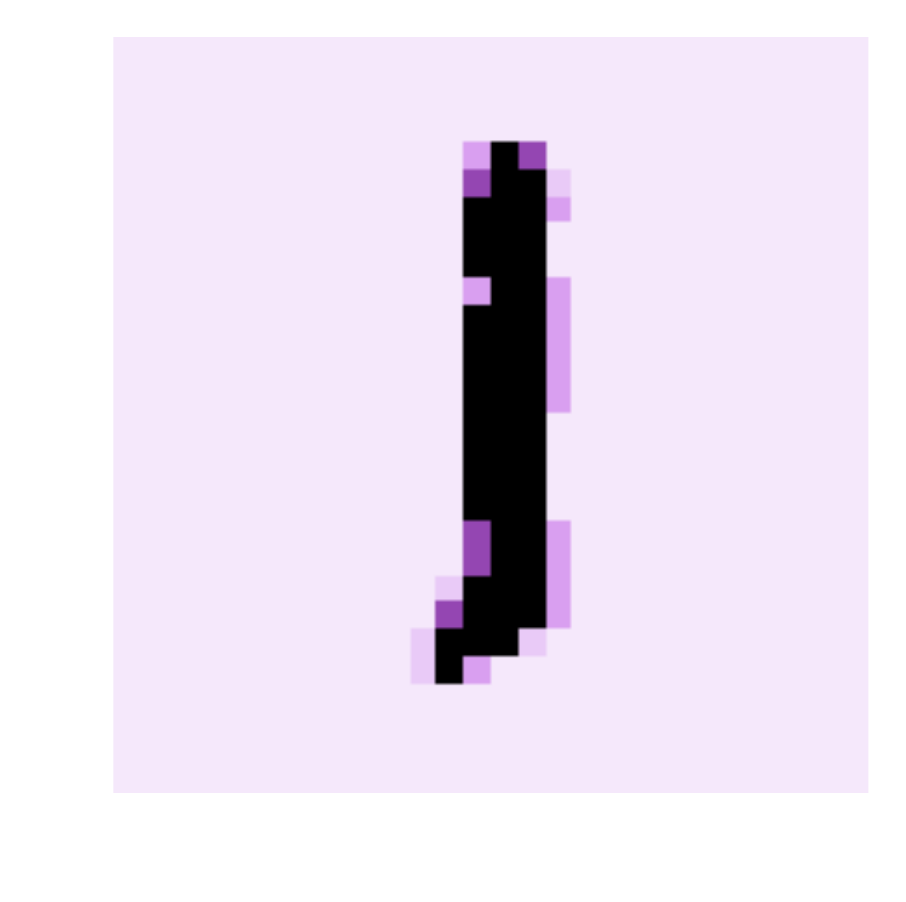}
        \\
        \includegraphics[width=\panelwidth]{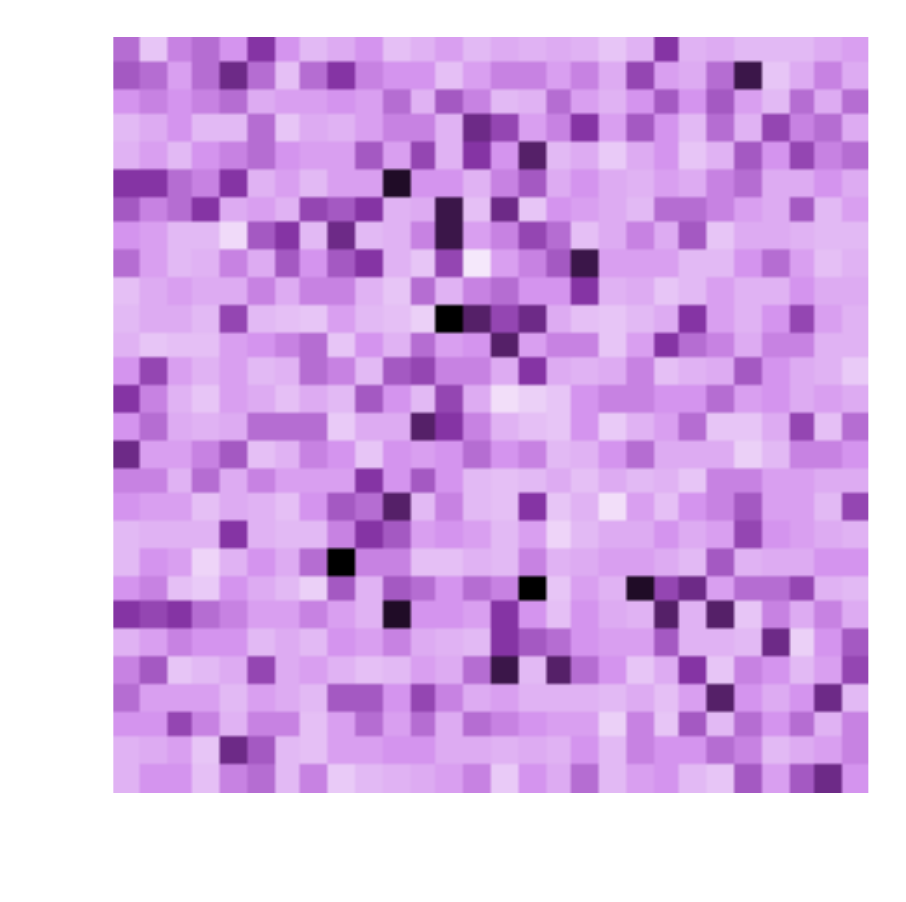}&
        \includegraphics[width=\panelwidth]{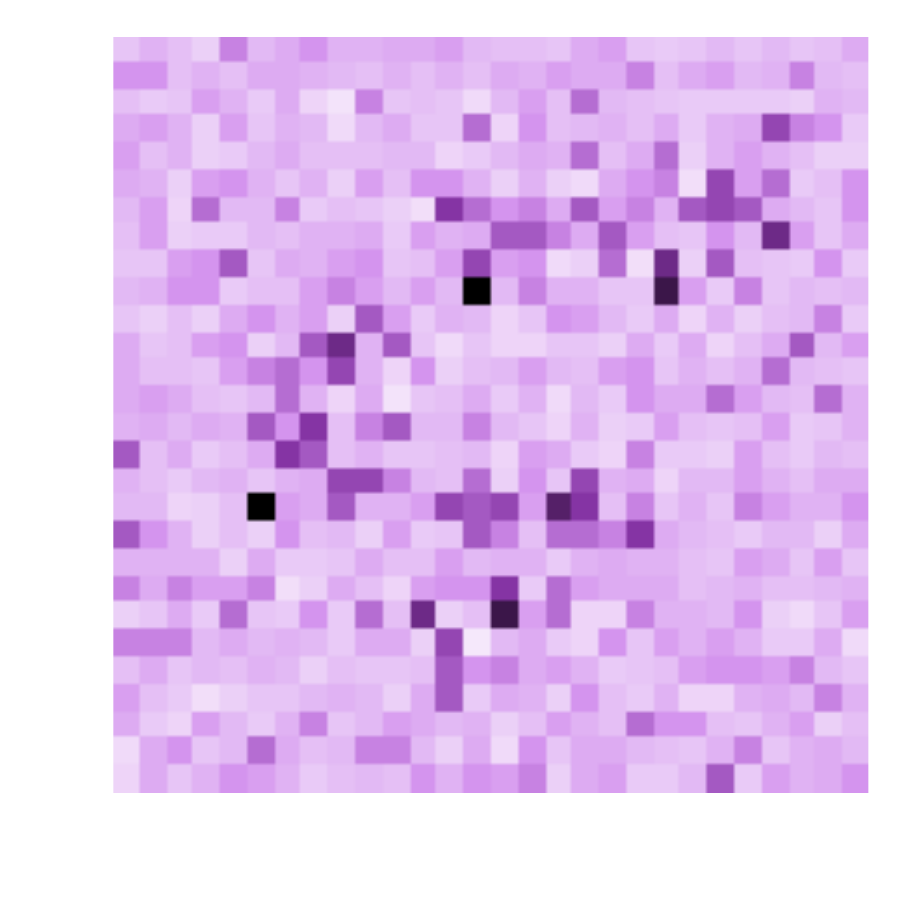}&
        \includegraphics[width=\panelwidth]{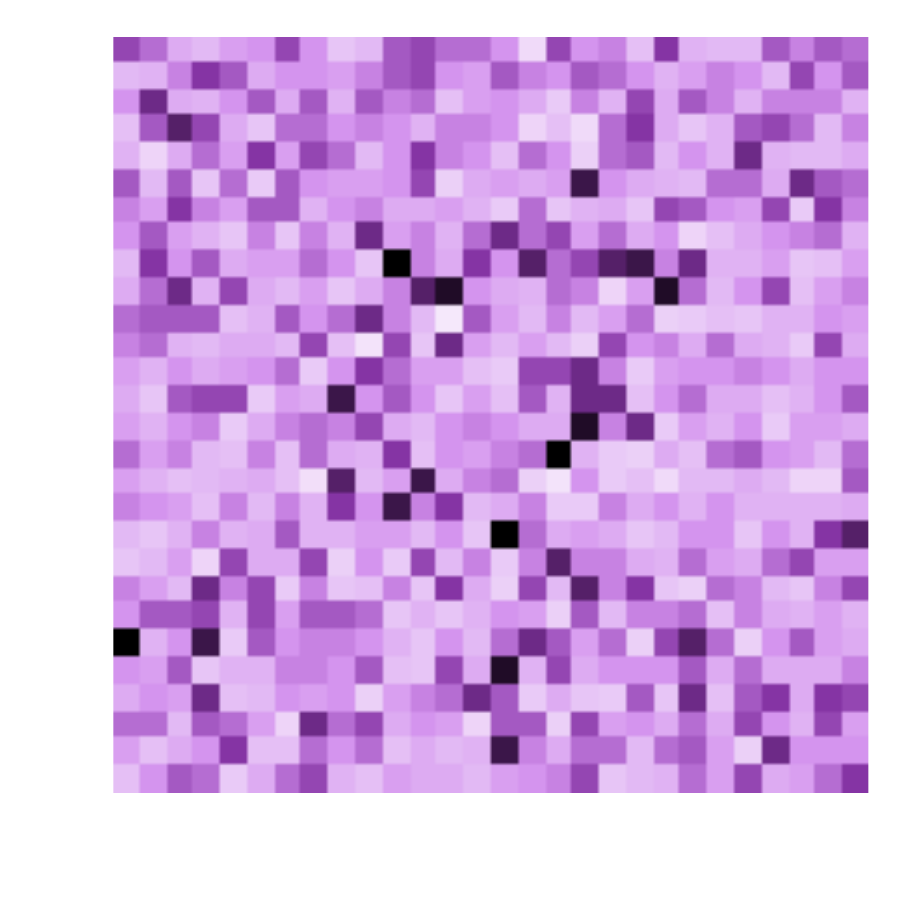}&
        \includegraphics[width=\panelwidth]{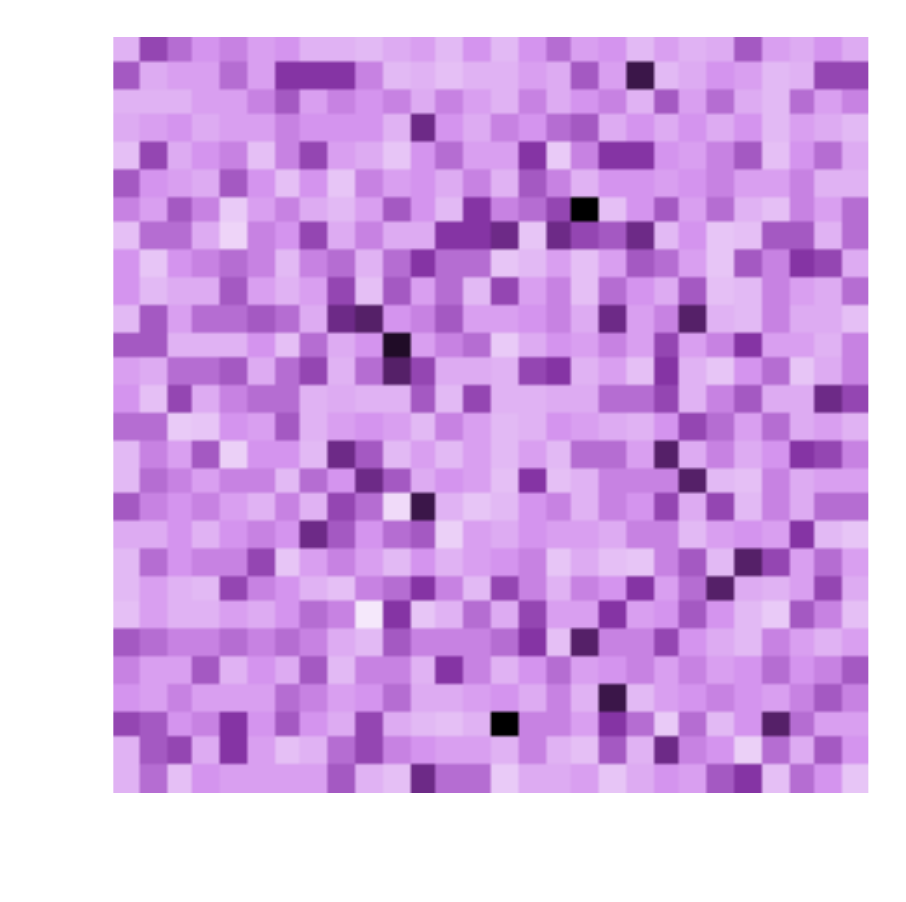}&
        \includegraphics[width=\panelwidth]{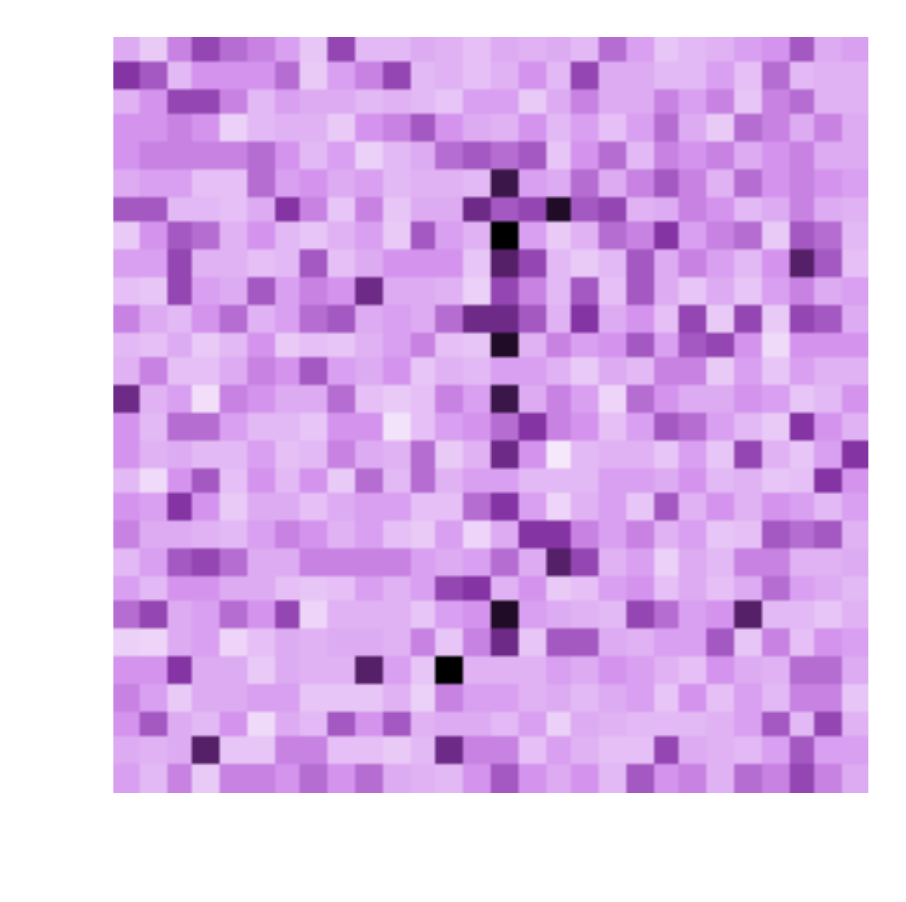}
        \\
        \includegraphics[width=\panelwidth]{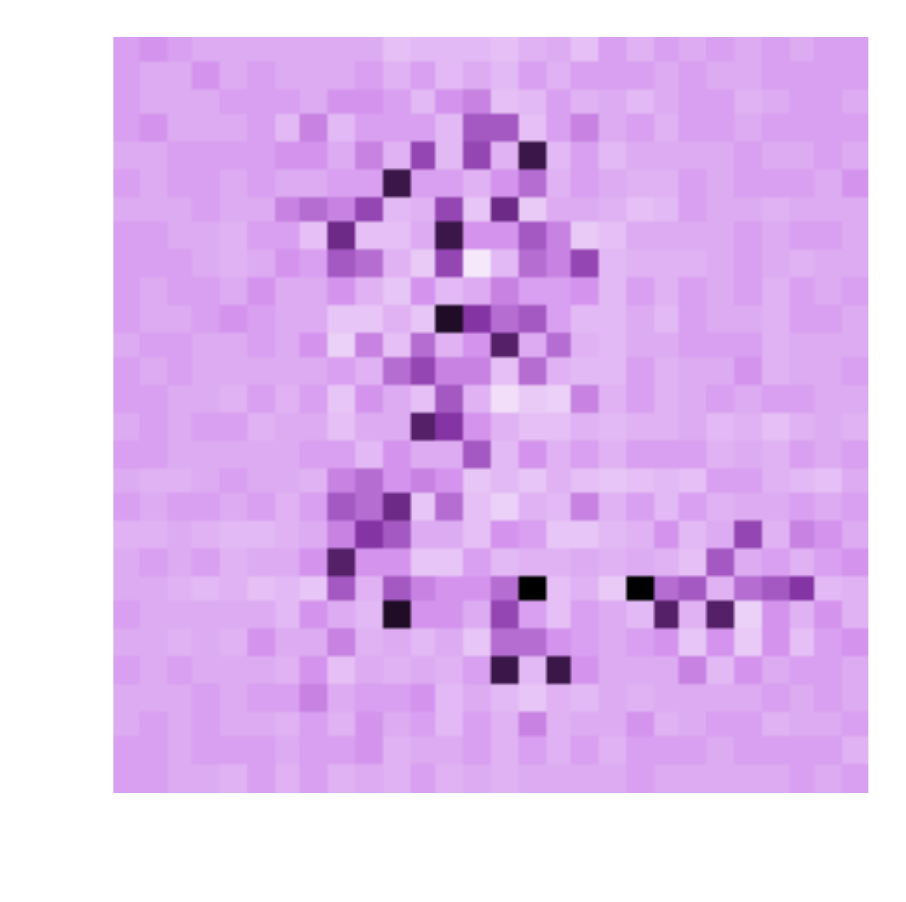}&
        \includegraphics[width=\panelwidth]{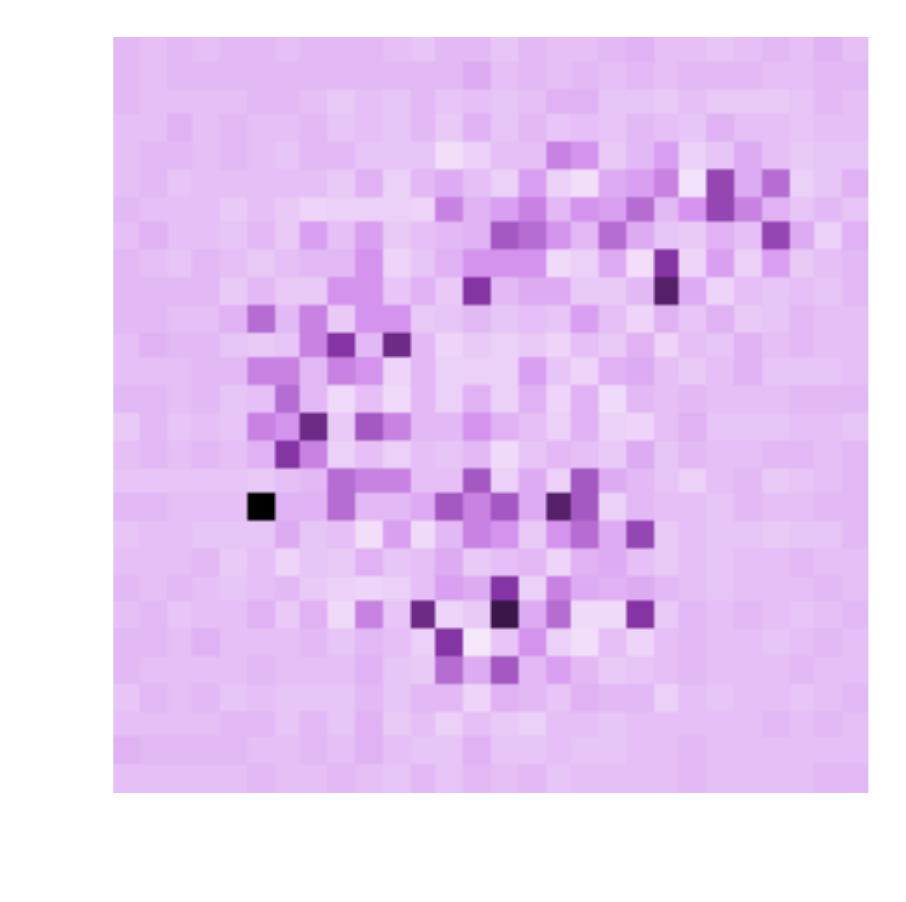}&
        \includegraphics[width=\panelwidth]{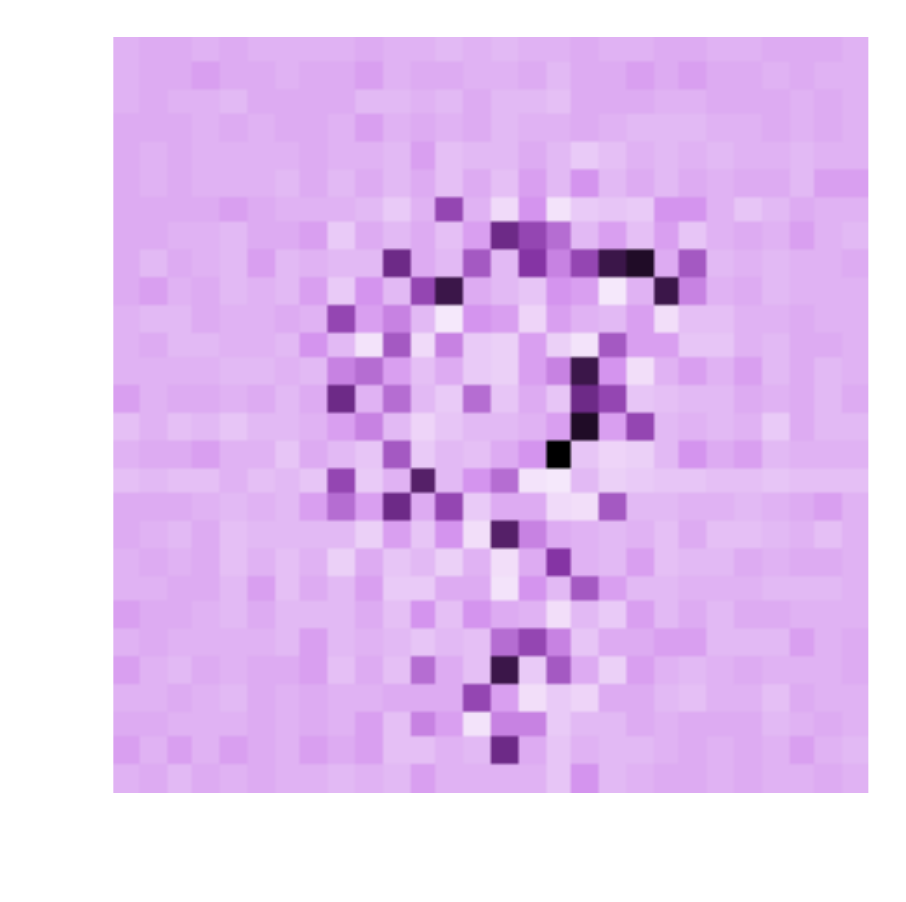}&
        \includegraphics[width=\panelwidth]{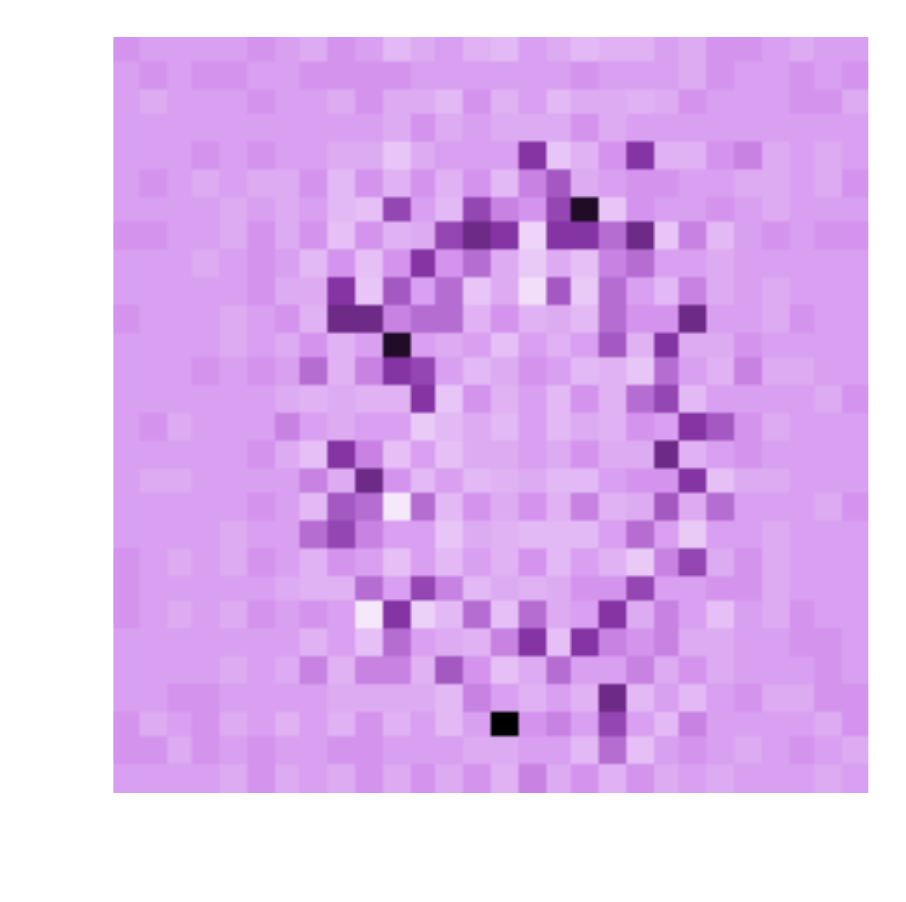}&
        \includegraphics[width=\panelwidth]{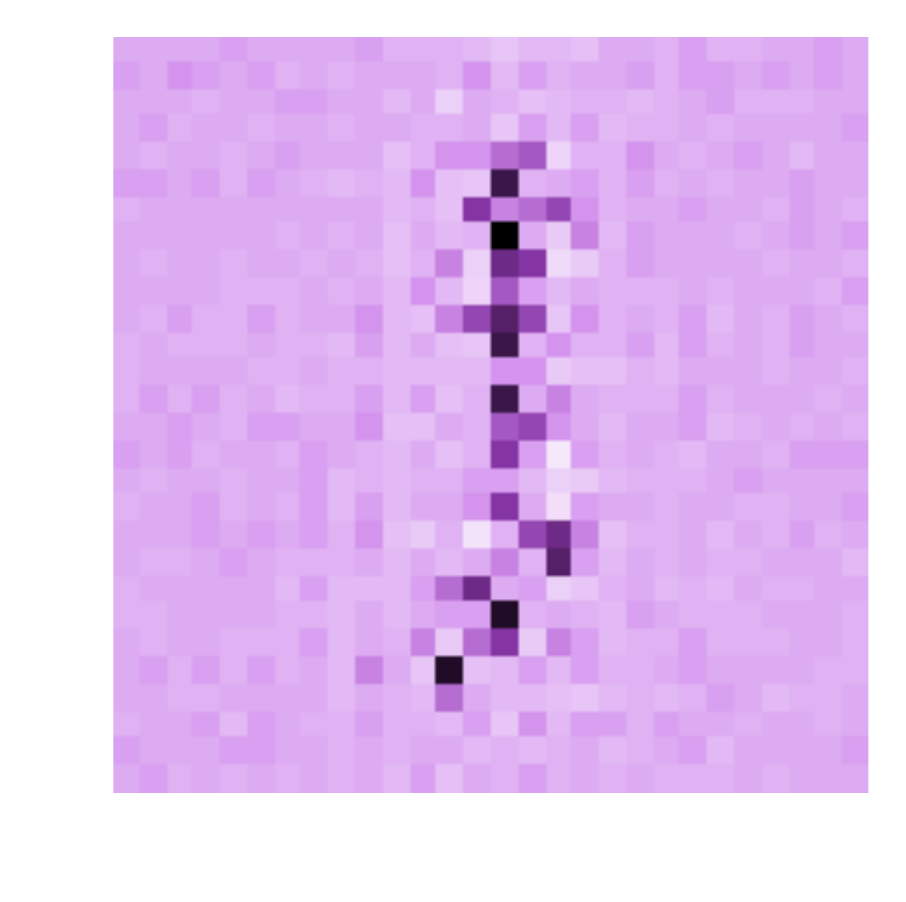}
        \\
        \includegraphics[width=\panelwidth]{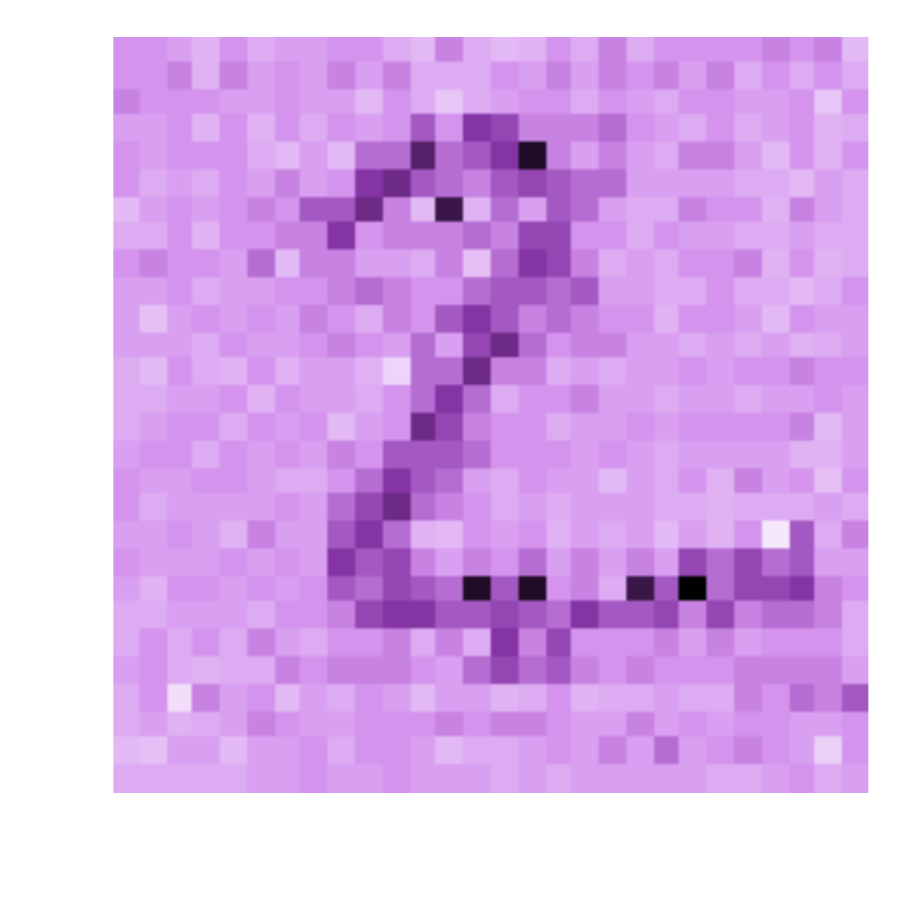}&
        \includegraphics[width=\panelwidth]{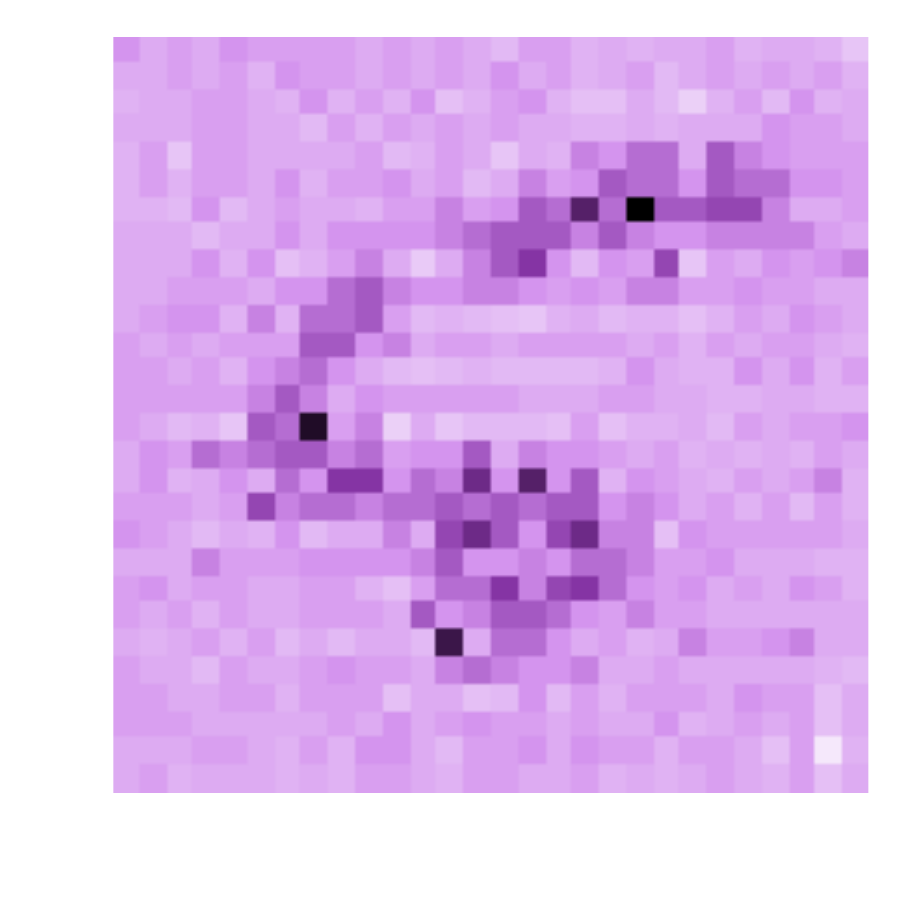}&
        \includegraphics[width=\panelwidth]{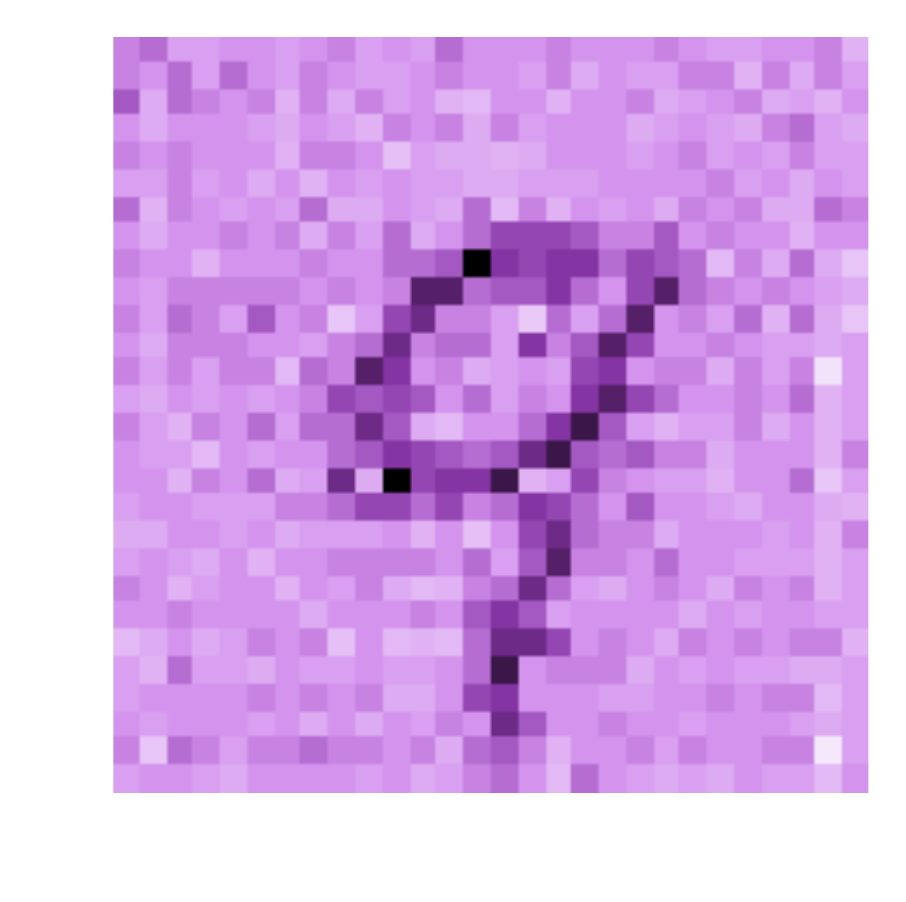}&
        \includegraphics[width=\panelwidth]{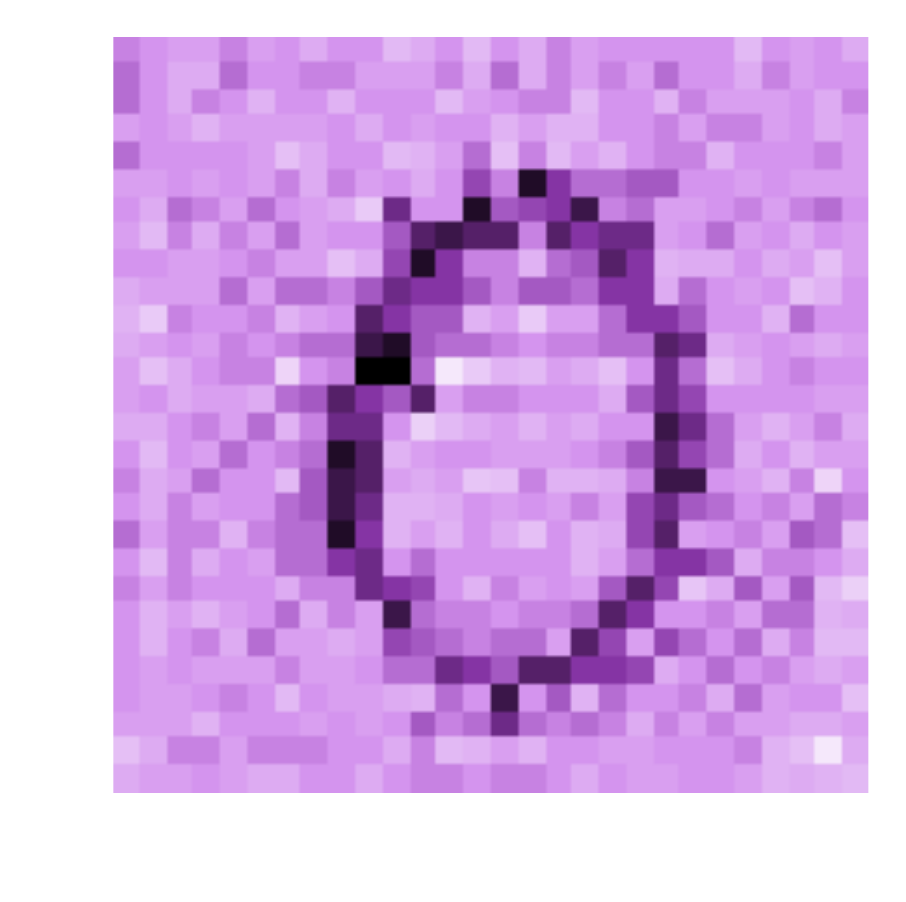}&
        \includegraphics[width=\panelwidth]{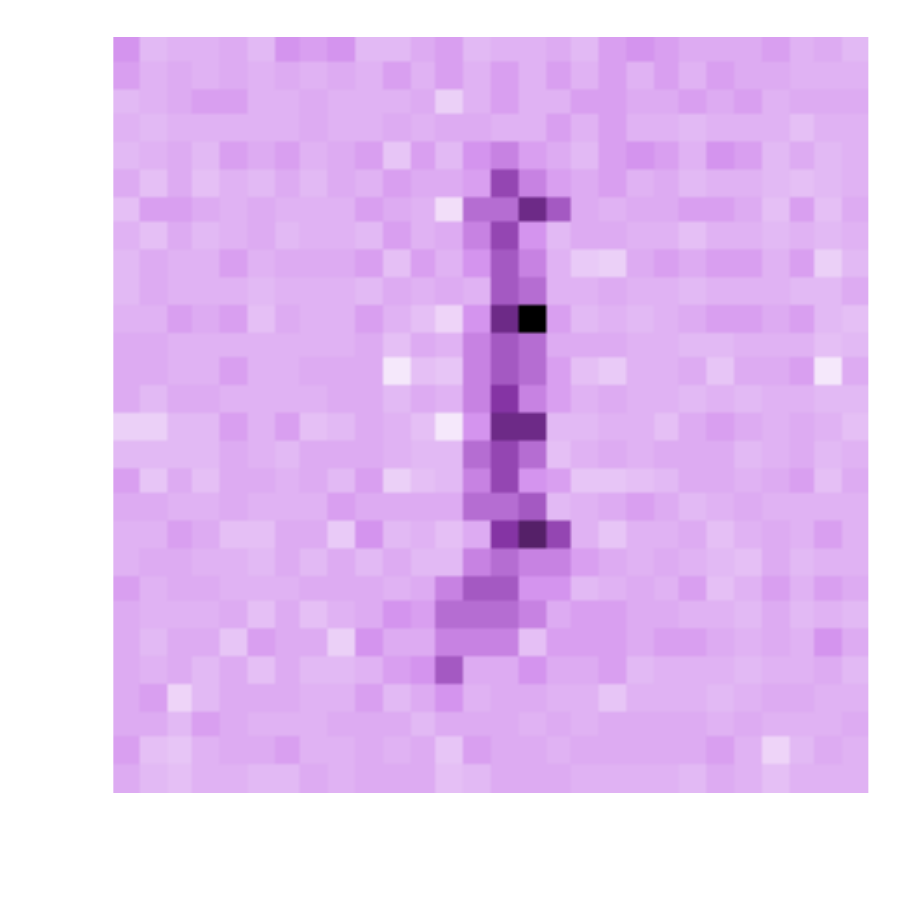}
    \end{tabular}
    }

    \subfigure[Glow trained on FashionMNIST]{
    \begin{tabular}{cccccc}
		\rotatebox{90}{\quad~ $x$}&
        \hspace{-0.1cm} \includegraphics[width=\panelwidth]{figs/latents/appendix_fashion_data_0.pdf}&
        \includegraphics[width=\panelwidth]{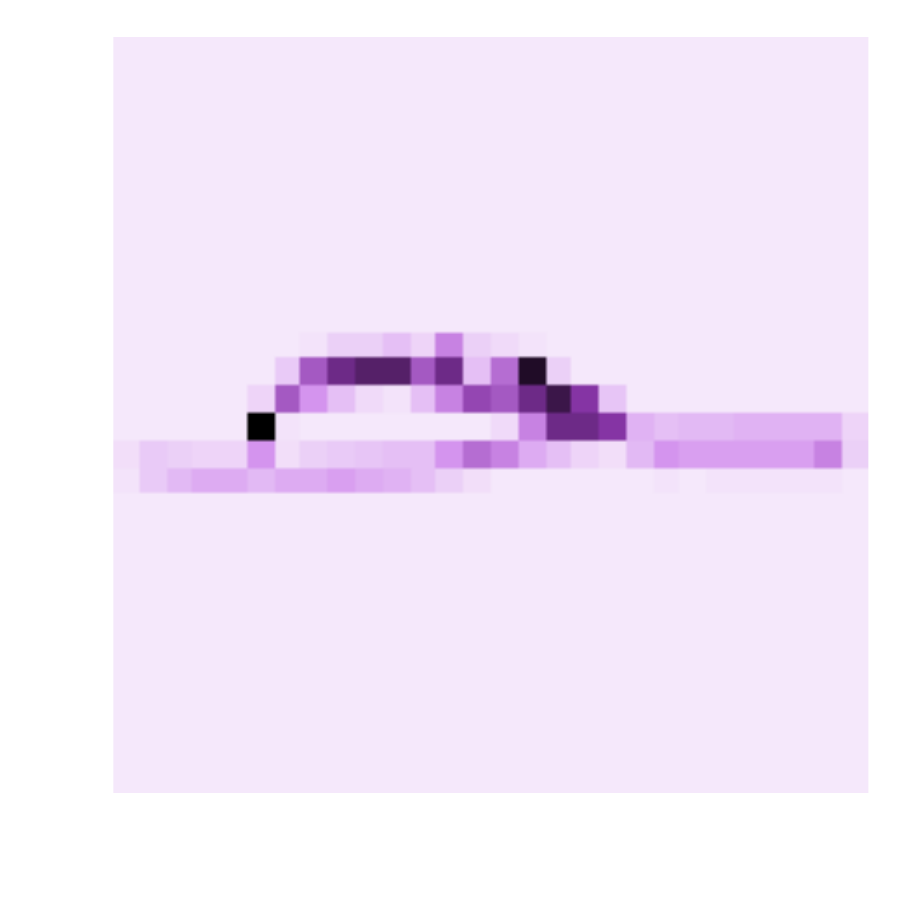}&
        \includegraphics[width=\panelwidth]{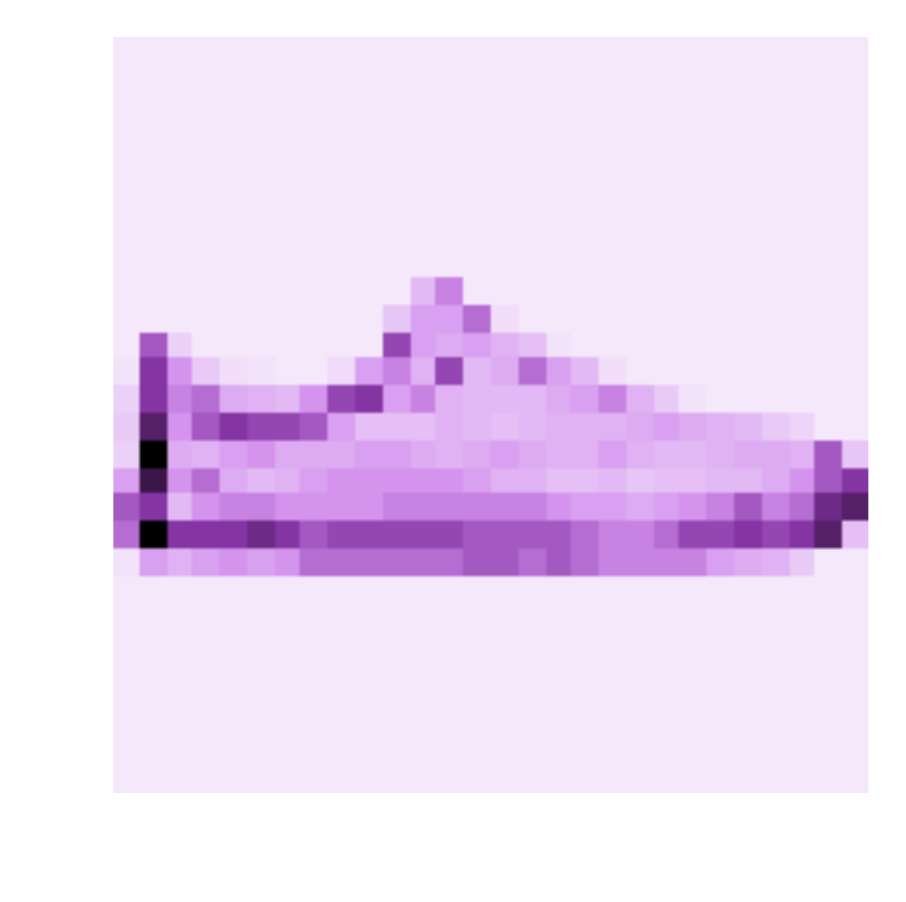}&
        \includegraphics[width=\panelwidth]{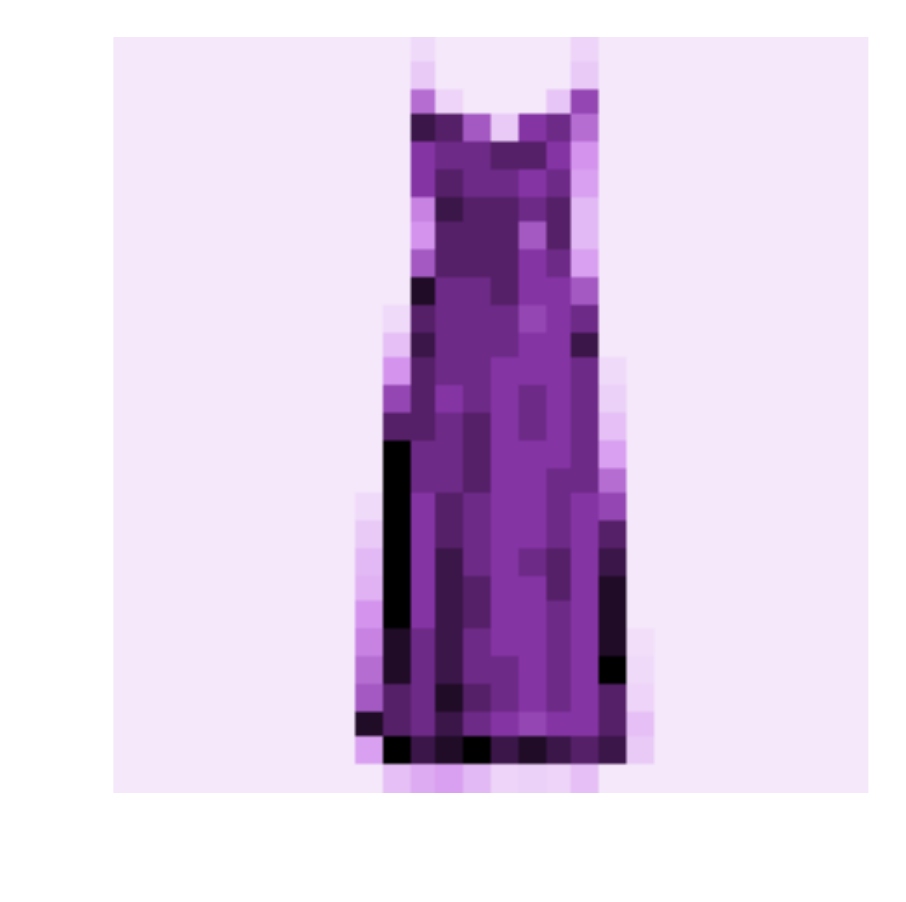}&
        \includegraphics[width=\panelwidth]{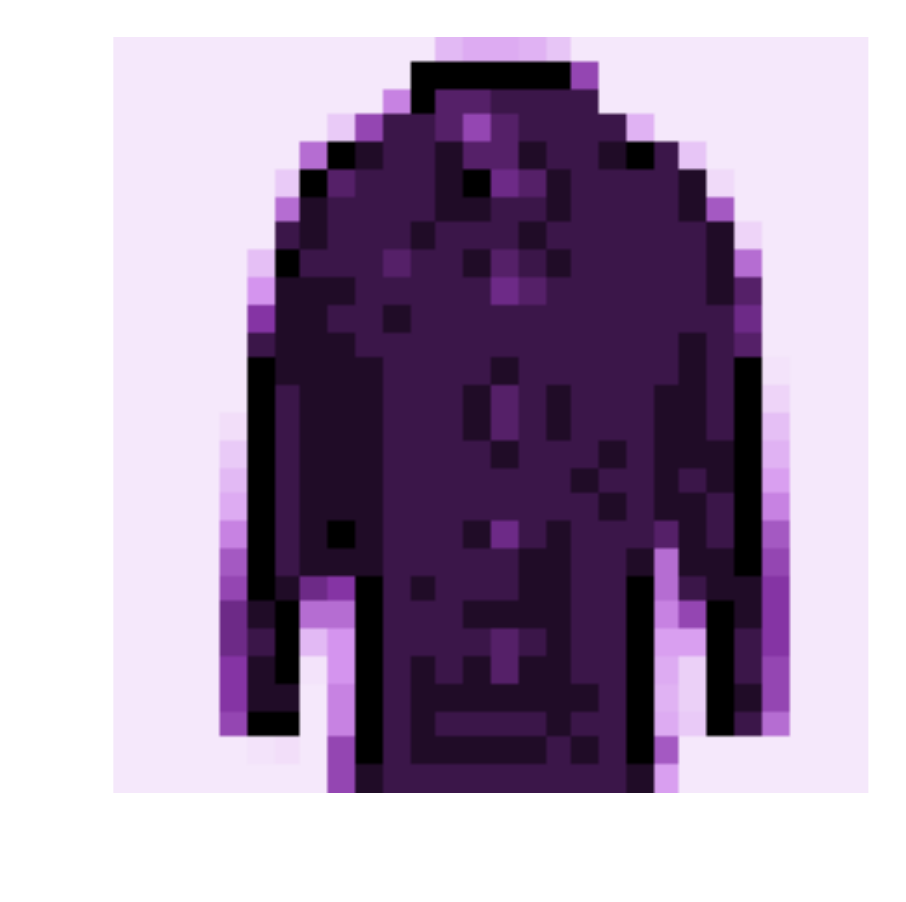}
        \\
		\rotatebox{90}{\quad~ $z$}&
        \hspace{-0.1cm} \includegraphics[width=\panelwidth]{figs/latents/appendix_fashion_latent_0.pdf}&
        \includegraphics[width=\panelwidth]{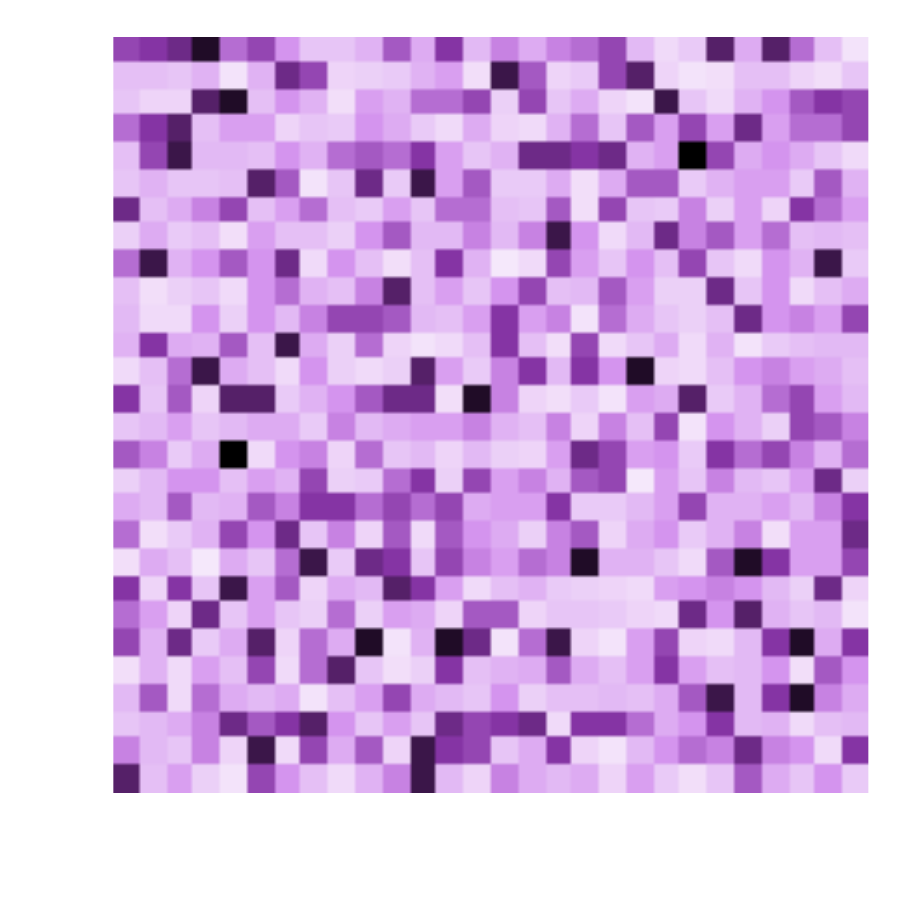}&
        \includegraphics[width=\panelwidth]{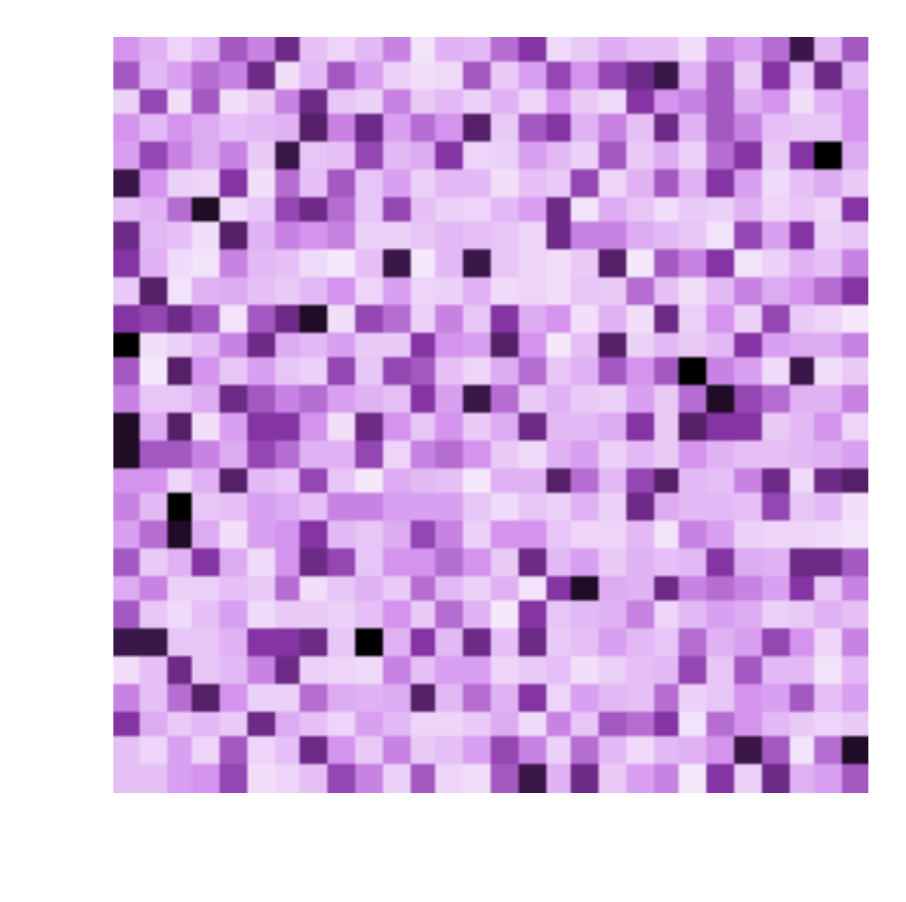}&
        \includegraphics[width=\panelwidth]{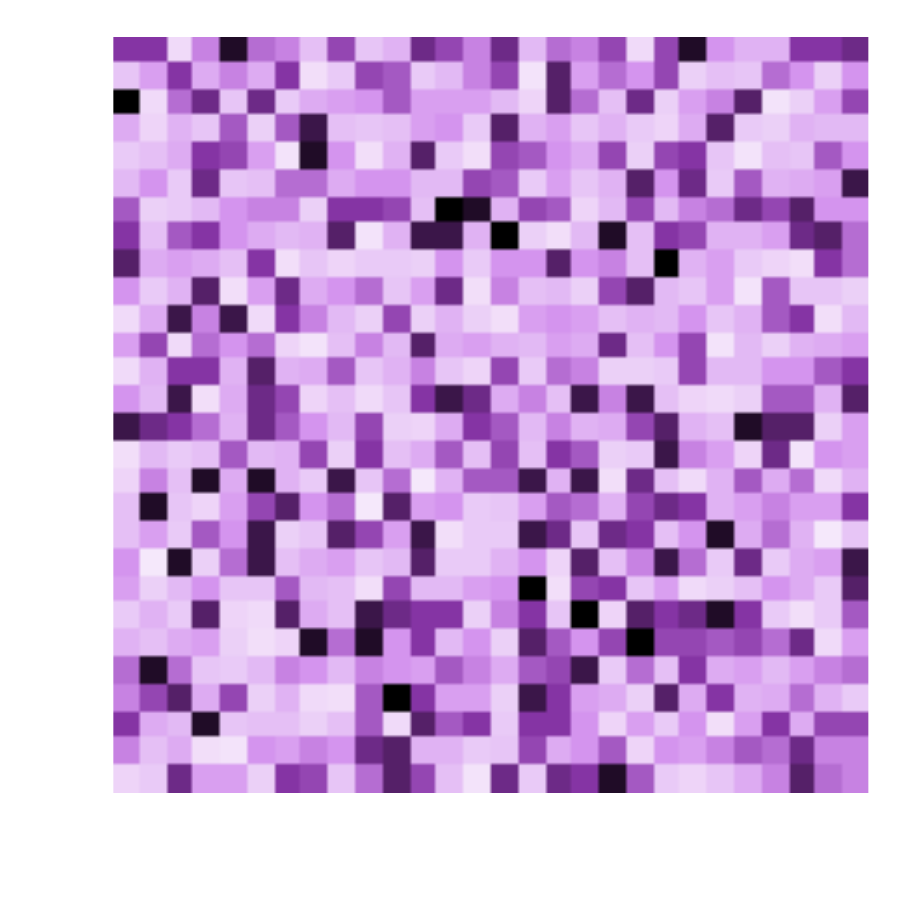}&
        \includegraphics[width=\panelwidth]{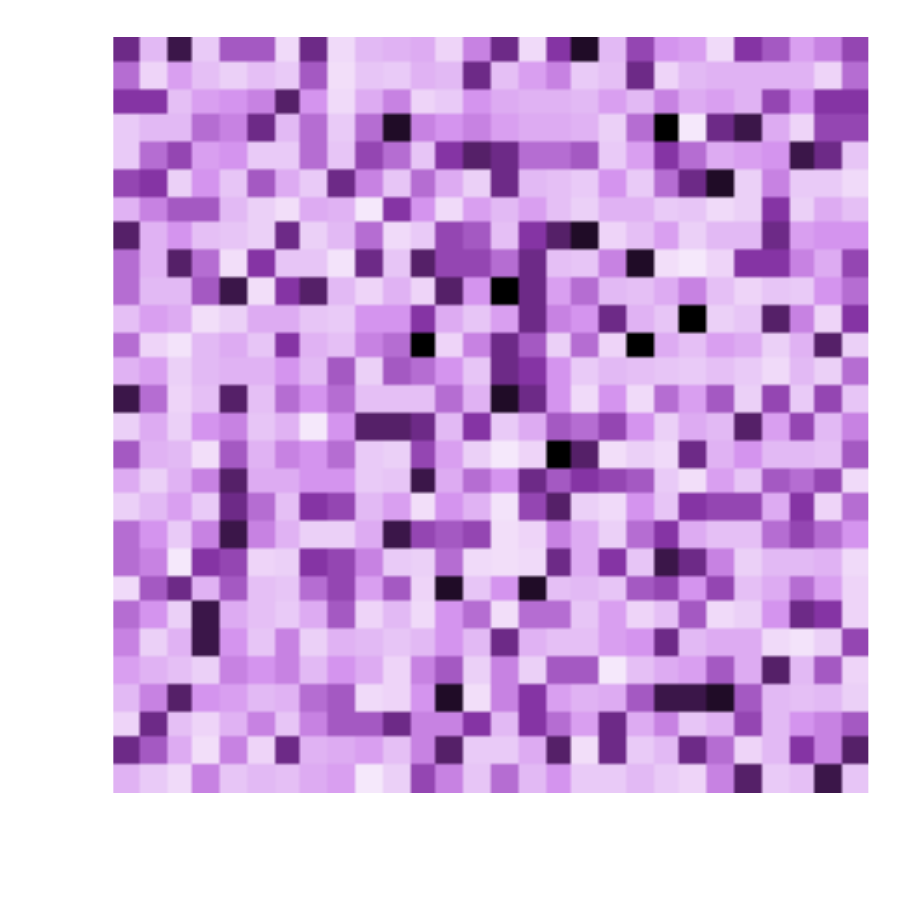}
        \\
		\rotatebox{90}{~~~Avg $z$}&
        \hspace{-0.1cm} \includegraphics[width=\panelwidth]{figs/latents/appendix_fashion_latent_avg_0.pdf}&
        \includegraphics[width=\panelwidth]{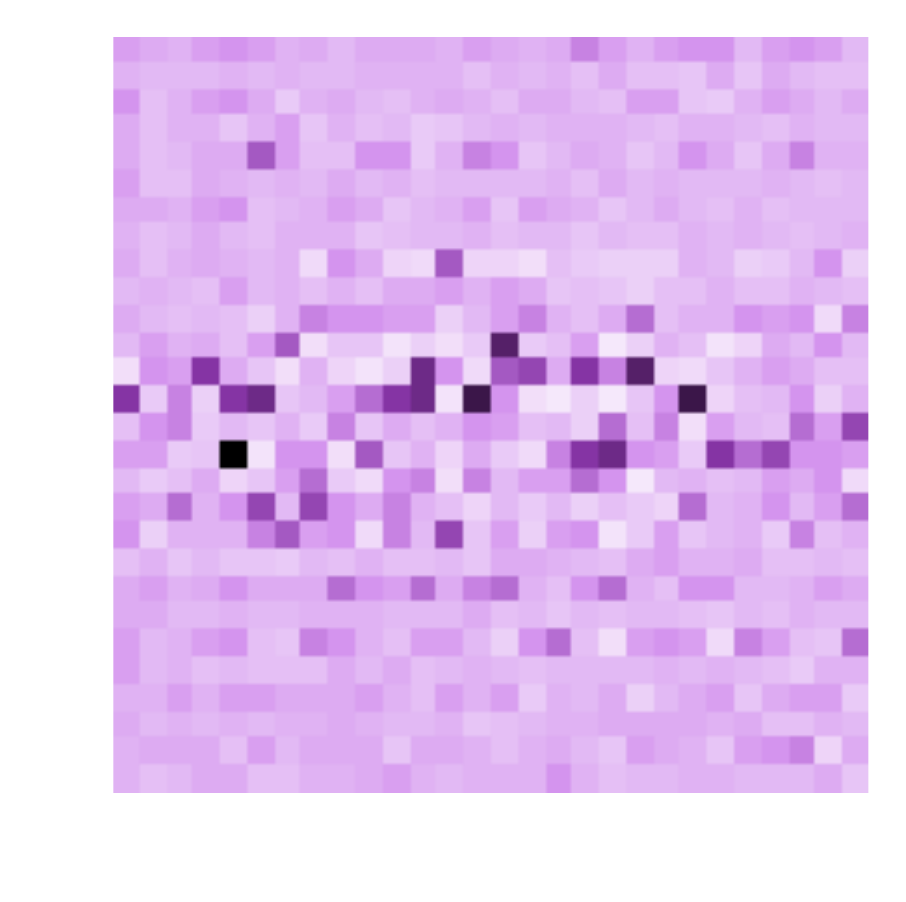}&
        \includegraphics[width=\panelwidth]{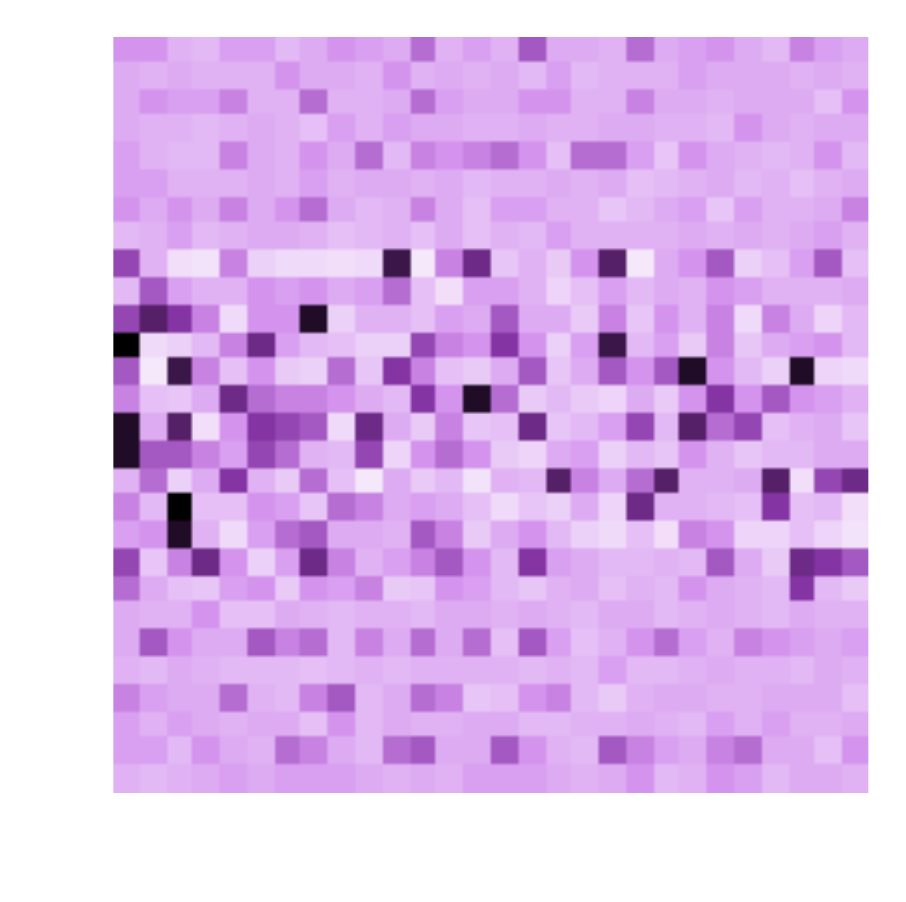}&
        \includegraphics[width=\panelwidth]{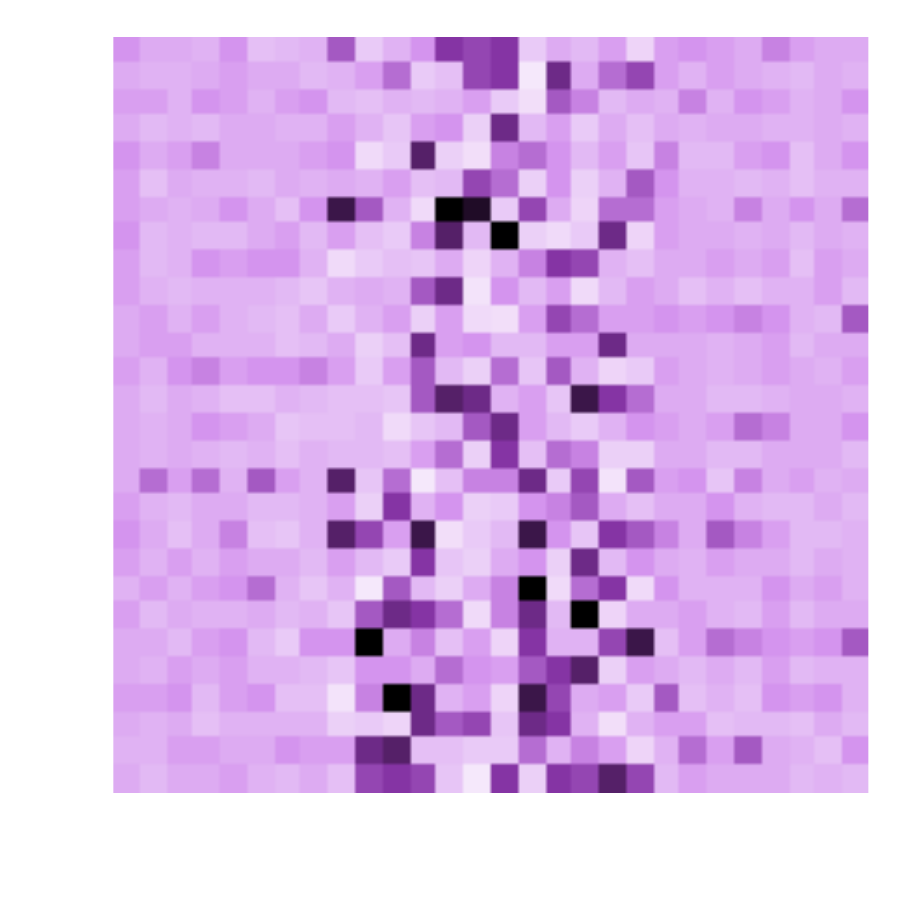}&
        \includegraphics[width=\panelwidth]{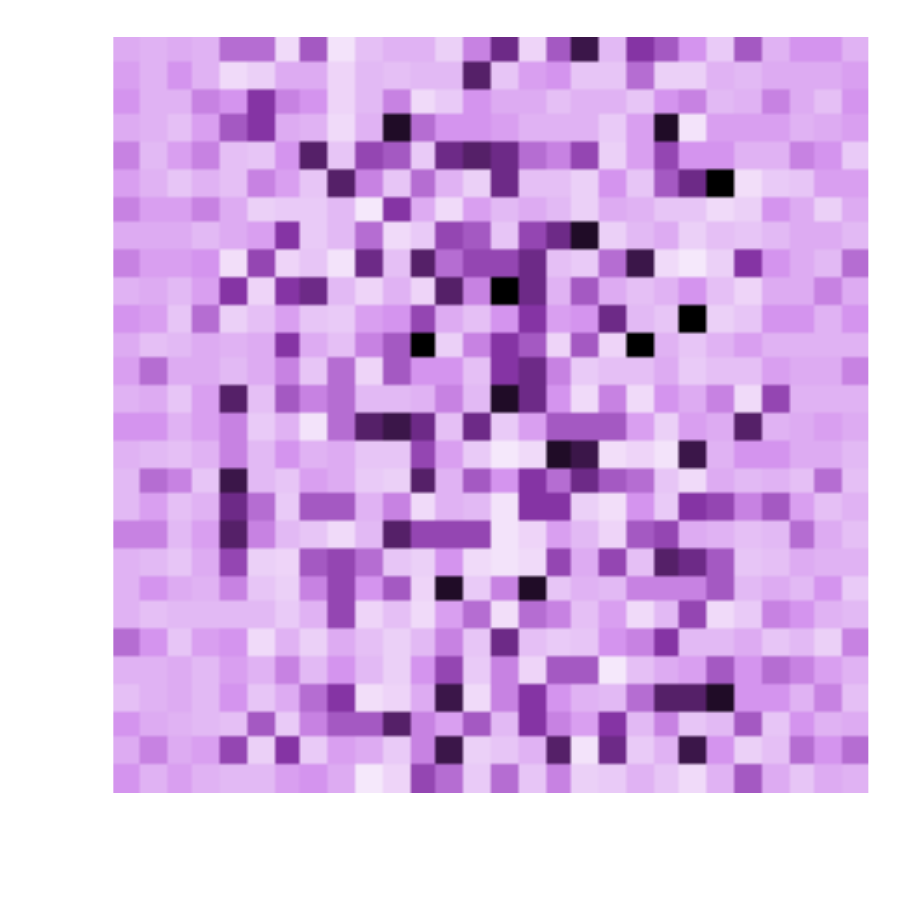}
    \end{tabular}
    \quad
    \begin{tabular}{ccccc}
        \includegraphics[width=\panelwidth]{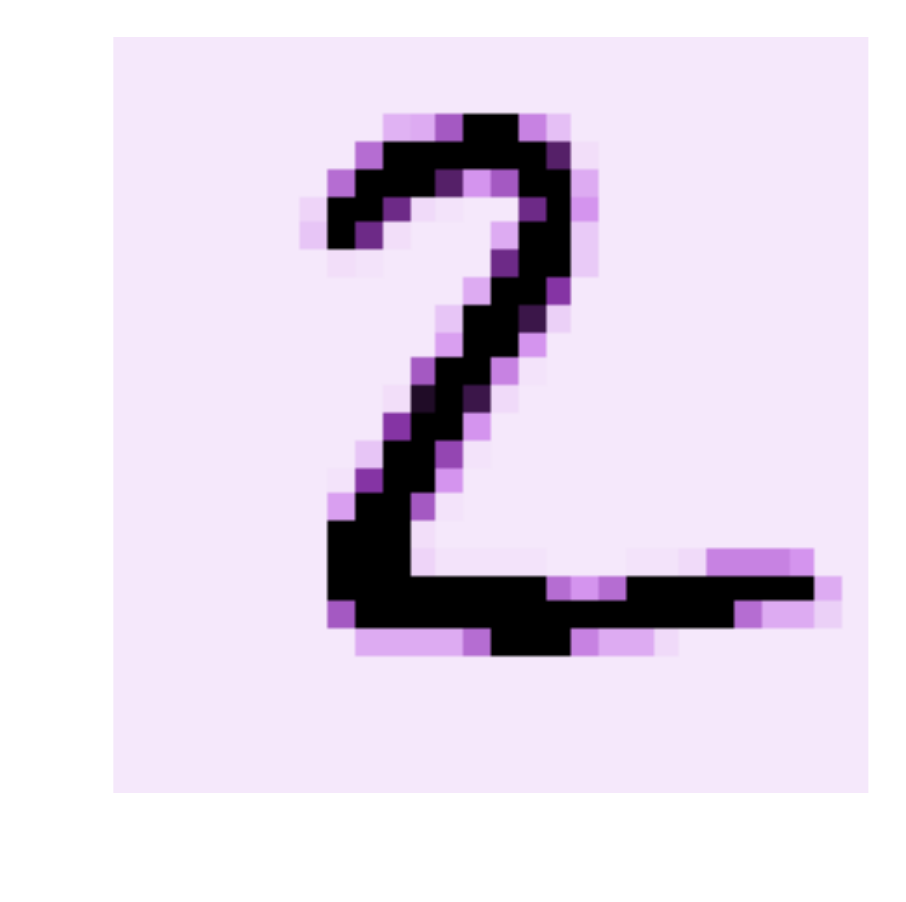}&
        \includegraphics[width=\panelwidth]{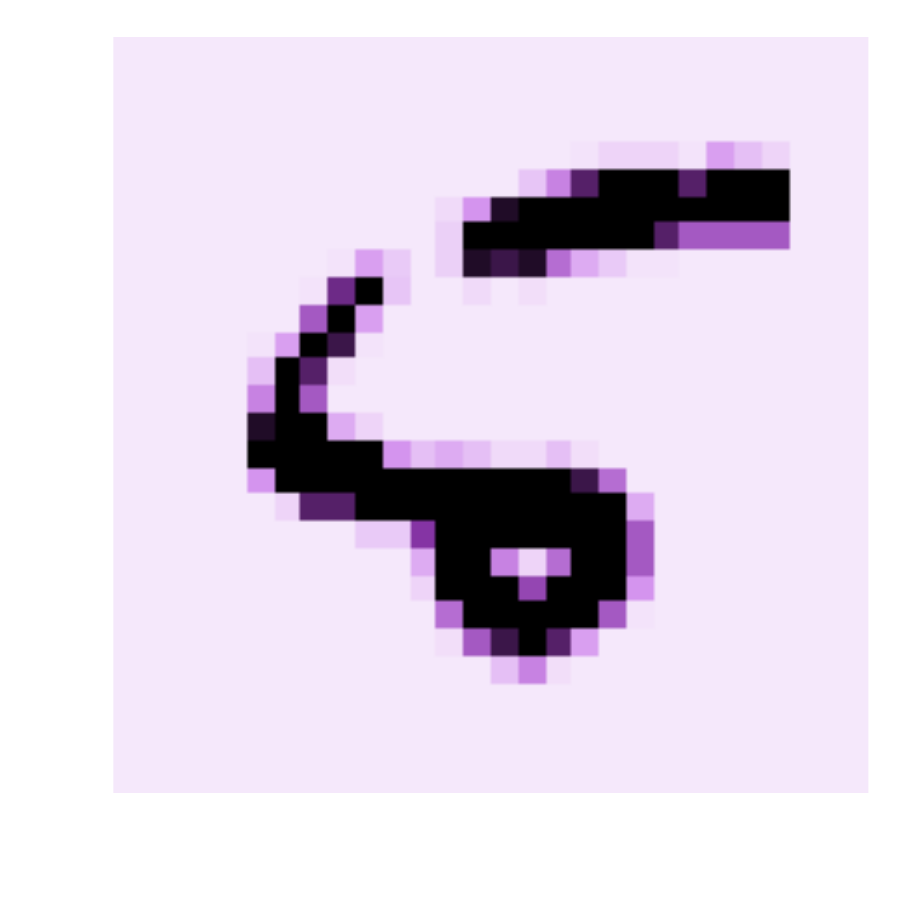}&
        \includegraphics[width=\panelwidth]{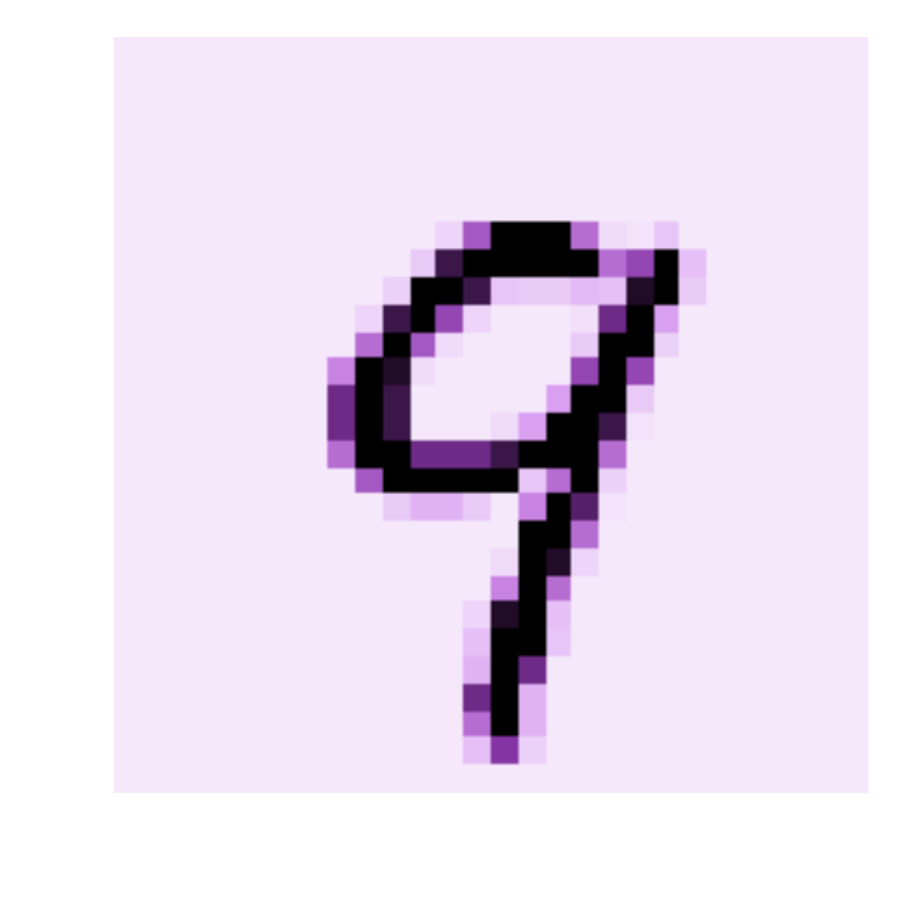}&
        \includegraphics[width=\panelwidth]{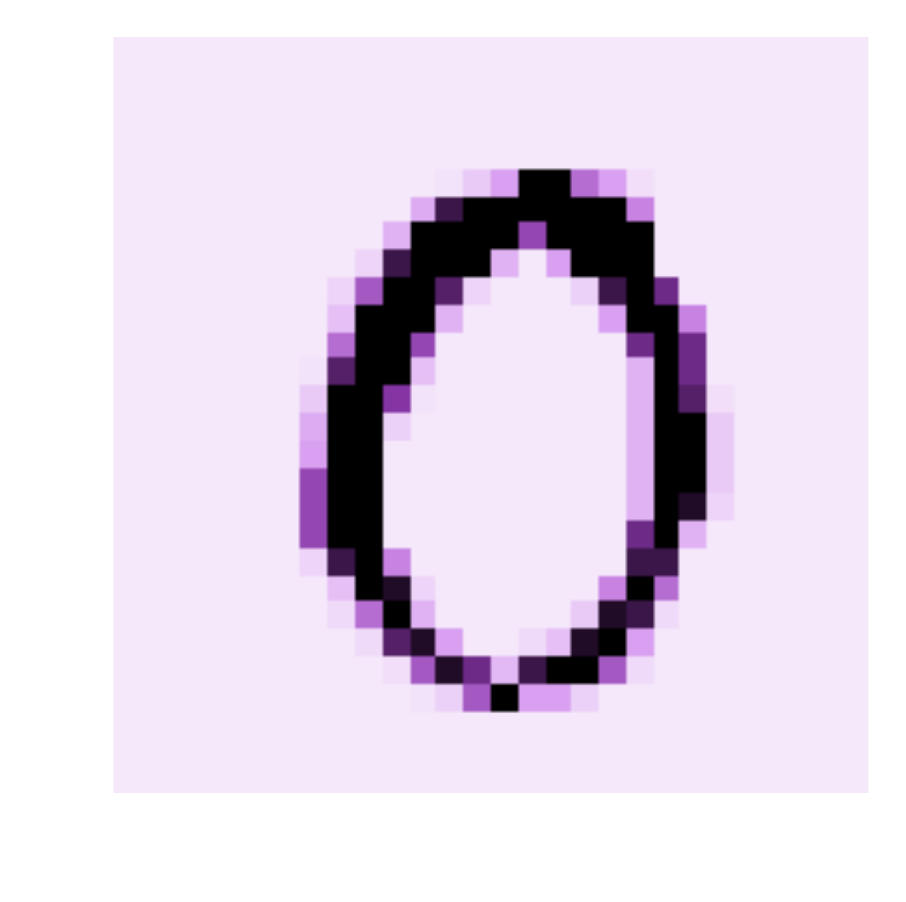}&
        \includegraphics[width=\panelwidth]{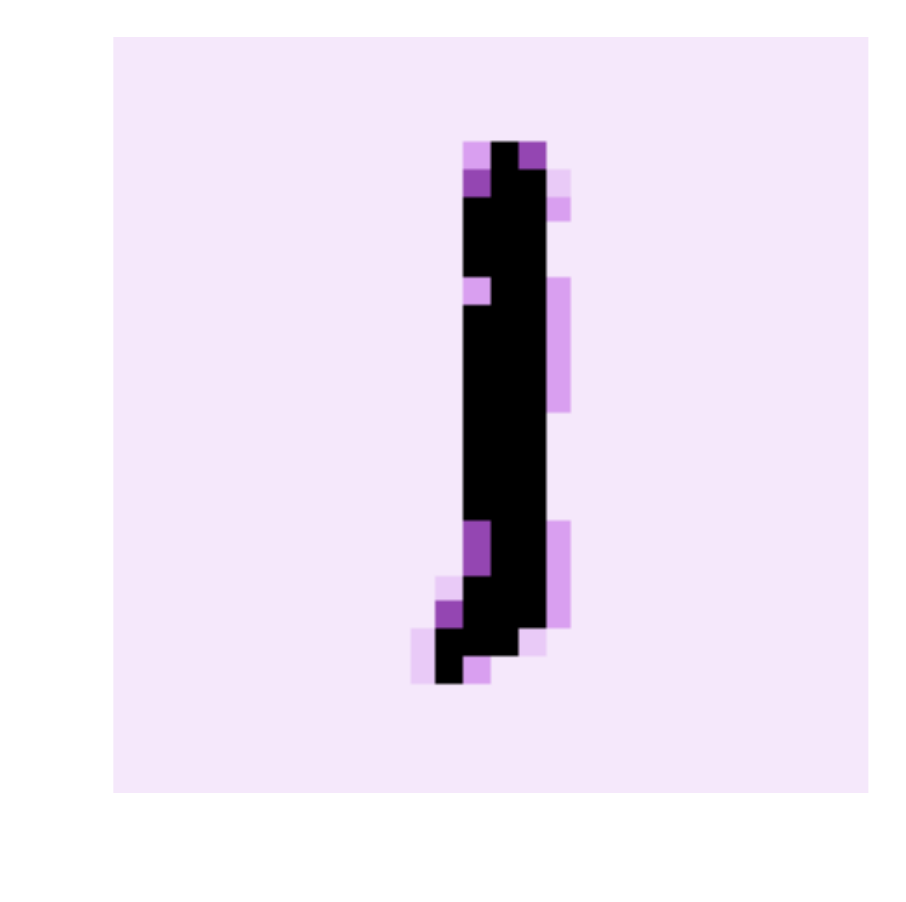}
        \\
        \includegraphics[width=\panelwidth]{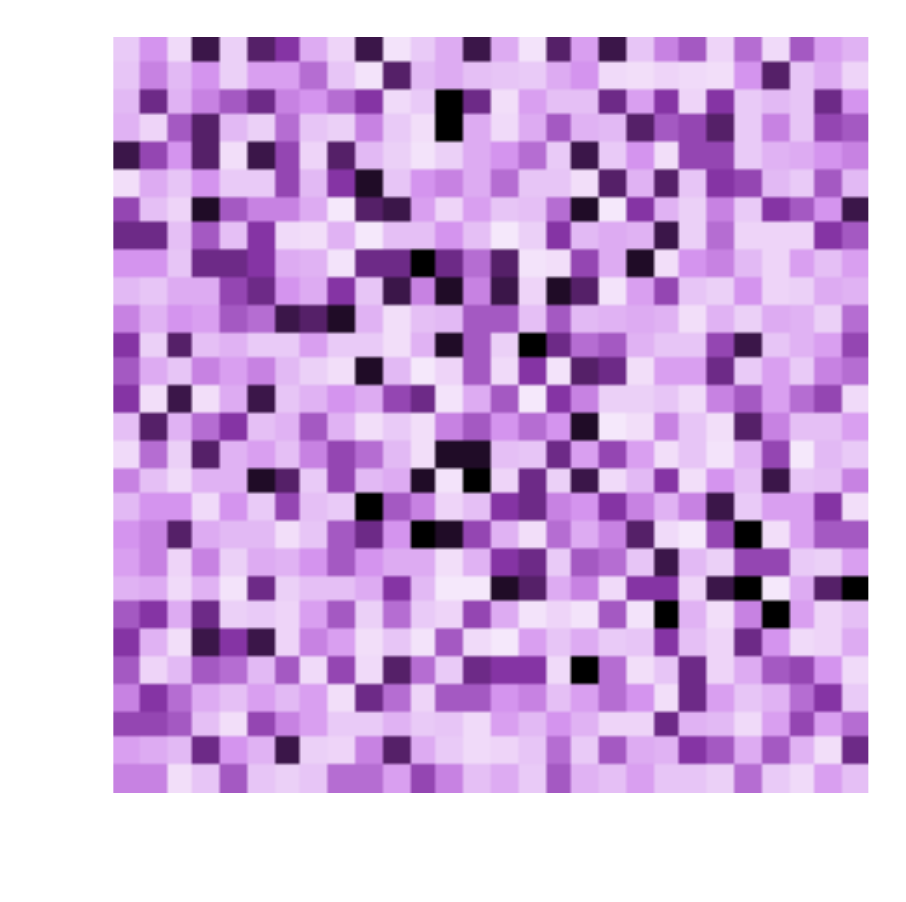}&
        \includegraphics[width=\panelwidth]{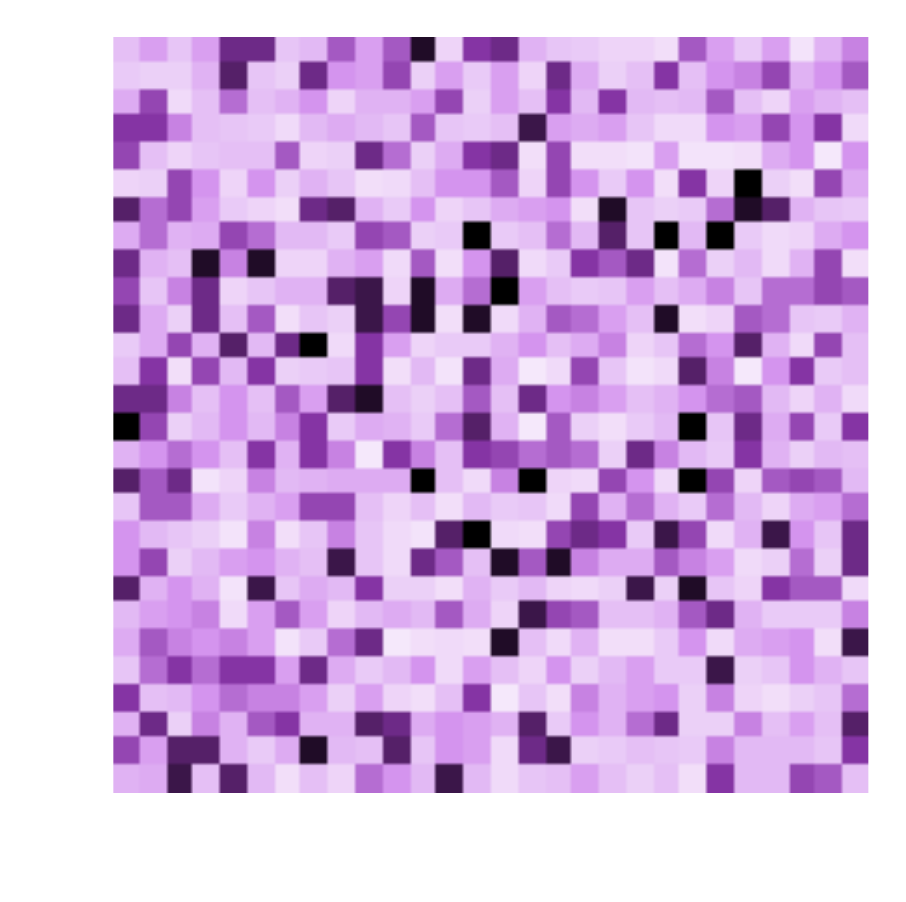}&
        \includegraphics[width=\panelwidth]{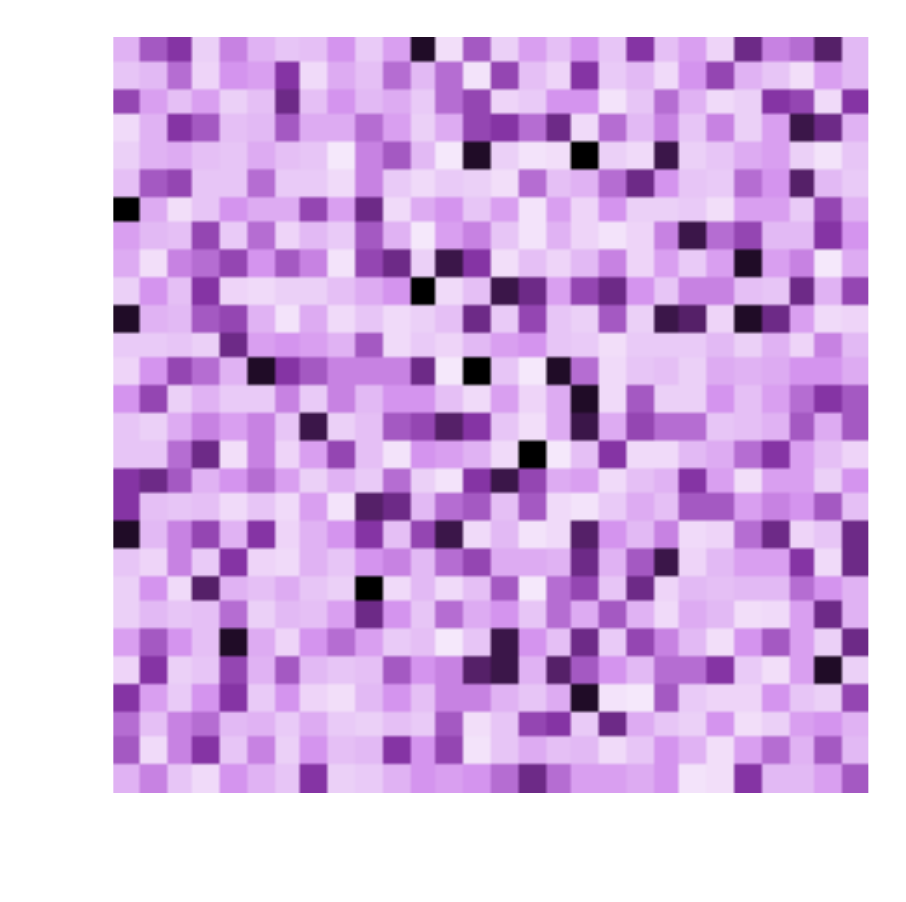}&
        \includegraphics[width=\panelwidth]{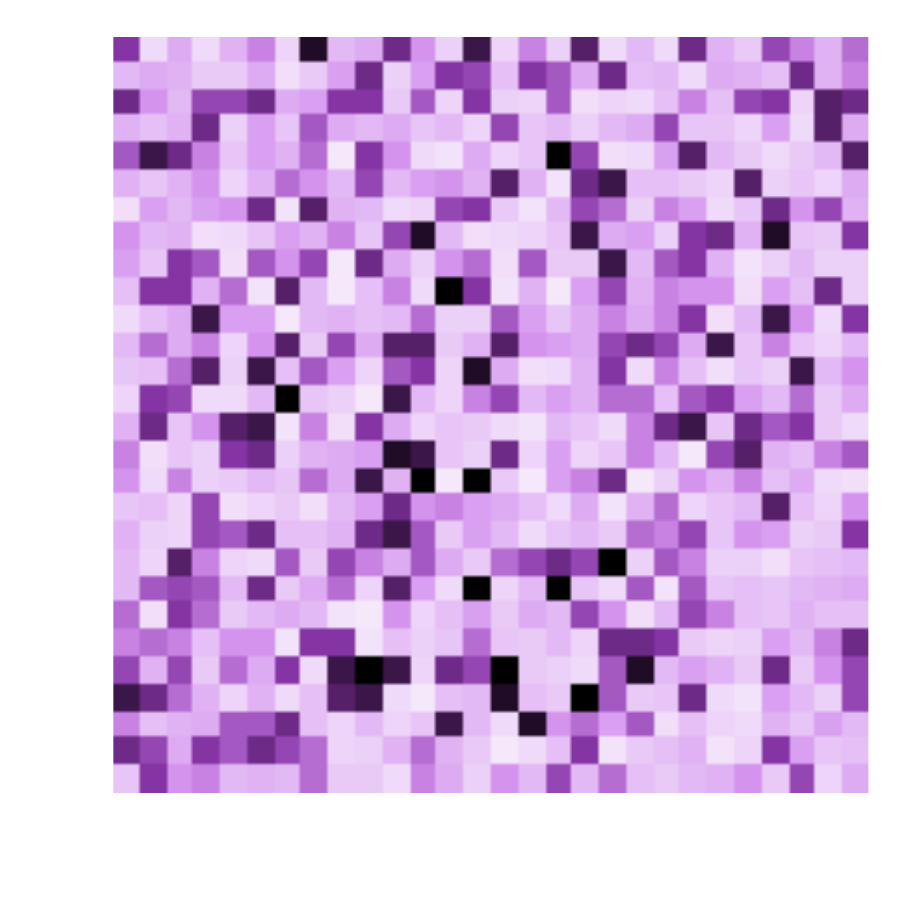}&
        \includegraphics[width=\panelwidth]{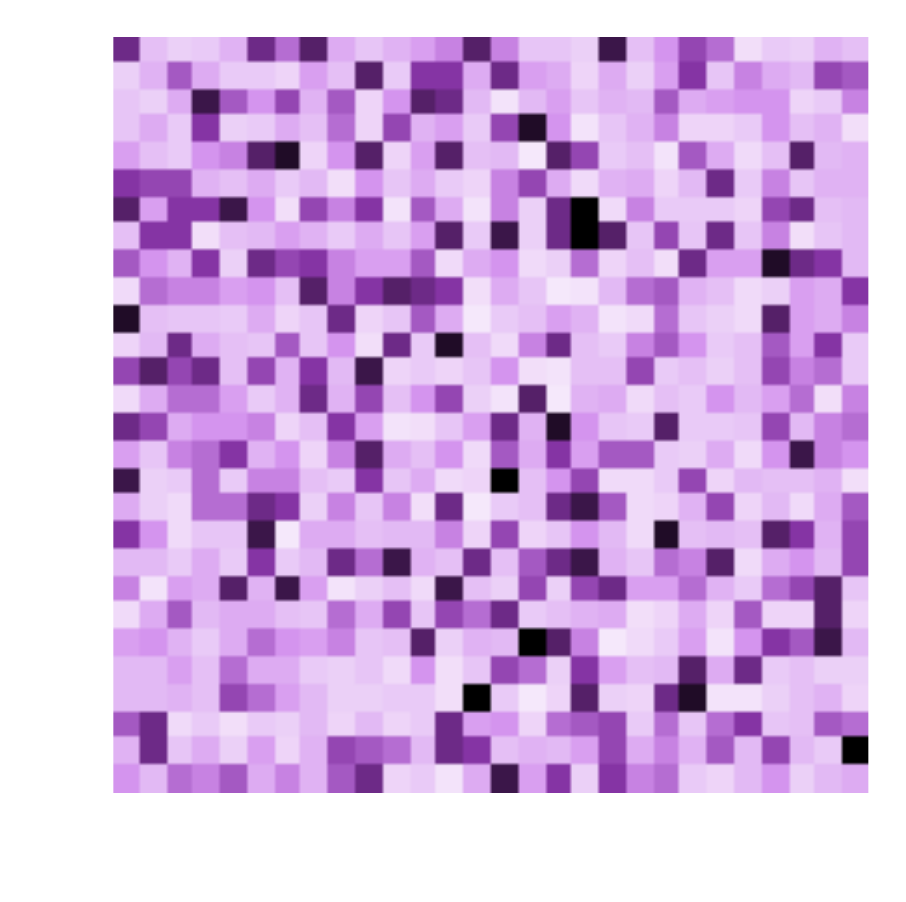}
        \\
        \includegraphics[width=\panelwidth]{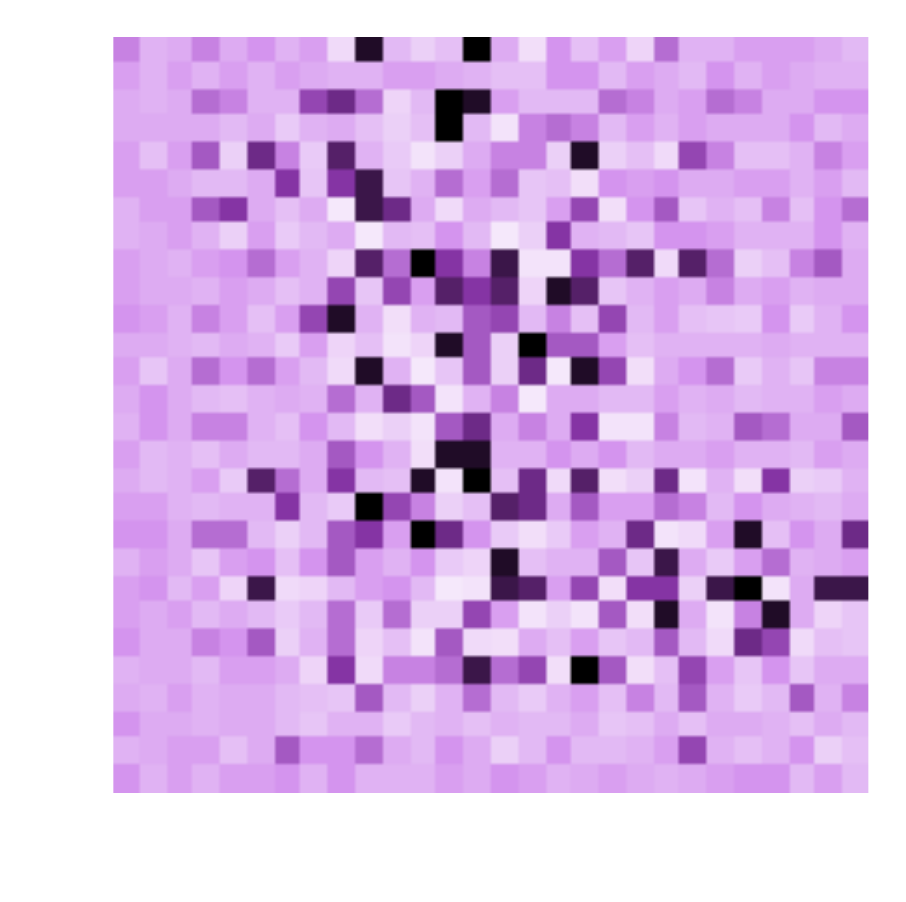}&
        \includegraphics[width=\panelwidth]{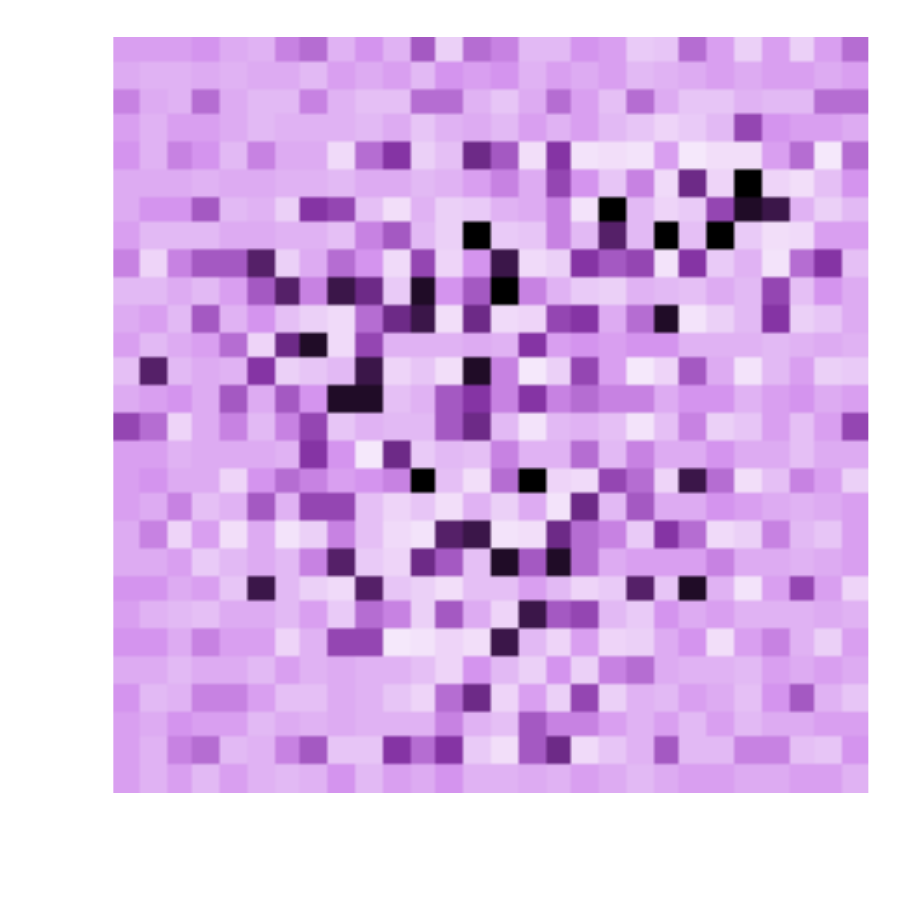}&
        \includegraphics[width=\panelwidth]{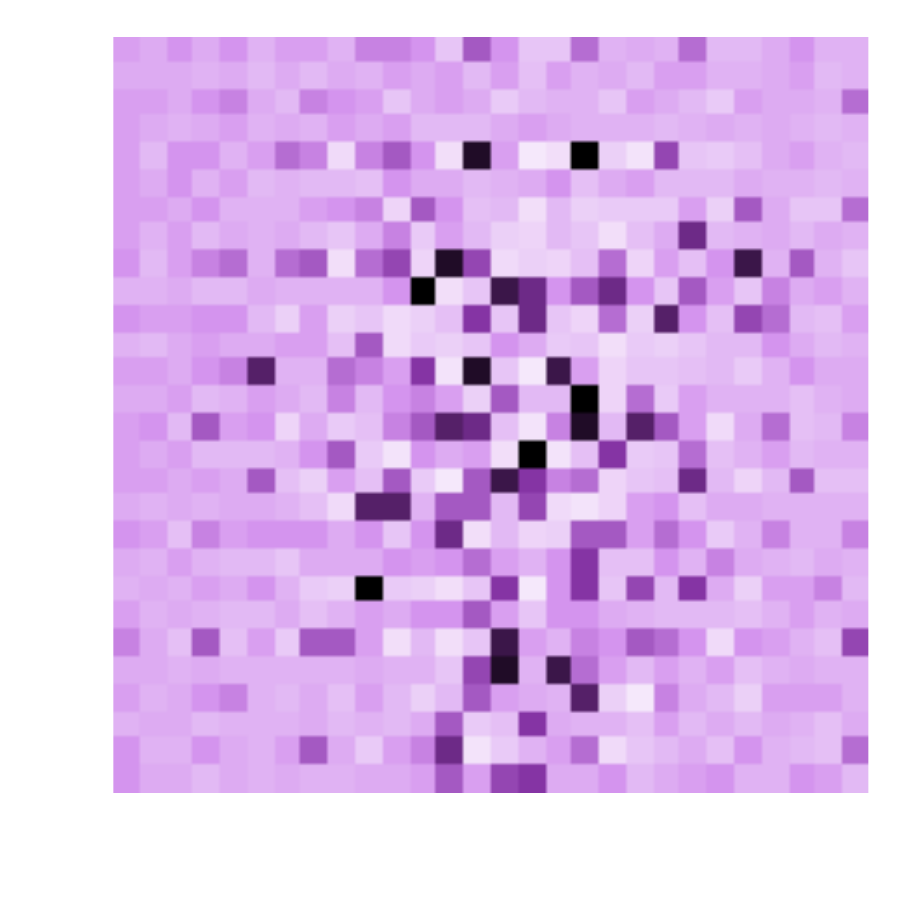}&
        \includegraphics[width=\panelwidth]{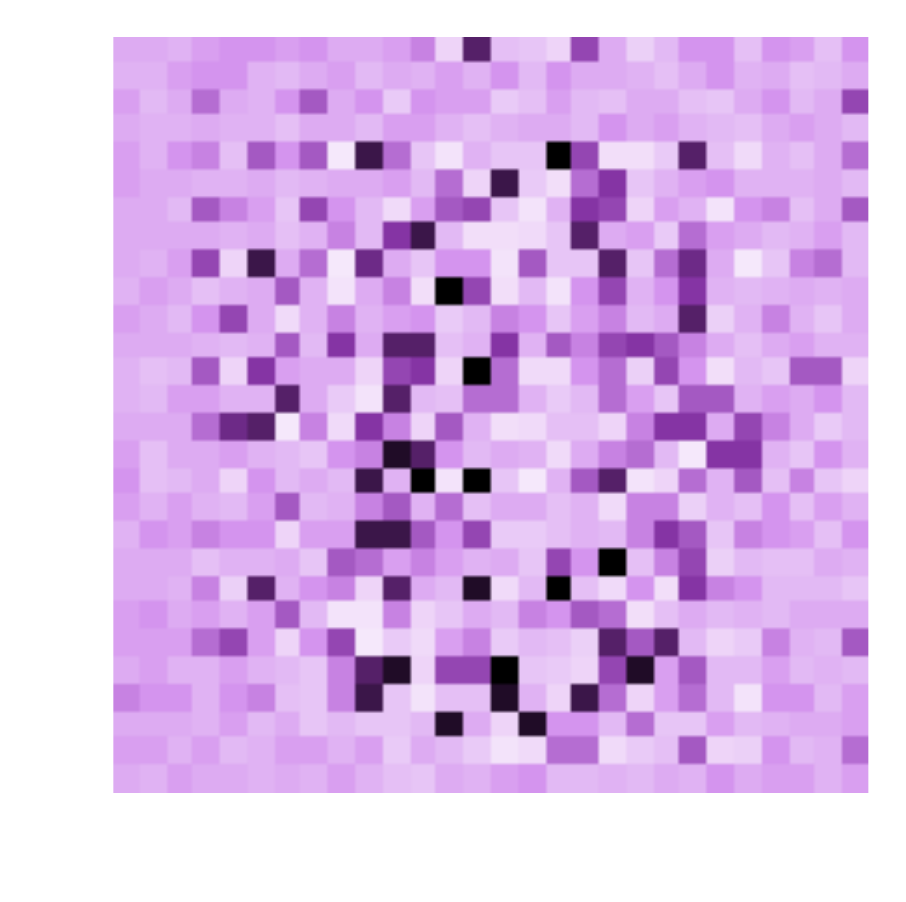}&
        \includegraphics[width=\panelwidth]{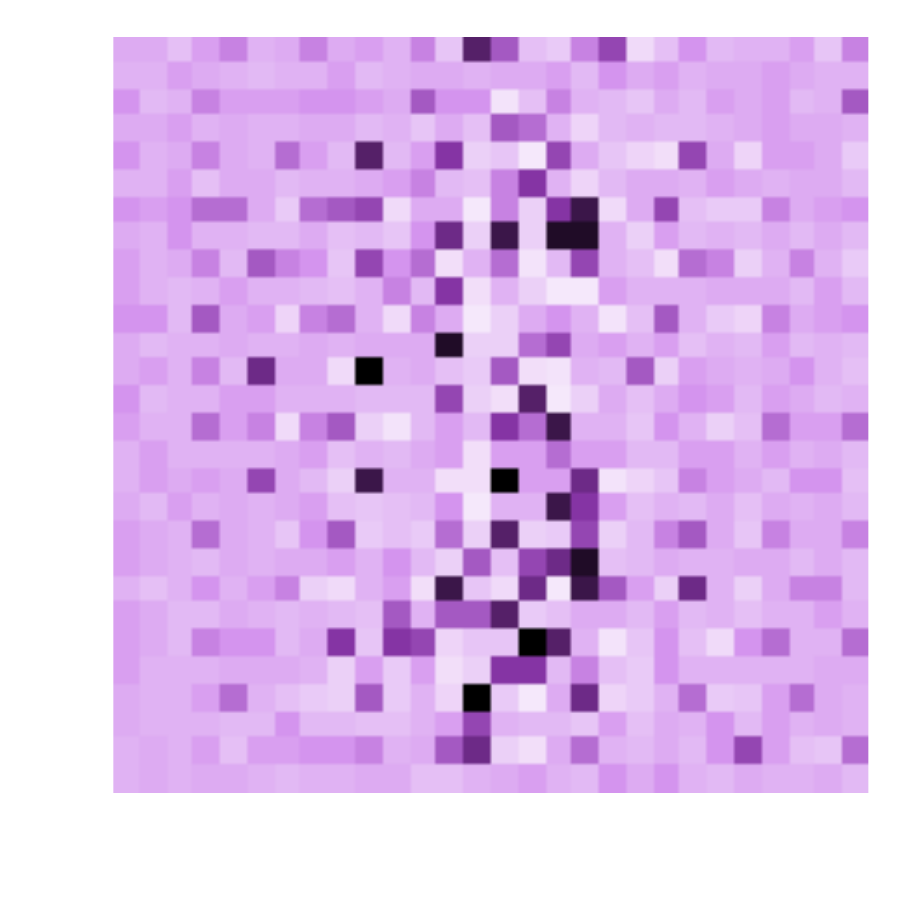}
    \end{tabular}
    }

    \subfigure[RealNVP trained on CelebA]{
    \begin{tabular}{cccccc}
		\rotatebox{90}{\quad~ $x$}&
        \hspace{-0.1cm} 
        \includegraphics[width=\panelwidth]{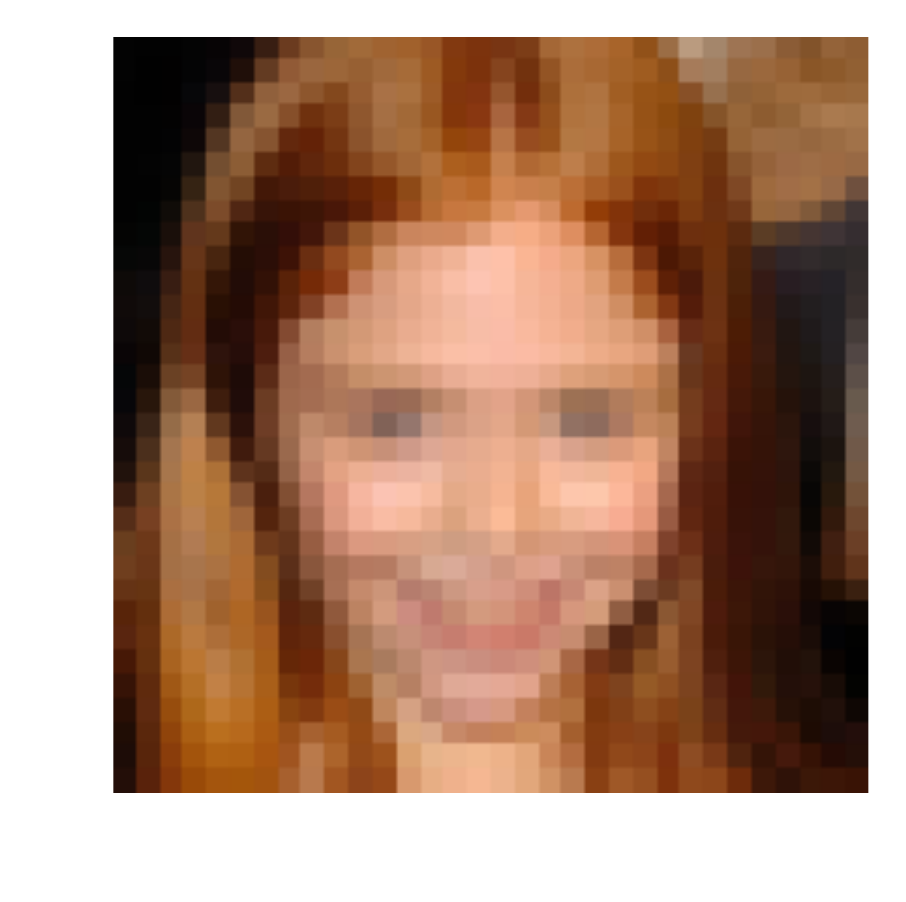}&
        \includegraphics[width=\panelwidth]{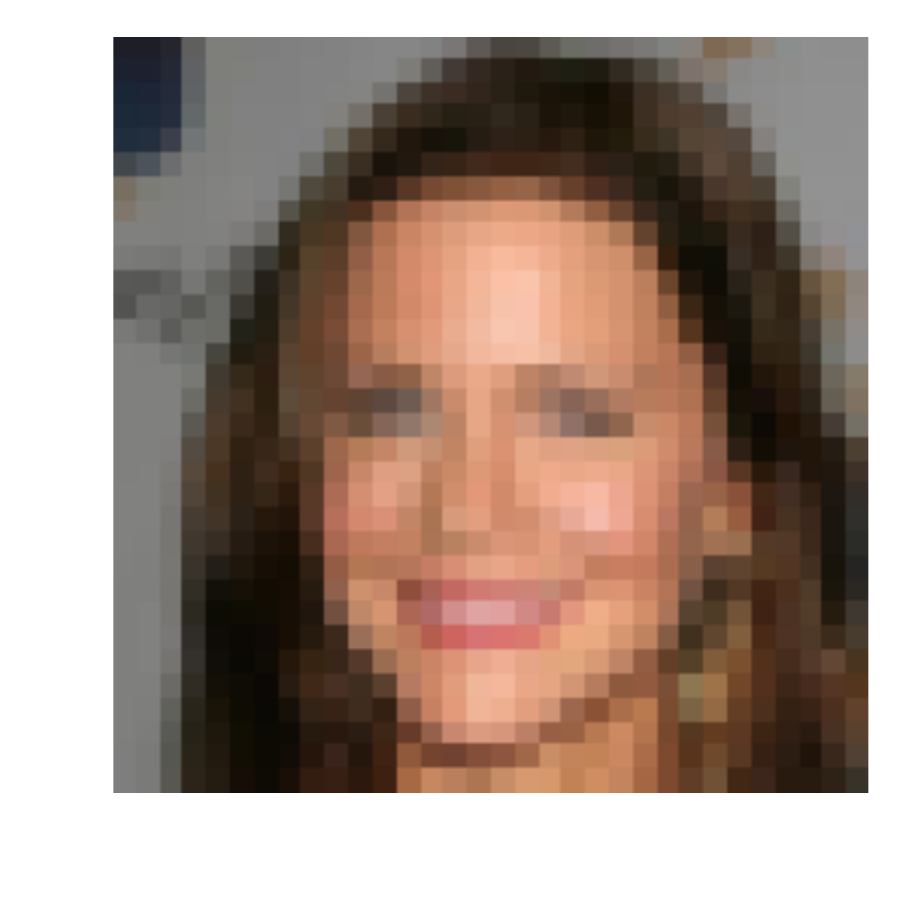}&
        \includegraphics[width=\panelwidth]{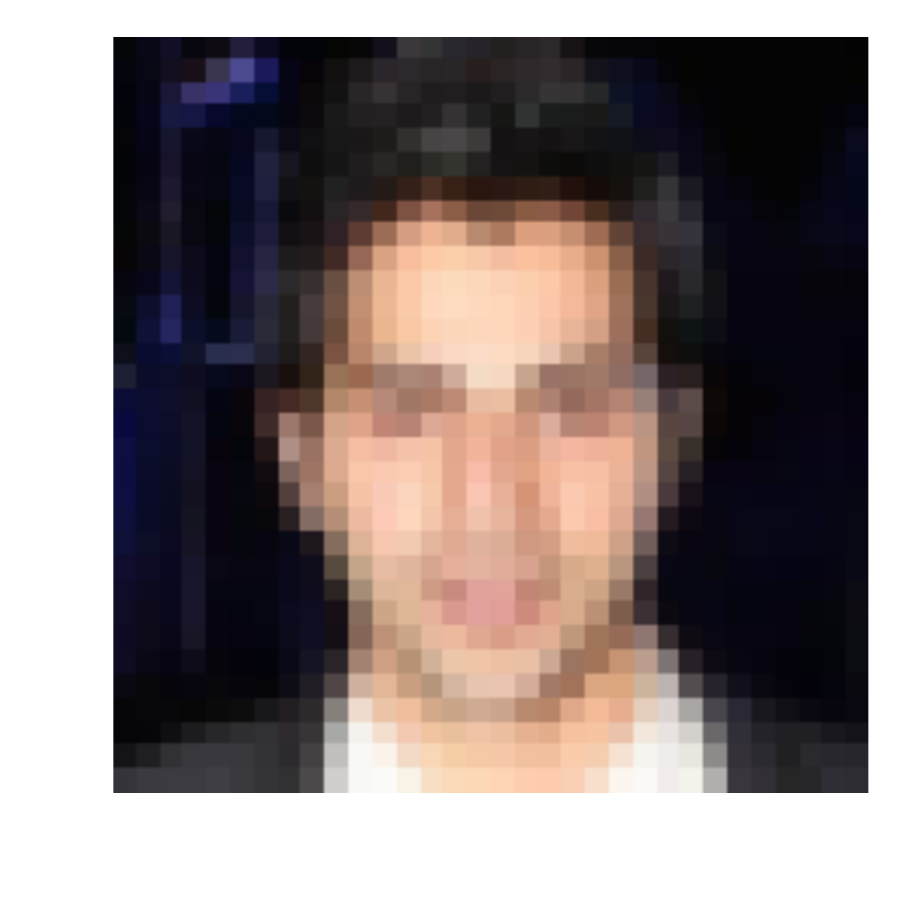}&
        \includegraphics[width=\panelwidth]{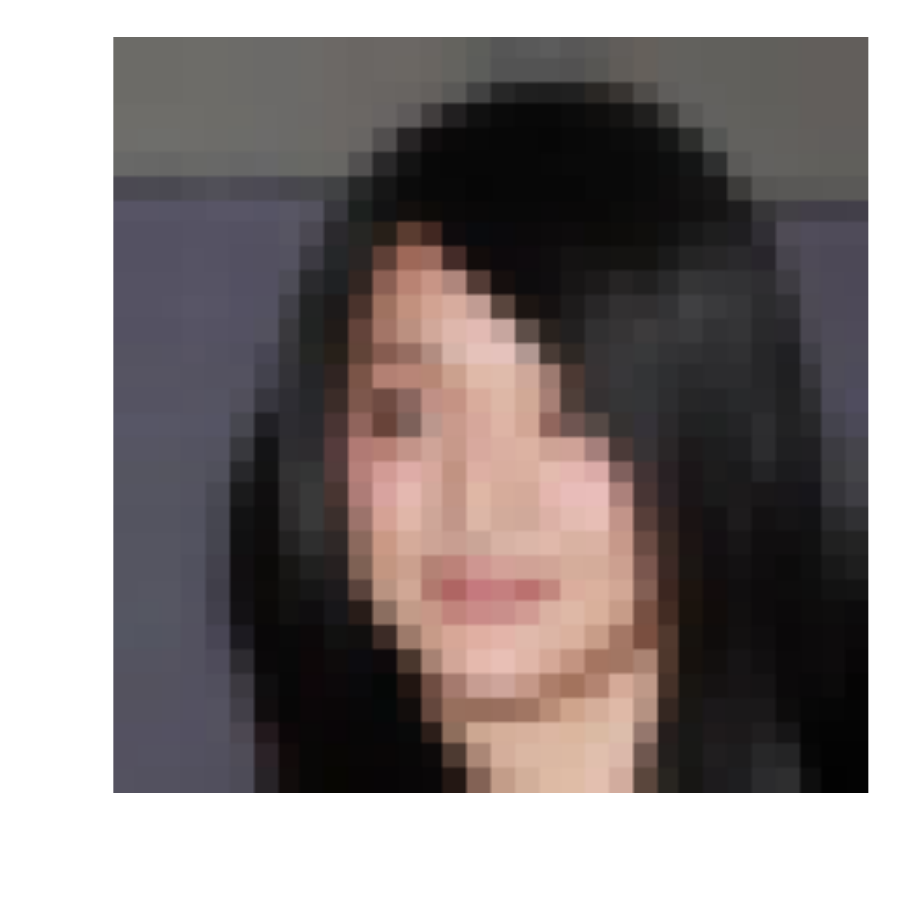}&
        \includegraphics[width=\panelwidth]{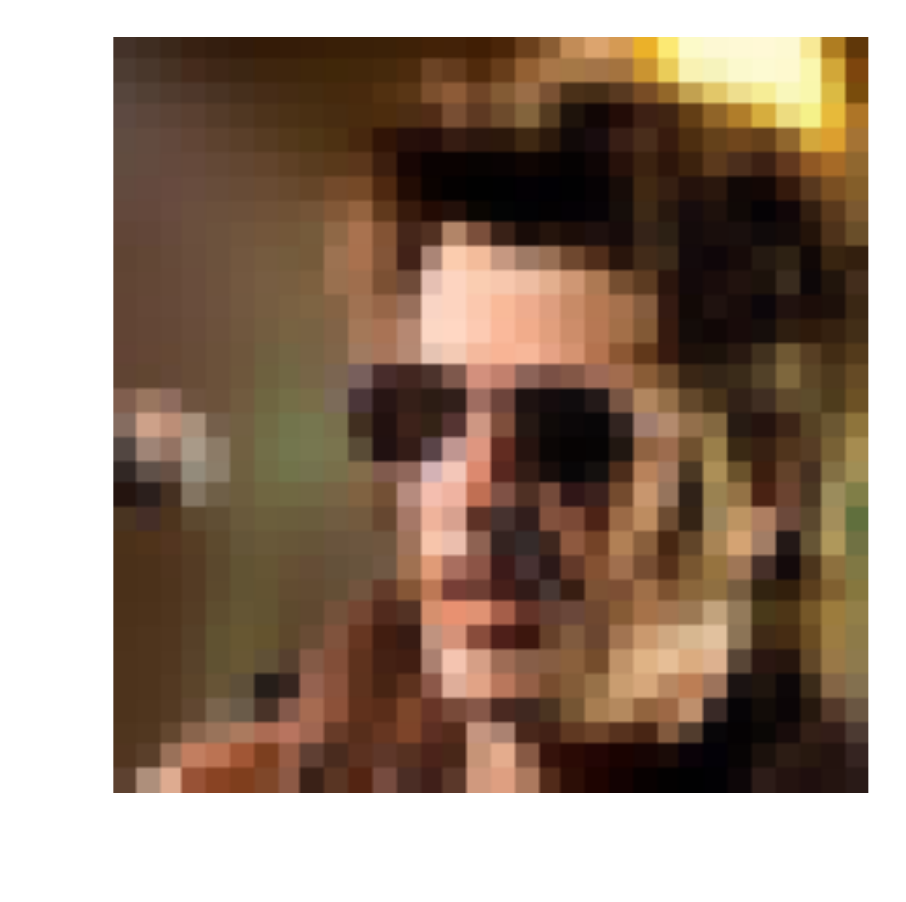}
        \\
		\rotatebox{90}{\quad~ $z$}&
        \hspace{-0.1cm}
        \includegraphics[width=\panelwidth]{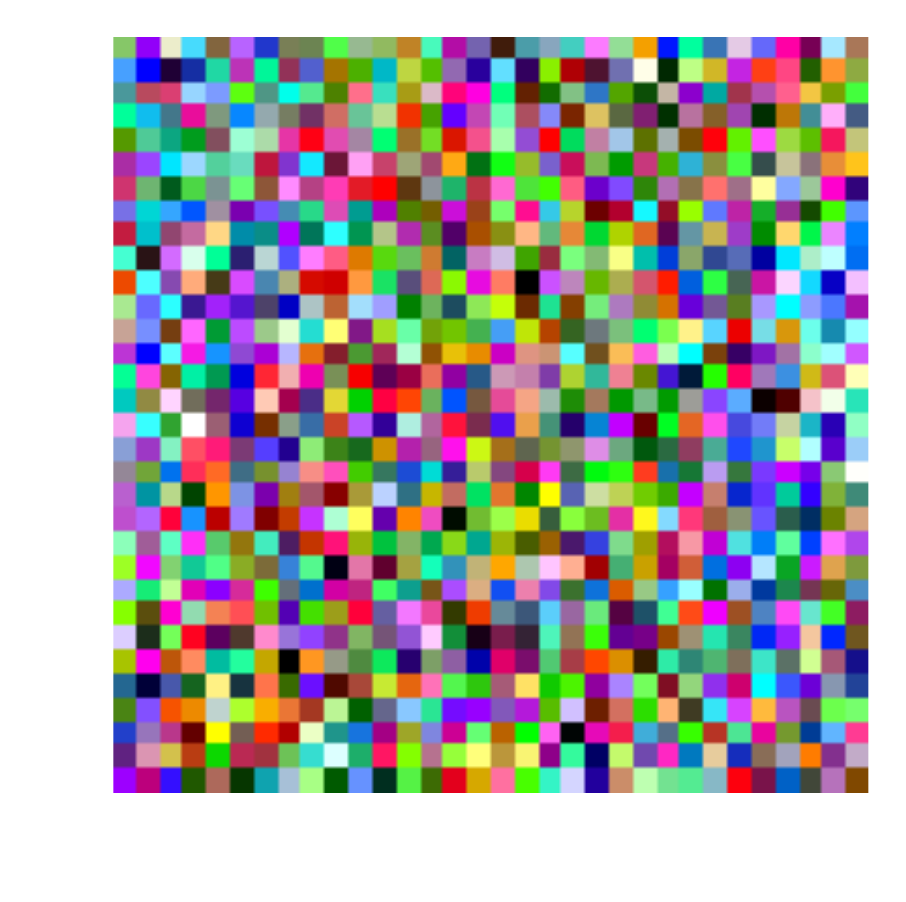}&
        \includegraphics[width=\panelwidth]{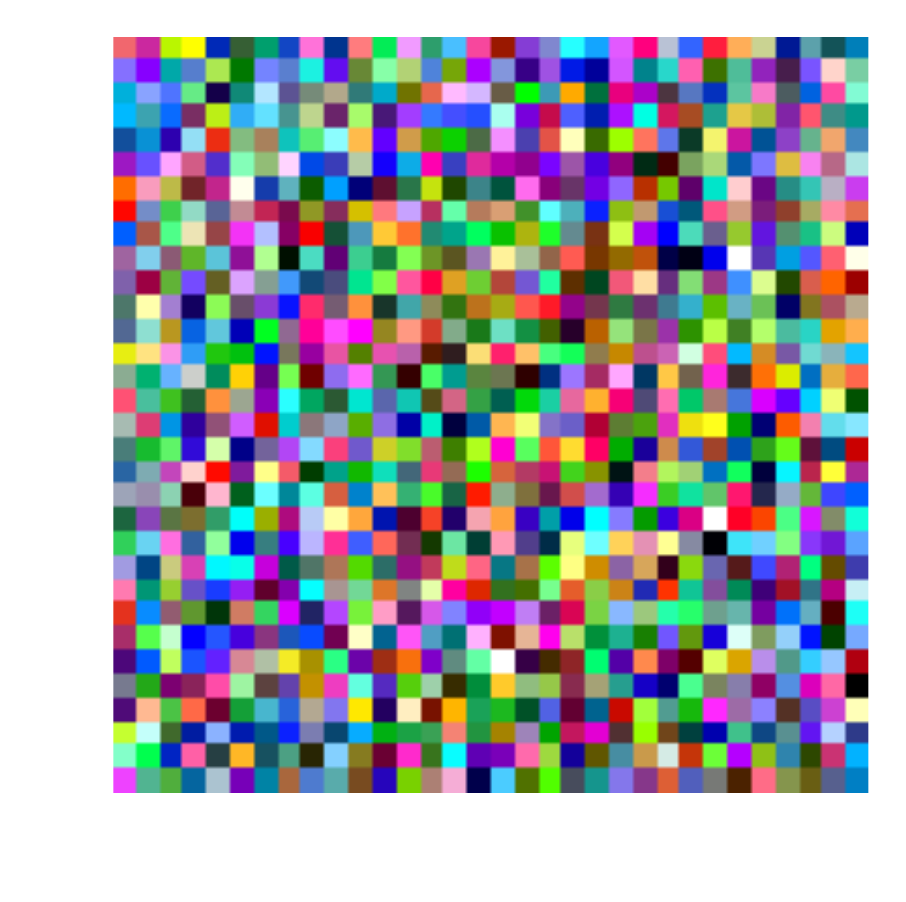}&
        \includegraphics[width=\panelwidth]{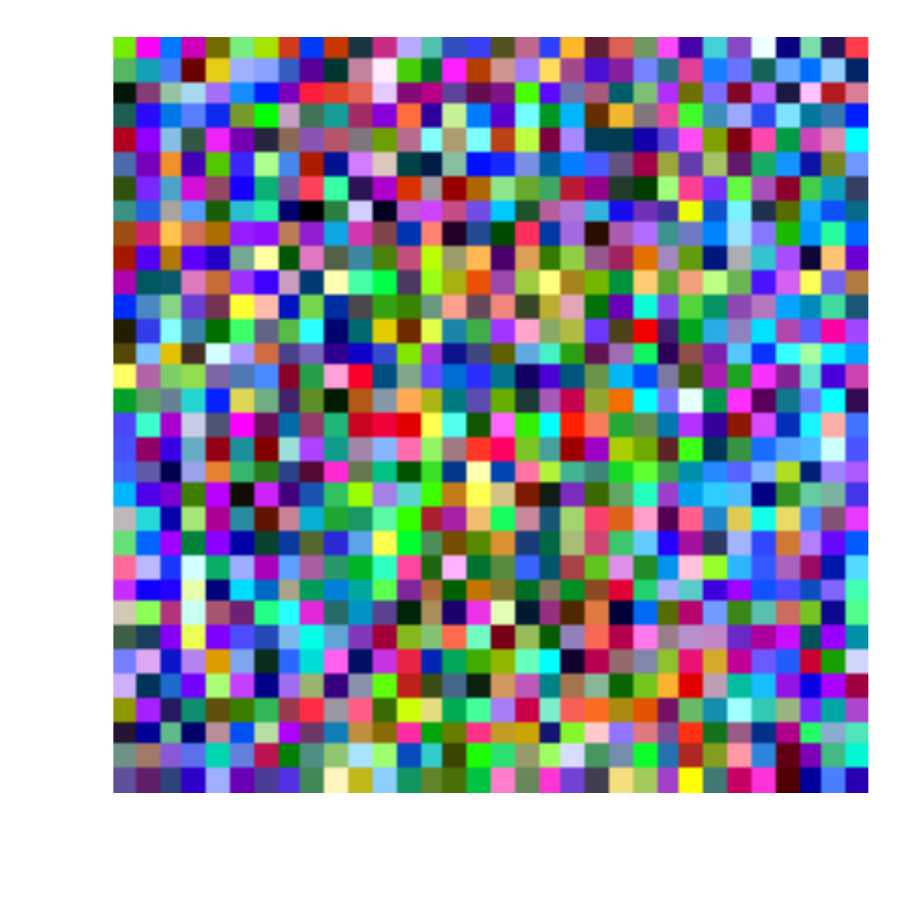}&
        \includegraphics[width=\panelwidth]{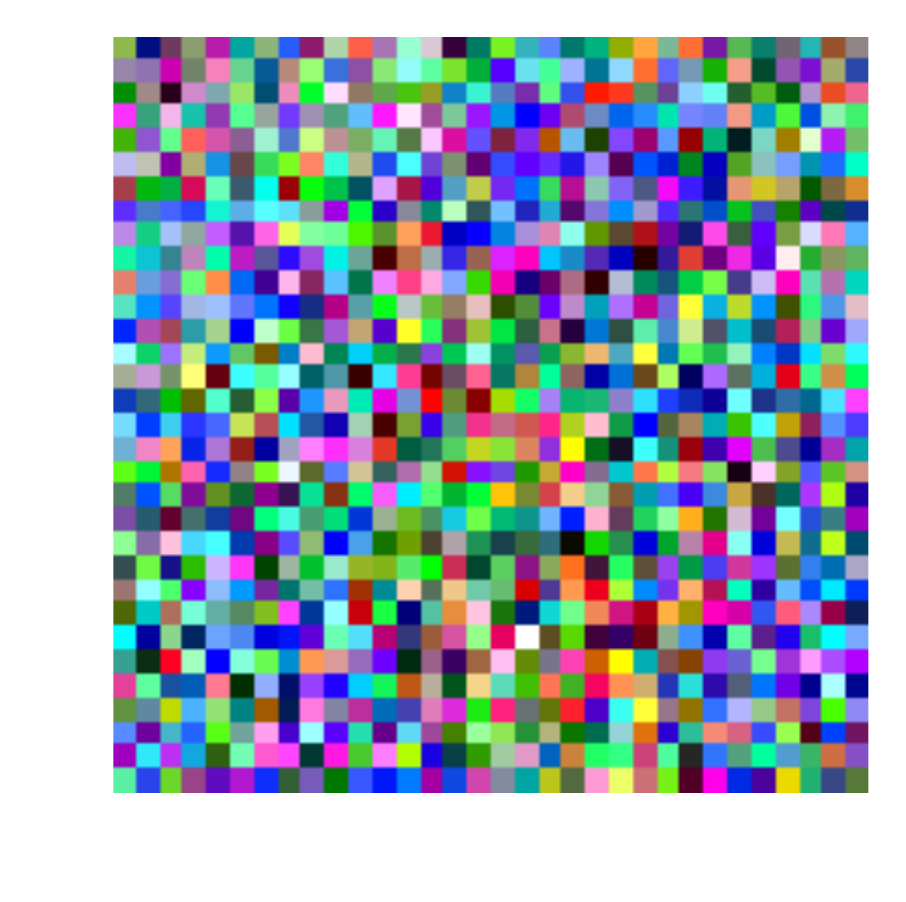}&
        \includegraphics[width=\panelwidth]{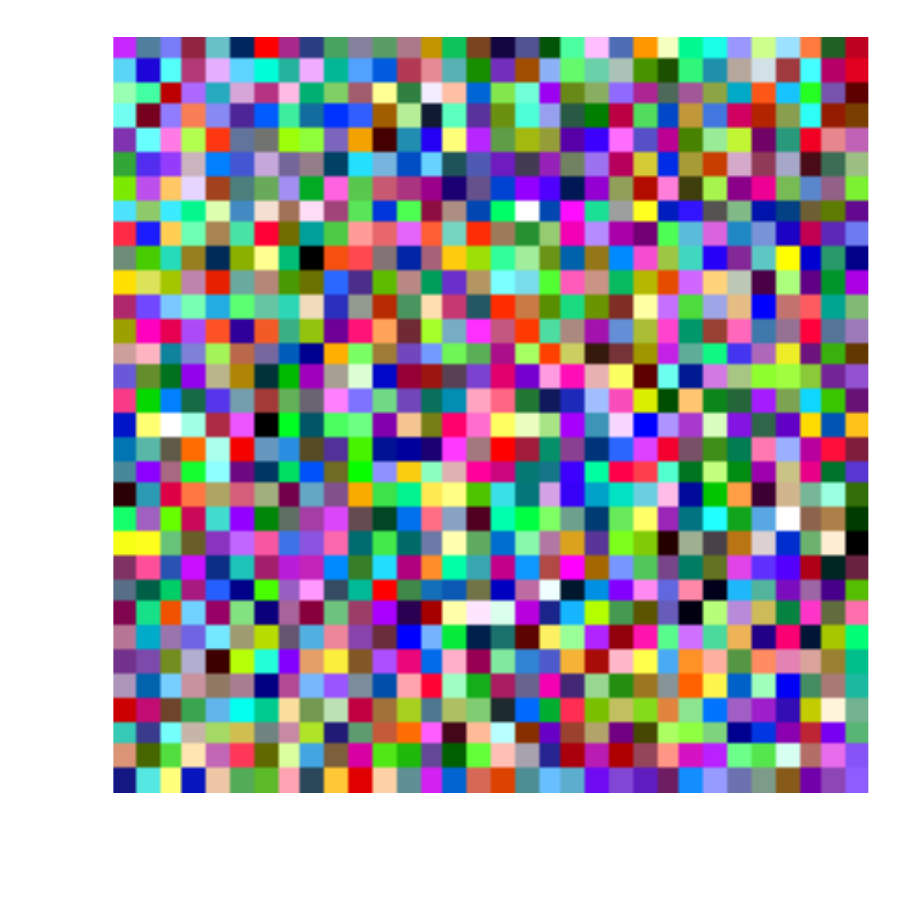}
        \\
		\rotatebox{90}{~$z$ Blue}&
        \hspace{-0.1cm}
        \includegraphics[width=\panelwidth]{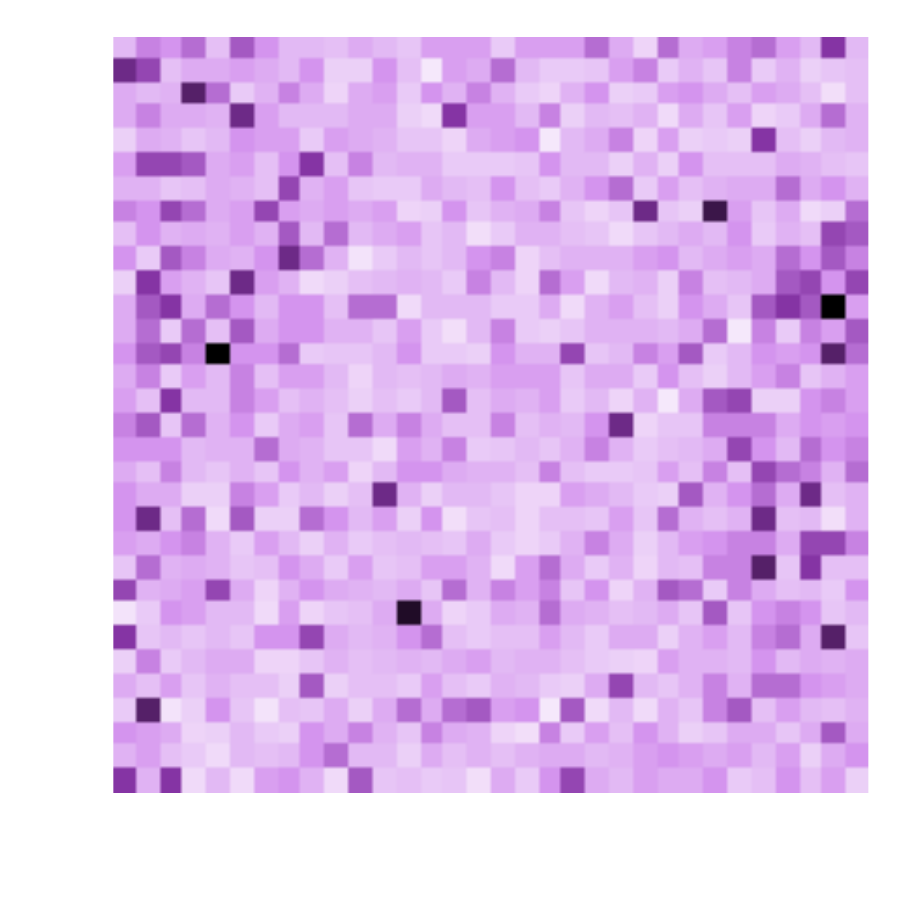}&
        \includegraphics[width=\panelwidth]{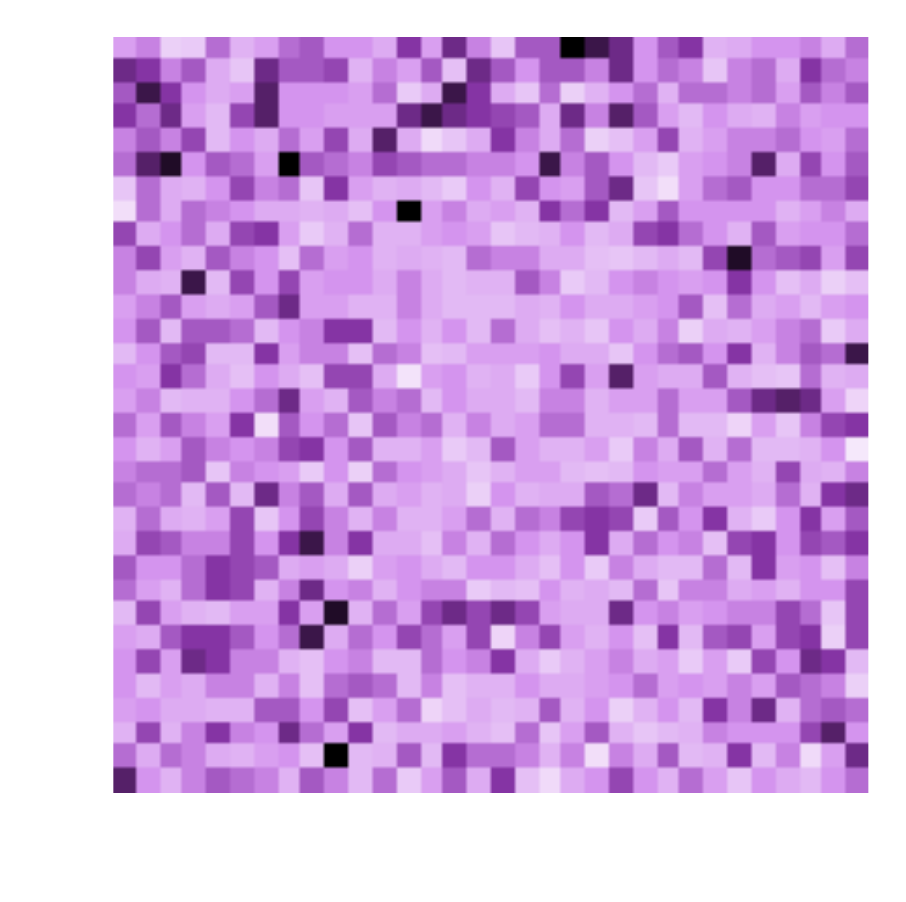}&
        \includegraphics[width=\panelwidth]{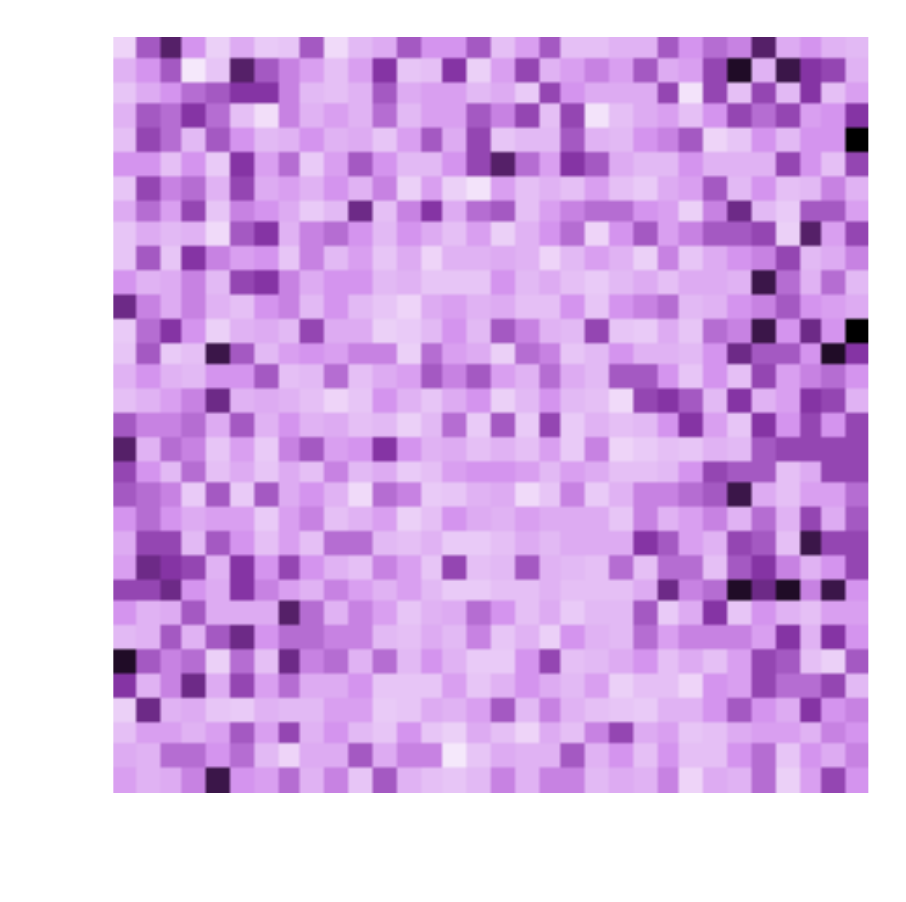}&
        \includegraphics[width=\panelwidth]{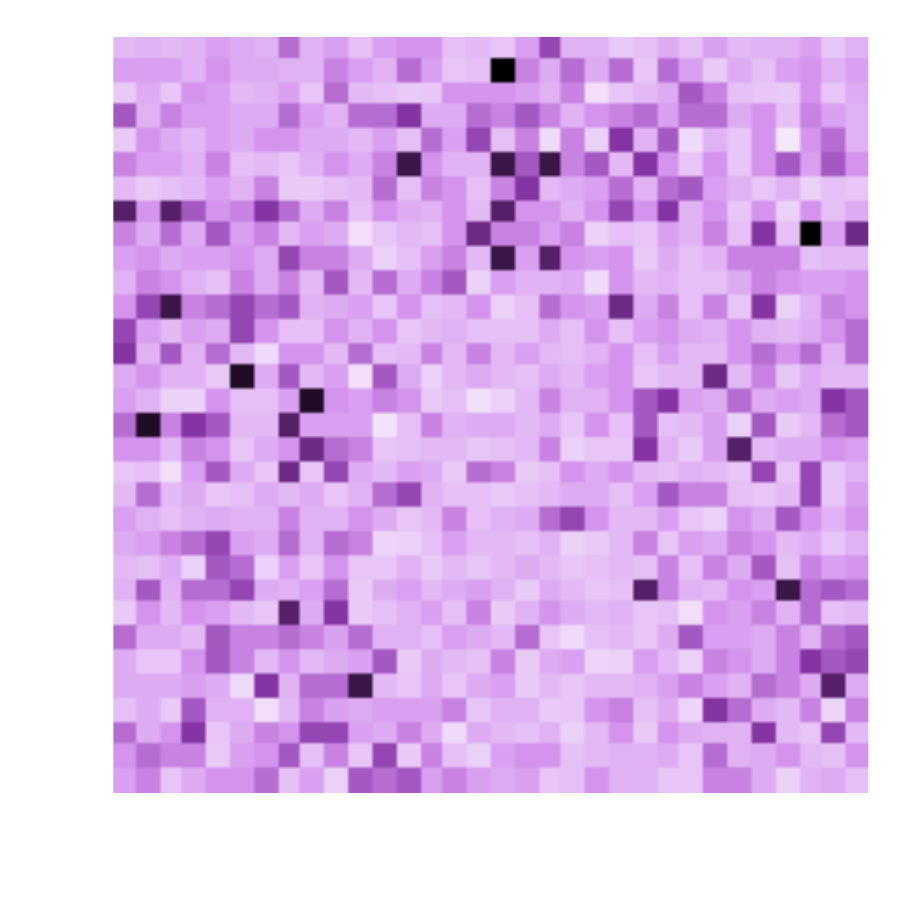}&
        \includegraphics[width=\panelwidth]{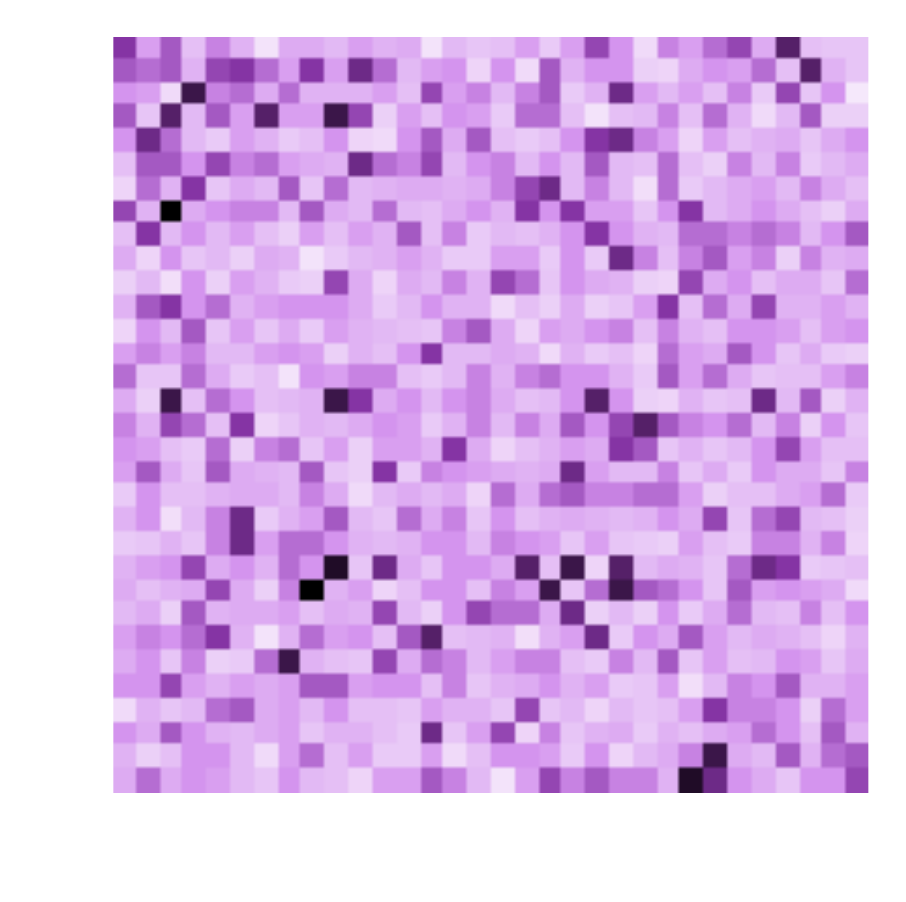}
        \\
		\rotatebox{90}{~~~BN $z$}&
        \hspace{-0.1cm}
        \includegraphics[width=\panelwidth]{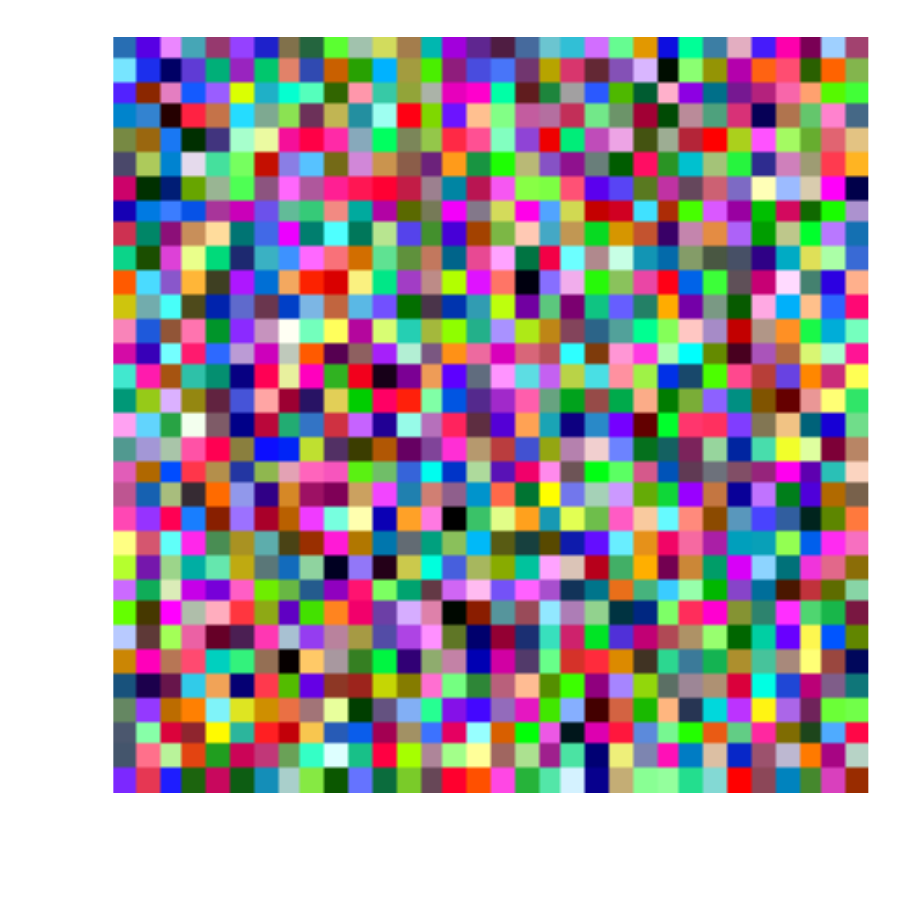}&
        \includegraphics[width=\panelwidth]{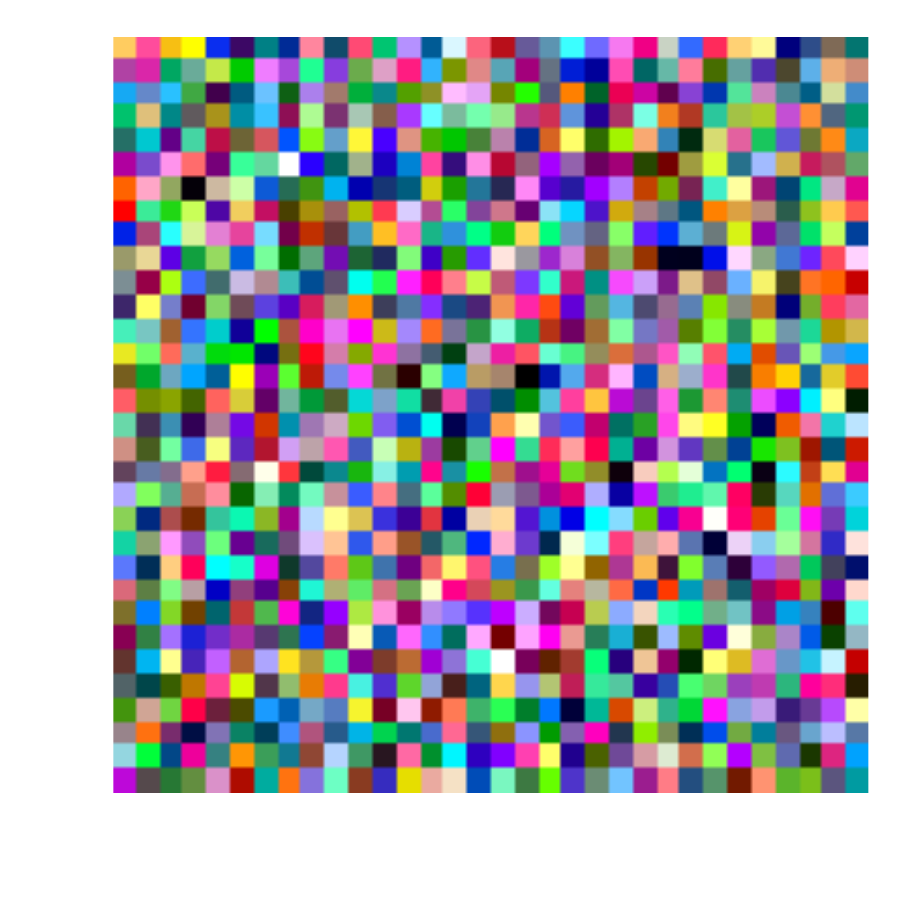}&
        \includegraphics[width=\panelwidth]{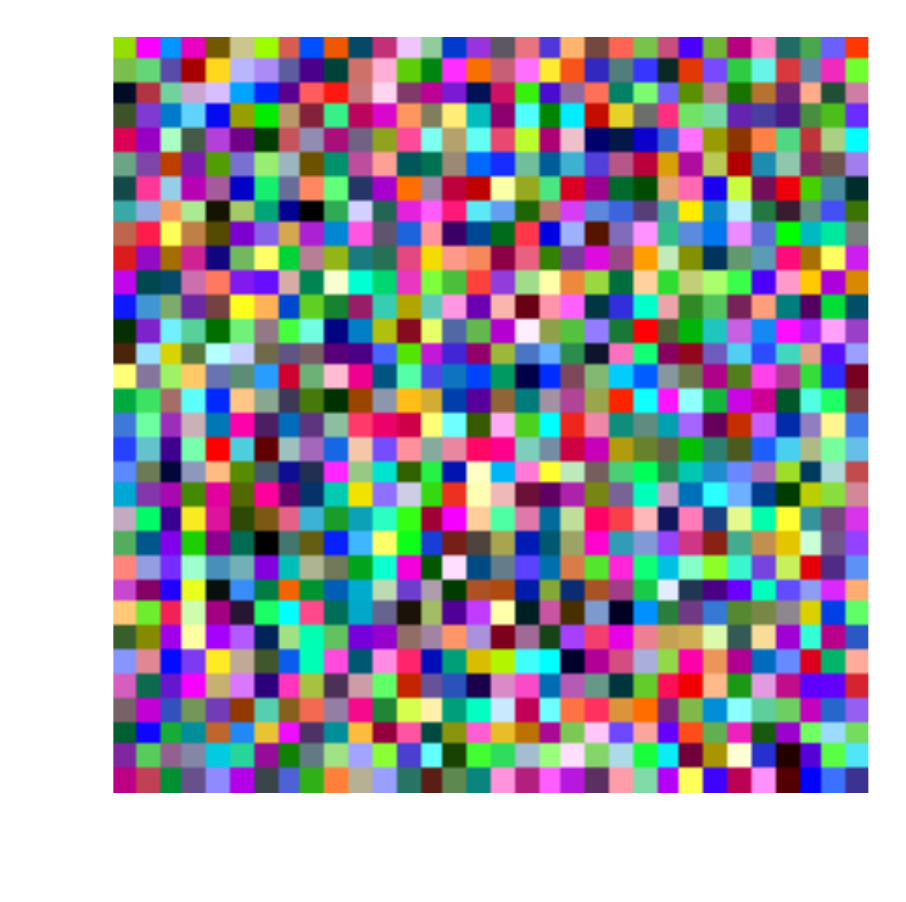}&
        \includegraphics[width=\panelwidth]{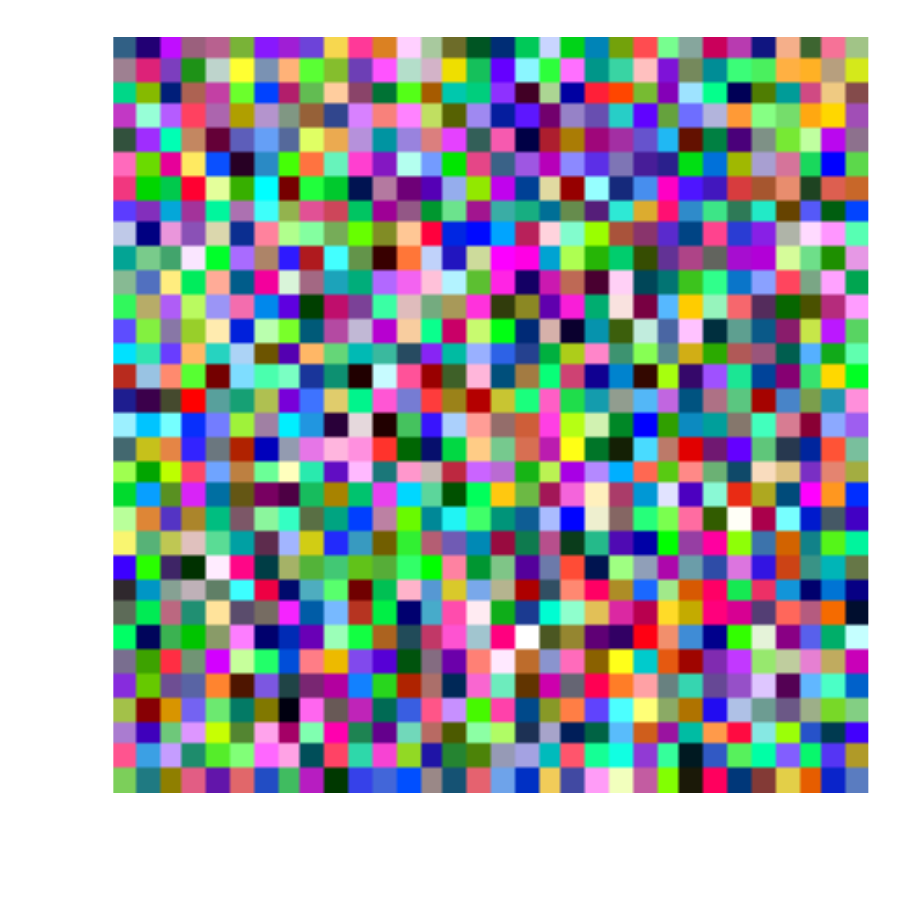}&
        \includegraphics[width=\panelwidth]{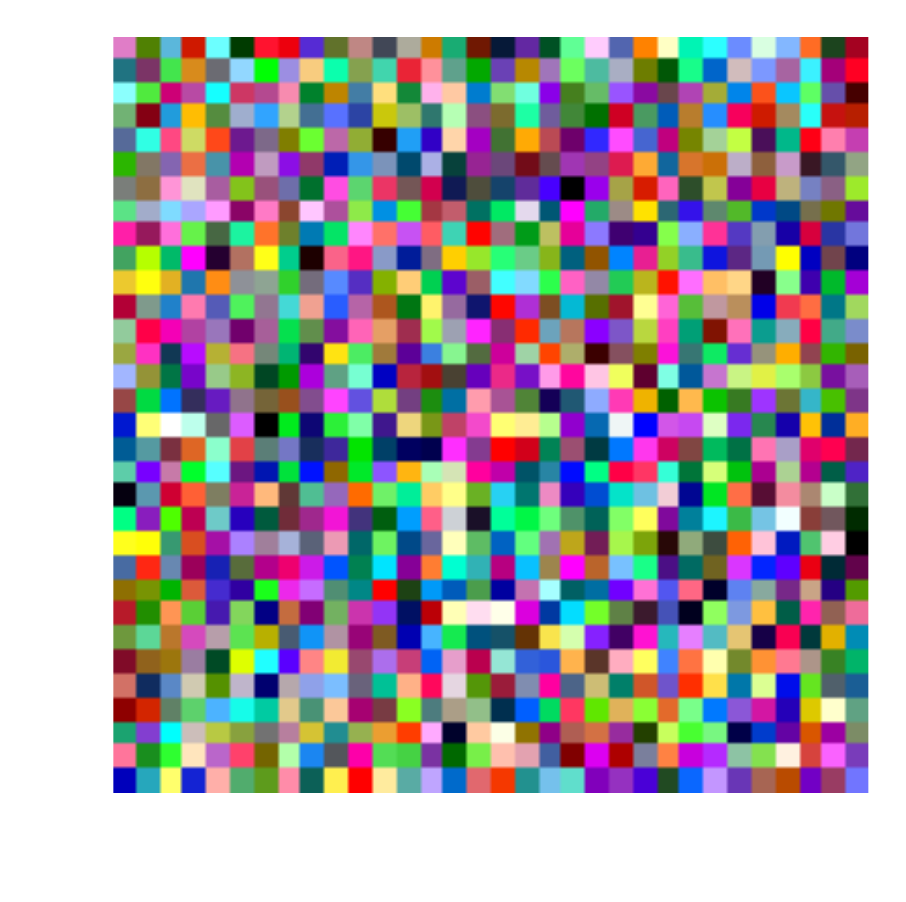}
    \end{tabular}
    \quad
    \begin{tabular}{ccccc}
        \includegraphics[width=\panelwidth]{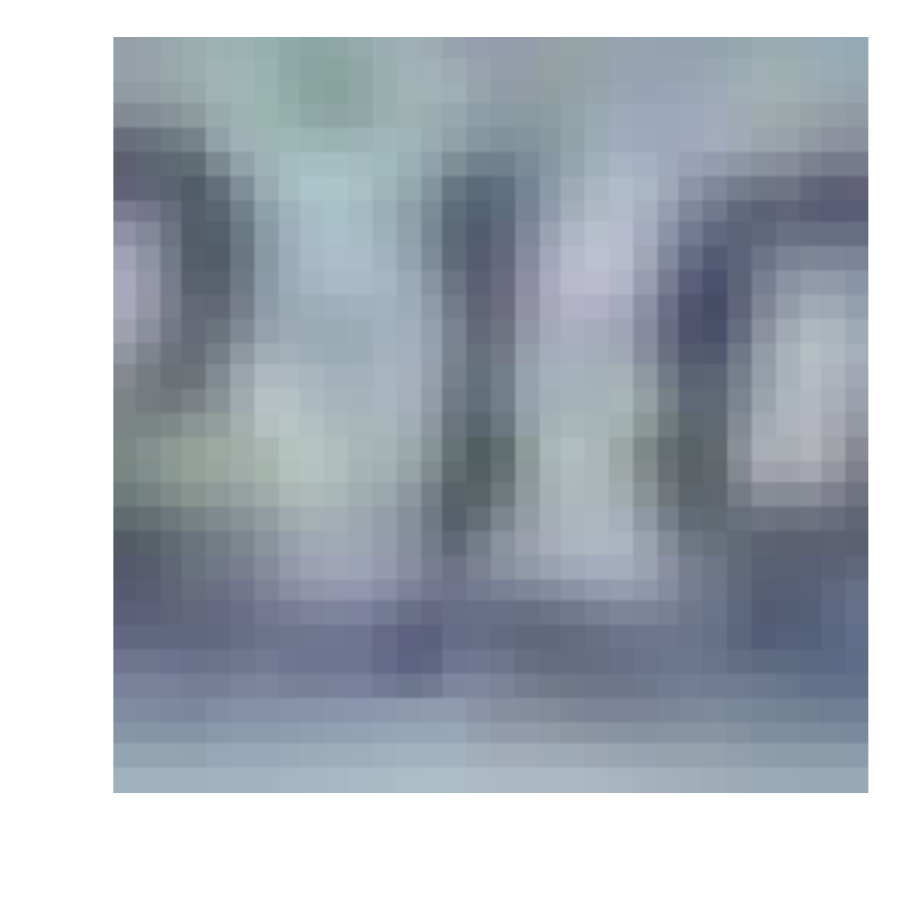}&
        \includegraphics[width=\panelwidth]{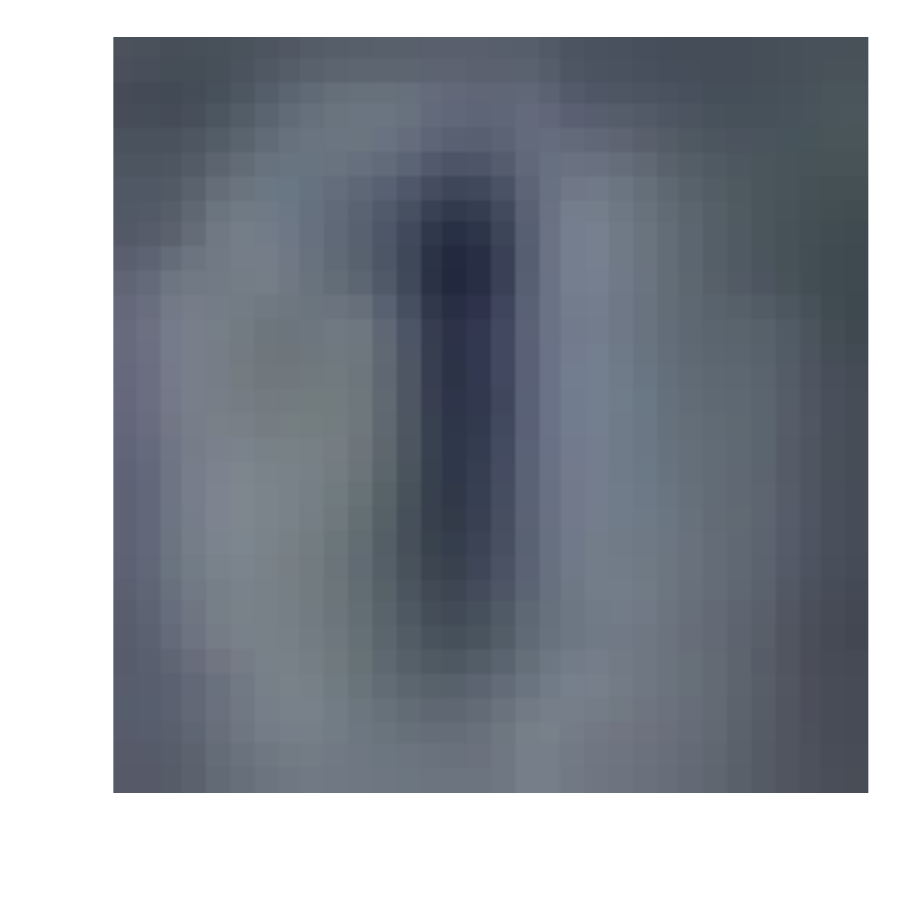}&
        \includegraphics[width=\panelwidth]{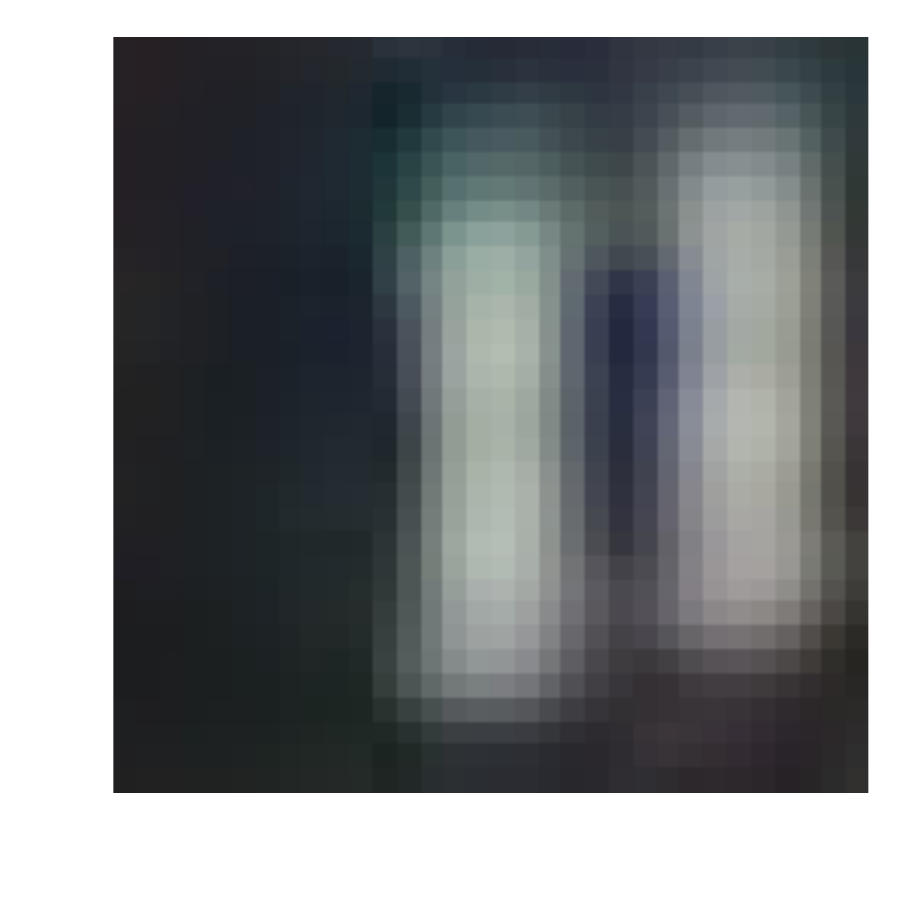}&
        \includegraphics[width=\panelwidth]{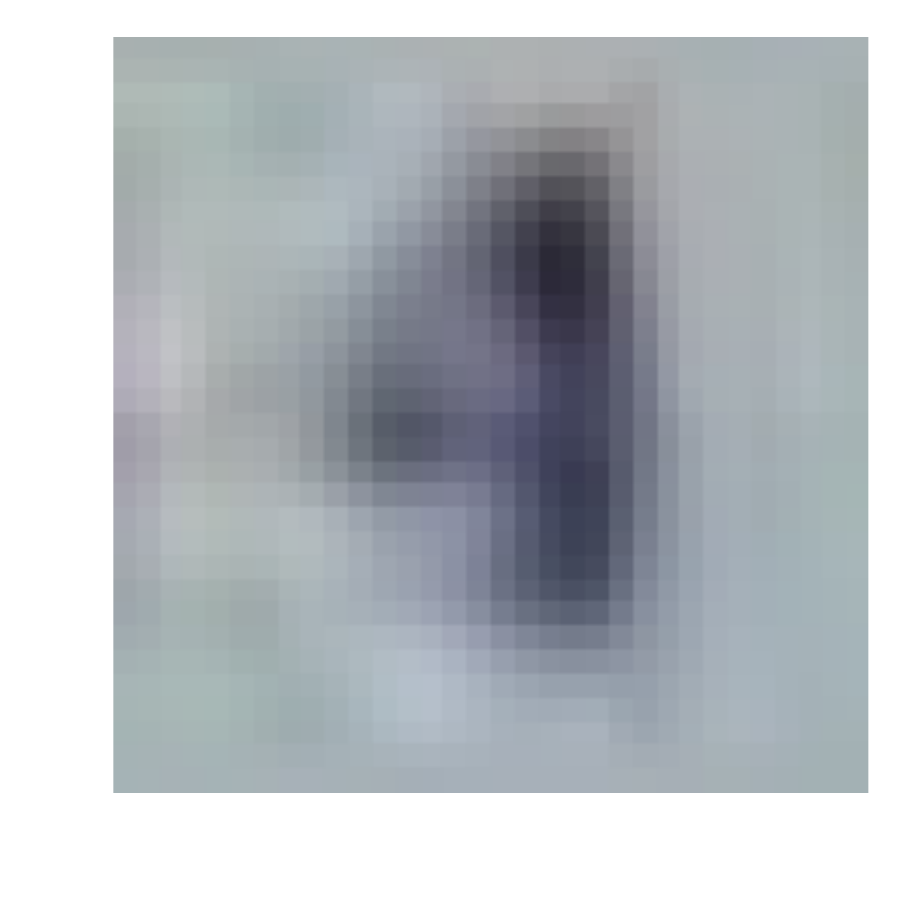}&
        \includegraphics[width=\panelwidth]{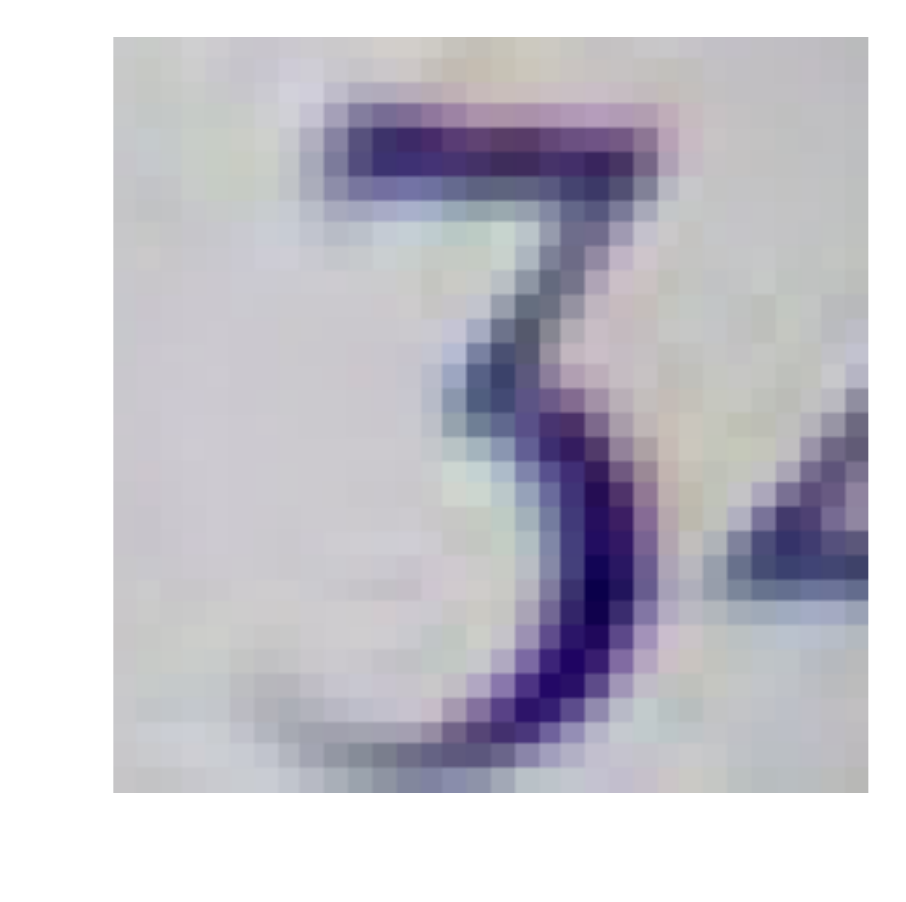}
        \\
        \includegraphics[width=\panelwidth]{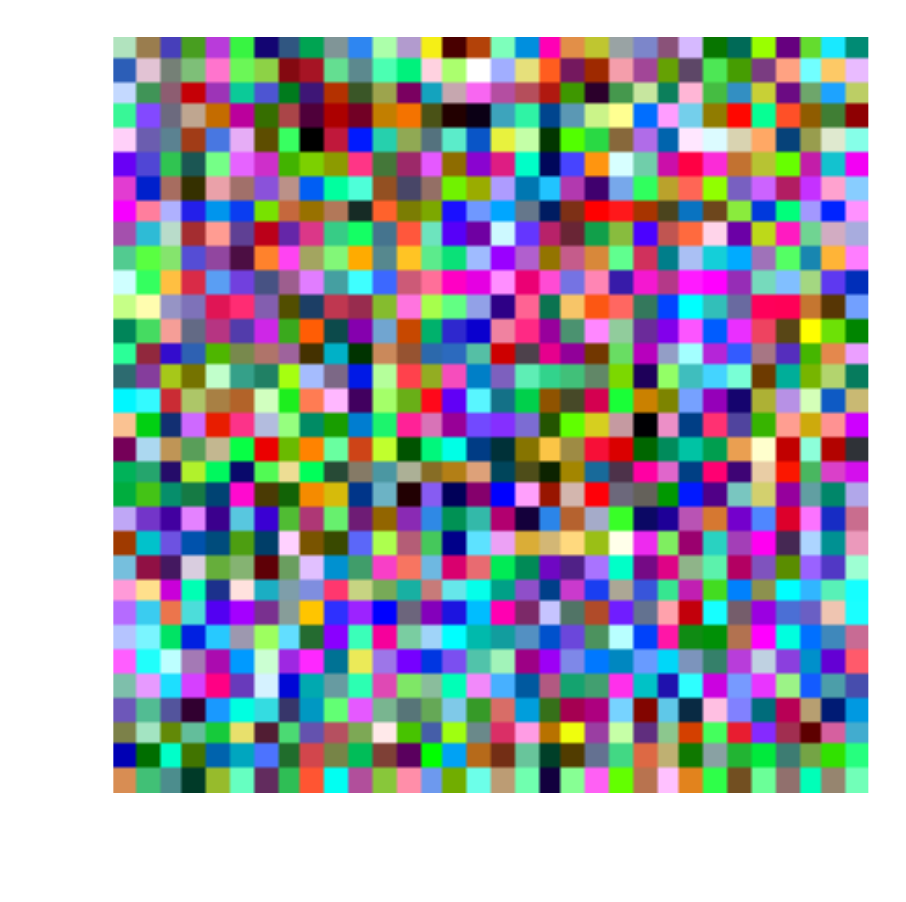}&
        \includegraphics[width=\panelwidth]{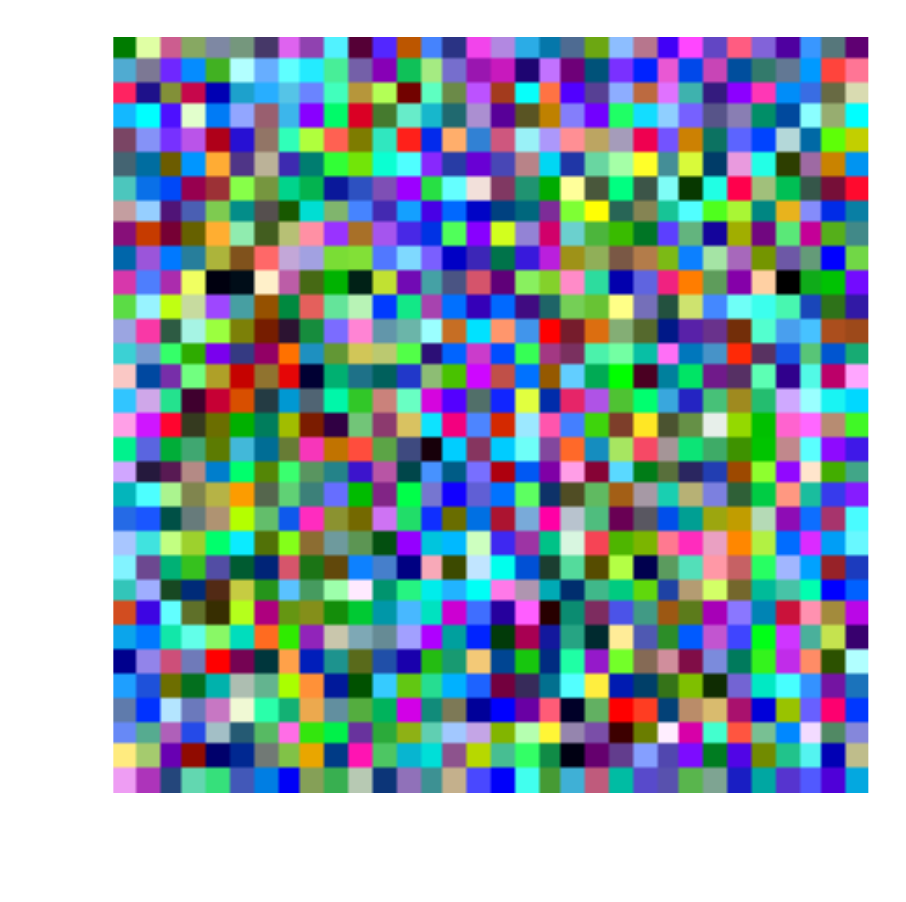}&
        \includegraphics[width=\panelwidth]{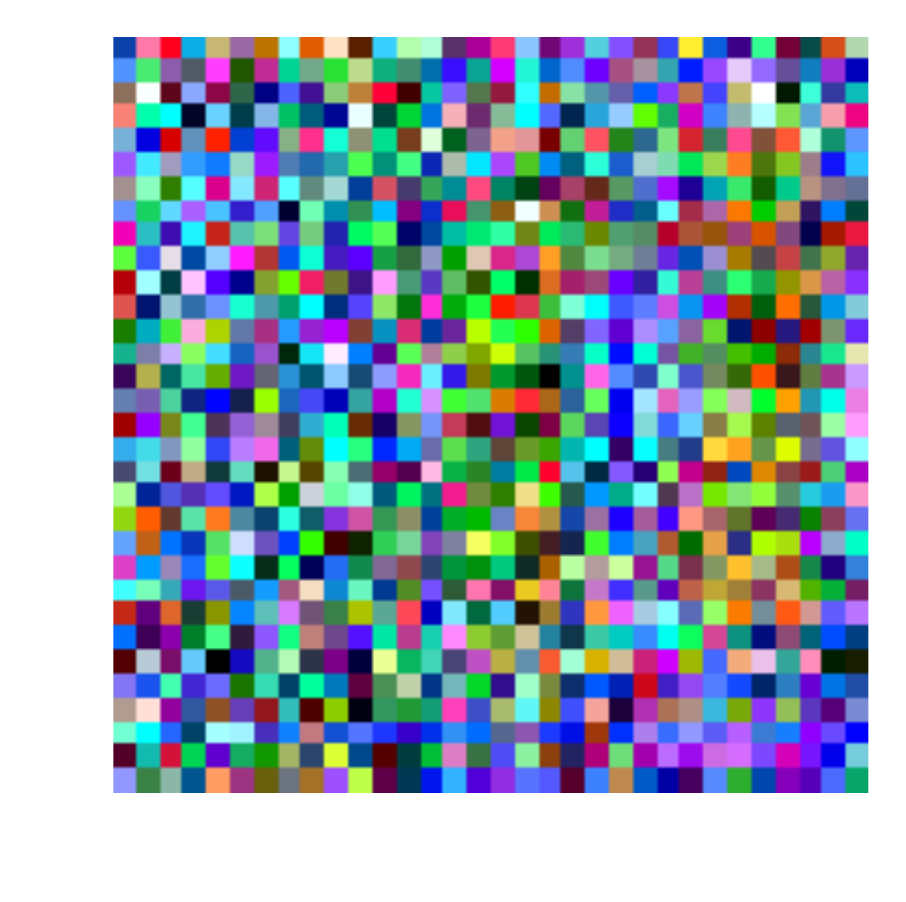}&
        \includegraphics[width=\panelwidth]{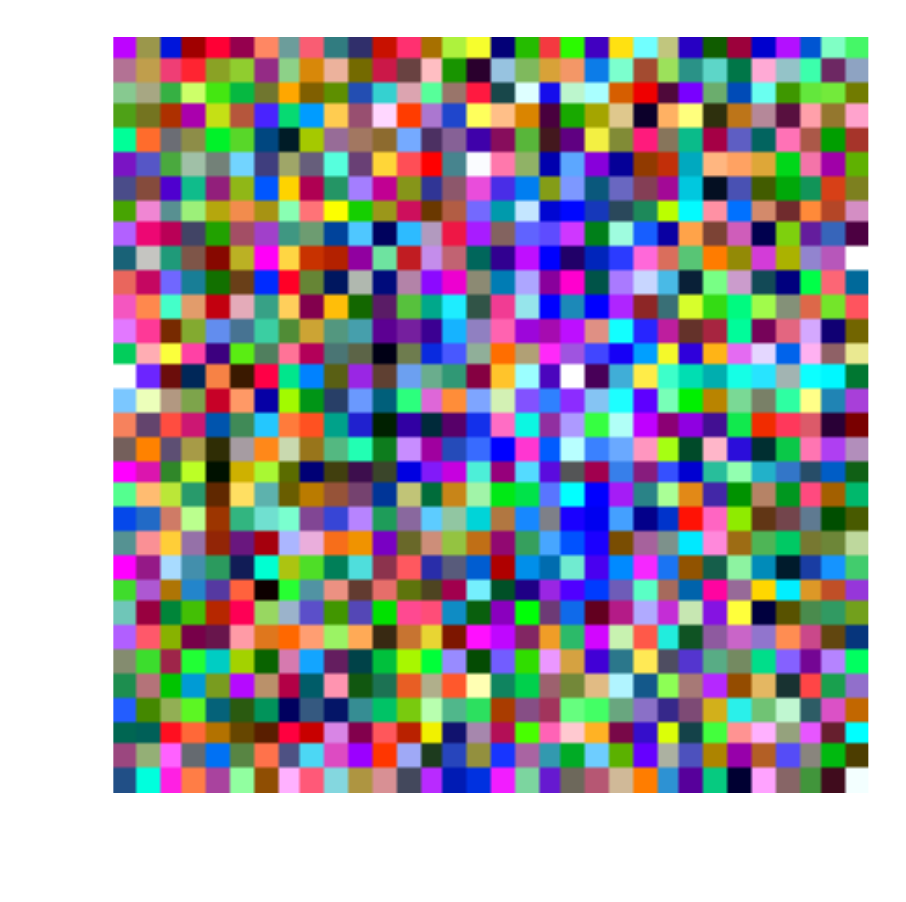}&
        \includegraphics[width=\panelwidth]{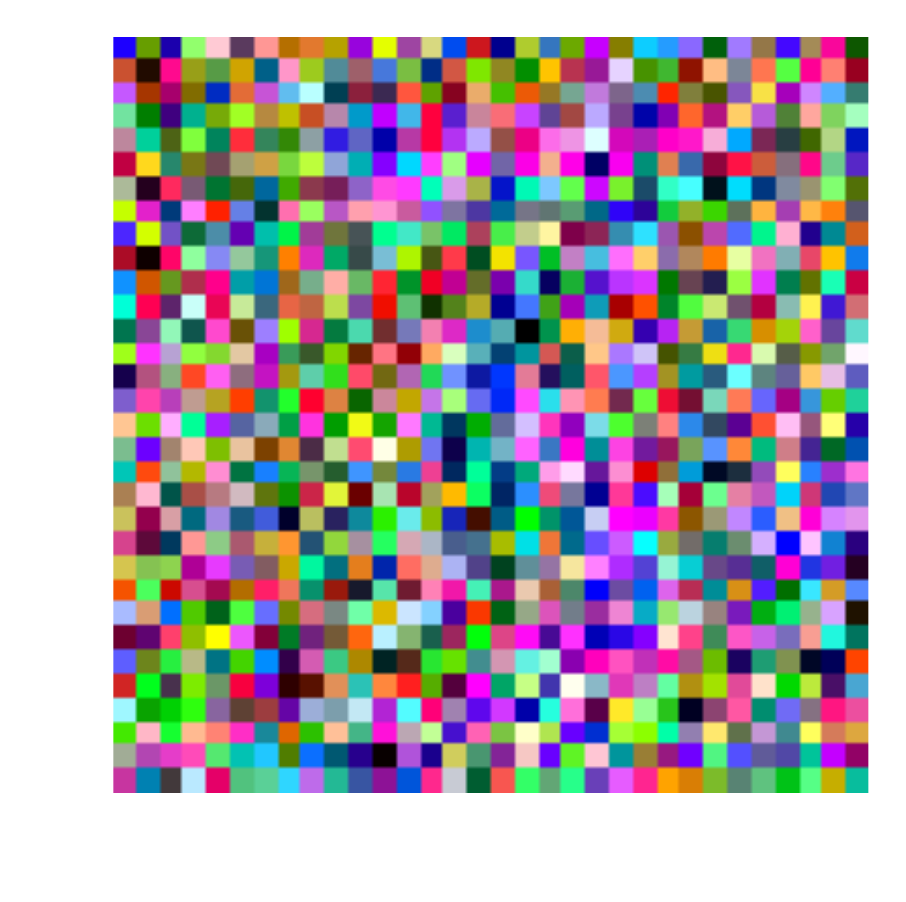}
        \\
        \includegraphics[width=\panelwidth]{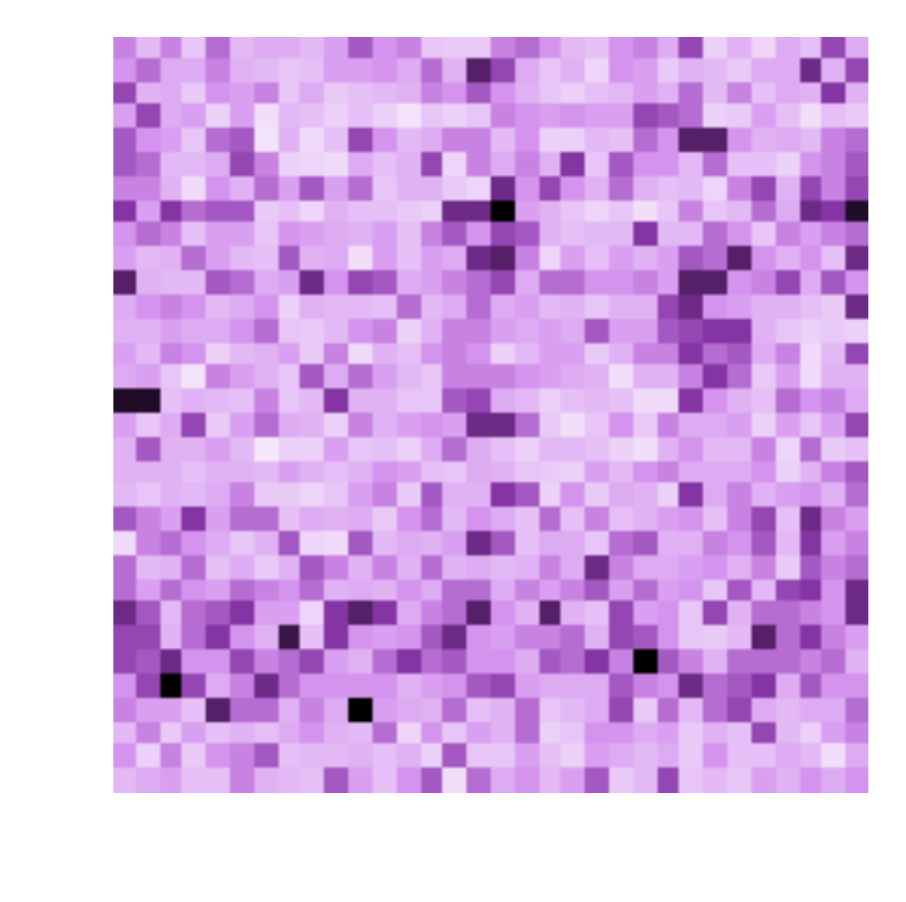}&
        \includegraphics[width=\panelwidth]{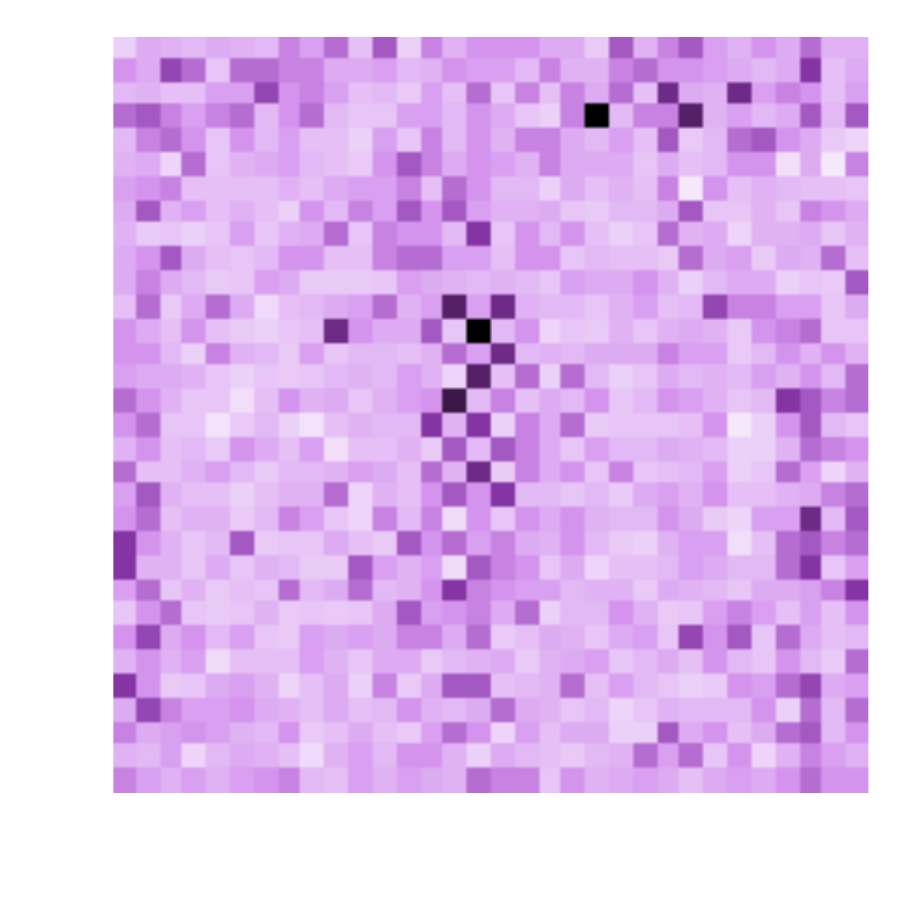}&
        \includegraphics[width=\panelwidth]{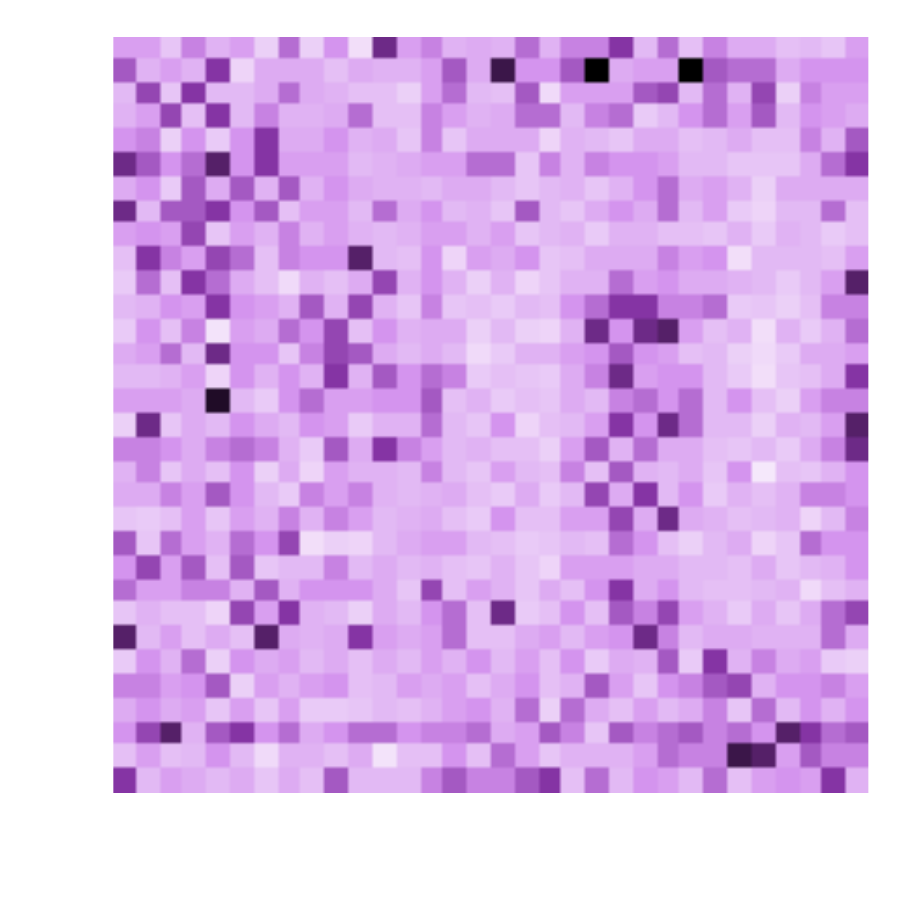}&
        \includegraphics[width=\panelwidth]{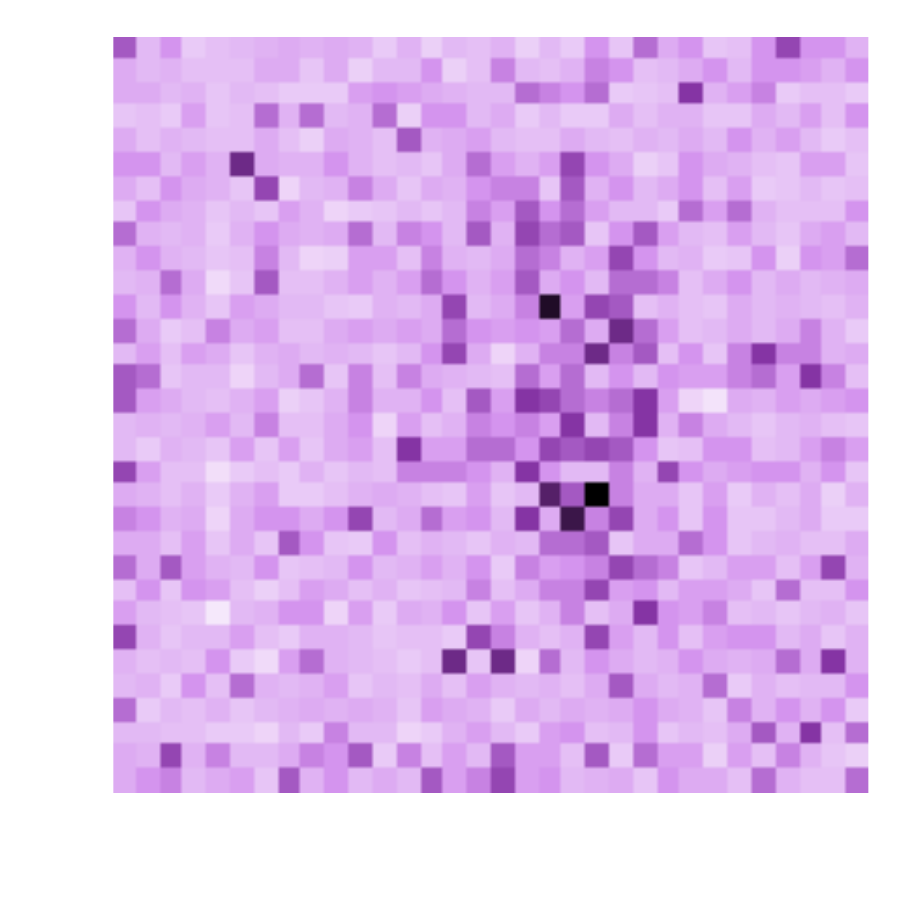}&
        \includegraphics[width=\panelwidth]{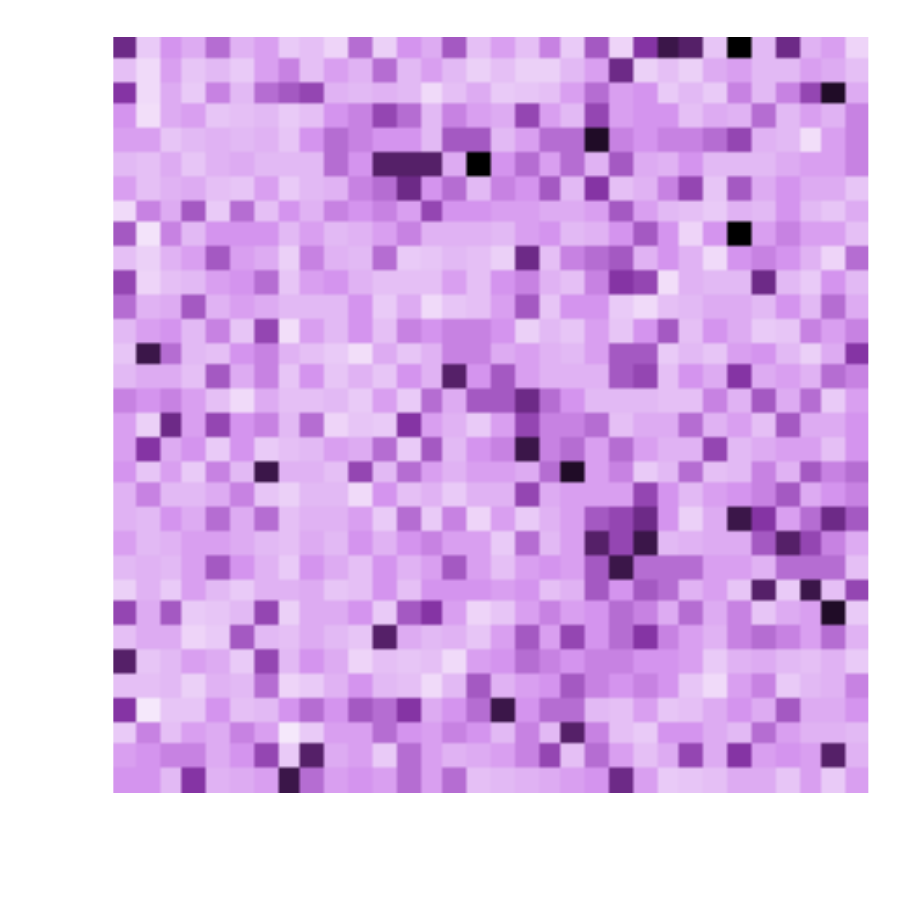}
        \\
        \includegraphics[width=\panelwidth]{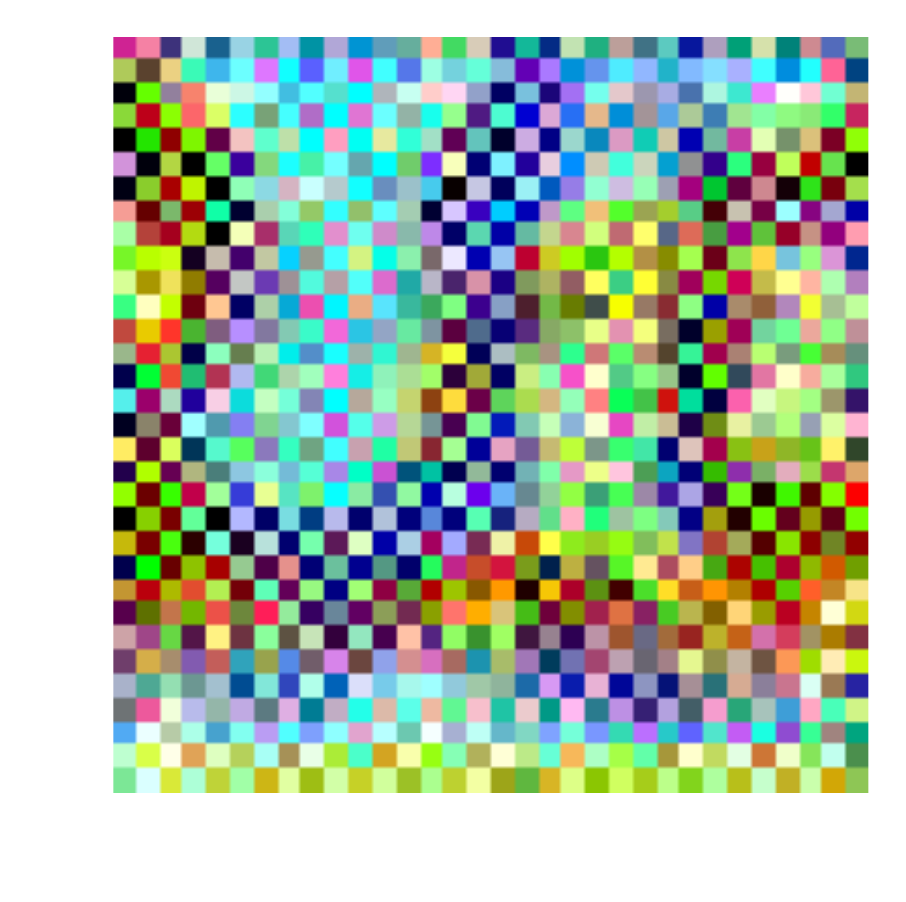}&
        \includegraphics[width=\panelwidth]{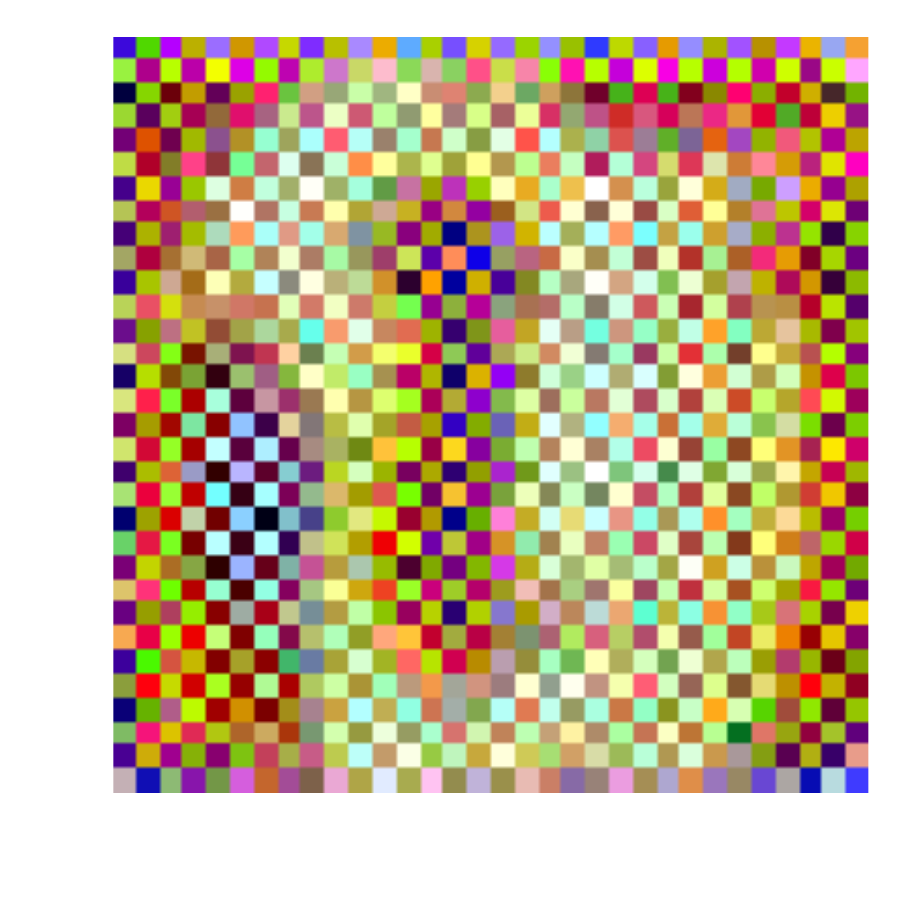}&
        \includegraphics[width=\panelwidth]{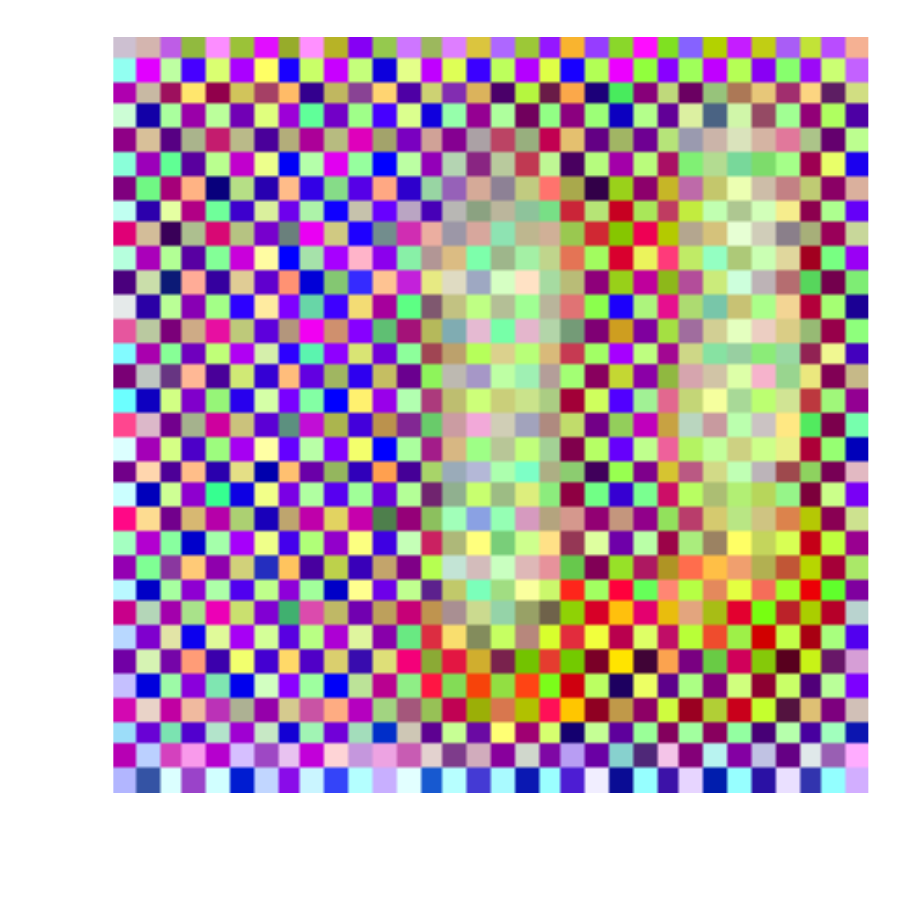}&
        \includegraphics[width=\panelwidth]{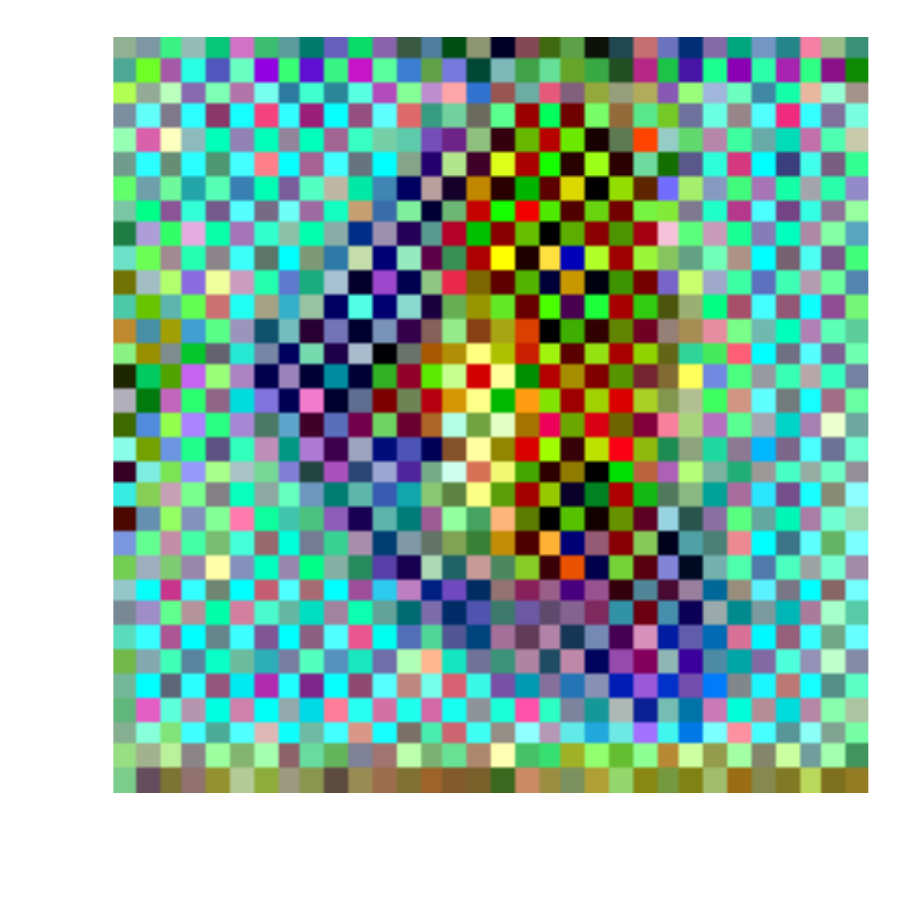}&
        \includegraphics[width=\panelwidth]{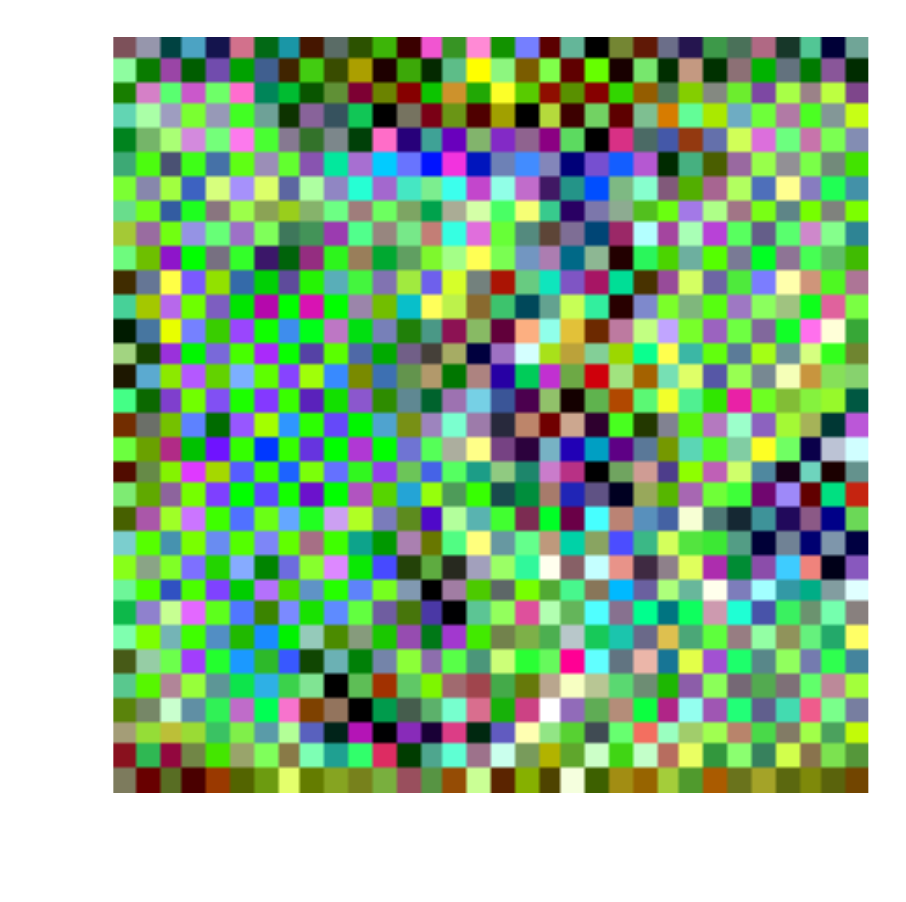}
    \end{tabular}
    }

	\caption{
	\textbf{Latent spaces.}
	Visualization of latent representations for RealNVP and Glow models on
	in-distribution and out-of-distribution inputs.
	\textbf{Rows 1-3 in (a) and (b)}: original images, latent representations, 
	latent representation averaged over $40$ samples of dequantization noise for RealNVP and Glow model trained on FashionMNIST and using MNIST for OOD data.
	\textbf{Row 4 in (a)}: latent representations for batch normalization in train mode.
	\textbf{Rows 1-4 in (c)}: original images, latent representations, 
	the blue channel of the latent representation, and the latent representations for batch normalization in train mode for a RealNVP model trained on CelebA and using SVHN as OOD data.
	For both dataset pairs, we can recognize the shape of the input image in the latent representations.
	The flow represents images based on their graphical appearance rather than semantic content.
		\vspace{-2mm}
	}
	\label{fig:app_latent_reprs}
\end{figure*}

in Figure \ref{fig:app_latent_reprs}, we plot additional latent representations for 
RealNVP and Glow trained on FashionMNIST with MNIST as OOD dataset, RealNVP trained on CelebA with SVHN as OOD. 
The results agree with Section \ref{sec:latent_space}: we can recognize edges from the original
inputs in their latent representations.

\begin{figure}[t]
    \centering
    \hspace{-.3cm}
    \subfigure[RealNVP trained on FashionMNIST]{
    \begin{tabular}{c}
	    \includegraphics[height=0.2\linewidth]{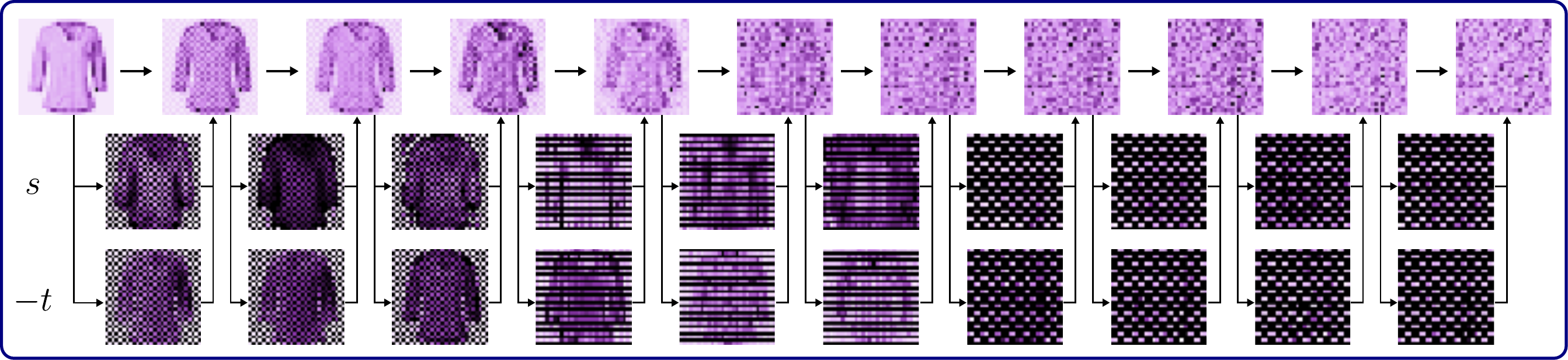}
        \\
	    \includegraphics[height=0.2\linewidth]{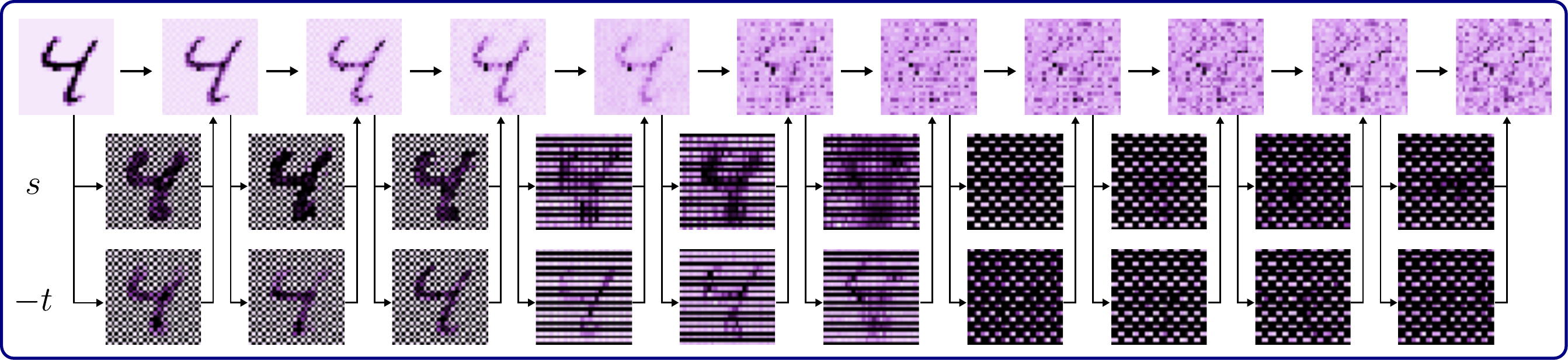}
    \end{tabular}
    }
    \subfigure[Glow trained on FashionMNIST]{
    \begin{tabular}{c}
	    \includegraphics[height=0.2\linewidth]{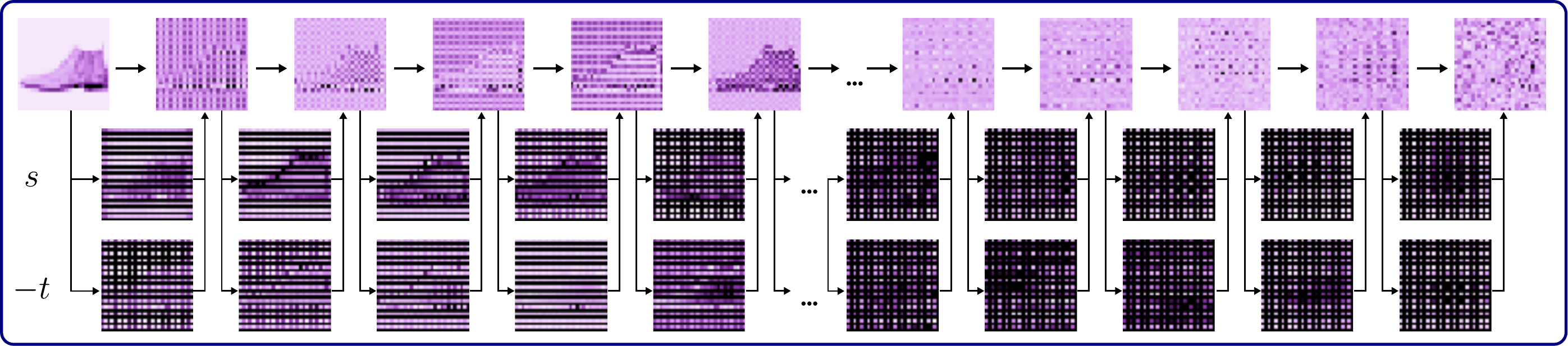}
        \\
	    \includegraphics[height=0.2\linewidth]{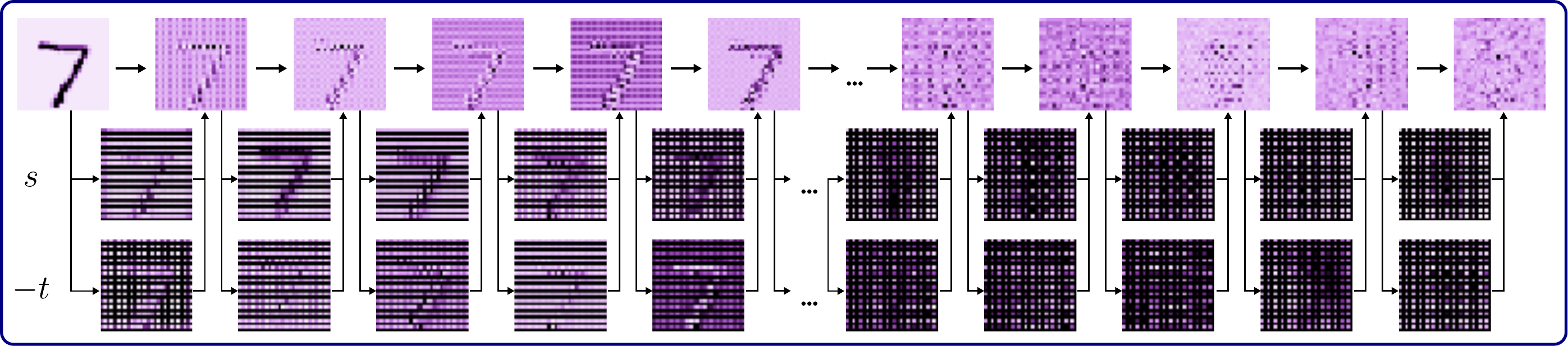}
    \end{tabular}
    }
    \subfigure[RealNVP trained on CelebA]{
    \begin{tabular}{c}
	    \includegraphics[height=0.2\linewidth]{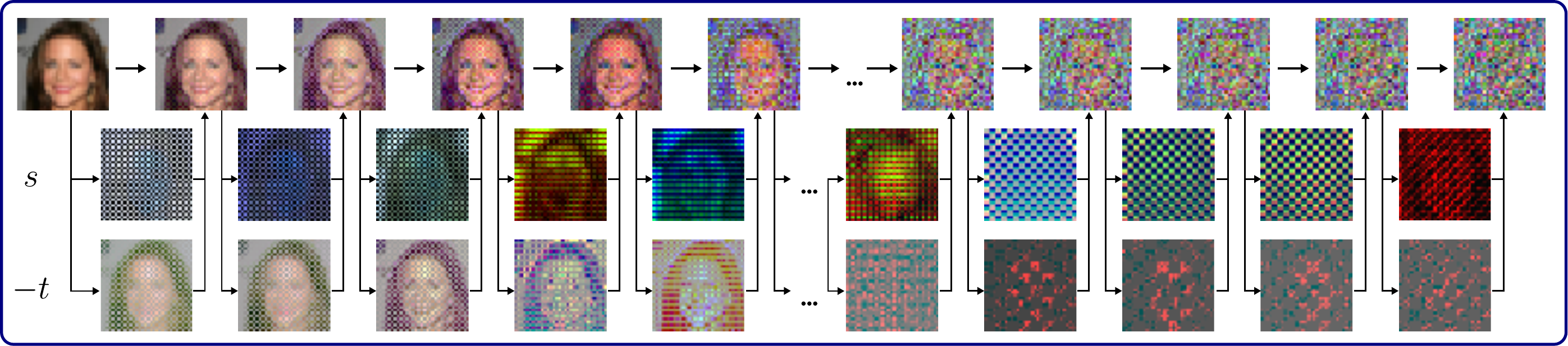}
        \\
	    \includegraphics[height=0.2\linewidth]{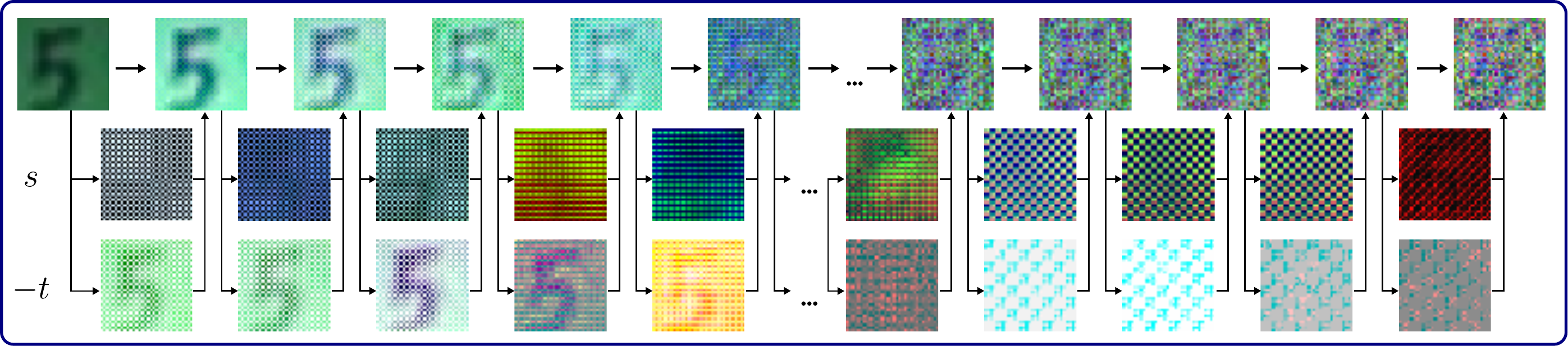}
    \end{tabular}
    }
	\caption{
    \textbf{Coupling layer visualizations.}
    Visualization of intermediate coupling layer activations and $st$-network predictions for 
    \textbf{(a)}: RealNVP trained on FashionMNIST; 
    \textbf{(b)}: Glow trained on FashionMNIST; 
    \textbf{(c)}: RealNVP trained on CelebA. 
    The top half of each subfigure shows the visualizations for an in-distribution image 
    (FashionMNIST or CelebA) while the bottom half shows the visualizations for an OOD image (MNIST or SVHN).
    For all models, the shape of the input both for in- and out-of-distribution 
    image is clearly visible in $s$ and $t$ predictions of the coupling layers.
    }
	\label{fig:app_coupling_layers}
    \vspace{-.3cm}
\end{figure}

\FloatBarrier

\section{Masking strategies}
\label{sec:app_masks}

In Figure \ref{fig:app_squeeze}, we visualize checkerboard, channel-wise masks and horizontal masks on a single-channel image.
The checkerboard and channel-wise masks are commonly used in RealNVP, Glow and other coupling layer-based flows for image data.
We use the horizontal mask to better understand the transformations learned by the coupling layers
in Section \ref{sec:coupling_layers}.

\section{Additional coupling layer visualizations}
\label{sec:app_coupling}

In Figure \ref{fig:app_coupling_layers}, we plot additional visualizations of coupling layer
activations and scale $s$ and shift $t$ parameters predicted by $st$-networks.
In Figure \ref{fig:coadaptation} we visualize the coupling layer activations for the
small flow with horizontal mask from Section \ref{sec:coadaptation} on several 
additional OOD inputs. 
These visualizations provide additional empirical support for Section \ref{sec:coupling_layers}.

\begin{figure}[!h]
    
	\def \panelwidth {0.21\textwidth}
	\def \panelskip {-0.75cm}

    \centering
    \subfigure{
    	\includegraphics[width=\panelwidth]{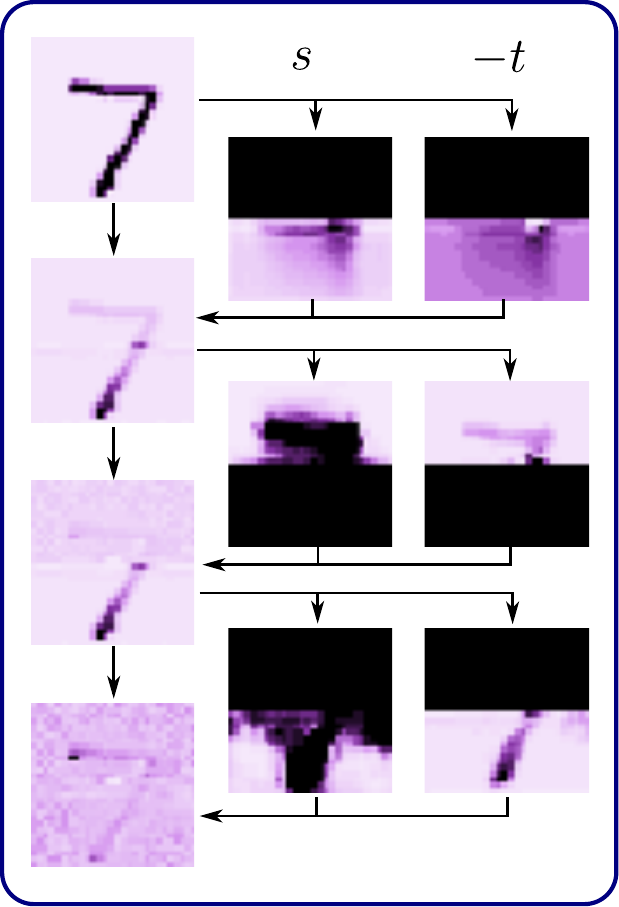} 
    }~~~
    \subfigure{
    	\includegraphics[width=\panelwidth]{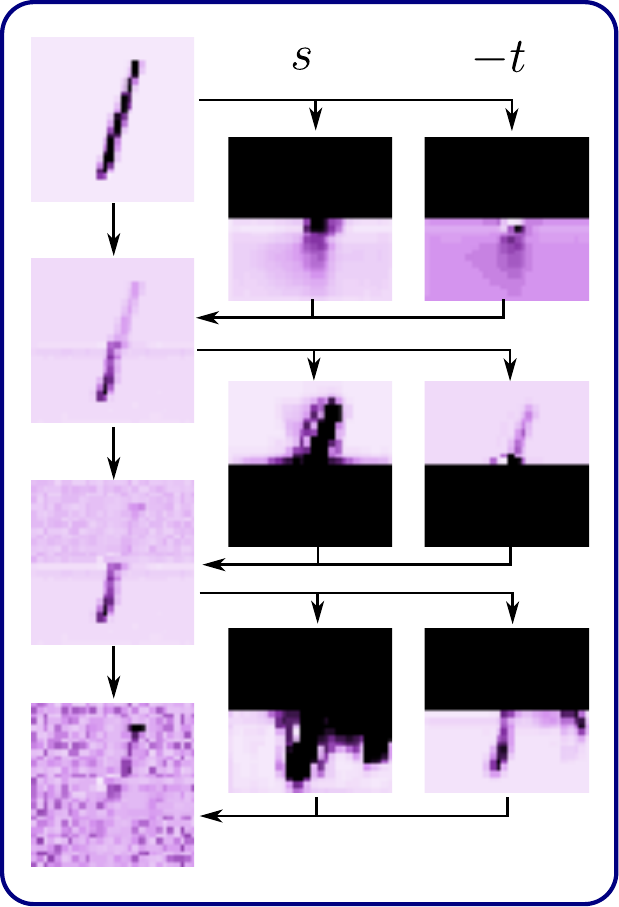} 
    }~~~
    \subfigure{
    	\includegraphics[width=\panelwidth]{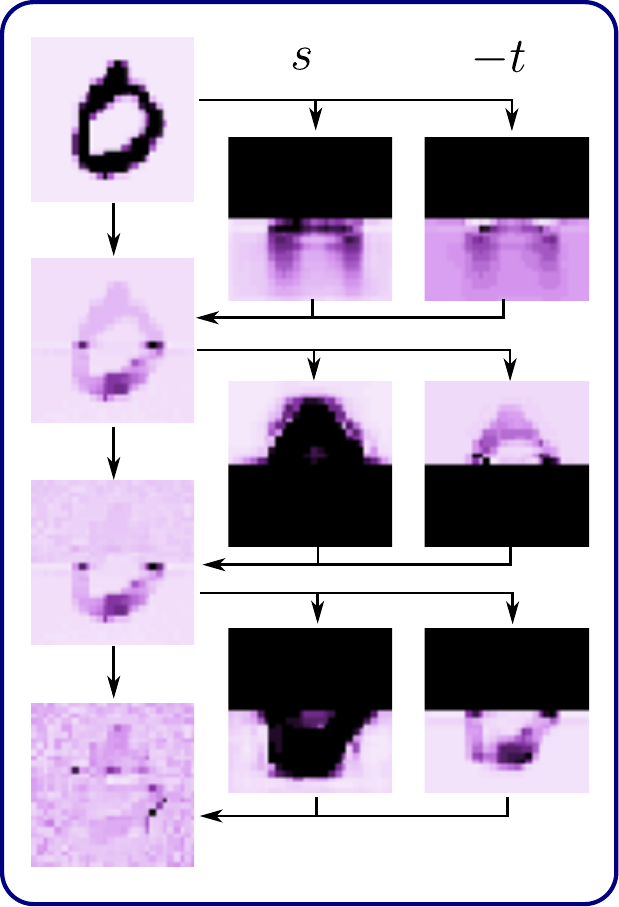} 
    }~~~
    \subfigure{
    	\includegraphics[width=\panelwidth]{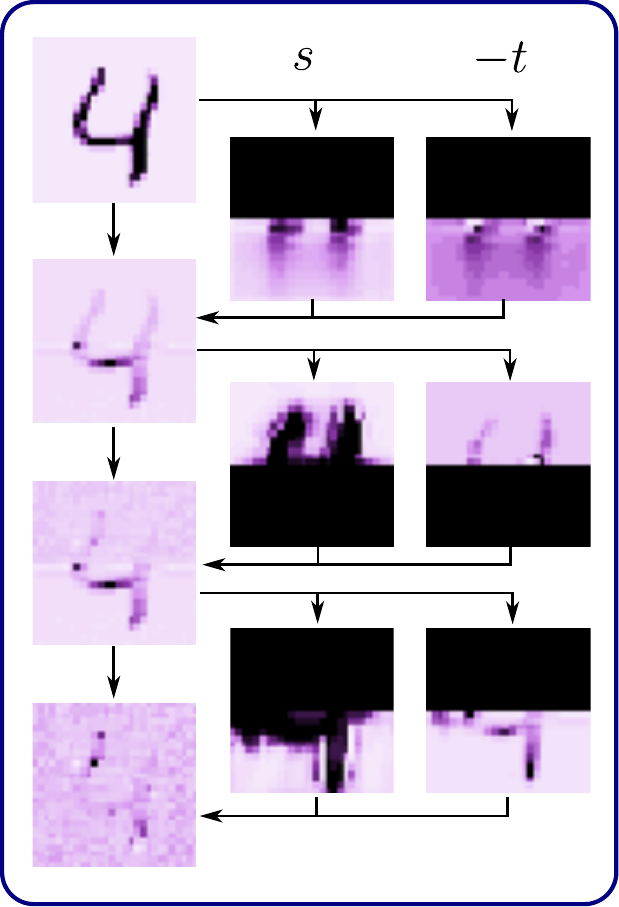} 
    }

	\caption{ 
	\textbf{Coupling layer co-adaptation.}
	Visualization of intermediate coupling layer activations, as well as scales $s$ and shifts $t$ 
	predicted by each coupling layer of a RealNVP model with a horizontal mask on 
    out-of-distribution MNIST inputs. 
    Although RealNVP was trained on FashionMNIST, the $st$-networks are able to correctly
	predict the bottom half of MNIST digits in the second coupling layer due to coupling layer co-adaptation.
	}
	\label{fig:coadaptation}
    \vspace{-5mm}
\end{figure}

\section{Changing biases in flow models for better OOD detection} \label{sec:app_biases}

\subsection{Cycle-mask}
\label{sec:app_cyclemask}

In Section \ref{sec:coupling_layers} we identified two mechanisms through which normalizing flows learn to predict masked pixels from observed pixels on OOD data: leveraging local color correlations and coupling layer co-adaptation.
We reduce the applicability of these mechanisms with \textit{cycle-mask}:
a new masking strategy for the coupling layers illustrated in Figure \ref{fig:cyclemask}.

With cycle-mask, the coupling layers do not have access to neighbouring pixels 
when predicting the masked pixels, similarly to the horizontal mask. 
Furthermore, cycle mask reduce the effect of coupling layer co-adaptation: the information about a part of the image has to travel through $4$ coupling layers before it can be used to update the same part of the image.

\textbf{Changing masking strategy}\quad
In Figure \ref{fig:app_mask_effect} we show the log-likelihood histograms and samples for a RealNVP of a fixed size with checkerboard, horizontal and cycle-mask.

\begin{figure*}[h]
    \vspace{-1cm}
    \centering
    \includegraphics[width=0.8\textwidth]{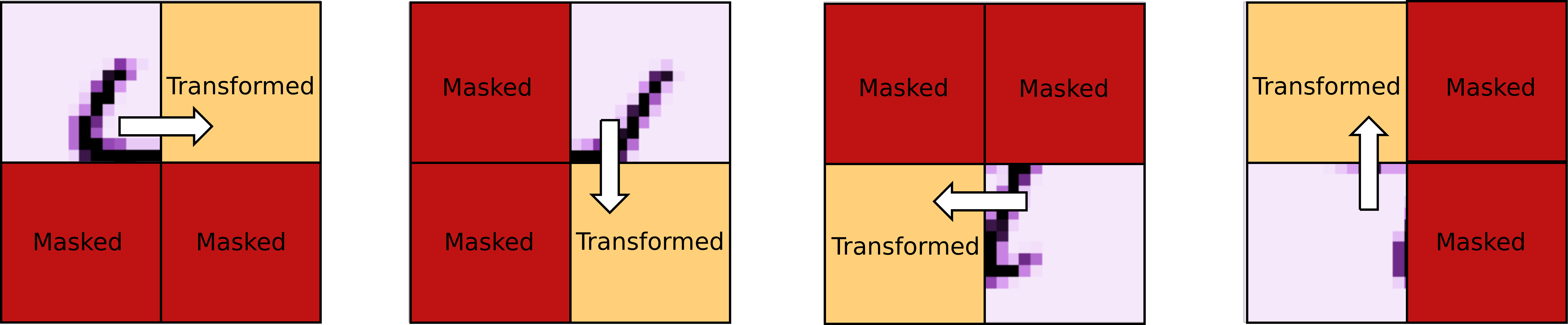}
    \vspace{-0.2cm}
	\caption{
    \textbf{Cycle-mask.}
	A new sequence of masks for coupling layers in RealNVP that we evaluate in Section \ref{sec:changing_biases}.
	We separate the input image space of size $c \times h \times w$ into four quadrants of size $c \times h / 2 \times w / 2$ each.
	Each coupling layer transforms one quadrant based on the previous quadrant.
	Cycle-mask prevents co-adaptation between subsequent coupling layers discussed in Section \ref{sec:coupling_layers}:
    the information from a quadrant has to propagate through four coupling layers before reaching the same quadrant.
	}
	\label{fig:cyclemask}
\end{figure*}

\begin{figure}[t]
    \vspace{-.7cm}
	\def \panelwidth {0.27\textwidth}
	\def \panelskip {-0.3cm}
    \centering

	\subfigure[~Checkerboard Mask]{
        \begin{tabular}{c}
		\rotatebox{90}{\quad\quad FashionMNIST}\quad
        \includegraphics[width=\panelwidth]{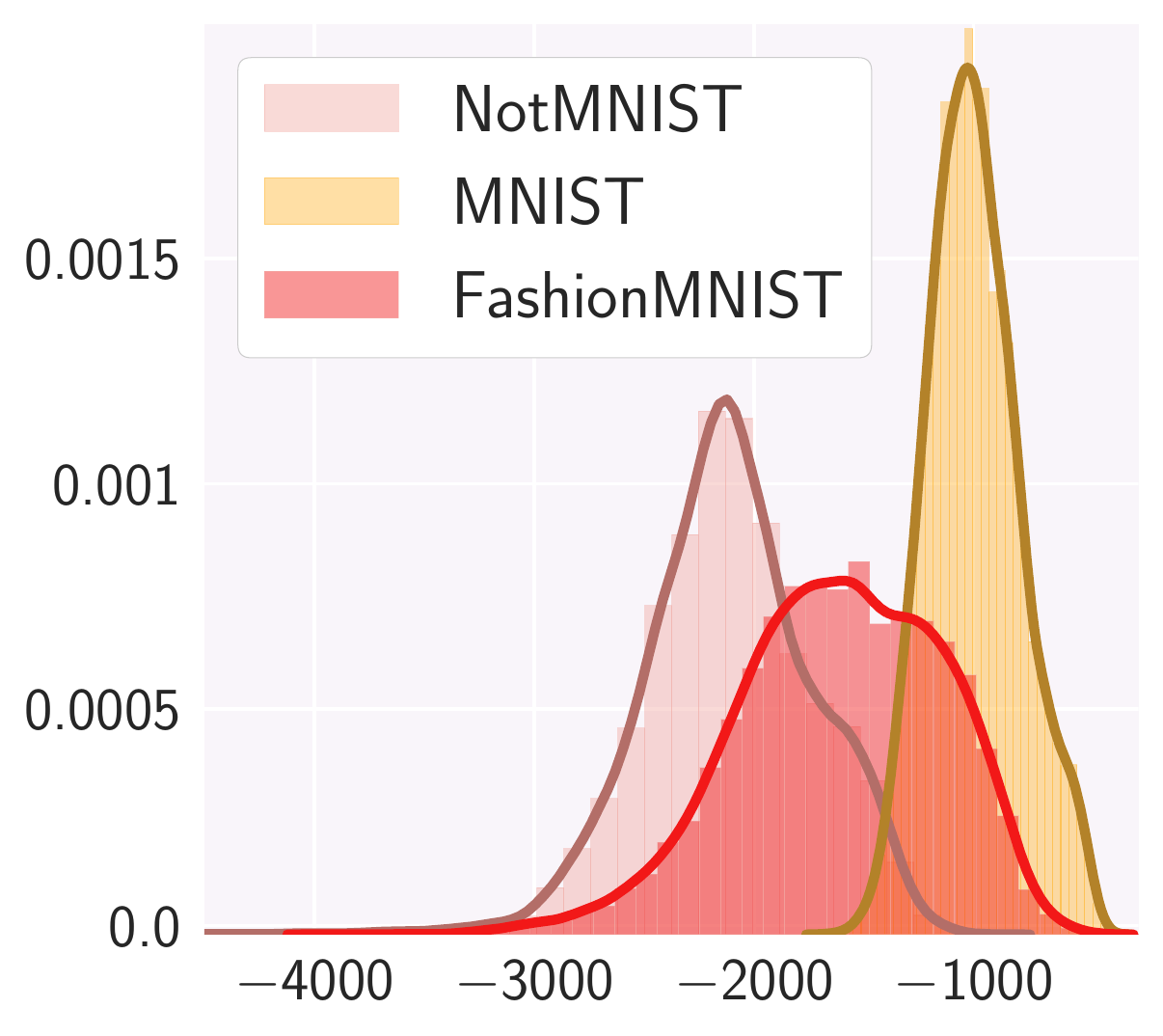}\\
		\rotatebox{90}{\quad\quad\quad CelebA}\quad
        \includegraphics[width=\panelwidth]{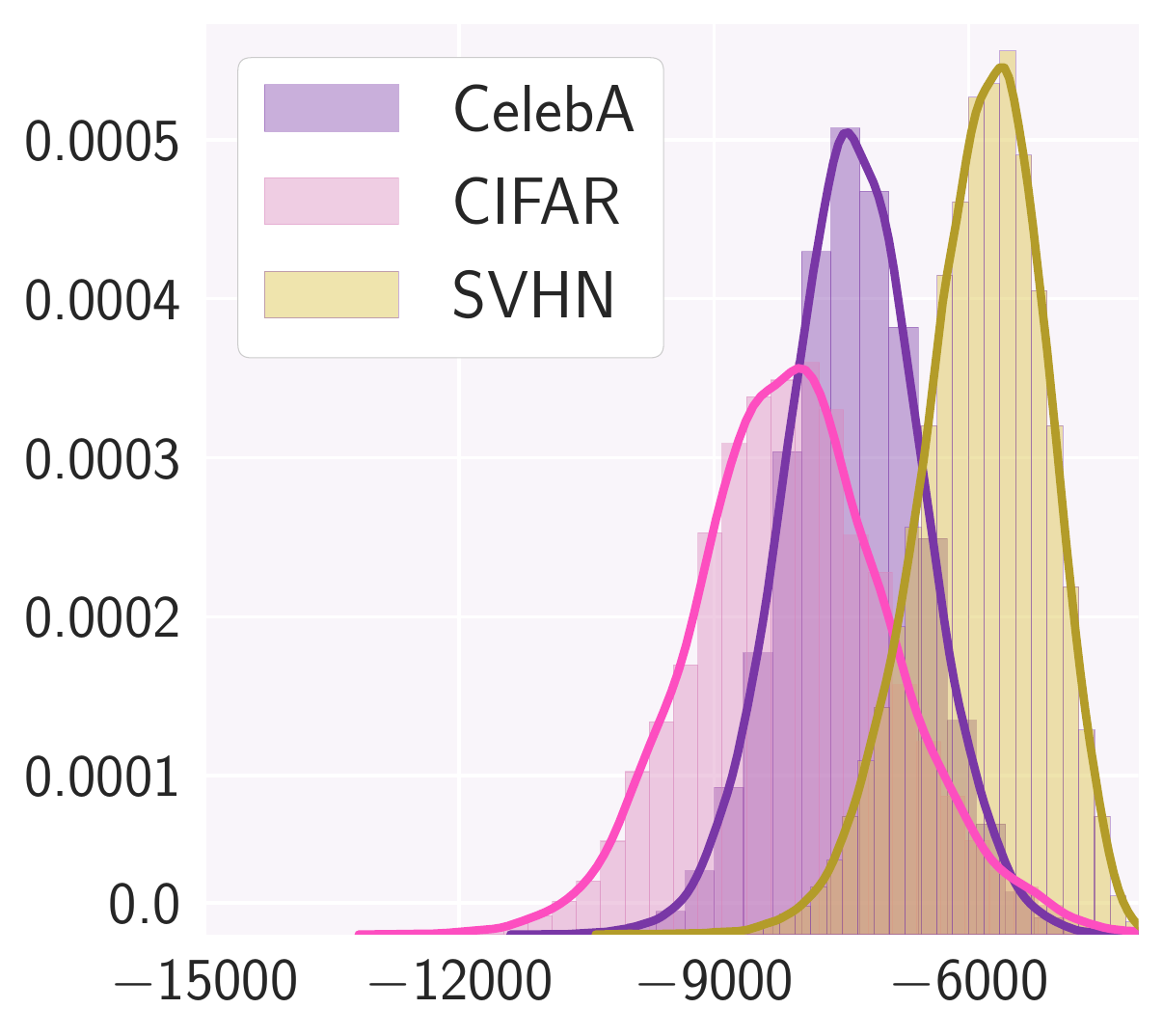}\\
        \end{tabular}
	}
    \hspace{\panelskip}
	\subfigure[Horizontal Mask]{
        \begin{tabular}{c}
		\includegraphics[width=\panelwidth]{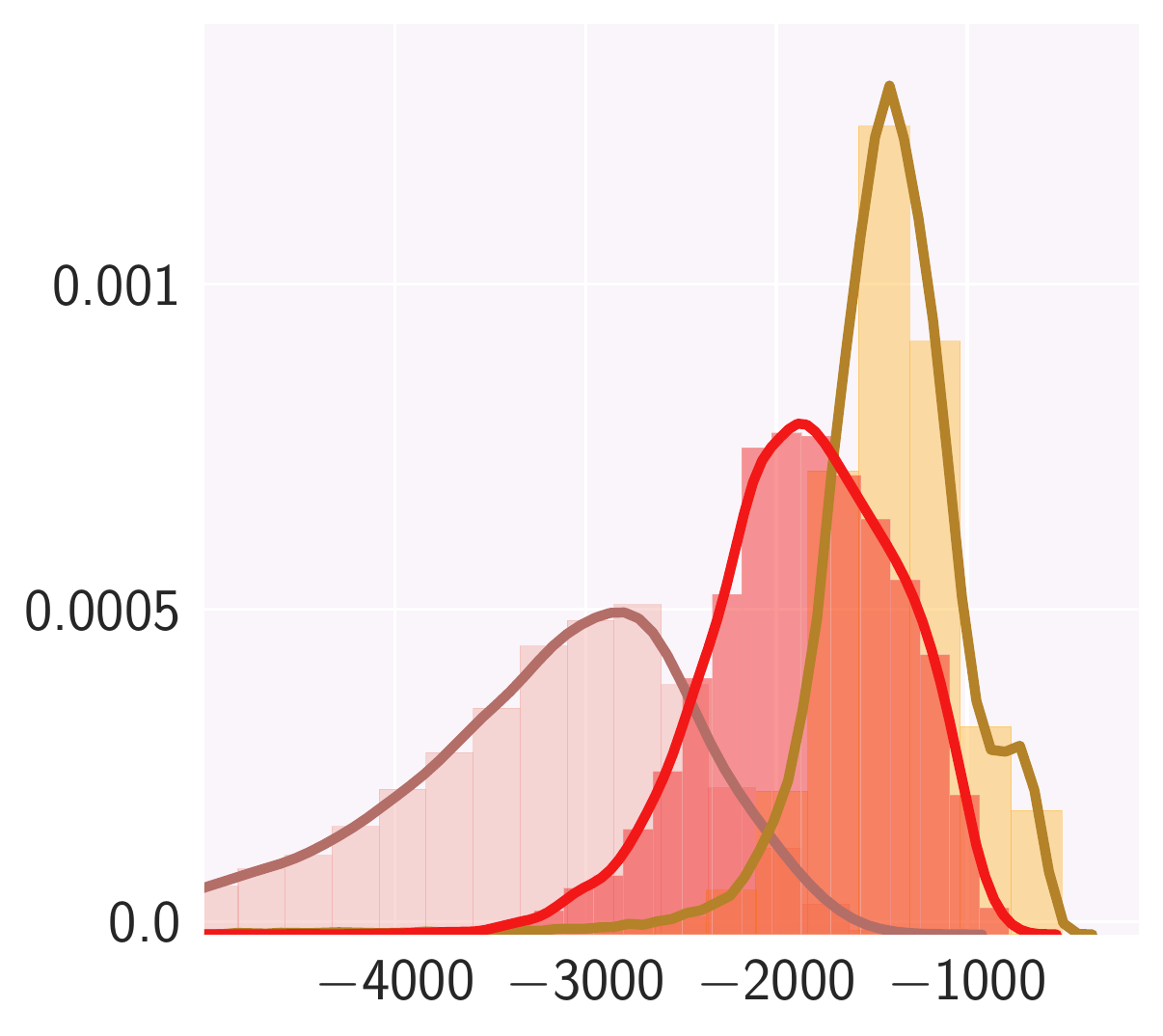}\\
		\includegraphics[width=\panelwidth]{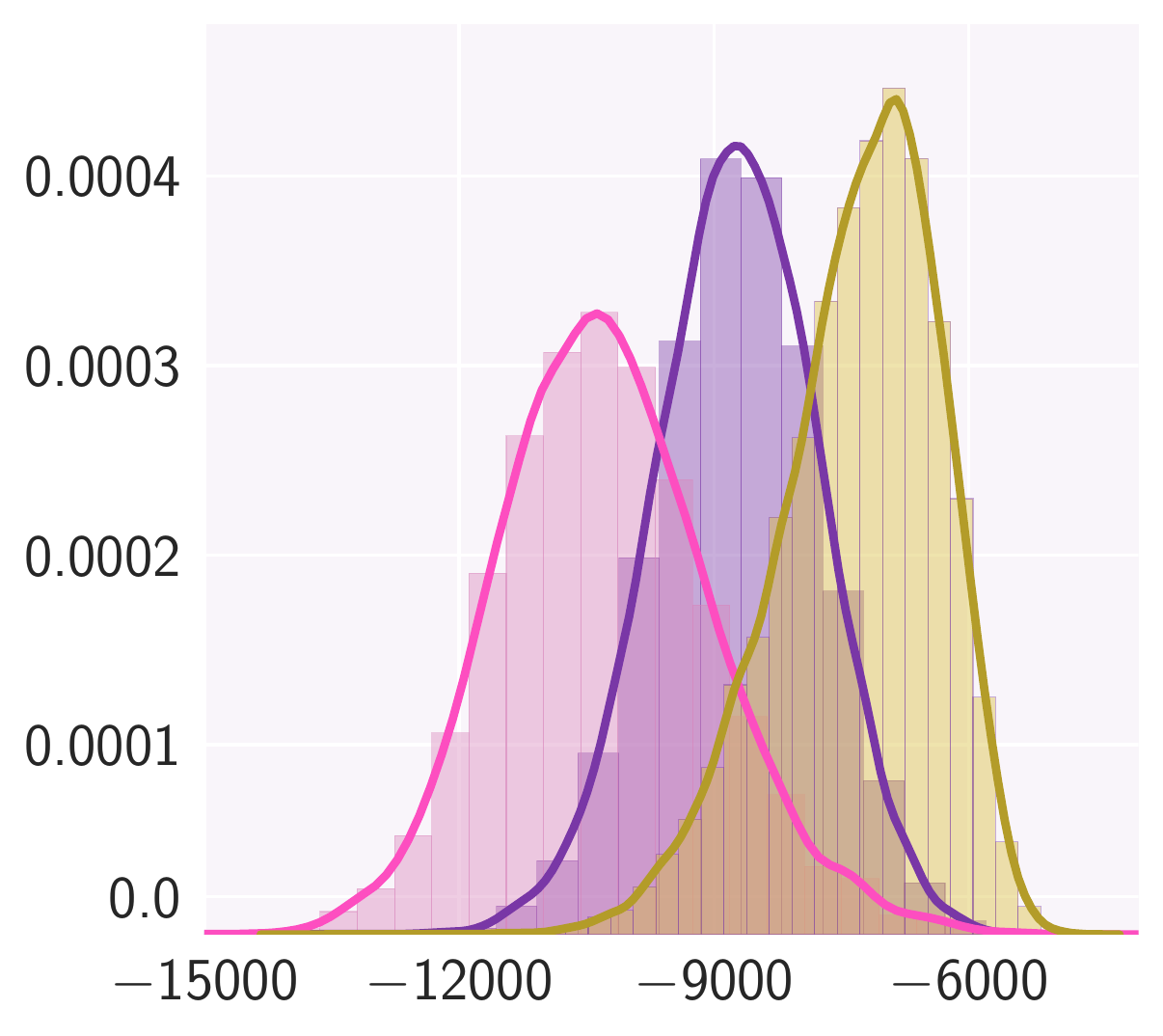}\\
        \end{tabular}
	}
    \hspace{\panelskip}
	\subfigure[Cycle-Mask]{
        \begin{tabular}{c}
		\includegraphics[width=\panelwidth]{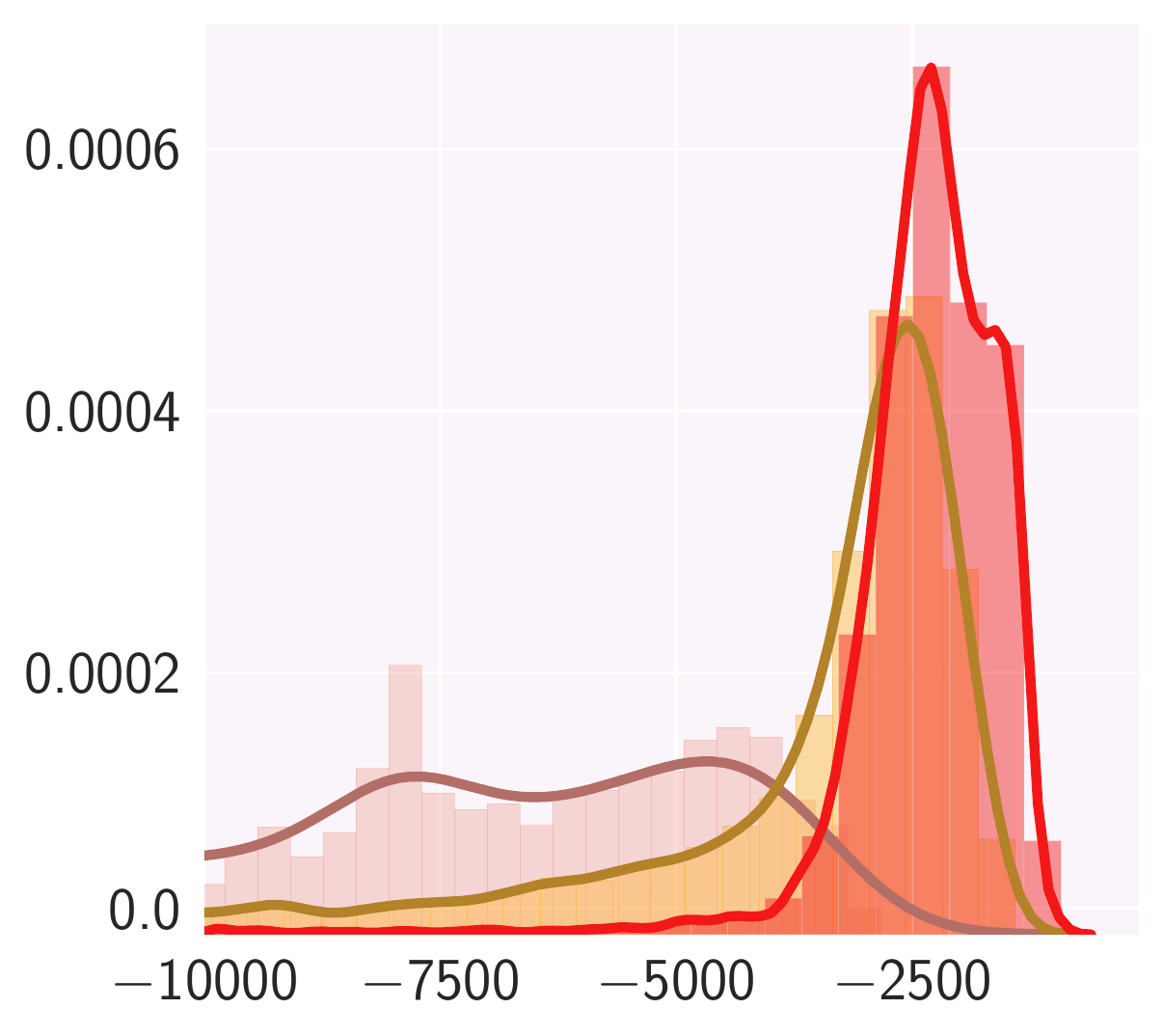}\\
		\includegraphics[width=\panelwidth]{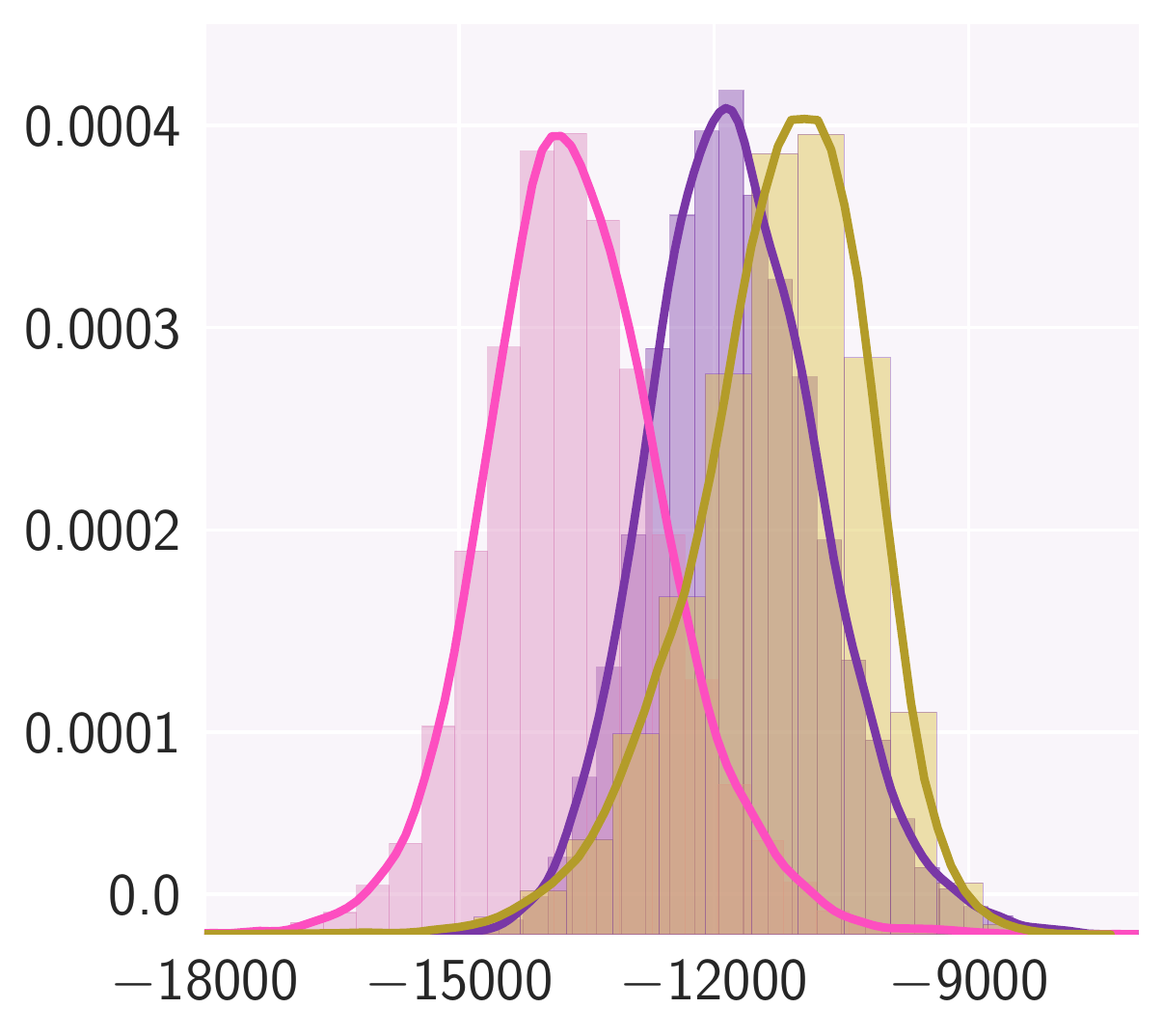}\\
        \end{tabular}
	}

    \subfigure[Checkerboard  Mask]{
        \begin{tabular}{c}
            \rotatebox{90}{\quad\quad\quad CelebA}~~
	        \includegraphics[width=\panelwidth]{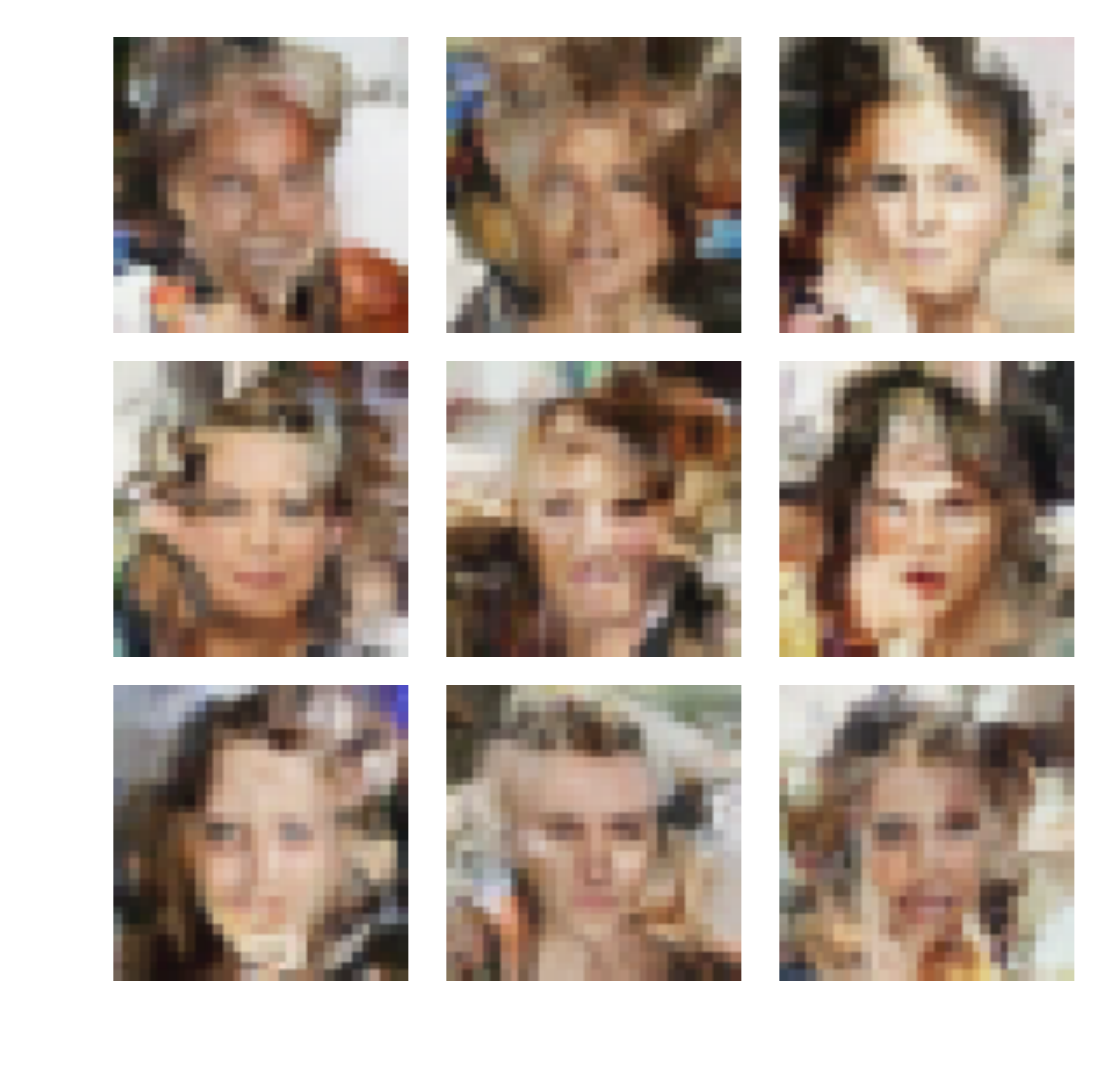} \\
            \rotatebox{90}{\quad~~ FashionMNIST}~~
	        \includegraphics[width=\panelwidth]{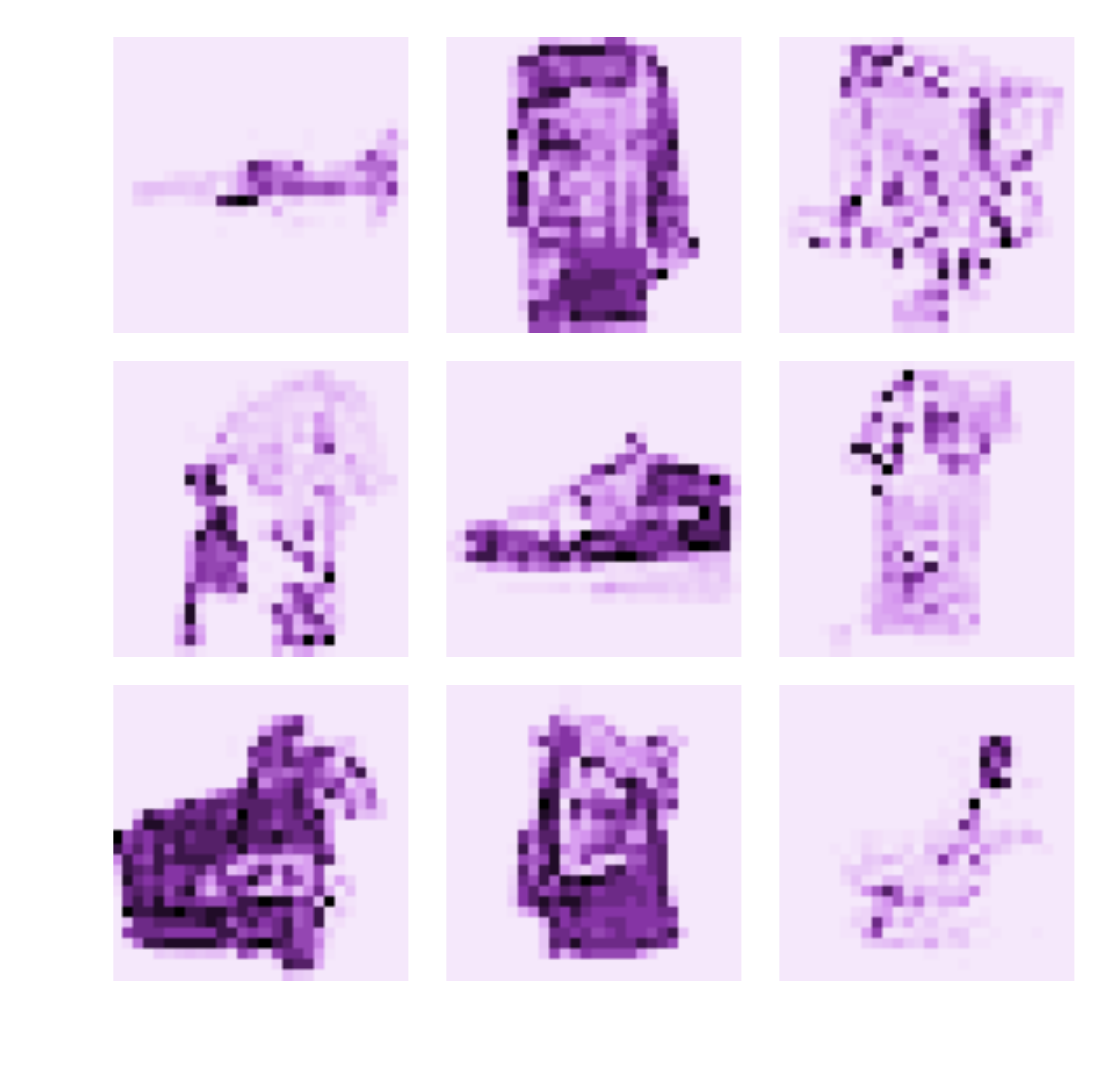}
        \end{tabular}
	}
    \hspace{\panelskip}
    \subfigure[Horizontal Mask]{
        \begin{tabular}{c}
	        \includegraphics[width=\panelwidth]{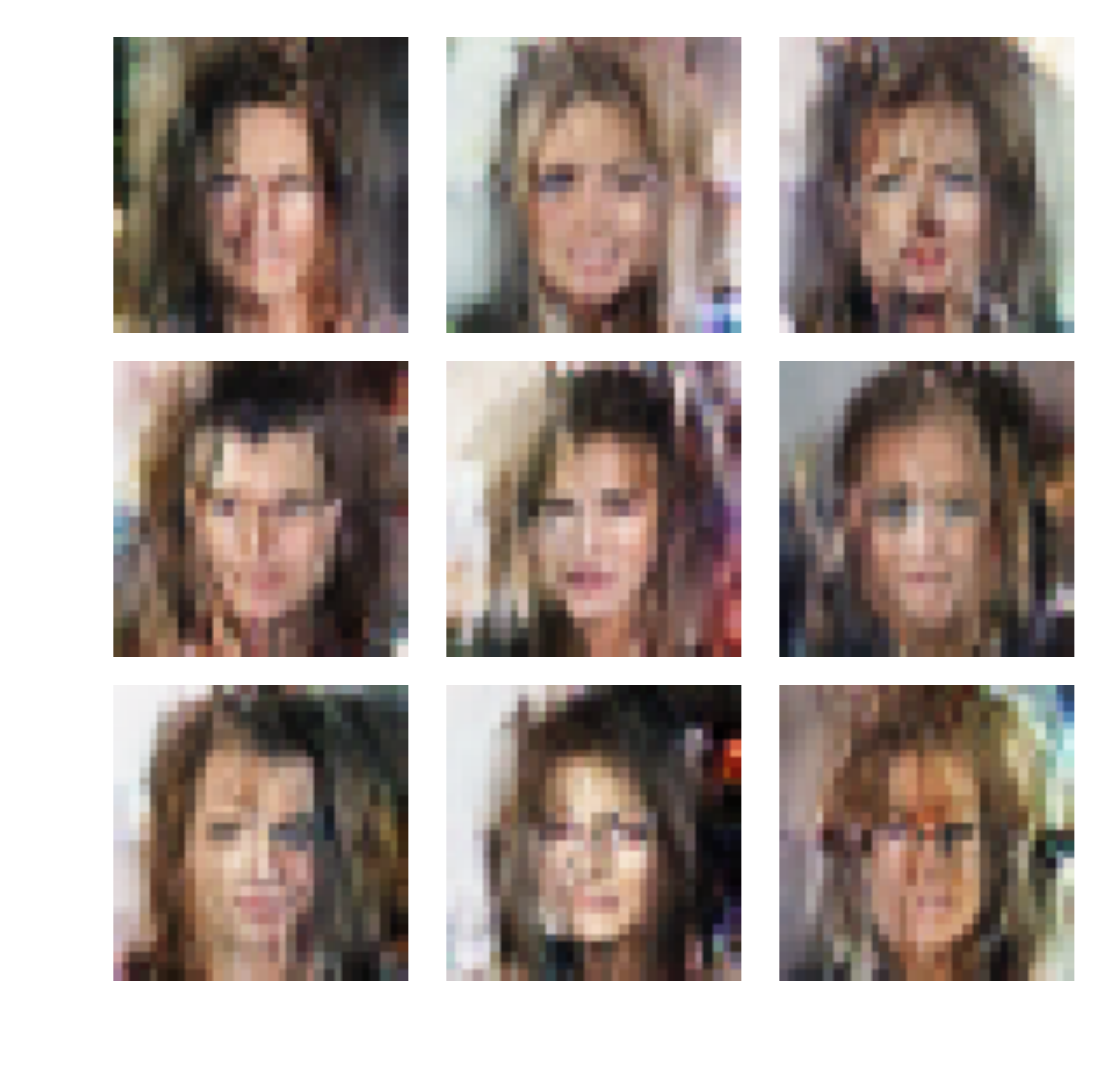} \\
	        \includegraphics[width=\panelwidth]{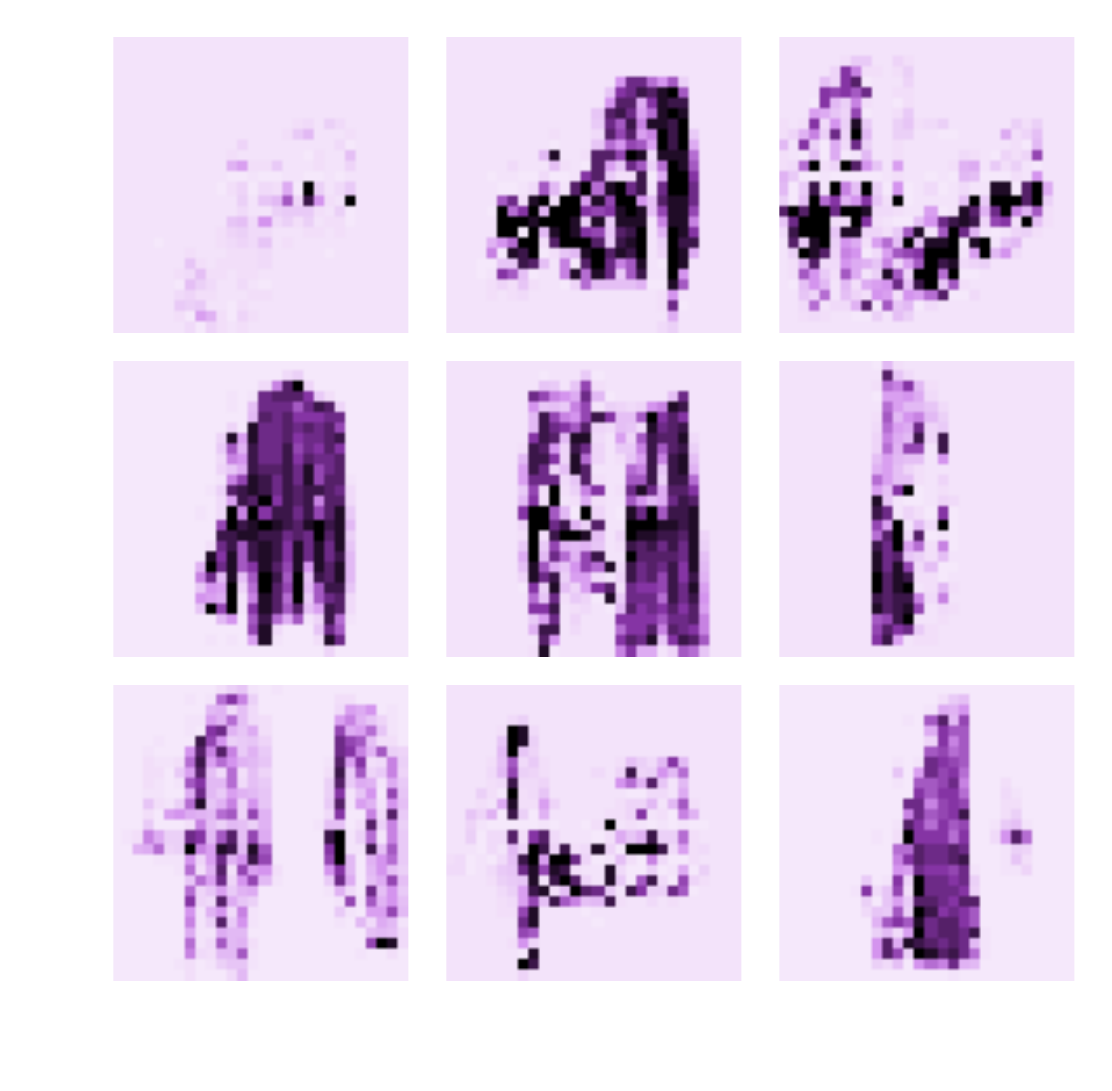}
        \end{tabular}
	}
    \hspace{\panelskip}
    \subfigure[Cycle-Mask]{
        \begin{tabular}{c}
	        \includegraphics[width=\panelwidth]{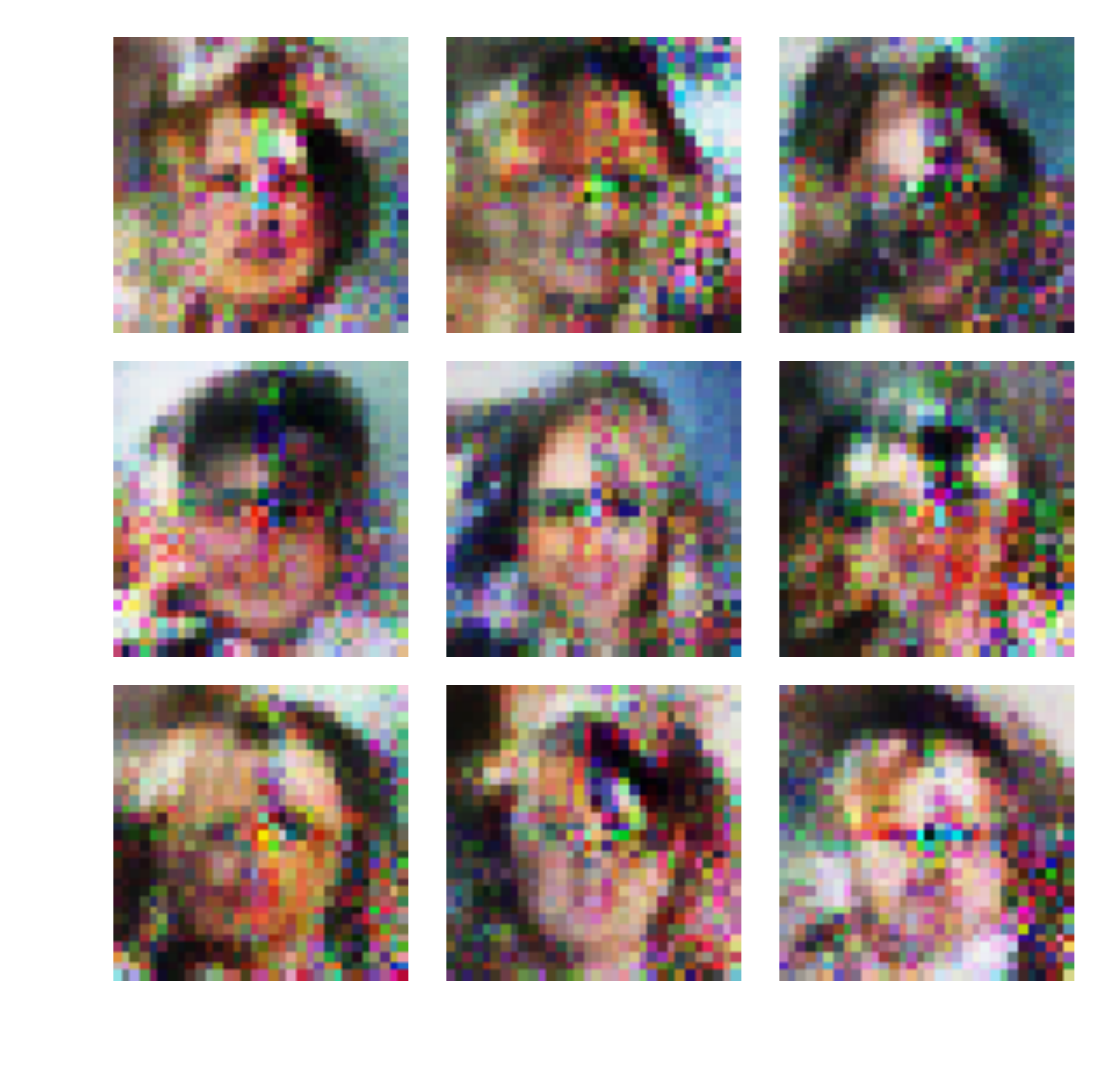} \\
	        \includegraphics[width=\panelwidth]{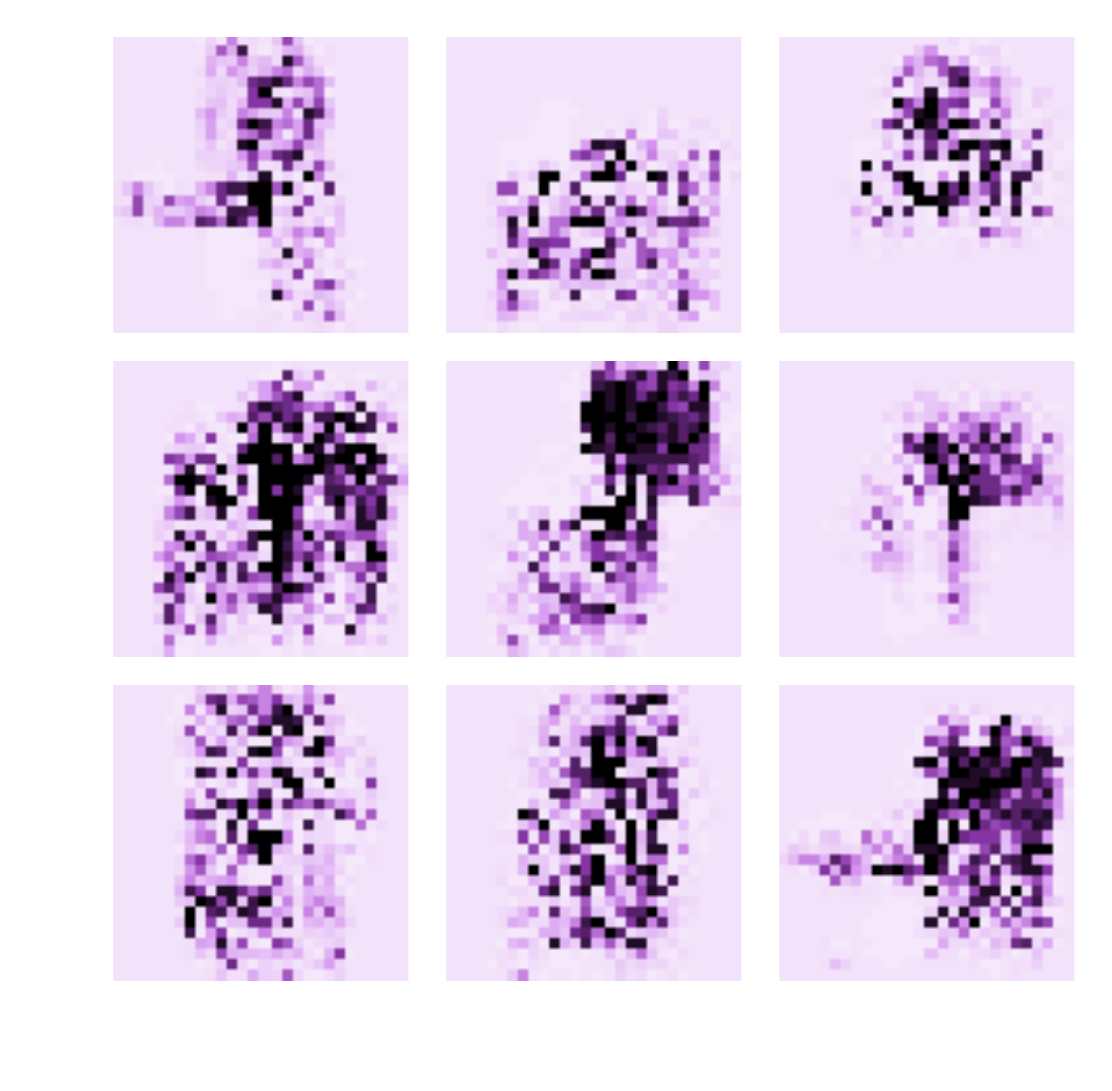}
        \end{tabular}
	}
    \vspace{-0.2cm}
	\caption{
        \textbf{Effect of masking strategy}
        The first two rows show log likelihood distribution for RealNVP models trained on FashionMNIST and CelebA with (a) checkerboard mask; (b) horizontal mask; and (c) cycle-mask. The third and the fourth rows show samples produced by the corresponding models.
	}
	\label{fig:app_mask_effect}
\end{figure}

\begin{figure}[h!]
	\def \panelwidth {0.22\textwidth}
	\def \panelskip {-0.3cm}
    \centering

    \vspace{-1cm}
	\subfigure[Baseline]{
        \rotatebox{90}{\quad\quad CelebA~~ LLs}~~
		\includegraphics[width=\panelwidth]{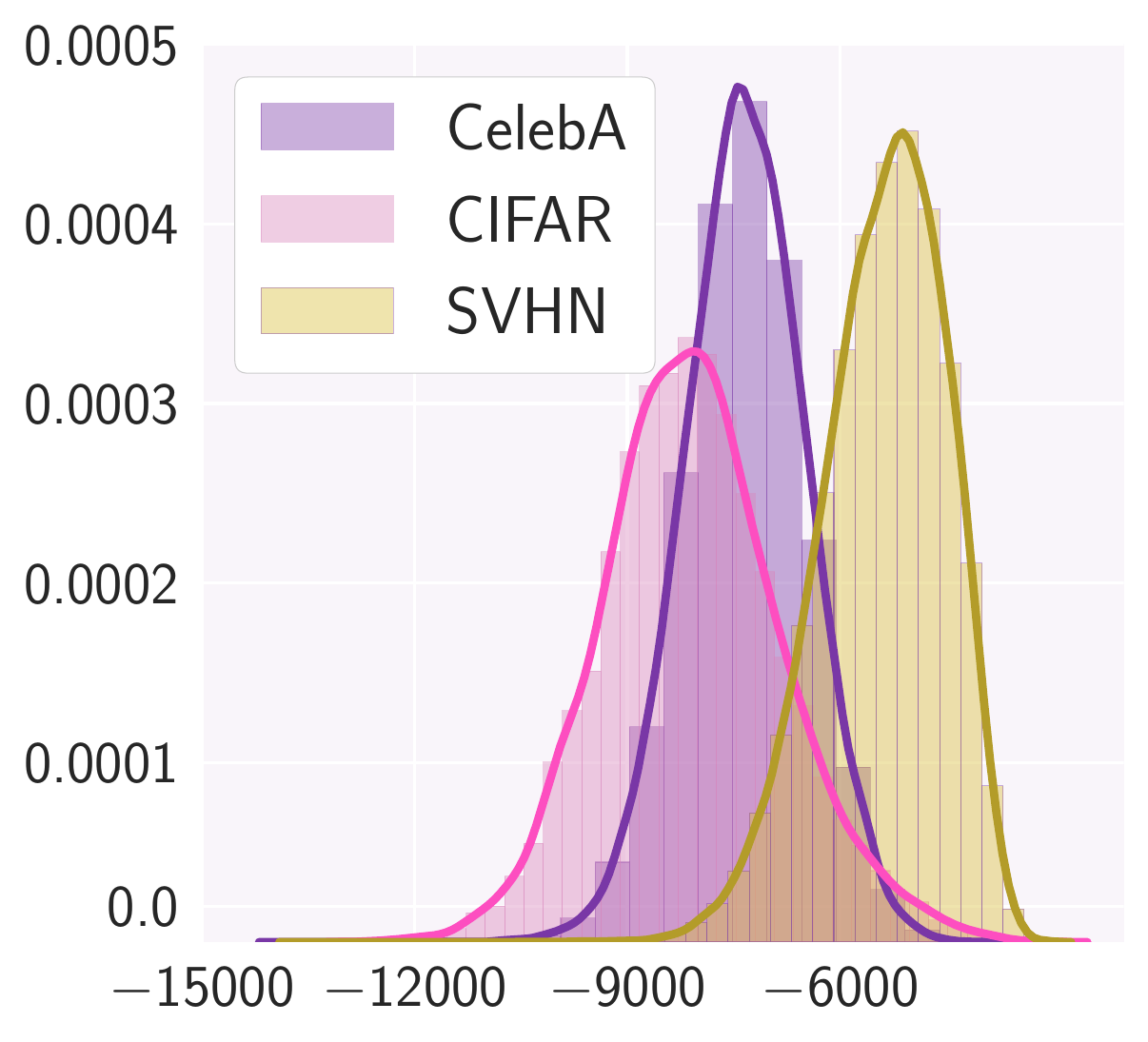}
	}
    \hspace{\panelskip}
	\subfigure[$l = 150$]{
		\includegraphics[width=\panelwidth]{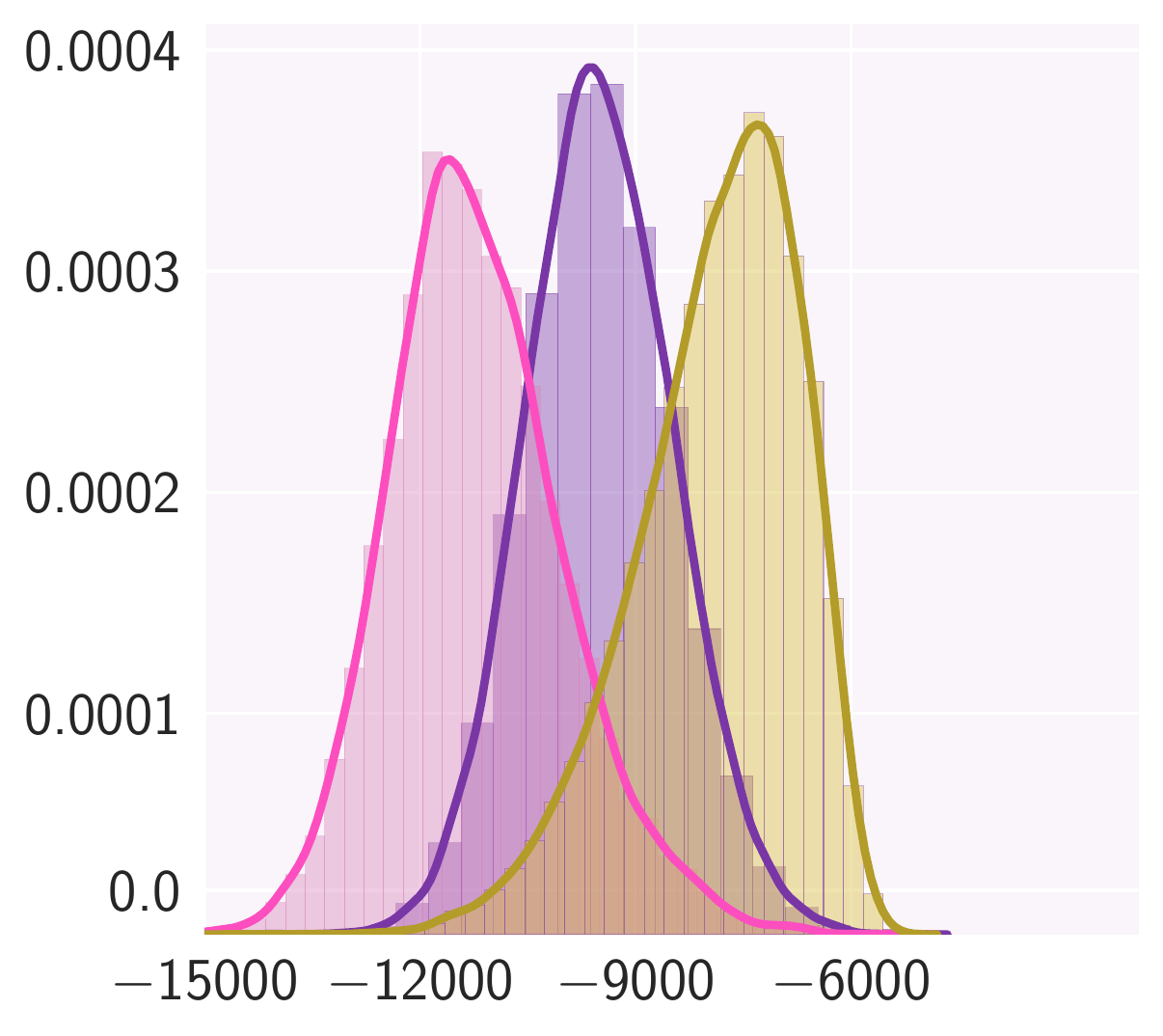}
	}
    \hspace{\panelskip}
	\subfigure[$l = 80$]{
		\includegraphics[width=\panelwidth]{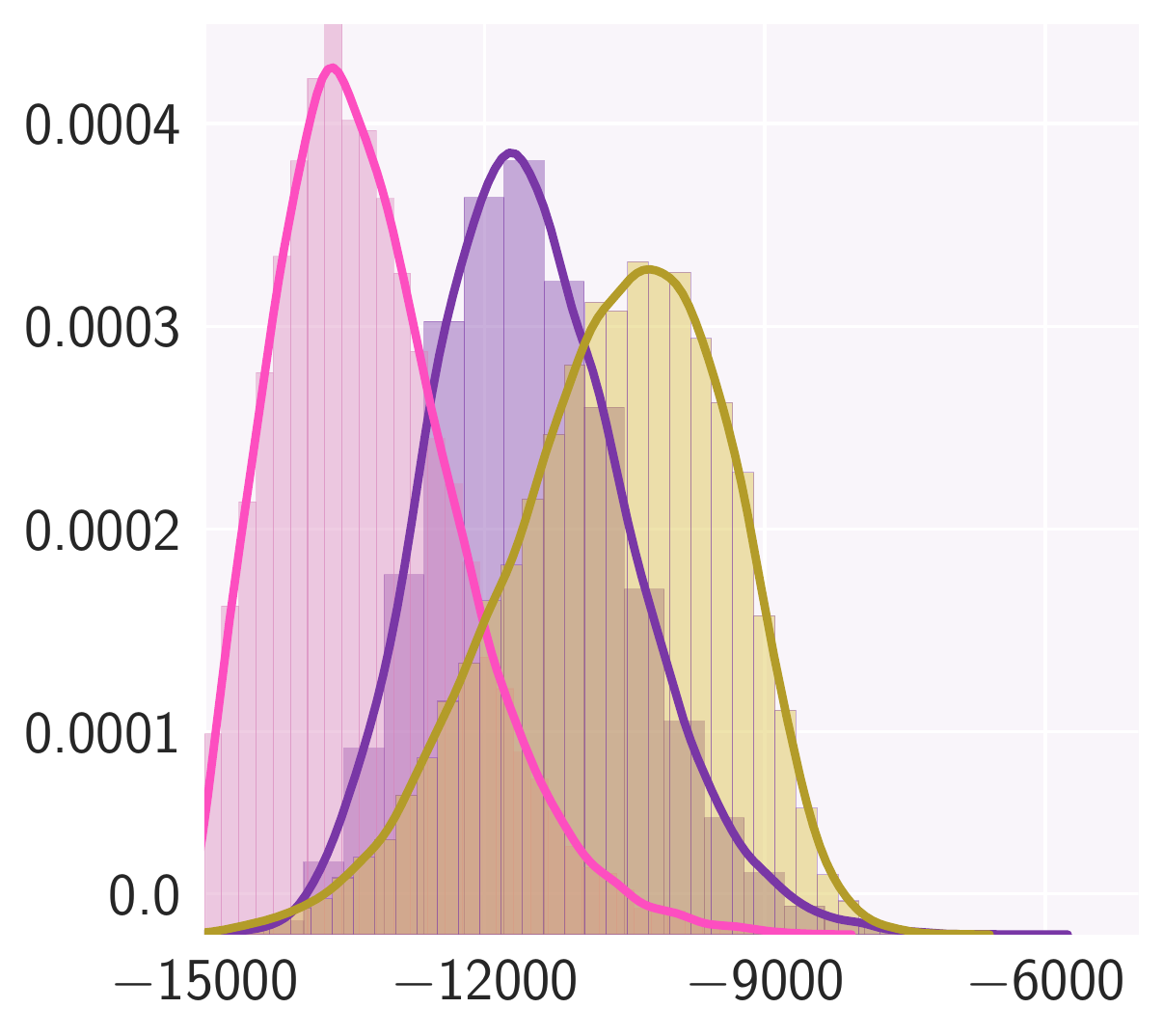}
	}
    \hspace{\panelskip}
	\subfigure[$l = 30$]{
		\includegraphics[width=\panelwidth]{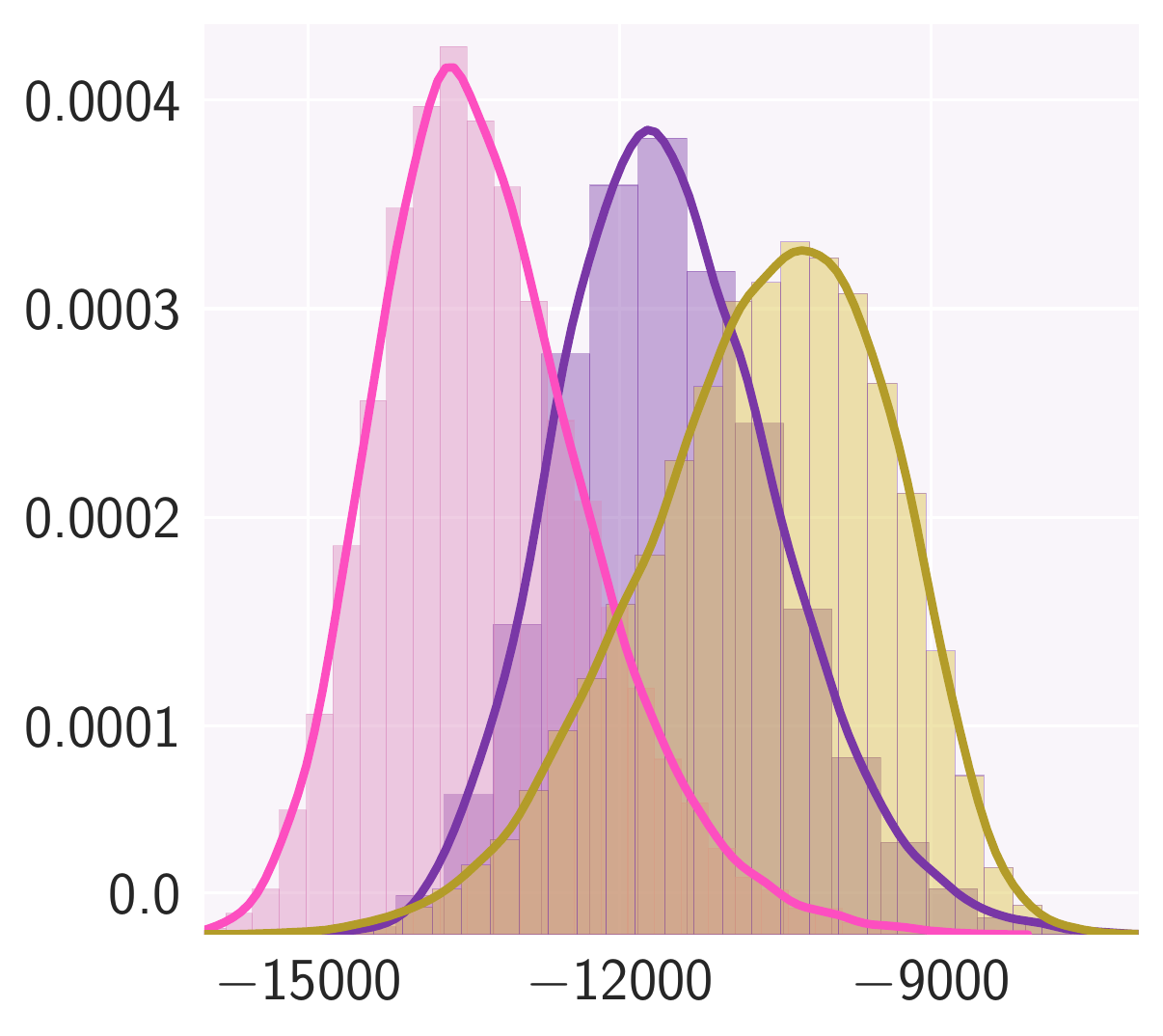}
	}

    \subfigure[Baseline]{
    \rotatebox{90}{\quad\quad\quad~ CelebA}~~
	\includegraphics[width=\panelwidth]{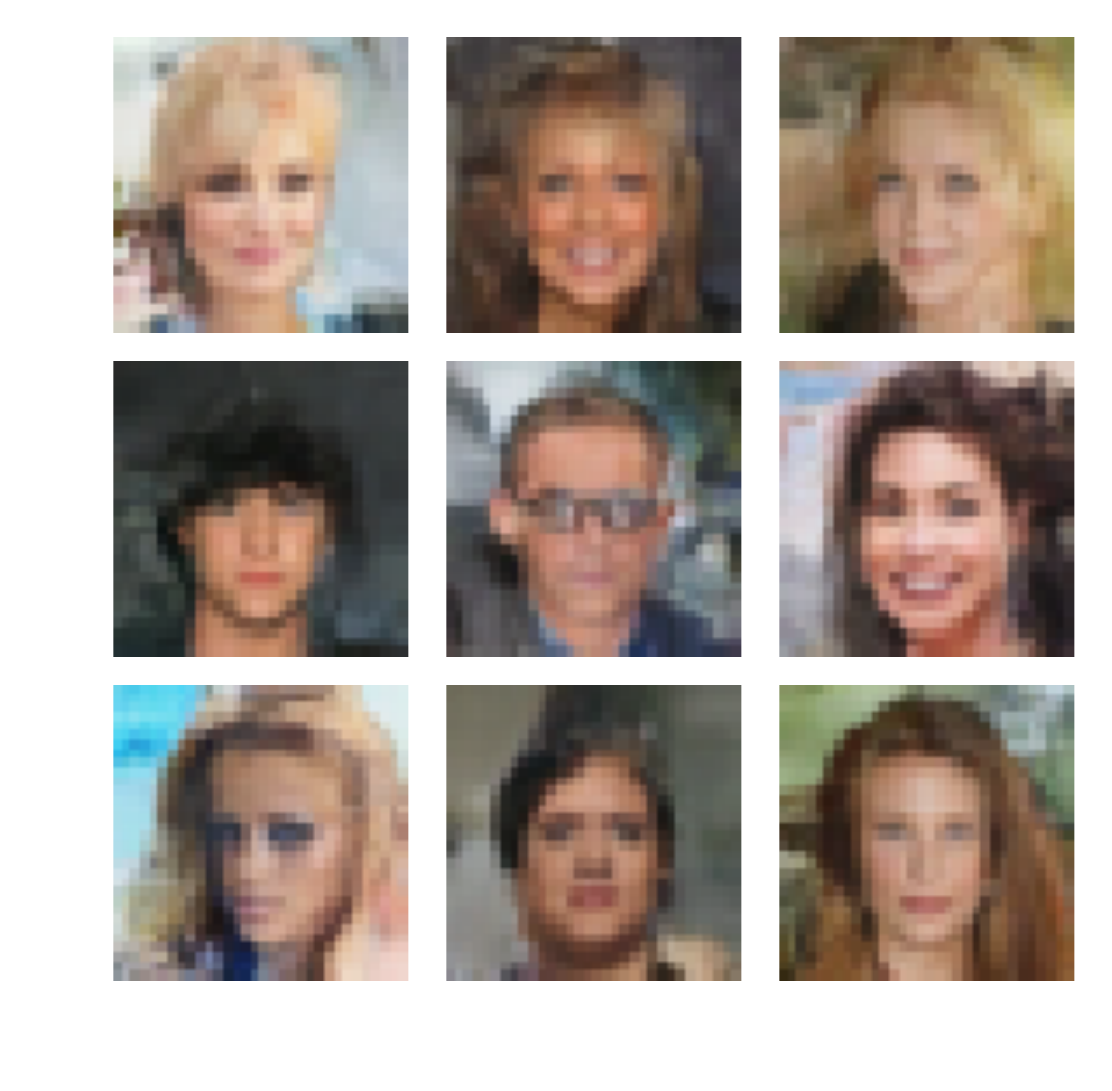}
	}
    \hspace{\panelskip}
	\subfigure[$l = 30$]{
	\includegraphics[width=\panelwidth]{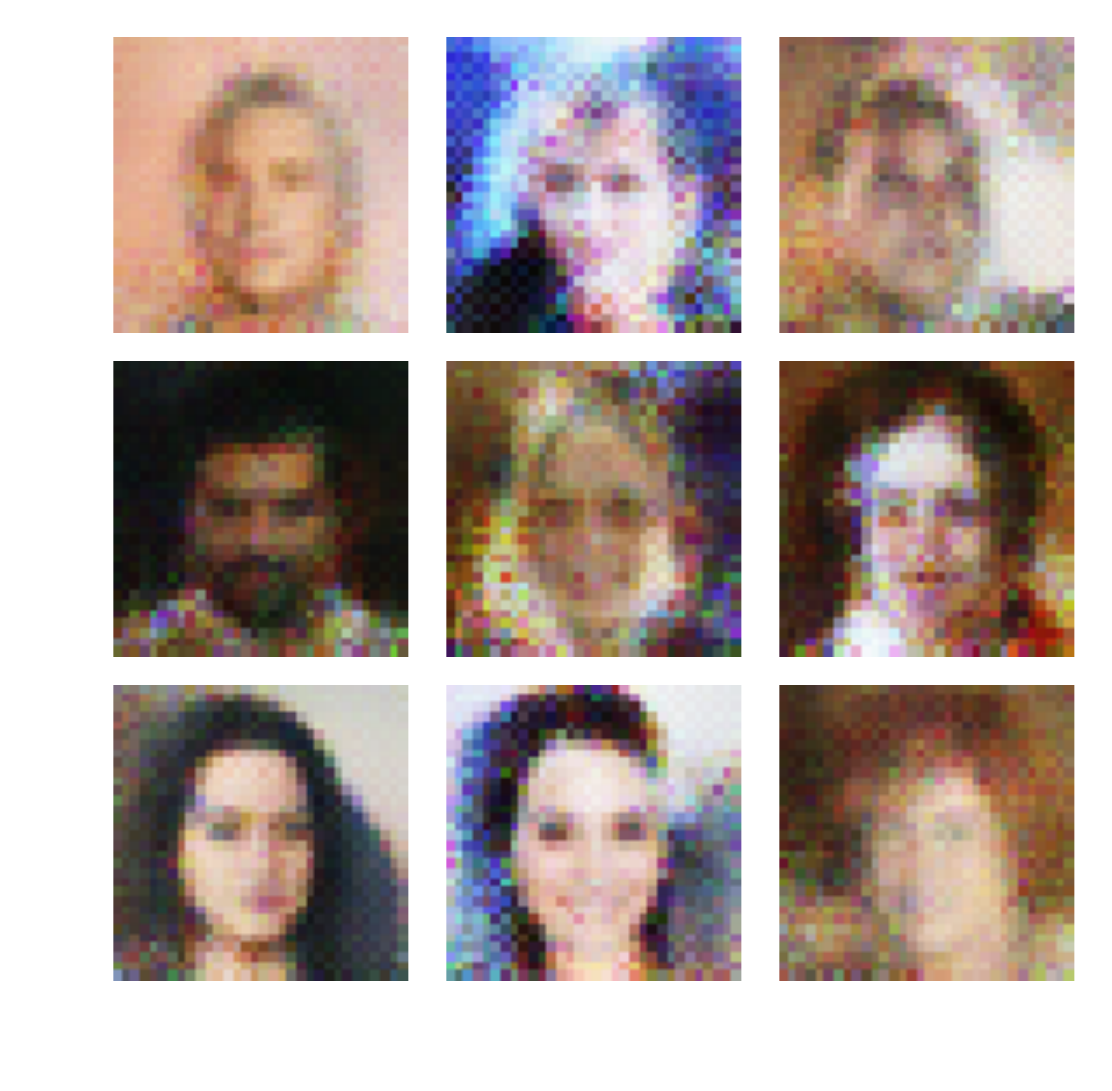}
	}
    \hspace{\panelskip}
	\subfigure[$l = 80$]{
	\includegraphics[width=\panelwidth]{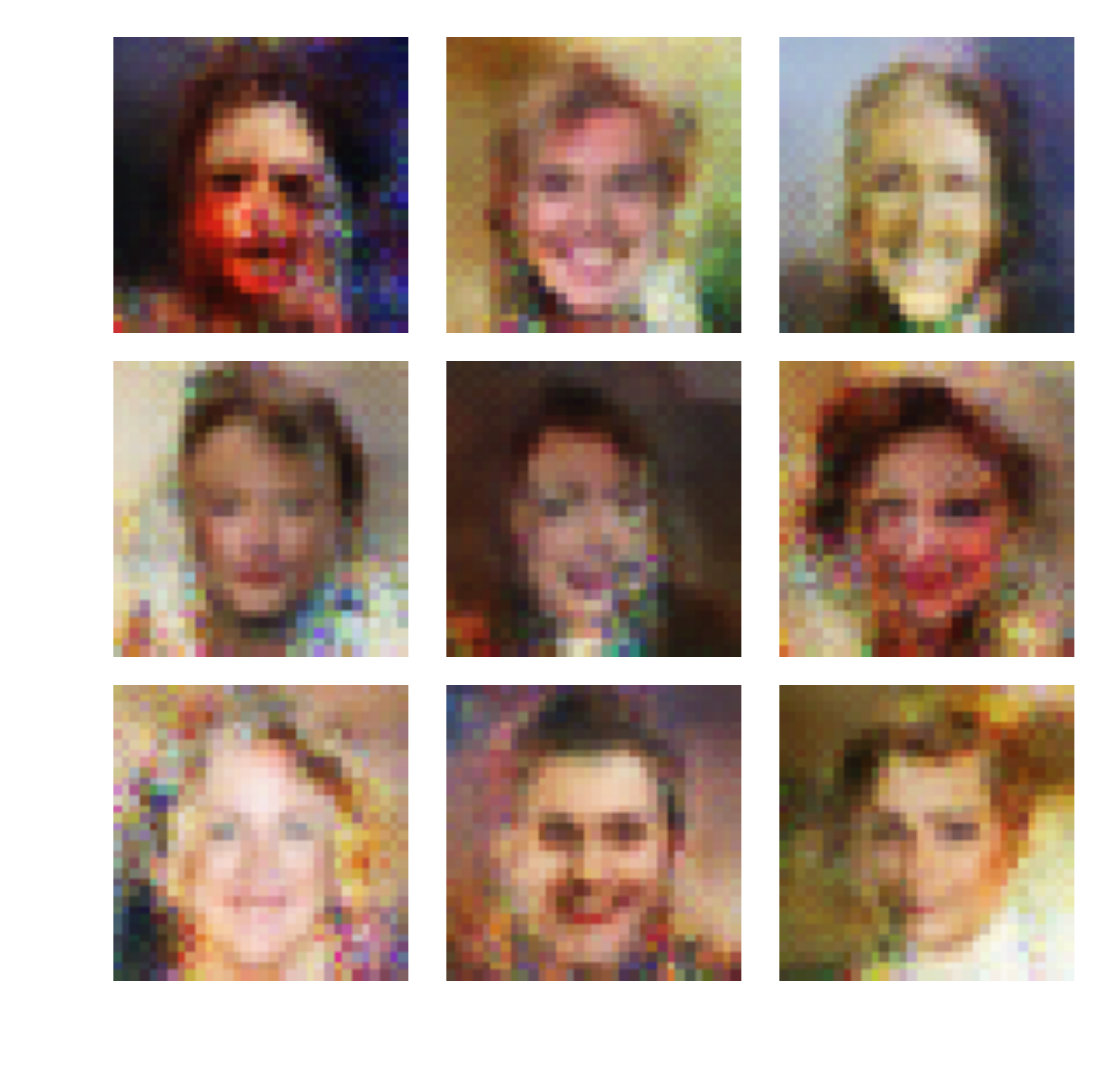}
	}
    \hspace{\panelskip}
	\subfigure[$l = 150$]{
	\includegraphics[width=\panelwidth]{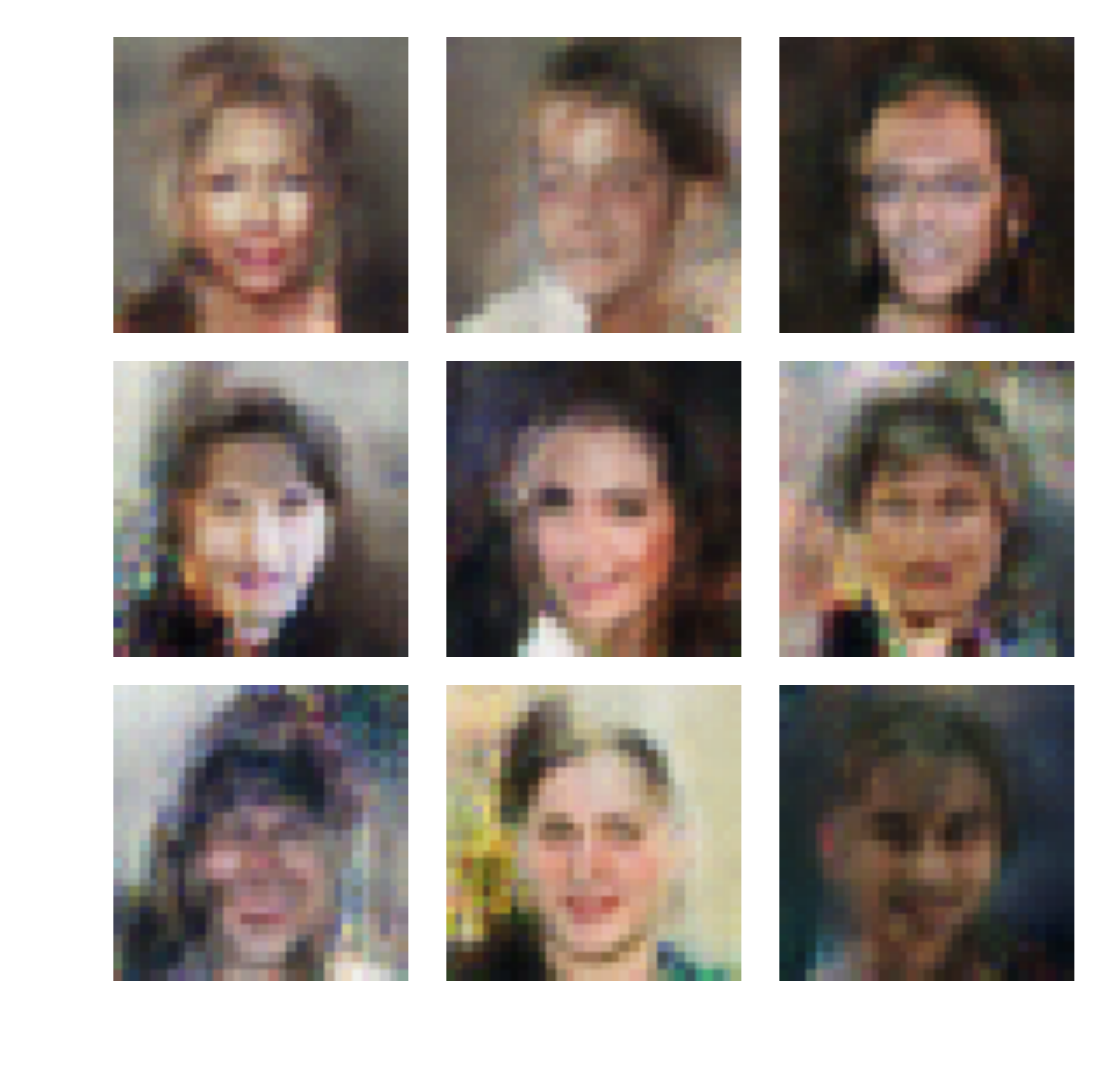}
	}

    \subfigure[Baseline]{
    \rotatebox{90}{\quad~~ FashionMNIST}~~
	\includegraphics[width=\panelwidth]{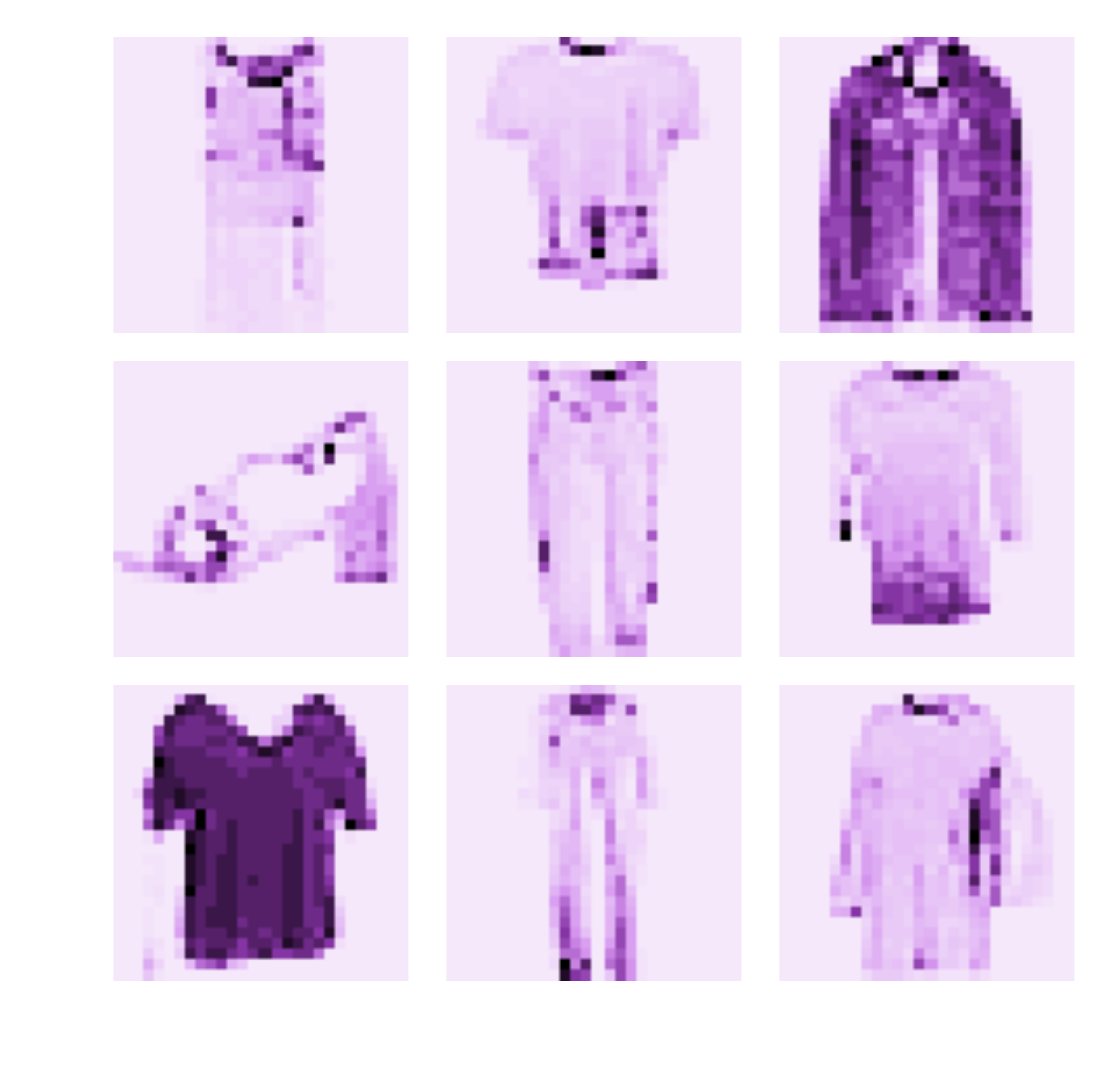}
	}
    \hspace{\panelskip}
	\subfigure[$l = 10$]{
	\includegraphics[width=\panelwidth]{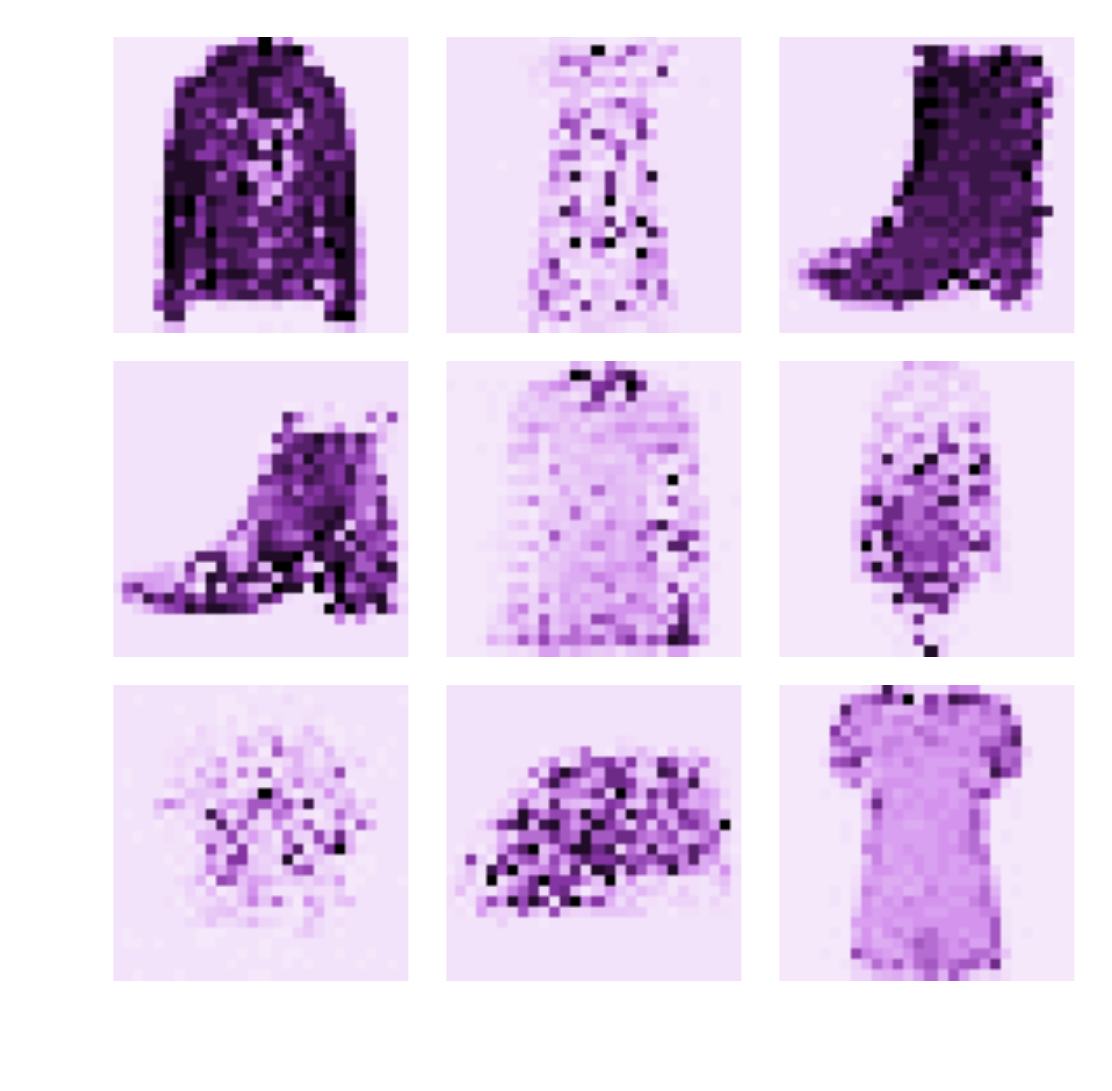}
	}
    \hspace{\panelskip}
	\subfigure[$l = 50$]{
	\includegraphics[width=\panelwidth]{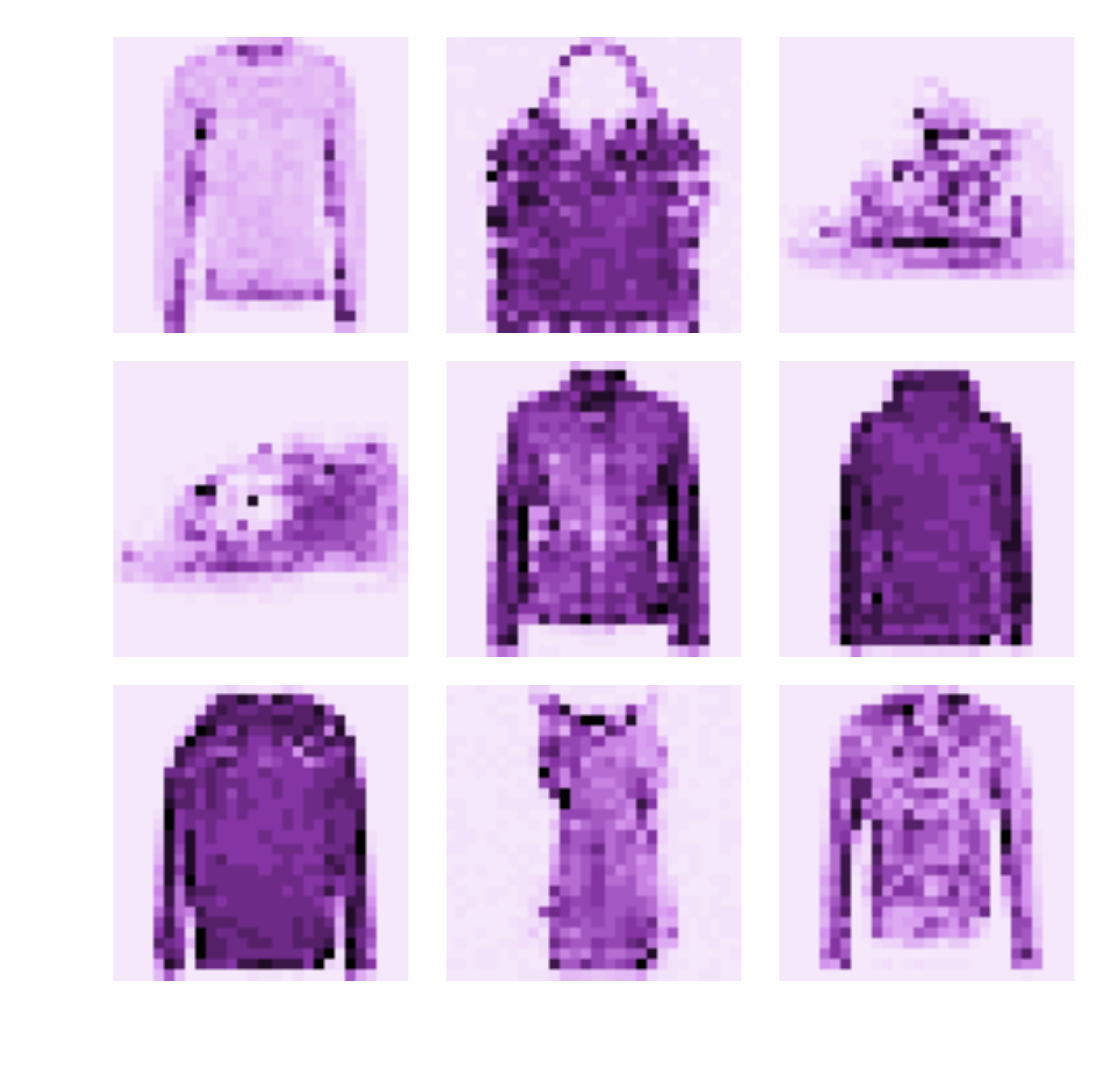}
	}
    \hspace{\panelskip}
	\subfigure[$l = 100$]{
	\includegraphics[width=\panelwidth]{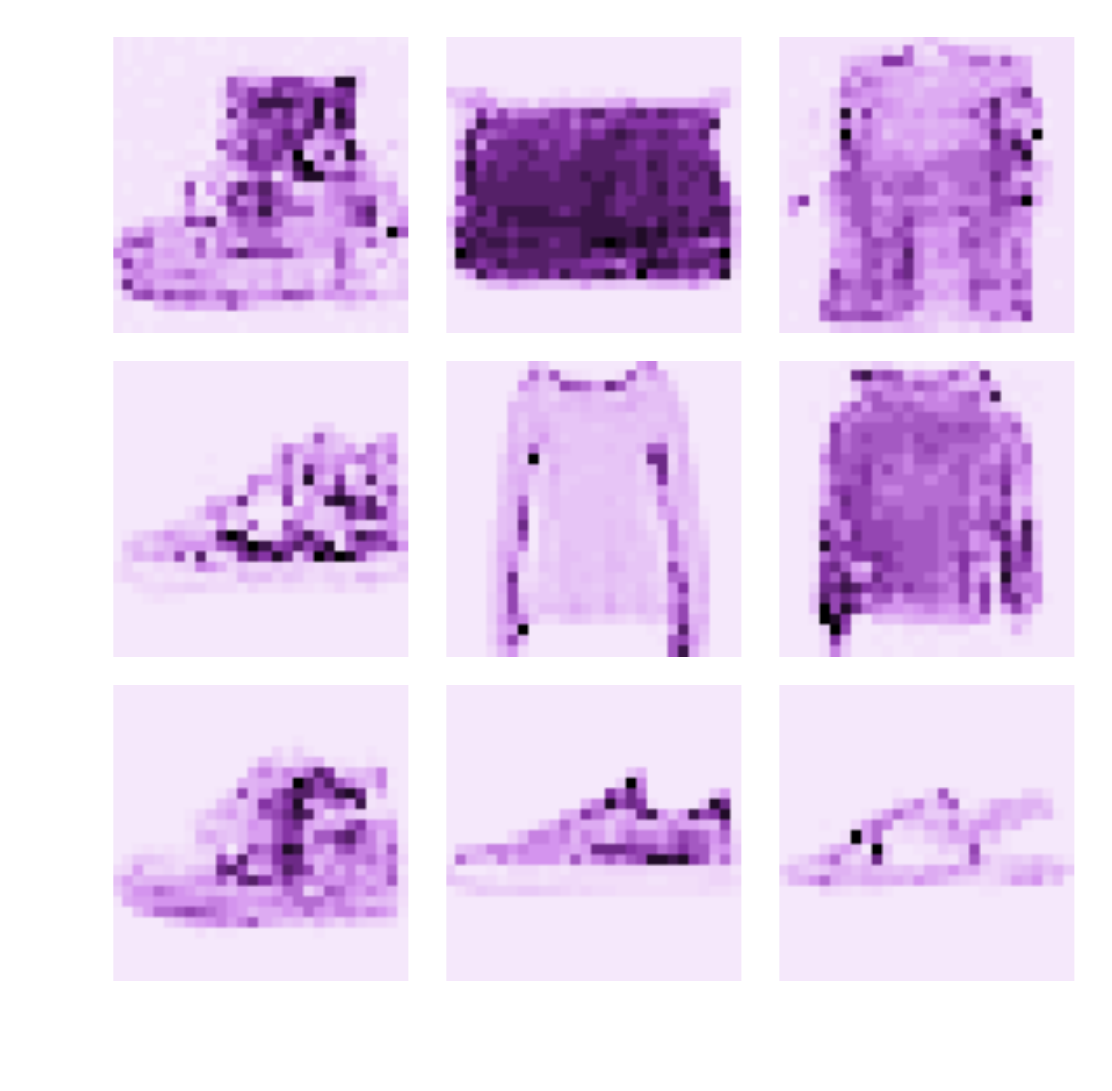}
	}

    \subfigure[RealNVP with $l=10$ trained on FashionMNIST]{
    \begin{tabular}{c}
	    \includegraphics[height=0.23\linewidth]{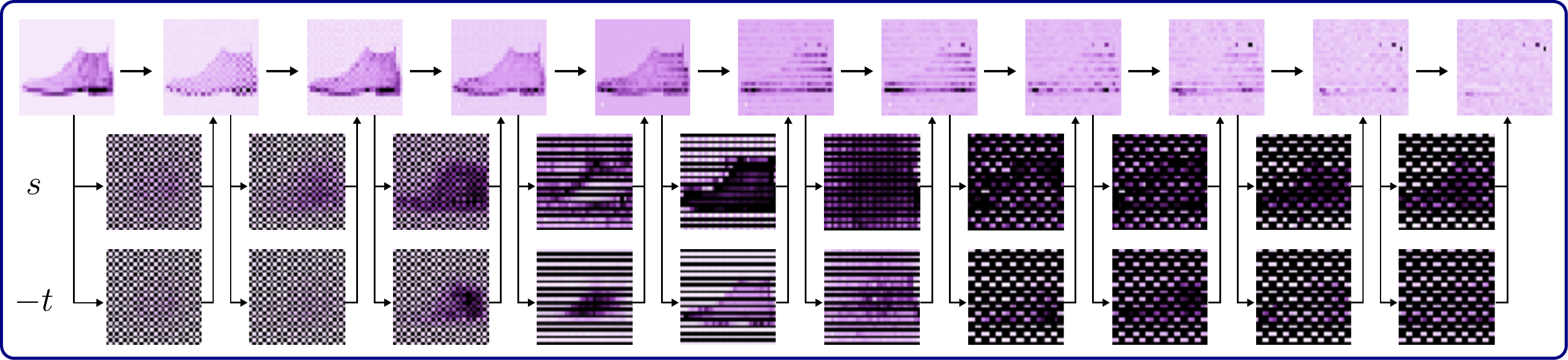}
        \\
	    \includegraphics[height=0.23\linewidth]{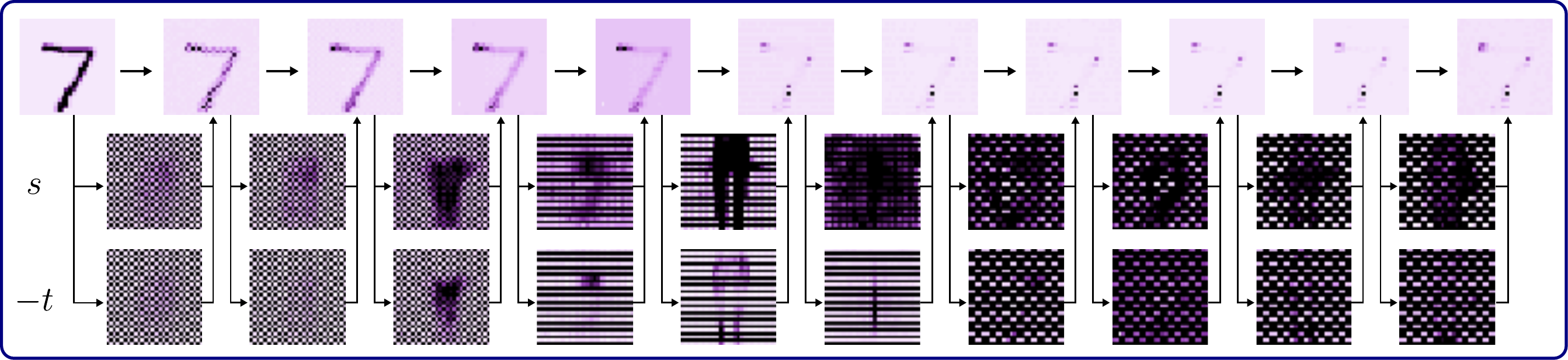}
    \end{tabular}
    }
	\caption{
        \textbf{Effect of $st$-network capacity.}
	    \textbf{The first row} shows the histogram of log likelihoods for a RealNVP model trained on CelebA dataset: 
        \textbf{(a)} for a baseline model, and \textbf{(b)-(d)} for models with different bottleneck dimensions $l$ in $st$-network. 
	    \textbf{The second and third rows} show samples from RealNVP model trained on CelebA and FashionMNIST respectively: 
        \textbf{(e)} and \textbf{(i)} for baseline models, and 
        \textbf{(f)-(h)} and \textbf{(j)-(l)} for models with different bottleneck dimensions $l$.
	    In \textbf{(m)}, we show the visualization of the coupling layer activations and $st$-network predictions for a RealNVP model trained on FashionMNIST with a bottleneck of dimension $l = 10$.
	    The top half shows the visualizations for an in-distribution FashionMNIST image 
        while the bottom half shows the visualizations for an OOD MNIST image. 
        $st$-network with restricted capacity cannot accurately predict masked 
        pixels of the OOD image in the intermediate coupling layers. 
        Moreover, in the middle coupling layers for the MNIST input the activations
        resemble FashionMNIST images in $s$ and $t$ predictions.
	}
	\label{fig:st_coupling_samples}
    \vspace{-.3cm}
\end{figure}

\FloatBarrier

\textbf{Changing the architecture of $st$-networks}\quad
In Figure \ref{fig:st_coupling_samples}, we show likelihood distributions, samples 
and coupling layer visualization for RealNVP model with $st$-network with a 
bottleneck trained on FashionMNIST and CelebA datasets. 
The considered bottleneck dimensions for FashionMNIST are $\{10, 50, 100\}$, and
for CelebA the dimensions are $\{30, 80, 150\}$. 
In the baseline RealNVP model, we use a standard deep convolutional residual 
network without additional skip connections from the intermediate layers to the 
output which were used in \citet{dinh2016density}.

\section{Samples}
\label{sec:app_samples}

In Figure \ref{fig:cm_samples}, we show samples for RealNVP and Glow models trained on CelebA, CIFAR-10, SVHN, FashionMNIST and MNIST,
and a RealNVP model trained on ImageNet $64 \times 64$ and CelebA $64 \times 64$.

\subsection{Latent variable resampling}

To further understand the structure of the latent representations learned by the flow, 
we study the effect of resampling part of the latent representations corresponding
to images from different datasets from the base Gaussian distribution.
In Figure \ref{fig:resampling}, using a RealNVP model trained on CelebA we compute 
the latent representations corresponding to input images from CelebA, SVHN, and CIFAR-10 datasets, 
and randomly re-sample the subset of latent variables corresponding to a $10 \times 10$ 
square in the center of the image (to find the corresponding latent variables we apply the
squeeze layers from the flow to the $32\times32$ mask).
We then invert the flow and compute the reconstructed images from the altered latent representations.

Both for in-distribution and out-of-distribution data, the model almost ideally 
preserves the part of the image other than the center, confirming the alignment 
between the latent space and the original input space discussed in Section \ref{sec:latent_space}.
The model adds a face to the resampled part of the image, preserving the consistency with the background to some extent.

\begin{figure}[h]
    \centering
    \subfigure[Celeb-A]{
	\includegraphics[width=0.25\linewidth]{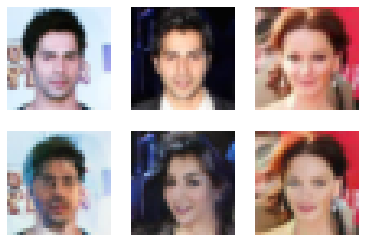}
	}
	\subfigure[CIFAR-10]{
	\includegraphics[width=0.25\linewidth]{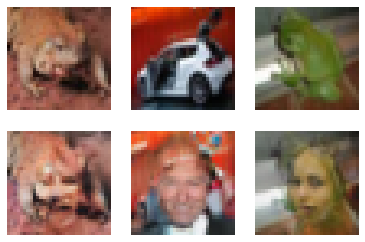}
	}
	\subfigure[SVHN]{
	\includegraphics[width=0.25\linewidth]{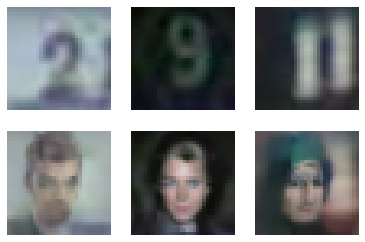}
	}
	\caption{
    \textbf{Latent variable resampling.}
	Original images (\textbf{top row}) and reconstructions with the latent variables corresponding to a $10 \times 10$ square in the center of the image randomly re-sampled for a RealNVP model trained on Celeb-A (\textbf{bottom row}).
	The model adds faces (as it was trained Celeb-A) to the part of the image that is being re-sampled.
	}
	\label{fig:resampling}
    \vspace{-.3cm}
\end{figure}

\begin{figure}[t]
    \centering
	\def \panelwidth {0.22\textwidth}
	\def \panelskip {-0.1cm}
    \hspace{\panelskip}

	\subfigure[RNVP, CelebA]{
	\includegraphics[width=\panelwidth]{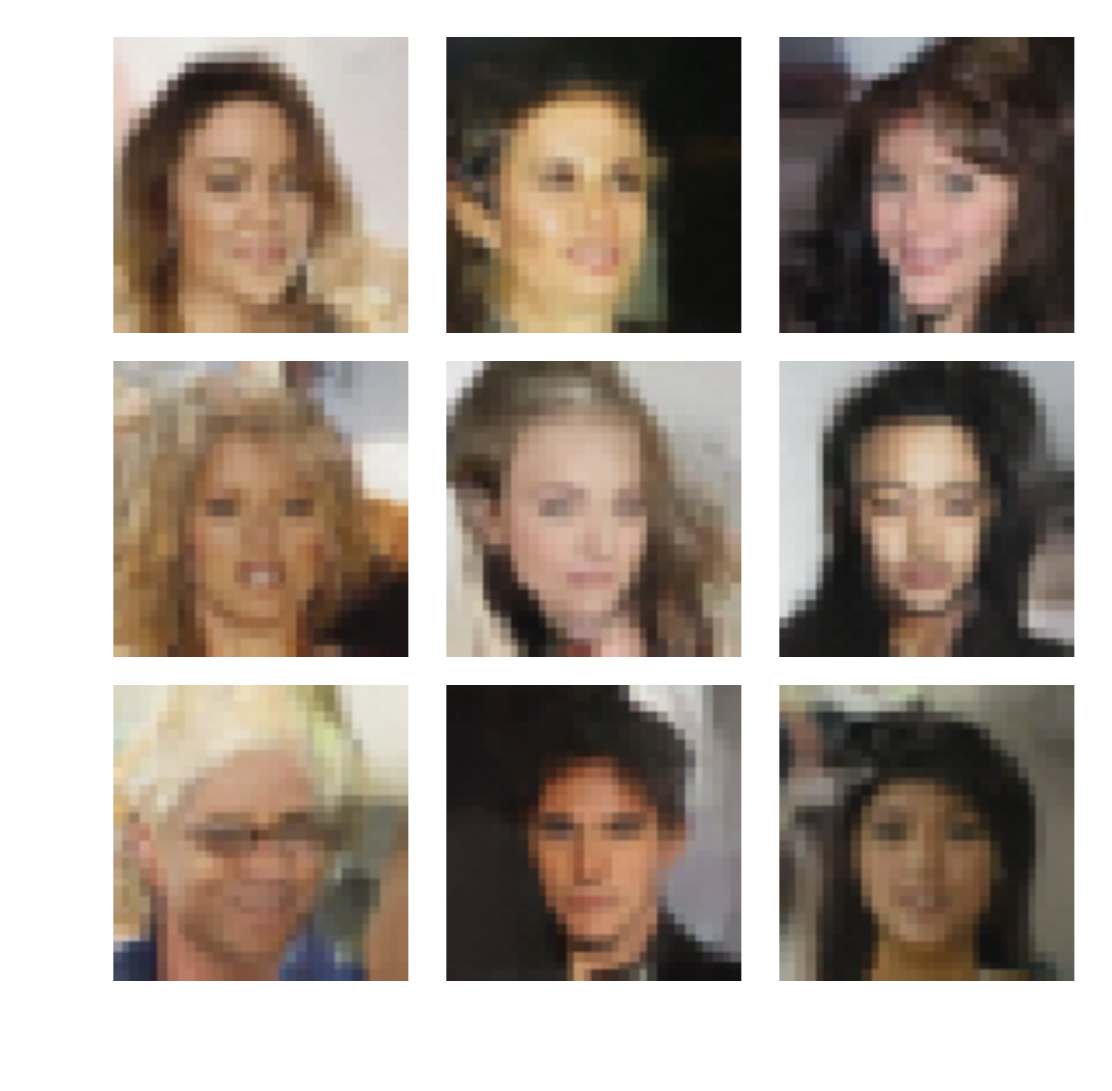}
	}
    \hspace{\panelskip}
	\subfigure[RNVP, CIFAR-10]{
	\includegraphics[width=\panelwidth]{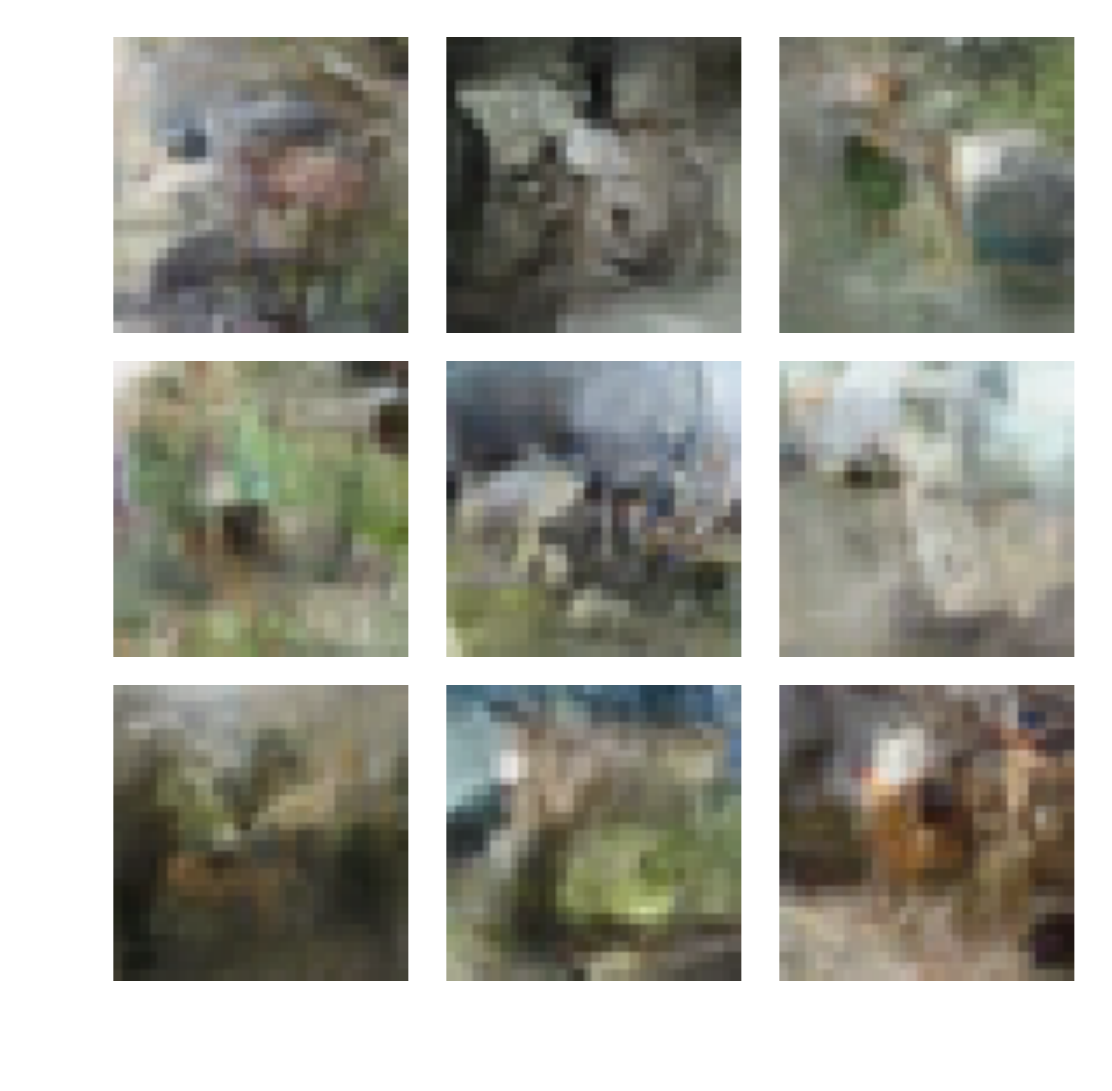}
	}
	\subfigure[RNVP, SVHN]{
	\includegraphics[width=\panelwidth]{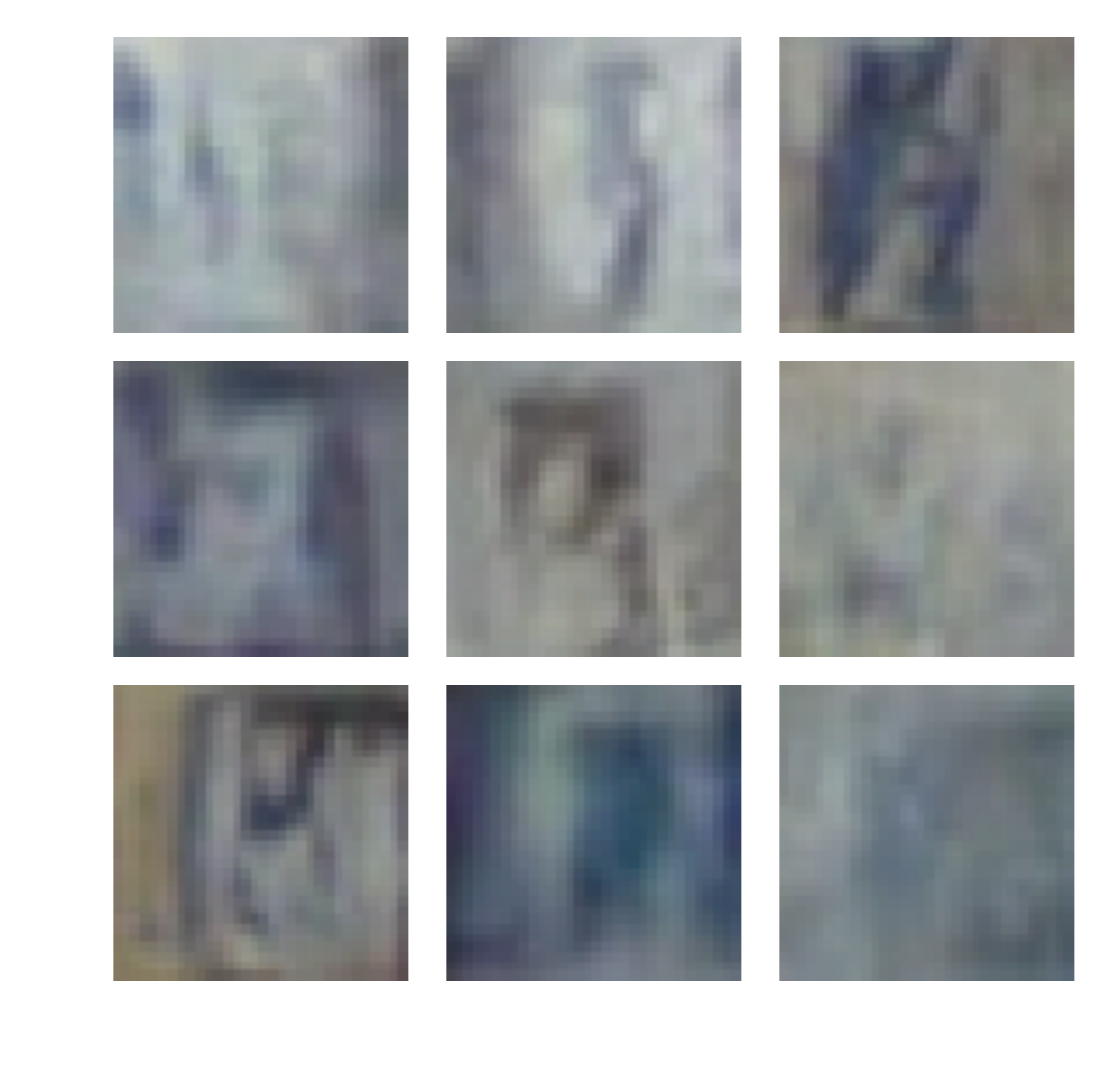}
	}
    \subfigure[RNVP, FashionMNIST]{
	\includegraphics[width=\panelwidth]{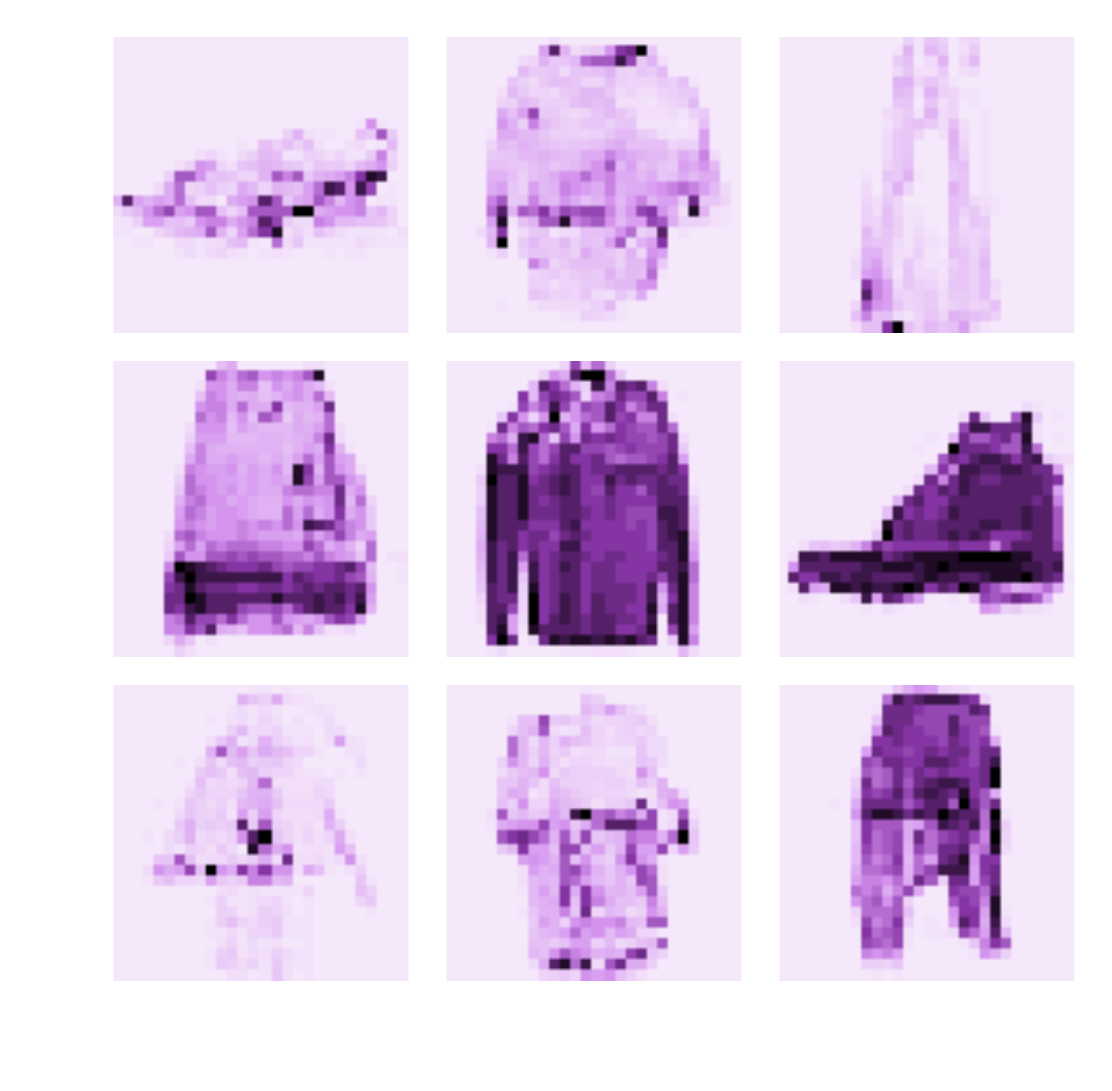}
	}

    \subfigure[RNVP, MNIST]{
	\includegraphics[width=\panelwidth]{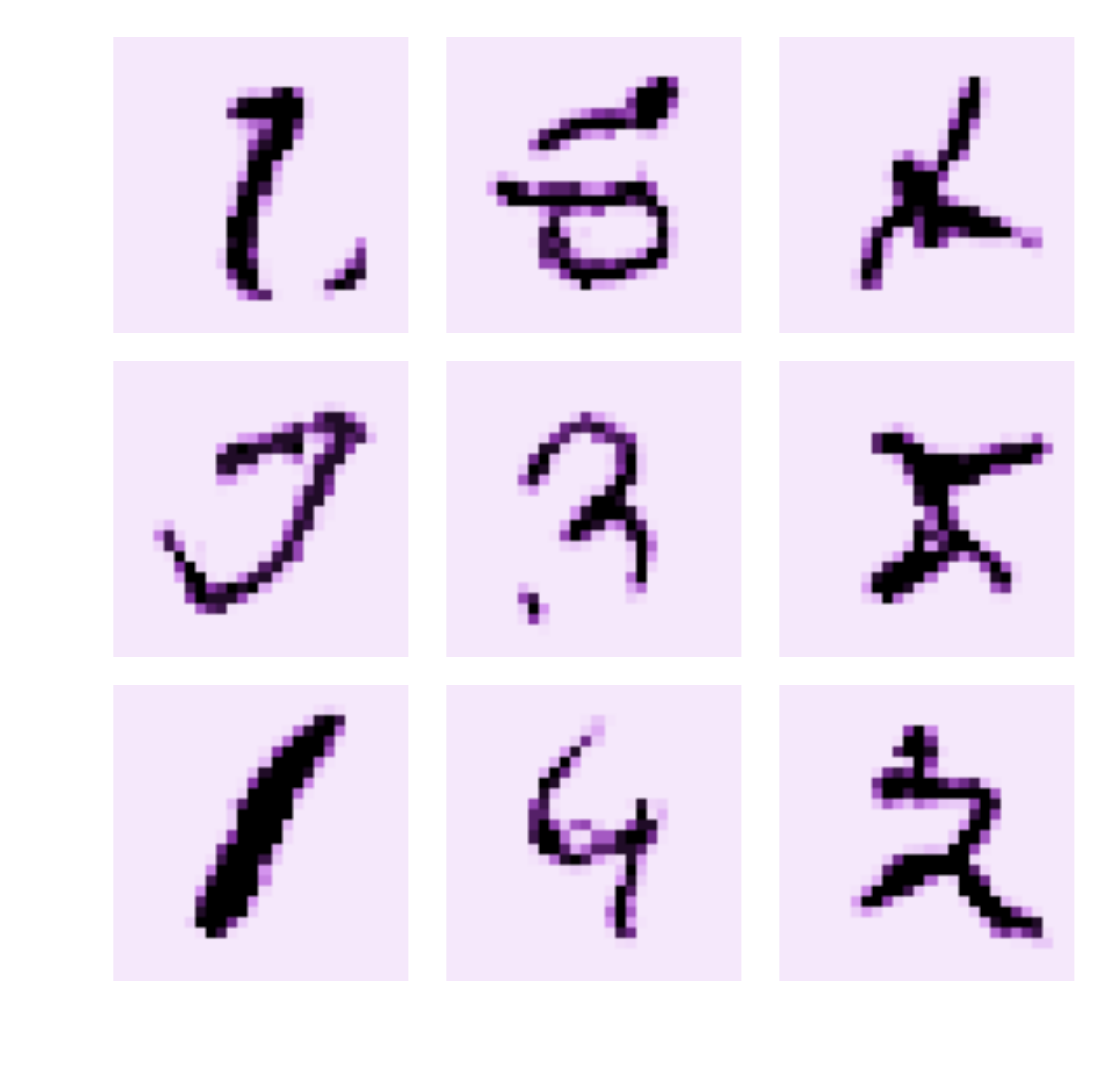}
	}
    \hspace{\panelskip}
	\subfigure[RNVP, CelebA-HQ]{
	\includegraphics[width=\panelwidth]{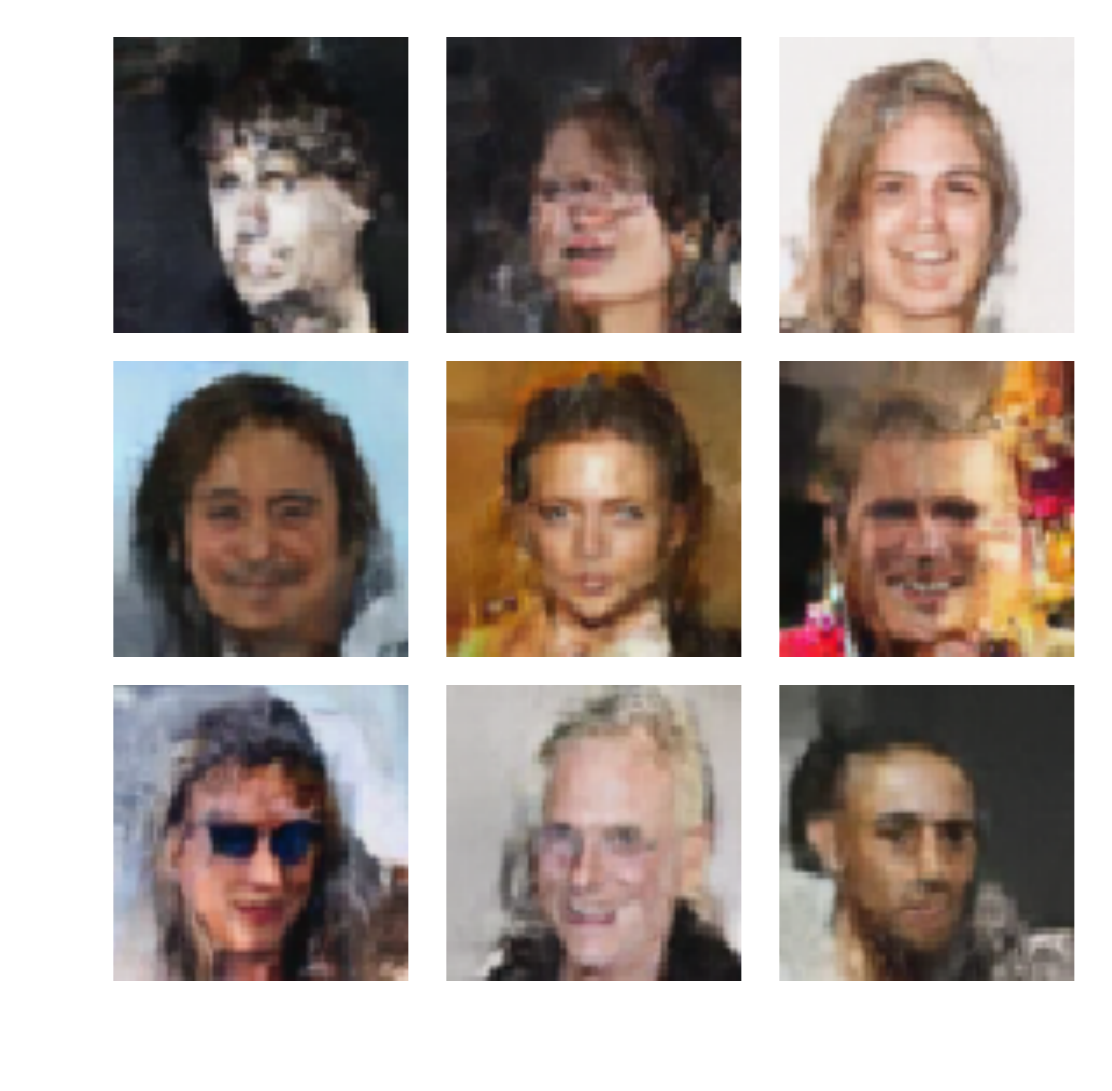}
	}
    \hspace{\panelskip}
	\subfigure[RNVP, ImageNet]{
	\includegraphics[width=\panelwidth]{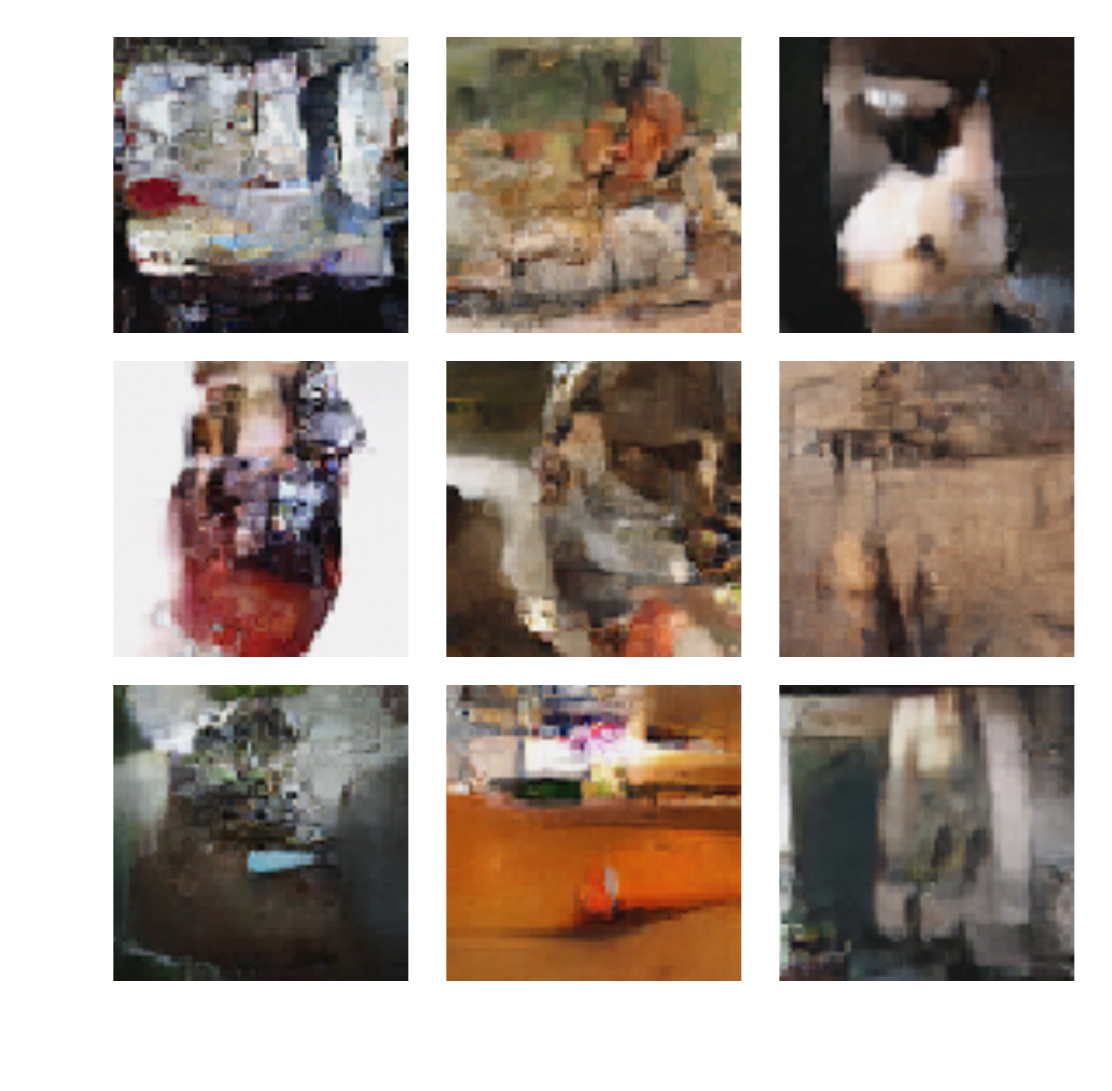}
	}
    \hspace{\panelskip}
	\subfigure[Glow, CelebA]{
	\includegraphics[width=\panelwidth]{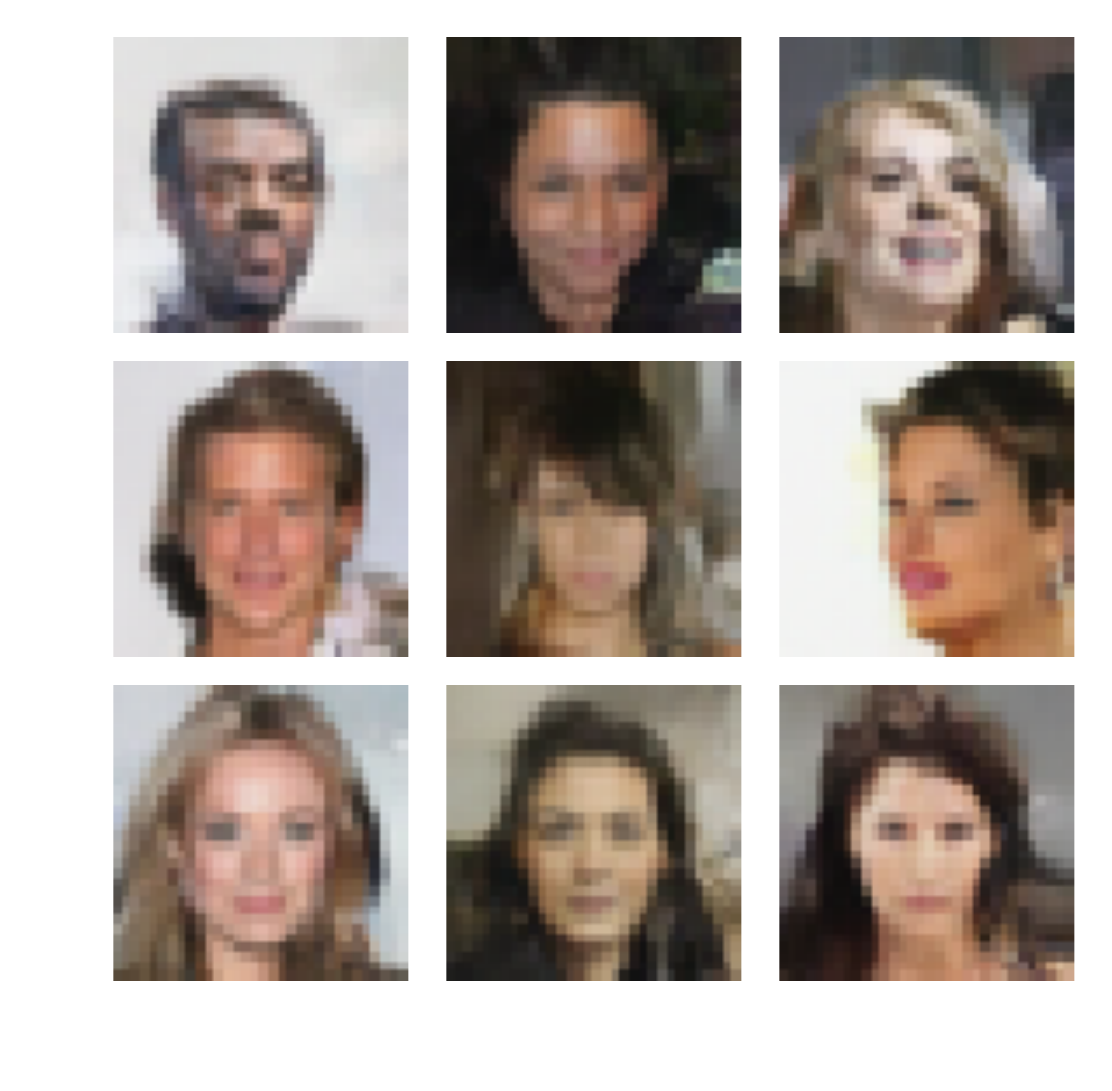}
	}

	\subfigure[Glow, CIFAR-10]{
	\includegraphics[width=\panelwidth]{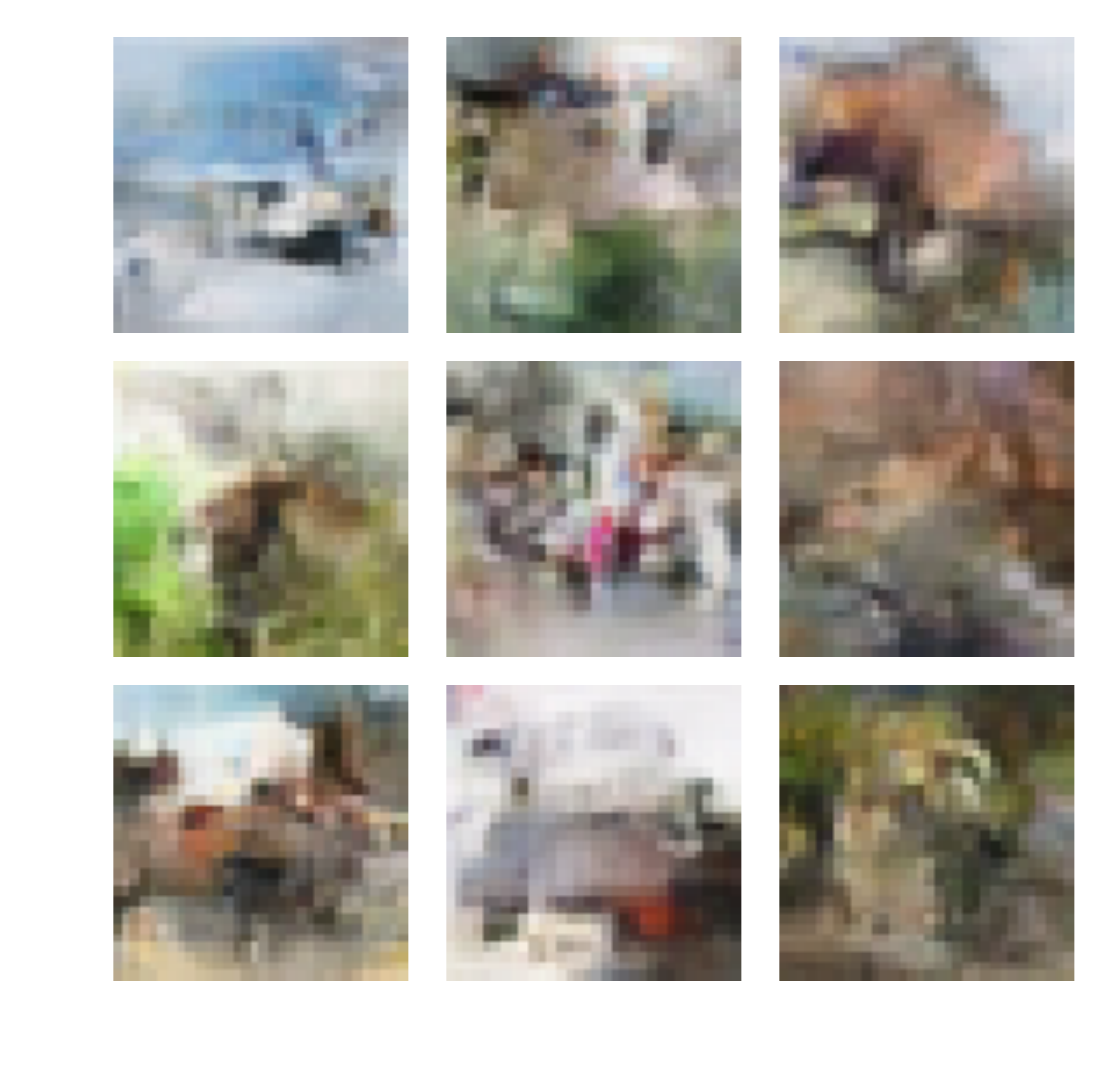}
	}
    \hspace{\panelskip}
	\subfigure[Glow, SVHN]{
	\includegraphics[width=\panelwidth]{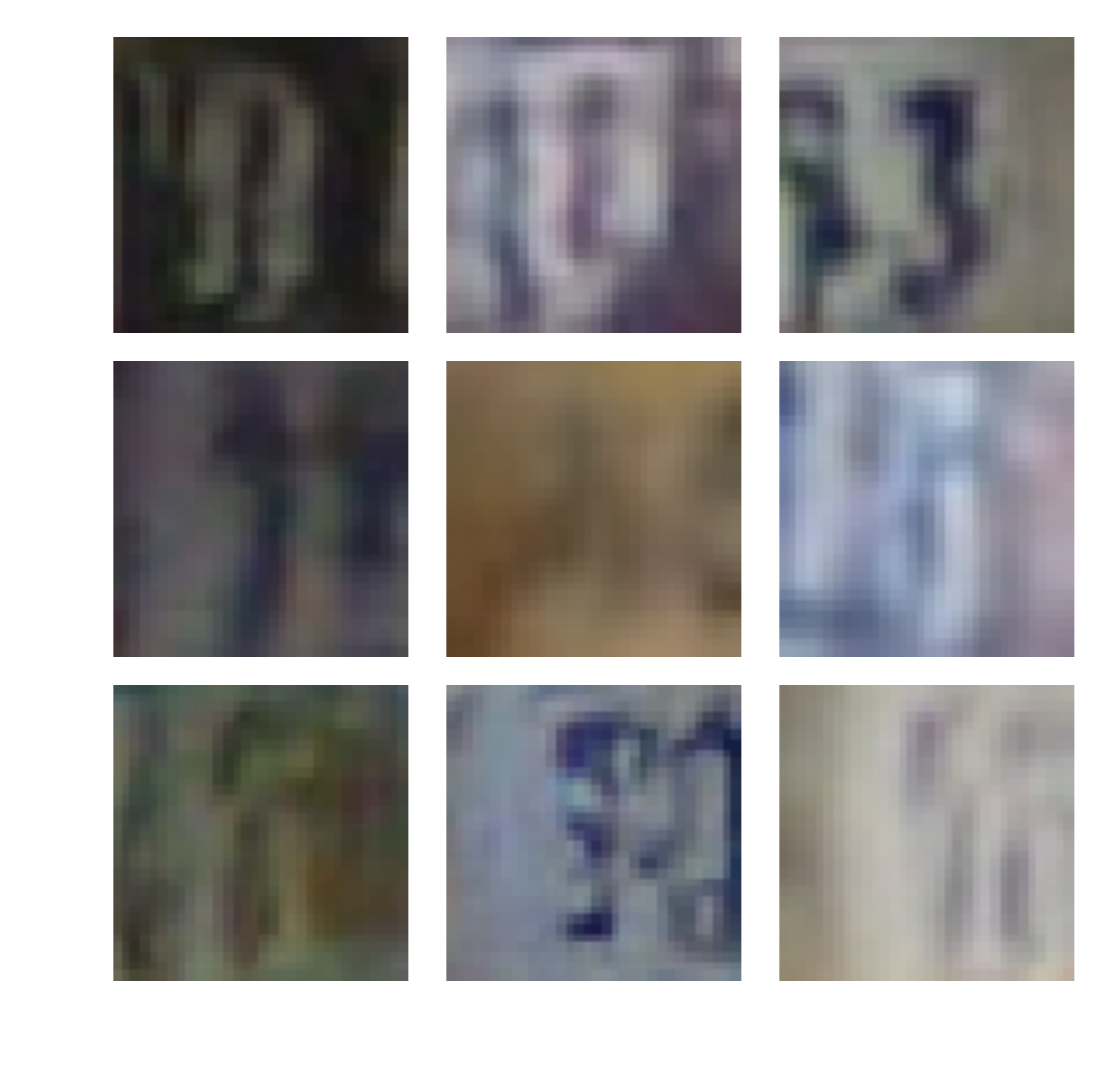}
	}
    \hspace{\panelskip}
	\subfigure[Glow, Fashion]{
	\includegraphics[width=\panelwidth]{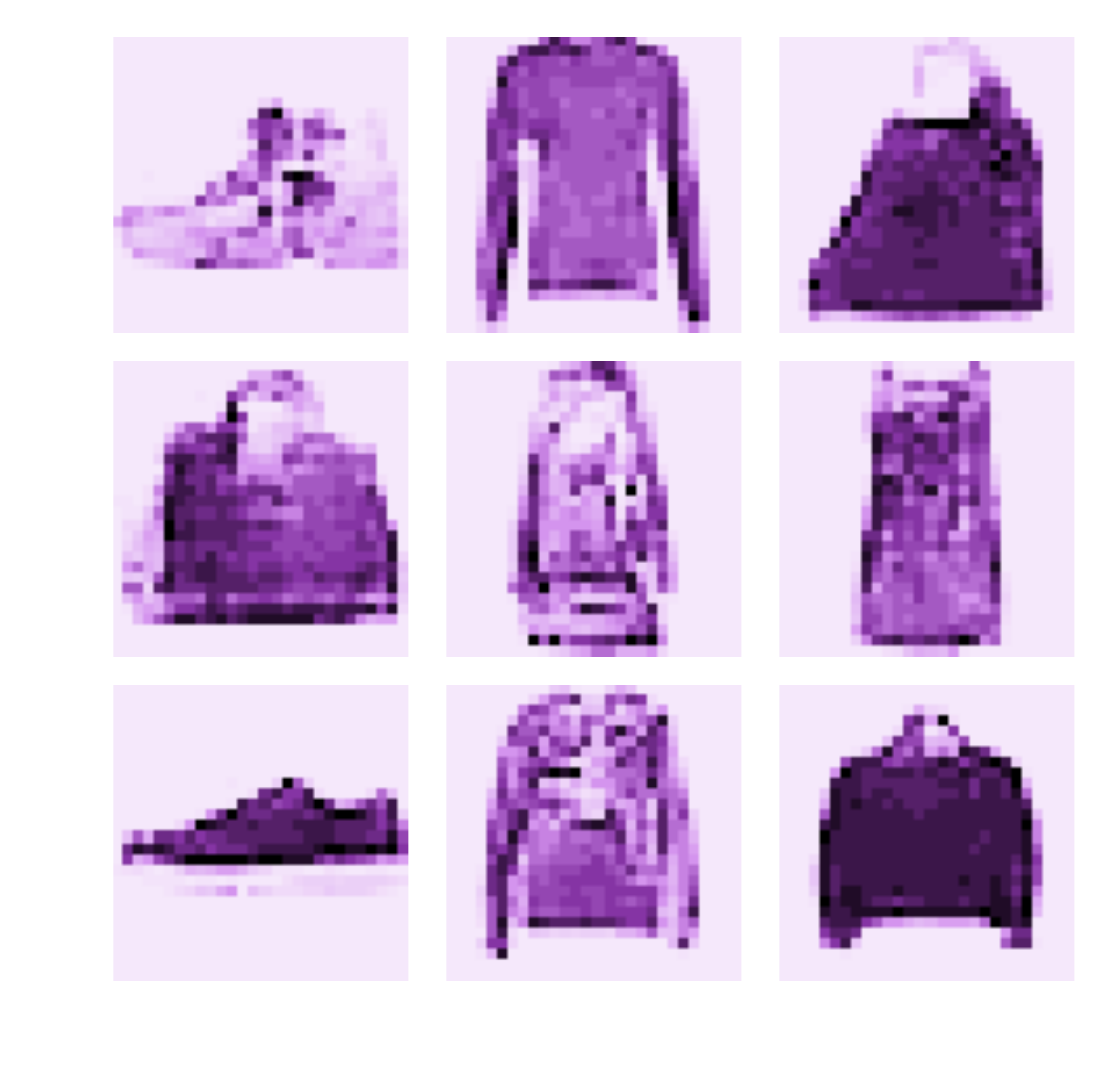}
	}
    \hspace{\panelskip}
	\subfigure[Glow, MNIST]{
	\includegraphics[width=\panelwidth]{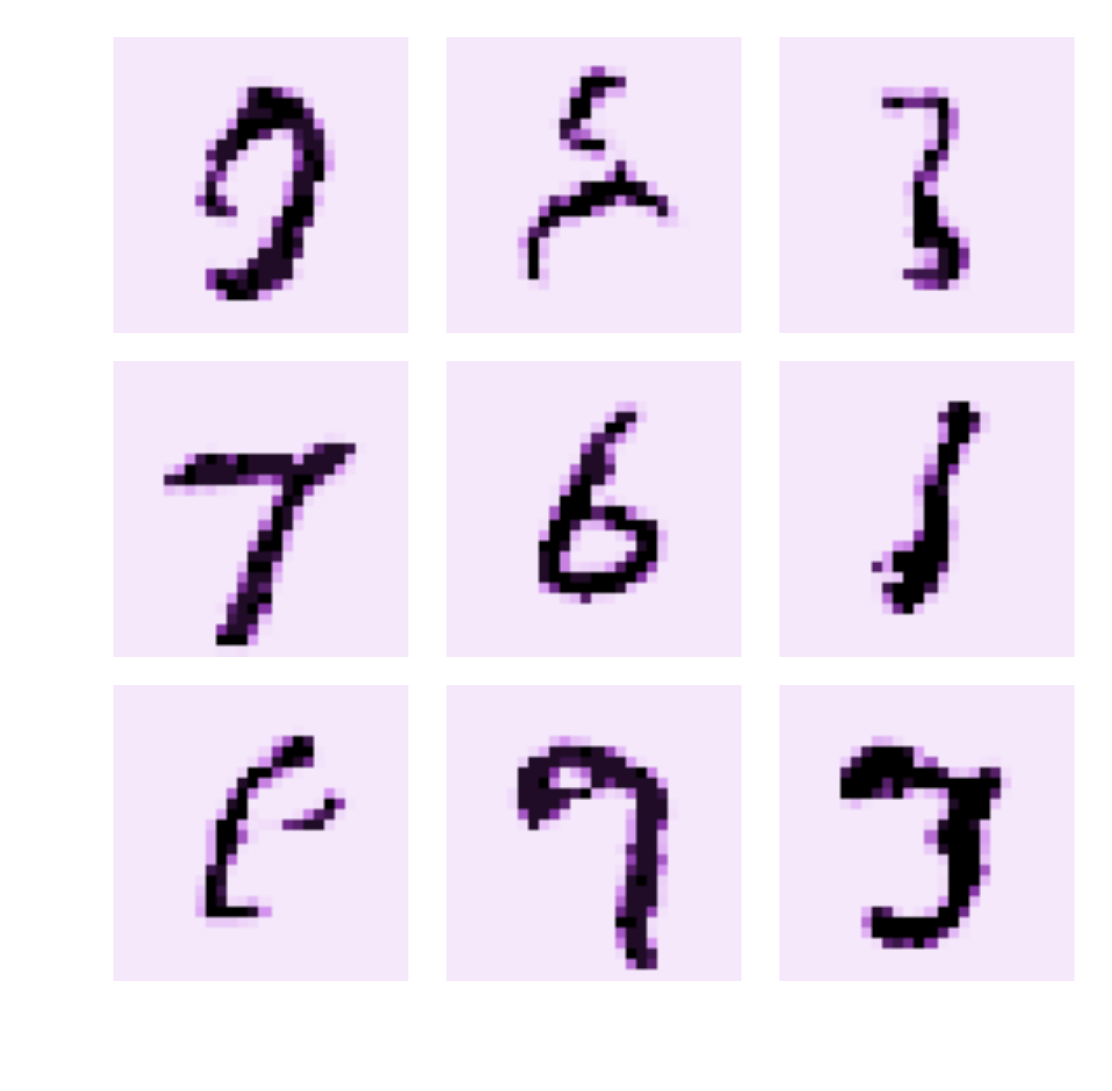}
	}
	\caption{
        \textbf{Baseline Samples.}
	    Samples from baseline RealNVP and Glow models.
        For ImageNet and CelebA-HQ we used datasets with $(64\times64)$ definition.
	}
	\label{fig:cm_samples}
    \vspace{-.3cm}
\end{figure}

\FloatBarrier

\section{Out-of-distribution detection on tabular data}
\label{appendix:tabular}

\begin{table}[h!]
    \centering
    \scriptsize
    \subfigure[Image embeddings]{
    \begin{tabular}{cccc}
        \toprule
        Train data & \multicolumn{3}{c}{OOD data} \\
        \cmidrule(r){2-4} 
        & CelebA & CIFAR-10 & SVHN \\
        \midrule
        CelebA & -- & 99.99 & 99.99  \\
        CIFAR-10 & 99.99 & -- & 73.31 \\
        SVHN & 100.0 & 99.98 & -- \\
         \bottomrule
    \end{tabular}
    }
    \quad
    \subfigure[Tabular data]{
    \begin{tabular}{ccc}
        \toprule
        Train class (OOD class) & \multicolumn{2}{c}{Dataset} \\
        \cmidrule(r){2-3} 
        & HEPMASS & MINIBOONE \\
        \midrule        
        Background (Signal) & 83.78 & 72.71 \\
        Signal (Background) & 70.73 & 87.56 \\
        \bottomrule
    \end{tabular}
    }
    \vspace{0.5cm}
    \caption{
    \textbf{Image embedding and UCI AUROC.}
    \textbf{(a)}:
    AUROC scores on OOD detection for RealNVP model trained on image embeddings extracted from EfficientNet.
    The model is trained on one of the embedding datasets while the remaining two are considered OOD.
    The models consistenly assign higher likelihood to in-distribution data, and
    in particular AUROC scores are significantly better compared to flows trained on  the
    original images (see Table \ref{tab:baseline_auroc}).
    \textbf{(b)}:
    AUROC scores on OOD detection for RealNVP trained on one class of Hepmass and
    Miniboone datasets while the other class is treated as OOD data.
    }
    \label{tab:efficientnet_auroc}
\end{table}

\begin{figure}[h!]
    \centering
    \subfigure[Miniboone Dataset]{
    \begin{tabular}{cc} 
	    \includegraphics[height=0.18\linewidth]{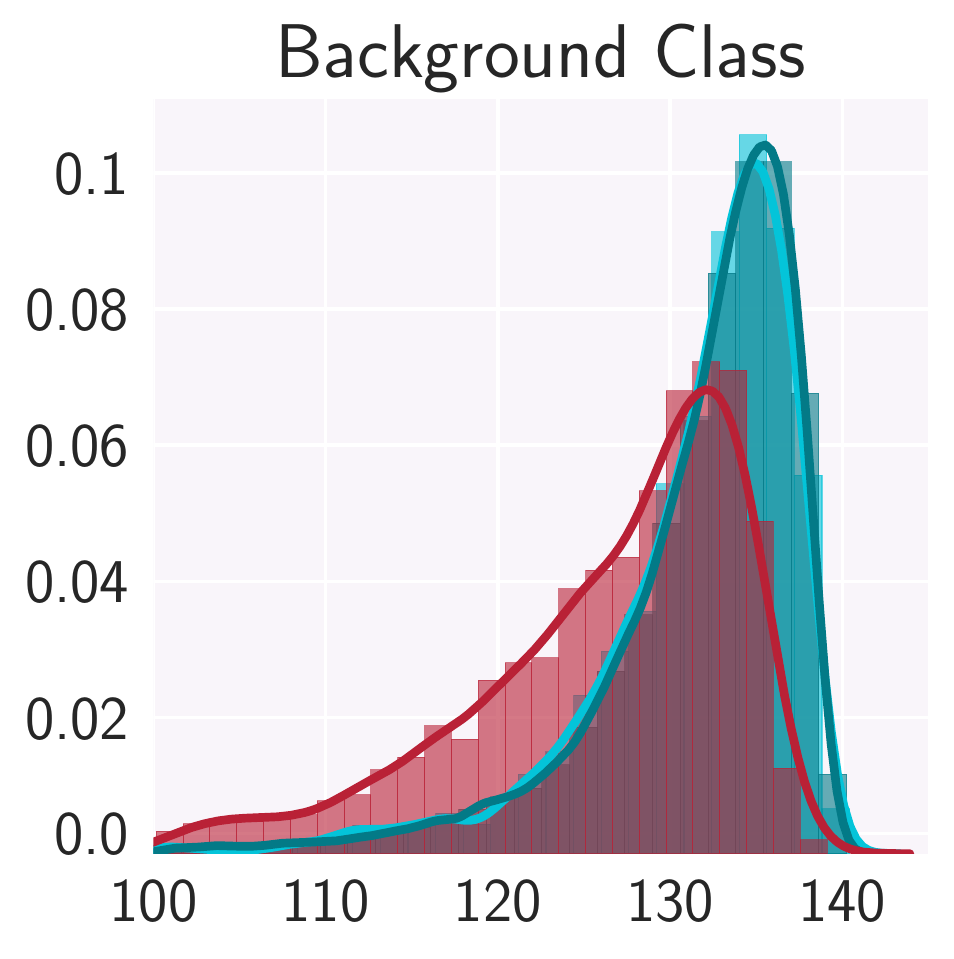}&
        \hspace{-0.4cm}
	    \includegraphics[height=0.18\linewidth]{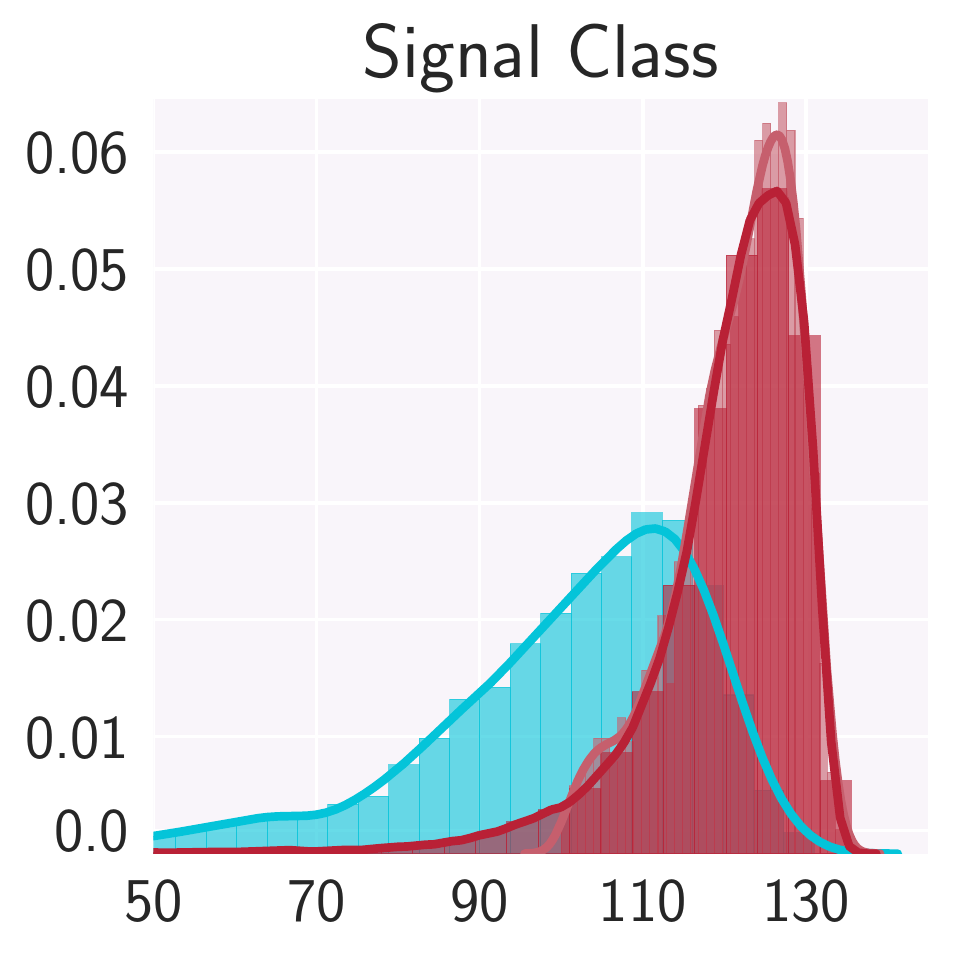}
    \end{tabular}
    }
    \hspace{-0.3cm}
    \subfigure[Hepmass Dataset]{
    \begin{tabular}{ccc} 
	    \includegraphics[height=0.18\linewidth]{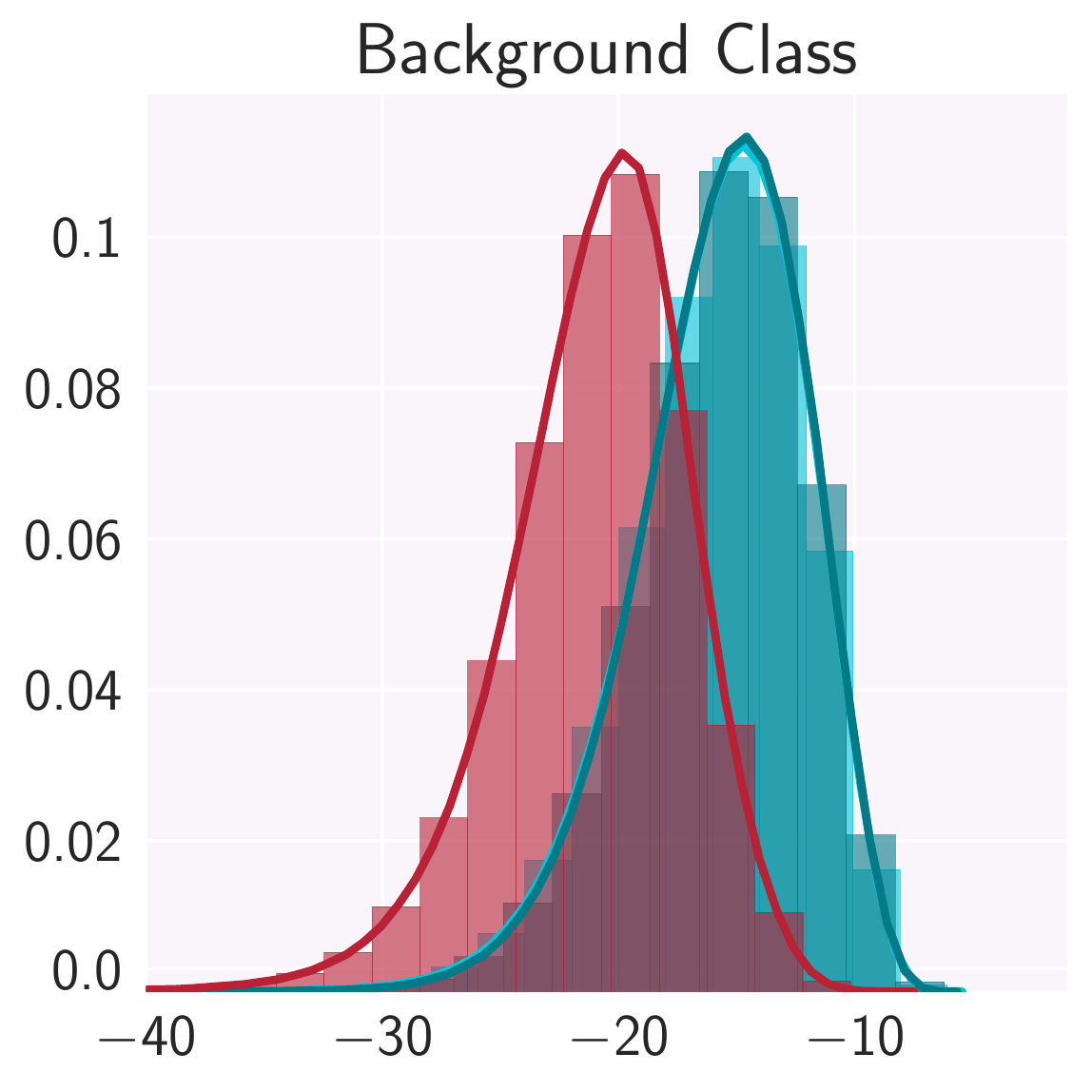}&
	    \hspace{-0.4cm}
	    \includegraphics[height=0.18\linewidth]{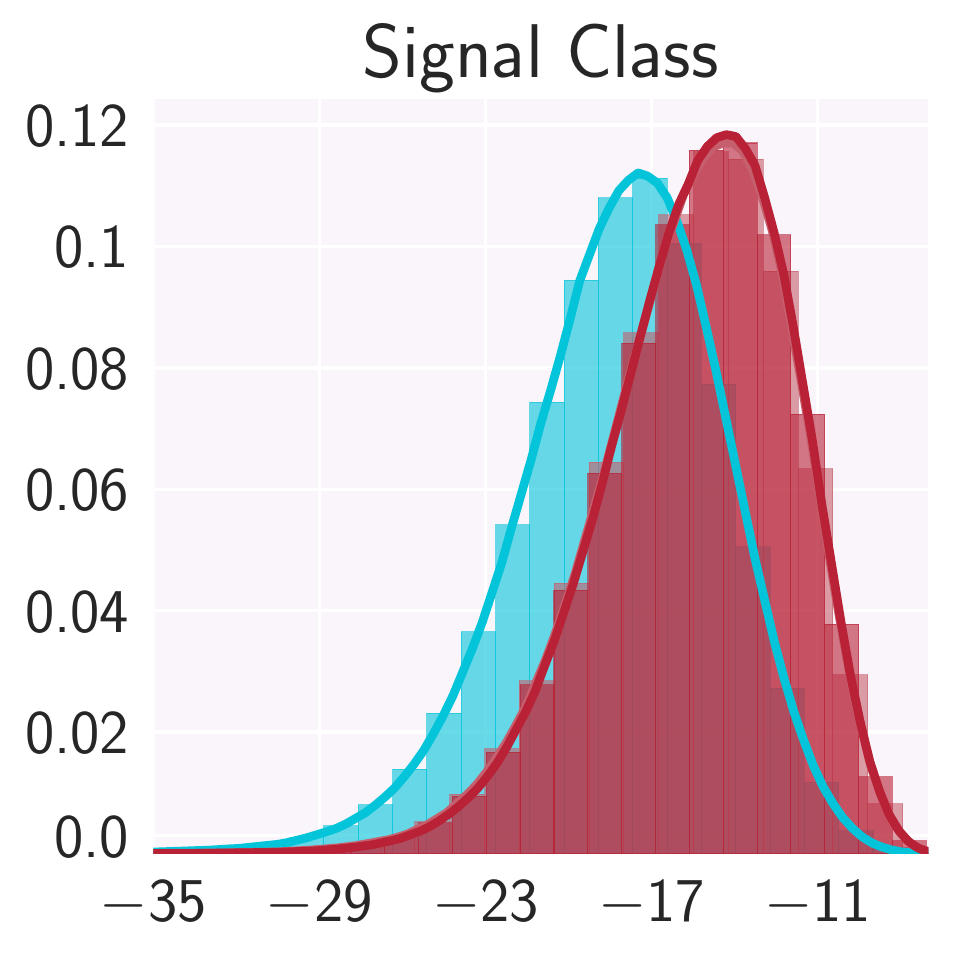}&
	    \includegraphics[height=0.18\linewidth]{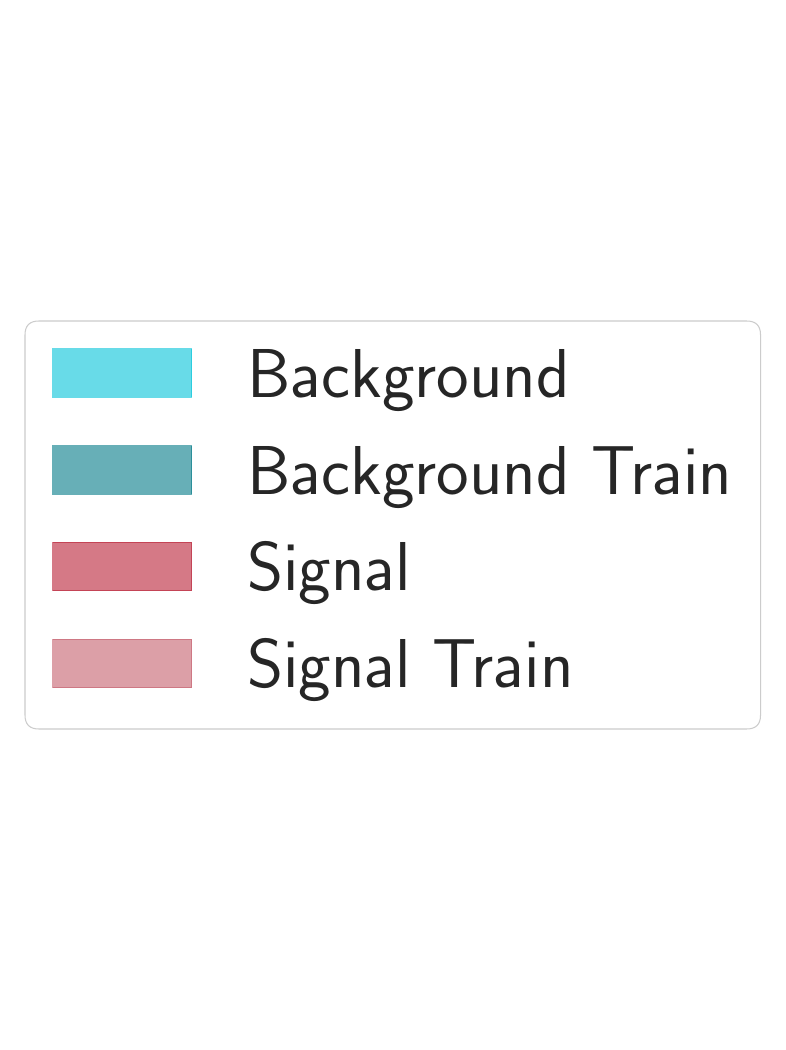}
    \end{tabular}
    }
	\caption{
    \textbf{UCI datasets.}
	The histograms of log-likelihood for RealNVP on Hepmass and Miniboone tabular datasets when trained on
    one class and the other class is viewed as OOD.
    The train and test likelihood distributions are almost identical when trained
    on either class, and the OOD class receives lower likelihoods on average.
    There is however a significant overlap between the likelihoods for in- and out-of-distribution
    data.
    }
	\label{fig:ll_hists_tabular_hepmass_miniboone}
    \vspace{-.3cm}
\end{figure}

\subsection{Model}
We use RealNVP with 8 coupling layers, fully-connected $st$-network and masks which split input vector by half in an alternating manner. For UCI experiments, we use 1 hidden layer and 256 hidden units in $st$-networks, learning rate $10^{-4}$, batch size 32 and train the model for 100 epochs. For image embeddings experiments, we use 3 hidden layer and 512 hidden units in $st$-networks, learning rate $10^{-3}$, batch size 1024 and train the model for 120 epochs. For all experiments, we use the AdamW optimizer \citep{loshchilov2017decoupled} and weight decay $10^{-3}$.

\subsection{EfficientNet embeddings}
We train RealNVP model on image embeddings for CIFAR-10, CelebA and SVHN extracted from EfficientNet train on ImageNet, and report AUROC scores in Table \ref{tab:efficientnet_auroc}(a). 

\subsection{UCI datasets}
\label{appendix:uci}

In this experiment, we use 2 UCI classification datasets which were used for unsupervised 
modeling in prior works on normalizing flows \citep{papamakarios2017masked, durkan2019neural, grathwohl2018ffjord}: 
HEPMASS \citep{baldi2016parameterized} and MINIBOONE \citep{roe2005boosted}. 
HEPMASS and MINIBOONE are both binary classification datasets originating from physics, 
and the two classes represent \textit{background} and \textit{signal}.
We follow data preprocessing steps of \citet{papamakarios2017masked}. We filter features which have too many reoccurring values, after that the dimenionality of the data is 15 for HEPMASS and 50 for MINIBOONE. For HEPMASS, we use the ``1000'' dataset which contains subset of particle signal with mass 1000.
For MINIBOONE data, for each class we take a random split of 10\% for a test set. 

To test OOD detection performance, for each dataset we train a model on one class while treating the second class as OOD data. 
We plot the resulting train, test and OOD likelihood distributions for each dataset in Figure \ref{fig:ll_hists_tabular_hepmass_miniboone}. 
We also report AUROC scores for each setup in Table \ref{tab:efficientnet_auroc}(b). 
While test and OOD likelihoods overlap, the in-distribution class has higher average 
likelihood in all cases, and AUROC values are ranging between 70\% and 87\% which is a 
significantly better result compared to the results for image benchmarks reported in \citet{nalisnick2018deep}.

\end{document}